\definecolor{textblue}{rgb}{.2,.2,.7}
\definecolor{textred}{rgb}{0.54,0,0}
\definecolor{textgreen}{rgb}{0,0.43,0}
\renewcommand{\geq}{\geqslant}
\renewcommand{\leq}{\leqslant}
\newtheorem{thm}{Theorem}[section]
\newtheorem{mydef}[thm]{Definition}
\newtheorem{myprop}[thm]{Proposition}
\newtheorem{claim}[thm]{Claim}
\newtheorem{mylemma}[thm]{Lemma}
\newtheorem{assmp}[thm]{Assumption}
\newtheorem{rmk}[thm]{Remark}
\DeclareMathOperator*{\argmin}{arg\!\min}
\DeclareMathOperator*{\argmax}{arg\!\max}
\DeclareMathOperator*{\Tr}{tr}
\newcommand{\R}{\ensuremath{\mathbb{R}}}
\newcommand{\N}{\ensuremath{\mathbb{N}}}
\newcommand{\norm}[1]{\lVert #1 \rVert}
\newcommand{\bignorm}[1]{\left\lVert #1 \right\rVert}
\newcommand{\ip}[2]{\ensuremath{\langle #1, #2 \rangle}}
\newcommand{\E}{\mathbb{E}}
\newcommand{\abs}[1]{\ensuremath{| #1 |}}
\newcommand{\bigabs}[1]{\ensuremath{\left| #1 \right|}}
\newcommand{\ind}{\mathbf{1}}
\newcommand{\T}{\mathsf{T}}
\newcommand{\calA}{\mathcal{A}}
\newcommand{\calC}{\mathcal{C}}
\newcommand{\calD}{\mathcal{D}}
\newcommand{\calN}{\mathcal{N}}
\newcommand{\calS}{\mathcal{S}}
\newcommand{\calZ}{\mathcal{Z}}
\newcommand{\calB}{\mathcal{B}}
\newcommand{\cvectwo}[2]{\begin{bmatrix} #1 \\ #2 \end{bmatrix}}
\numberwithin{equation}{section}
\newcommand{\sfP}{\mathsf{P}}
\DeclarePairedDelimiterX{\infdivx}[2]{(}{)}{%
  #1\;\delimsize\|\;#2%
}
\newcommand{\KL}{\mathrm{KL}\infdivx}
\newcommand{\FI}{\mathrm{FI}\infdivx}
\newcommand{\opnorm}[1]{\norm{#1}_{\mathrm{op}}}
\newcommand{\bigopnorm}[1]{\left\| #1 \right\|_{\mathrm{op}}}
\newcommand{\scrF}{\mathscr{F}}
\newcommand{\sfX}{\mathsf{X}}
\newcommand{\sfY}{\mathsf{Y}}
\newcommand{\en}{\mathcal{E}}
\newcommand{\pn}{p_{\xi}}
\newcommand{\pt}{p^t}
\newcommand{\rmd}{\mathrm{d}}
\newcommand{\e}{\varepsilon}
\newcommand{\bbS}{\mathbb{S}}
\newcommand{\bell}{\bar{\ell}}
\newcommand{\bxi}{\bar{\xi}}
\newcommand{\negset}{\mathbf{y}}
\newcommand{\augset}{\bar{\mathbf{y}}}
\newcommand{\augP}{\bar{\sfP}}
\newcommand{\distconv}{\rightsquigarrow}
\newcommand{\dembed}{d_{\mathrm{emb}}}
\newcommand{\dout}{d_{\mathrm{out}}}
\tikzstyle{box} = [draw, rectangle]
\tikzstyle{block} = [draw=none, rectangle]
\tikzstyle{input} = [coordinate]
\tikzstyle{sum} = [draw, fill=white, circle, minimum size=3pt, inner sep=0pt, outer sep=2pt]
\author[1]{Sumeet Singh}
\author[1]{Stephen Tu}
\author[1]{Vikas Sindhwani}
\affil[1]{Google DeepMind}
\date{\today}
\title{Revisiting Energy Based Models as Policies: Ranking Noise Contrastive Estimation and
Interpolating Energy Models}
\begin{document}

\maketitle

\begin{abstract}
A crucial design decision for any robot learning pipeline is the choice of policy representation: what type of model should be used to generate the next set of robot actions? Owing to the inherent multi-modal nature of many robotic tasks, combined with the recent successes in generative modeling, researchers have turned to state-of-the-art probabilistic models such as diffusion models for policy representation. In this work, we revisit the choice of energy-based models (EBM) as a 
policy class. 
We show that the prevailing folklore---that energy models in high dimensional continuous spaces 
are impractical to train---is false. We develop a practical training objective and algorithm for energy models which combines several key ingredients: (i) ranking noise contrastive estimation (R-NCE), (ii) learnable negative samplers, and (iii) non-adversarial joint training. We prove that our proposed objective function is asymptotically consistent and quantify its limiting variance. On the other hand, we show that the Implicit Behavior Cloning (IBC) objective is actually biased even at the population level, providing a mathematical explanation for the poor performance of IBC trained energy policies in several independent follow-up works. We further extend our algorithm to learn a continuous stochastic process that bridges noise and data, modeling this process with a family of EBMs indexed by scale variable. In doing so, we demonstrate that the core idea behind recent progress in generative modeling is actually compatible with EBMs. Altogether, our proposed training algorithms enable us to train energy-based models as policies which compete with---and even outperform---diffusion models and other state-of-the-art approaches in several challenging multi-modal benchmarks: obstacle avoidance path planning and contact-rich block pushing.
\end{abstract}

\section{Introduction}
\label{sec:intro}

Many robotic tasks---e.g.,\ grasping, manipulation, and trajectory planning---are inherently
\emph{multi-modal}: at any point during task execution, there may be multiple actions which yield
task-optimal behavior. Thus, a fundamental question in policy design for robotics is how to capture 
such multi-modal behaviors, especially when learning from optimal demonstrations. A natural starting 
point is to treat policy learning as a distribution learning problem: instead of representing a policy
as a deterministic map $\pi_\theta(x)$, utilize a conditional generative model of the form $p_\theta(y \mid x)$
to model the entire distribution of actions conditioned on the current robot state.

This approach of modeling the entire action distribution, while very powerful, leaves open a key design question:
which type of generative model should one use? 
Due to the recent advances in diffusion models across a wide variety of domains~\cite{dhariwal2021_diffusion,blattmann2023_videoldm,kong2021_diffwave,saharia2023_superres,song2022_solving,yang2022_diffusionsurvey},
it is 
natural to directly apply a diffusion model for the policy representation~\cite{chi2023diffusion,janner2022planning,reuss2023goal}.
In this work, however, we revisit the use of \emph{energy-based models} (EBMs) as policy representations~\cite{florence2022implicit}.
Energy-based models have a number of appealing properties in the context of robotics.
First, by modeling a scalar potential field without any normalization constraints,
EBMs yield compact representations; model size is an
important consideration in robotics due to real-time inference requirements.
Second, action selection for many robotics tasks (particularly with strict safety and performance considerations) can naturally
be expressed as finding a minimizing solution of a particular loss surface~\cite{adams2022survey}; 
this implicit structure
is succinctly captured through the sampling procedure of EBMs.
Furthermore, reasoning in the density space is more amenable to capturing prior task information in the model~\cite{urain2022_implicit_priors}, and allows for straightforward composition
which is not possible with score-based models~\cite{du2023_reduce}.

Unfortunately, while the advantages of EBMs are clear, EBMs for continuous, high-dimensional data 
can be difficult to train due to the intractable computation of the partition function. Indeed, designing efficient
algorithms for training EBMs in general is still an active area of research (see e.g.,~\cite{du2019implicit,dai2019exponential,song_howtotrain_2021,arbel2021generalized}).
Within the context of imitation learning, recent work on implicit behavior cloning (IBC)~\cite{florence2022implicit} proposed the use of
an InfoNCE~\cite{vandenOord2018_cpc} inspired objective, which we refer to as the IBC objective. 
While IBC is considered state of the art for EBM behavioral cloning,
follow up works have found training with the IBC objective to be quite unstable~\cite{ta2022_ebm,reuss2023goal,chi2023diffusion,pearce2023imitating}, and hence the practicality
of training and using EBMs as policies has remained an open question. 

In this work we resolve this open question by designing and analyzing new algorithms based on
ranking noise constrastive estimation (R-NCE)~\cite{ma2018_ranking_nce}, focusing on the behavioral cloning setting.
Our main theoretical and algorithmic contributions are summarized as follows:
\begin{itemize}
    \item \textbf{The population level IBC objective is biased:} We show that even in the limit of infinite data, the
        population level solutions of the IBC objective are in general not correct. This provides a mathematical explanation as to why
        policies learned using the IBC objective often exhibit poor performance~\cite{ta2022_ebm}.
    \item \textbf{Ranking noise contrastive estimation with a learned sampler is consistent:} We utilize the ranking noise contrastive estimation (R-NCE) objective of \citet{ma2018_ranking_nce} to address the shortcomings of the IBC objective. 
    We further show that jointly learning the negative sampling distribution is compatible with R-NCE, preserving
    the asymptotic normality properties of R-NCE.
    This joint training turns out to be quite necessary in practice, as 
    without it the optimization landscape of noise contrastive estimation
    objectives can be quite ill-conditioned~\cite{liu2022_analyzing,lee2023_pitfalls}.
    \item \textbf{EBMs are compatible with multiple noise resolutions:} A key observation behind recent generative models
    such as diffusion~\cite{ho2020_ddpm,song2021_diffusion} and stochastic interpolants~\cite{albergo2023_stochastic_interpolants,lipman2023_flow,liu2023_flow} 
    is that learning the data distribution
    at multiple noise scales is critical. We show that this concept is actually compatible
    with EBMs, by designing a family of EBMs which we term \emph{interpolating EBMs}, and using R-NCE to jointly train
    these EBMs. We believe this contribution to be of independent interest to the generative modeling community.
    \item \textbf{EBMs trained with R-NCE yield high quality policies:} 
    We show empirically on several multi-modal benchmarks, including an obstacle avoidance path planning benchmark
    and a contact-rich block pushing task, that EBMs trained with R-NCE are competitive with---\emph{and can even outperform}---IBC and diffusion-based policies. To the best of our knowledge, this is the first result in the literature 
    which shows that EBMs provide a viable alternative to other state-of-the-art generative models for robot policy learning.
\end{itemize}

\section{Related work}
\label{sec:related}

\paragraph{Generative Models for Controls and RL.}
Recent advances in generative modeling have inspired many applications in 
reinforcement learning (RL)~\cite{heess2013actor,haarnoja2017reinforcement,haarnoja2018soft,levine2018reinforcement,liu_ebil_2021,ho2016generative},
trajectory planning~\cite{yilun2019_mbp,urain2022_implicit_priors,ajay2022conditional,janner2022planning},
robotic policy design~\cite{florence2022implicit,pearce2023imitating, chi2023diffusion, reuss2023goal},
and robotic grasp and motion generation~\cite{urain2022se}.
In this paper, we focus specifically on generative policies trained with behavior cloning,
although many of our technical contributions apply more broadly to learning energy-based models.
Most related to our work is the influential implicit behavior cloning (IBC) work of \citet{florence2022implicit}, which advocates for using a
noise contrastive estimation objective (which we refer to as the IBC objective) 
based on InfoNCE~\cite{vandenOord2018_cpc}
in order to train energy-based policies. As discussed
previously,
many works~\cite{ta2022_ebm,reuss2023goal,chi2023diffusion,pearce2023imitating}
have independently found that the IBC objective is numerically unstable
and does not consistently yield high quality policies. 
One of our main contributions is a theoretical explanation of the
limitations of the IBC objective, and an alternative training procedure based on
ranking noise contrastive estimation.
Another closely related work is~\citet{chi2023diffusion}, which shows that score-based
diffusion models~\cite{sohl2015deep,ho2020_ddpm,song2020denoising,song2021_diffusion}
yield state-of-the-art generative policies for many multi-modal robotics tasks.
In light of this work and the poor empirical performance of the IBC objective,
it is natural to conclude that score-based diffusion methods are now the de facto standard 
for learning generative policies 
to solve complex robotics tasks. One of our contributions is to show 
that this conclusion is false; energy-based policies
trained via our proposed R-NCE algorithms are actually competitive with diffusion-based policies in terms of performance.

We focus the rest of the related work discussion on the training of EBMs, 
as this is where most of our technical contributions apply. 
Given the considerable scope of this topic and the extensive literature (see e.g.,~\cite{song_howtotrain_2021, lecun2006tutorial} for thorough literature reviews), we concentrate our discussion on the two most common frameworks: maximum likelihood estimation (MLE) and noise contrastive estimation (NCE).

\paragraph{Training EBMs via Maximum Likelihood.}
A key challenge in training EBMs via MLE is in computing the gradient of the MLE objective, which involves an expectation of the gradient of the energy function w.r.t.\ samples drawn from the EBM itself (cf.~\Cref{sec:ranking_nce}). This necessitates the use of expensive Markov Chain Monte Carlo (MCMC) techniques to estimate the gradient, resulting in potentially significant truncation bias that can cause training instability. A common heuristic involves using \emph{persistent chains}~\cite{tieleman2008training,du2019implicit, du2021improved} over the course of training to better improve the quality of the MCMC samples. 
However, such techniques cannot be straightforwardly applied to the conditional distribution setting,
as one would require storing \emph{context-conditioned} chains, which becomes infeasible when the context space is continuous.

Another common technique for scalable MLE optimization is to 
approximate the EBM model's samples using an additional learned generative model that 
easier to sample from. Re-writing the partition function in the MLE objective using a change-of-measure argument and applying Jensen's inequality yields a variational lower bound and thus, a max-min optimization problem where the sampler is optimized to tighten the bound while the EBM is optimized to maximize likelihood~\cite{dai2019exponential,grathwohl2021no}. For
example, \citet{dai2019exponential} propose to learn a base distribution followed by using a finite number of Hamiltonian Monte Carlo (HMC) leapfrog integration steps to sample from the EBM. \citet{grathwohl2021no} use a Gaussian sampler with a learnable mean function along with variational approximations for the inner optimization.
More examples of this max-min optimization approach can be found in e.g.,~\cite{song_howtotrain_2021,bond2021deep}. Unfortunately, adversarial optimization is notoriously challenging and requires several stabilization tricks to prevent mode-collapse~\cite{kumar2019maximum}.

Alternatively, one may instead leverage an EBM to correct a backbone latent variable-based generative model, commonly referred to as \emph{exponential tilting}. For instance, this has been accomplished by using VAEs~\cite{xiao2021vaebm} and normalizing flows~\cite{arbel2021generalized,nijkamp2022mcmc} as the backbone. Leveraging the inverse of this model, one may perform MCMC sampling within the latent space, using the pullback of the exponentially tilted distribution. Ideally, this pullback distribution is more unimodal, thereby allowing faster mixing for MCMC. 
Pushing forward the samples from the latent space via the backbone model yields the necessary samples for defining the MLE objective. Thus, the EBM acts as an implicit ``generative correction" of the backbone model, as opposed to a standalone generative model. The challenge however is that the overall energy function that needs to be differentiated for MCMC sampling now involves the transport map as well, which, for non-trivial backbone models (e.g., continuous normalizing flows~\cite{chen2018neural}) yields a non-negligible computational overhead.

We next outline NCE as a form of ``discriminative correction," drawing a natural analogy with Generative Adversarial Networks (GANs).

\paragraph{Training EBMs via Noise Contrastive Estimation.}
Instead of directly trying to maximize the likelihood of the observed samples, NCE leverages contrastive learning~\cite{le2020contrastive} and thus, is composed of two primary parts: (i) the contrastive sample generator, and (ii) the classification-based critic, where the latter serves as the optimization objective. The general formulation was introduced by \citet{gutmann2010_nce} for unconditional generative models, whereby for each data sample, one samples $K$ ``contrast" (also referred to as ``negative") examples from the noise distribution, and formulates a \emph{binary} classification problem based upon the posterior probability of distinguishing the true sample from the synthesized contrast samples. 

For conditional distributions, \citet{ma2018_ranking_nce} found that the binary classification approach is severely limited: consistency requires 
the EBM model class to be self-normalized. Instead, they advocate for the ranking NCE (R-NCE) variant~\cite{jozefowicz2016exploring}.
R-NCE posits a multi-class classification objective as the critic,
and overcomes the consistency issues 
with the binary classification objective in the conditional setting~\cite{ma2018_ranking_nce}, at least for discrete probability spaces.
In this work, we focus on the R-NCE formulation, 
and extend the analysis of \citet{ma2018_ranking_nce} to include
jointly optimized noise distribution models over arbitrary probability spaces (i.e., beyond the discrete setting). In particular we study various properties such as: (i) asymptotic convergence, (ii) the pitfalls of weak contrastive distributions, (iii) jointly training the contrastive generative model, either via an independent objective or adversarially, (iv) extension to a multiple noise scale framework by defining a suitable time-indexed family of models and contrastive losses, and (v) sampling from the combined contrastive generative model and EBM.

Despite leveraging a similar classification-based objective to GANs, NCE does not need  adversarial training in order to guarantee convergence. In particular, the contrastive model in NCE is not the primary generative model being learned; it is merely used to provide the counterexamples needed to score the EBM-based classifier.
Furthermore, our asymptotic analysis shows that a fixed noise distribution suffices
for consistency and asymptotic normality.
In practice, however, the design of the noise distribution is the most critical factor in the success of NCE methods~\cite{gutmann2011bregman}. An overly simple fixed noise distribution will lead to a trivial classification problem,
which is problematic from an optimization perspective due to
exponentially flat loss landscapes~\cite{liu2022_analyzing,lee2023_pitfalls}.
Thus, a key part of our algorithmic contributions is in how the negative sampler
is learned. We propose simultaneously training a simpler normalizing flow model as the
sampler, jointly with the EBM. This is in contrast with adversarial
approaches~\cite{gao2020_flow_contrastive,bose2018adversarial} which require max-min training.\footnote{\citet{gao2020_flow_contrastive} actually consider a similar
approach in the binary NCE setting, but ultimately dismiss it in favor of
adversarial training. We delve further by analyzing the statistical properties of adversarial training and question its necessity in regards to the R-NCE objective.} Without a need for adversarial optimization, our joint training algorithm is numerically stable and yields a pair of \emph{complementary} generative models, whereby the more expressive model (EBM) bootstraps off the simpler model (normalizing flow) for both training and inference-time sampling.

\section{Ranking Noise Contrastive Estimation}
\label{sec:ranking_nce}

\subsection{Notation}
Let the context space $(\sfX, \mu_X)$ and 
event space $(\sfY, \mu_Y)$ be compact measure spaces (for simplicity, we assume that the
event space is not a function of the context). 
Equip the product space $\sfX \times \sfY$ with a
probability measure $\sfP_{X,Y} = \sfP_X \times \sfP_{Y \mid X}$, and assume that this probability measure
is absolutely continuous w.r.t.\ the base product measure $\mu_X \times \mu_Y$. 
Suppose furthermore that the conditional distribution $\sfP_{Y \mid X}$ is regular.
Let $p(x, y)$, $p(x)$, and $p(y \mid x)$ denote the joint density, the marginal density, and the conditional density, respectively (all densities are with respect to the their respective base measures).

Let $\Theta$ be a compact subset of Euclidean space, and consider
the function class of conditional energy models:
\begin{align}
    \scrF &:= \{\en_\theta(x, y) : \sfX \times \sfY \rightarrow \R \mid \theta \in \Theta \}.
\end{align}
This family of energy functions induces conditional densities in the following way:\footnote{While an EBM would traditionally be represented as $p_{\theta}(y\mid x) \propto \exp(-\en_{\theta}(x, y))$ (see e.g.,~\cite{lecun2006tutorial}), we utilize the non-negated version for notational convenience and consistency with referenced works such as~\cite{gutmann2010_nce,ma2018_ranking_nce}, upon which we base our theoretical development.}
$$
    p_\theta(y \mid x) = \dfrac{\exp(\en_\theta(x, y))}{Z_\theta(x)}, \quad Z_\theta(x) := \int_\sfY \exp(\en_\theta(x, y)) \,\rmd\mu_Y.
$$
Next, let $\Xi$ also be a compact subset of Euclidean space, 
and consider the parametric class of conditional densities for the contrastive model:
\begin{align}
    \scrF_{\mathrm{n}} := \{ p_\xi(y \mid x) \mid \xi \in \Xi \}.
\end{align}
While our theory will be written for general parametric density classes,
for our proposed algorithm to be practical we require that both computing
and sampling from $p_\xi(y \mid x)$ is efficient. Some examples of models which 
satisfy these requirements include normalizing flows~\cite{grathwohl2018_ffjord} and stochastic
interpolants~\cite{albergo2022building, albergo2023_stochastic_interpolants}.

We globally fix a positive integer $K \in \N_+$. For a given $x$, let $\sfP^{K}_{\negset \mid x;\xi}$ denote the product conditional sampling distribution over $\sfY^{K}$ where $\negset \sim \sfP^{K}_{\negset \mid x;\xi}$ denotes a random vector $\negset = (y_k)_{k=1}^{K} \in \sfY^{K}$ with $y_k \sim p_{\xi}(\cdot \mid x)$ for $k=1,\ldots, K$.
Now, for a given $(x, y) \sim \sfP_{X, Y}$, $\negset \sim \sfP^K_{\negset \mid x; \xi}$, and parameters $\theta \in \Theta$, $\xi \in \Xi$, we define:
\begin{equation}
    \ell_{\theta,\xi}(x, y \mid \negset) := \log \left[\frac{\exp(\en_\theta(x, y) - \log p_\xi(y \mid x))}{\sum_{y' \in \{y\} \cup \negset } \exp(\en_\theta(x, y') - \log p_\xi(y' \mid x))}\right].
    \label{eq:log_ranking_loss}
\end{equation}
With this notation, the Ranking Noise Contrastive Estimation (R-NCE) population \emph{maximization} objective $L(\theta, \xi)$ is:
\begin{equation}
    L(\theta, \xi) := \E_{(x,y) \sim \sfP_{X, Y}} \E_{\negset \sim \sfP^K_{\negset \mid x;\xi} }  \ell_{\theta,\xi}(x, y \mid \negset).
\label{eq:population}
\end{equation}
For notational brevity, we will write this as:
\[
    L(\theta, \xi) := \E_{x,y} \E_{\negset \mid x; \xi }\ \   \ell_{\theta,\xi}(x, y \mid \negset),
\]
where the length $K$ of the vector $\negset$ is implied from context.

\paragraph{Intuition.}
Let us build some intuition for the ranking objective \eqref{eq:log_ranking_loss}. For any $(x, y) \sim \sfP_{X,Y}$ and $\negset \sim \sfP^K_{\negset \mid x;\xi}$, let $\augset := \{y\} \bigcup \negset = (y_j)_{j=1}^{K+1} \in \sfY^{K+1}$. Let the random variable $D \in \{1,\ldots, K+1\}$ denote the index of the true sample $y$ within $\augset$. We will construct a Bayes classifier for $D$ using the EBM. In particular, leveraging the EBM as the generative model for the true distribution $\sfP_{Y\mid X}$, we can define the ``class-conditional'' distribution over $\augset$ as:
$$\bar{p}_{\theta,\xi}(\augset \mid x, D=k) = p_{\theta}(y_k \mid x) \prod_{j=1, j\neq k}^{K+1} \pn(y_j \mid x).$$ 
Then, the posterior $p(D=k \mid x, \augset) \triangleq q_{\theta, \xi}(k \mid x, \augset)$ may be derived via Bayes rule as:
\begin{align*}
    q_{\theta, \xi}(k \mid x, \augset) = \dfrac{\bar{p}_{\theta,\xi}(\augset \mid x, D=k) p(D=k \mid x)}{\sum_{j=1}^{K+1}\bar{p}_{\theta,\xi}(\augset \mid x, D=j) p(D=j \mid x)}.
\end{align*}
Now, it remains to specify a prior distribution $p(D=k \mid x)$. A simple, yet natural, choice reflecting
the desire that no index is to be preferred over any other indices
is to choose the uniform prior distribution $p(D=k \mid x) = 1/(K+1)$ for all $k \in \{1, \dots, K+1\}$.
With this prior, the posterior probability simplifies as:
\begin{align}
     q_{\theta, \xi}(k \mid x, \augset) &= \dfrac{p_{\theta}(y_k \mid x) \prod_{j\neq k} \pn(y_j \mid x)}{\sum_{j=1}^{K+1}p_{\theta}(y_j \mid x) \prod_{l\neq j} \pn(y_l \mid x)} \\
     &= \dfrac{p_{\theta}(y_k \mid x) / \pn(y_k \mid x)}{\sum_{j=1}^{K+1} p_{\theta}(y_j \mid x)/\pn(y_j \mid x)}.
\label{eq:posterior_prob}
\end{align}
Substituting $p_{\theta}(y \mid x) = \exp(\en_{\theta}(x, y)) / Z_{\theta}(x)$, and noting that the partition function $Z_\theta$ cancels:
\begin{equation}
    q_{\theta, \xi}(k \mid x, \augset) = \frac{\exp(\en_\theta(x, y_k) - \log p_\xi(y_k \mid x))}{\sum_{y \in \augset} \exp(\en_\theta(x, y) - \log p_\xi(y \mid x))}.
\label{eq:posterior_q_indexed}
\end{equation}
Thus, the objective in \eqref{eq:log_ranking_loss}
denotes the posterior log-probability (conditioned on both the data $(x, y)$ and the contrastive samples $\negset$)
that the sample $y$ is drawn from the energy model $\en_\theta$ and the
remaining samples $K$ samples in $\negset$ are all drawn iid from $p_\xi(\cdot \mid x)$. 
Based on this interpretation, we adopt the terminology that $y$ is the \emph{positive example},
$\negset$ contains the \emph{negative examples}, and $p_\xi(\cdot \mid x)$ is the 
\emph{negative proposal distribution}.

Given this posterior log-probability interpretation, it will prove useful to re-state the R-NCE objective as follows. For $k \in \{1, \ldots, K+1\}$, let $\augP_{\augset \mid x;\xi,k}$ denote the true ``class-conditional" distribution over $\sfY^{K+1}$. That is, for a given $\augset = (y_j)_{j=1}^{K+1} \sim \augP_{\augset \mid x;\xi,k}$, we have $y_k \sim p(\cdot \mid x)$ and $y_j \sim p_\xi(\cdot \mid x)$ for $j\neq k$. 
When the index $k$ is omitted, this refers to the setting of $k=1$:
$\augP_{\augset \mid x;\xi} \equiv \augP_{\augset \mid x;\xi,1}$.
By symmetry then, for any $k \in \{1,\ldots, K+1\}$, the population objective in~\eqref{eq:population} can be equivalently written as:
\begin{equation}
    L(\theta, \xi) := \E_{x \sim \sfP_X} \E_{\augset \sim \augP_{\augset \mid x;\xi,k}} \log q_{\theta, \xi}(k \mid x, \augset). 
\label{eq:population_sym}
\end{equation}
For notational brevity, we will write this as:
\[
    L(\theta, \xi) := \E_{x} \E_{\augset \mid x;\xi,k}\ \  \log q_{\theta, \xi}( k \mid x, \augset).
\]

\paragraph{Comparison to Maximum Likelihood Estimation.}

Let us take a moment to compare the R-NCE objective to the standard maximum likelihood estimation (MLE) objective:
\begin{align}
    L_{\mathrm{mle}}(\theta) := \E_{x,y} \log p_\theta( y \mid x ). \label{eq:mle_obj}
\end{align}
Letting $\augset = \{y\} \cup \negset$, and taking the gradient of both $L_{\mathrm{mle}}(\theta)$
and $L(\theta, \xi)$
(assume for now the validity of exchanging the order of expectation and derivative),
\begin{align*}
    \nabla_\theta L_{\mathrm{mle}}(\theta) &= \E_{x,y} \nabla_\theta \en_\theta(x, y) - \E_x \E_{y' \sim p_\theta(\cdot \mid x)} \nabla_\theta \en_\theta(x, y'), &&\textbf{(MLE)} \\
    \nabla_\theta L(\theta, \xi) &= \E_{x,y} \nabla_\theta \en_\theta(x, y) - \E_x \E_{\augset \mid x; \xi} \left[ \sum_{i=1}^{K+1} q_{\theta, \xi}(i \mid x, \augset) \nabla_\theta \en_\theta(x, y_i) \right]. &&\textbf{(R-NCE)}
\end{align*}
Comparing the two expressions, the main difference is in the gradient correction term on the right.
For MLE, this gradient computation requires sampling from the energy model $p_\theta(y \mid x)$, which is
prohibitively expensive during training.
On the other hand, the R-NCE gradient is computationally efficient to compute, provided the negative
sampler is efficient to both sample from and compute log-probabilities.

\subsection{Algorithm}
\label{sec:algo}

\SetKwComment{Comment}{// }{}

\begin{algorithm}[htb!]
\caption{Ranking noise-contrastive estimation with learnable negative sampler}
\label{alg:rnce}
\KwData{Dataset $\calD = \{(x_i, y_i)\}_{i=1}^{n}$, 
number of outer steps $T_{\mathrm{outer}}$,
number of sampler steps per outer step $T_{\mathrm{samp}}$,
number of R-NCE steps per outer step $T_{\mathrm{rnce}}$,
number of negative samples $K$.
}
\KwResult{Energy model parameters $\theta$, negative sampler parameters $\xi$.}
Initialize $\theta$ and $\xi$.\\
Set $\hat{L}_{\mathrm{mle}}(\xi; \calB) = -\sum_{(x_i,y_i) \in \calB} \log{p_\xi(y_i \mid x_i)}$.\\
Set $\hat{L}_{\mathrm{rnce}}(\theta, \xi; \bar{\calB}) = -\sum_{(x_i, y_i, \negset_i) \in \bar{\calB}} \ell_{\theta,\xi}(x_i, y_i \mid \negset_i)$.\label{alg:before_while_loop}\\
\For{$T_{\mathrm{outer}}$ iterations}{\label{alg:main_loop_start}
    \For{$T_{\mathrm{samp}}$ iterations}{\label{alg:negative_start}
        $\calB \gets$ batch of $\calD$.\\
        $\xi \gets$ minimize using $\nabla_\xi \hat{L}_{\mathrm{mle}}(\xi; \calB)$. \Comment{Or use auxiliary loss, see \Cref{rmk:stochastic_interpolants_loss}.} \label{eq:alg_mle_update}
    }\label{alg:negative_end}
    \For{$T_{\mathrm{rnce}}$ iterations}{\label{alg:rnce_start}
        $\calB \gets$ batch of $\calD$.\\
        \Comment{Augment batch with negative samples.}
        $\bar{\calB} \gets \{ (x_i, y_i, \negset_i) \mid (x_i, y_i) \in \calB \}$, with $\negset_i \sim \sfP^K_{\negset \mid x_i; \xi}$. \label{eq:augment} \\
        $\theta \gets$ minimize using $\nabla_\theta \hat{L}_{\mathrm{rnce}}(\theta, \xi; \bar{\calB})$. \label{eq:alg_rnce_update}\\
    }
    \Return{$(\theta,\xi)$}
}
\end{algorithm}

\begin{algorithm}[htb!]
\caption{Sample from $p_\theta(y \mid x)$ for Euclidean $\sfY$}
\label{alg:rnce_sample}
\KwData{Context $x \in \sfX$, energy model parameters $\theta$, negative sampler parameters $\xi$, Langevin steps $T_{\mathrm{mcmc}}$, Langevin step size $\eta$.}
\KwResult{Sample $y \sim p_\theta(y \mid x)$.}
Set $y_0 \sim p_\xi(y \mid x)$. \label{line:prop_warmup} \\
\Comment{Run Langevin sampling, could alternatively run HMC sampling.}
\For{$t = 0, \dots, T_{\mathrm{mcmc}}-1$}{ \label{line:mcmc_start}
    Set $y_{t+1} = y_t + \eta \nabla_y \en_\theta(x, y_t) + \sqrt{2\eta} w_t$, with $w_t \sim \mathcal{N}(0, I)$.
}\label{line:mcmc_end}
\Return{$y_T$}
\end{algorithm}

Our main proposed R-NCE algorithm with a learnable negative sampler is
presented in \Cref{alg:rnce}. We remark that one could alternatively
pre-train the negative sampler distribution $p_\xi$
(i.e., move Lines~\ref{alg:negative_start}--\ref{alg:negative_end}
after \Cref{alg:before_while_loop} and set $T_{\mathrm{samp}}$
large enough so that the negative sampler converges), as opposed to jointly
training the negative sampler concurrently with the energy model.
Using a pre-trained negative sampler distribution was previously 
explored in \cite{gao2020_flow_contrastive}, where it was empirically shown
to be an ineffective idea for binary NCE. 
We will also revisit this choice within our experiments.

\begin{rmk}
\label{rmk:stochastic_interpolants_loss}
As mentioned earlier, any family of densities $\scrF_{\mathrm{n}}$ suffices 
in \Cref{alg:rnce}.
For specific families such as stochastic interpolants~\cite{albergo2022building} where
an auxiliary loss is used instead of negative log-likelihood,
\Cref{alg:rnce}, \Cref{eq:alg_mle_update} is replaced with the corresponding auxiliary loss.
\end{rmk}

\begin{rmk}
\label{rmk:rnce_gradients}
Note that in \Cref{alg:rnce}, the proposal model is optimized independently of the EBM; the latter is optimized purely via the R-NCE objective. Crucially, the negative samples $\negset$ do not depend upon the EBM's parameters $\theta$. Hence, the batch augmentation in~\Cref{eq:augment} can be performed independently of the optimization step in~\Cref{eq:alg_rnce_update}.
\end{rmk}

\Cref{alg:rnce_sample} outlines how to sample from the combined proposal $\pn$ and EBM $p_\theta$ models. In particular, we leverage $\pn$ to generate an initial sample in~\Cref{line:prop_warmup} and warm-start a finite-step MCMC chain with the EBM in Lines~\ref{line:mcmc_start}-\ref{line:mcmc_end}. We outline the simplest possible MCMC implementation (Langevin sampling), however one may leverage any additional variations and tricks, including Hamiltonian Monte Carlo (HMC) and Metropolis-Hastings adjustments.

\section{Analysis}

We now present our analysis of ranking noise contrastive estimation.
First, we will require the following regularity assumptions:

\begin{assmp}[Continuity]
\label{assmp:continuous}
Assume the following assumptions are satisfied:
\begin{enumerate}
    \item For a.e.\ $(x, y)$, the map $\theta \mapsto \en_{\theta}(x, y)$ is continuous, and for all $\theta$, the map $(x, y) \mapsto \en_{\theta}(x, y)$ is measurable.
    \item For a.e.\ $(x, y)$, the map $\xi \mapsto \pn(y \mid x)$ is continuous, and for all $\xi$, the map $(x, y) \mapsto \pn(y \mid x)$ is measurable.
\end{enumerate}
\end{assmp}
We additionally require the following boundedness assumption:

\begin{assmp}[Uniform Integrability]
\label{assmp:bounded}
Assume that:
\begin{align*}
    \exists\, \theta_0, \xi_0 \in \Theta \times \Xi \quad \mathrm{s.t.} \quad \E_{x,y} \E_{\negset \mid x;\xi}\sup_{\substack{{\theta \in \Theta,}\\{\xi \in \Xi}}} \bigabs{\ell_{\theta,\xi}(x, y \mid \negset) - \ell_{\theta_0, \xi_0}(x, y \mid \negset) } < \infty.
\end{align*}
\end{assmp}
Furthermore, define the random variable $\bar{\ell}_{\theta, \xi}(x,y)$, measurable on $\sfX \times \sfY$: 
\[
    \bar{\ell}_{\theta, \xi}(x,y) := \E_{\negset \mid x; \xi}\  \ell_{\theta, \xi}(x, y \mid \negset).
\]
Then, by the continuity of $(\theta, \xi) \mapsto \ell_{\theta, \xi}(x, y \mid \negset)$ for a.e.\ $(x, y, \negset)$ and the Lebesgue Dominated Convergence theorem, it follows that: (i) for a.e.\ $(x, y)$, the map $(\theta, \xi) \mapsto \bar{\ell}_{\theta, \xi}(x,y)$ is continuous, and (ii) the map $(\theta, \xi) \mapsto L(\theta, \xi)$ is continuous. 
For all results in this section, the proofs are provided in~\Cref{sec:proofs}.

\subsection{Optimality}
\label{sec:results:optimality}
The first step in analyzing the R-NCE objective is to characterize the set of optimal solutions. To do so, we introduce the following key definition:

\begin{mydef}[Realizability]
\label{def:realizabiilty}
We say that $\scrF$ is \emph{realizable} if
there exists a $\theta_\star \in \Theta$ such that
for almost every $(x, y)$:
\begin{align}
    p_{\theta_\star}(y \mid x) = \dfrac{\exp(\en_{\theta_\star}(x, y))}{Z_{\theta_\star}(x)} = p(y \mid x). \label{eq:realizable_condition}
\end{align}
We let $\Theta_\star$ denote the set of $\theta_\star \in \Theta$ such that the above condition \eqref{eq:realizable_condition} holds.
If $\Theta_\star$ is a singleton, we say that $\scrF$ is \emph{identifiable}.
\end{mydef}

Realizability stipulates that the energy model $\en_\theta$ is sufficiently expressive to capture the true underlying distribution. We now establish the following optimality result:

\begin{restatable}[Optimality]{thm}{optimality}
\label{prop:optimality}
Suppose that $\scrF$ is realizable. Then, \emph{for any} $\xi \in \Xi$, we have that:
\begin{align}
    \Theta_\star = \argmax_{\theta \in \Theta} L(\theta, \xi).
\end{align}
\end{restatable}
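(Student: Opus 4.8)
The plan is to recognize the R-NCE population objective, for a fixed $\xi$, as an expected cross-entropy between the true posterior over the index $D$ and the model posterior $q_{\theta,\xi}$, and then to invoke the standard fact that cross-entropy is maximized precisely when the two distributions coincide almost everywhere. Concretely, I would start from the symmetrized form \eqref{eq:population_sym} (say with $k=1$), so that
\[
    L(\theta, \xi) = \E_{x} \E_{\augset \mid x;\xi,1}\ \log q_{\theta, \xi}(1 \mid x, \augset).
\]
Introduce the ``true'' posterior $q_{\star,\xi}(k \mid x, \augset)$ obtained by the same Bayes-rule computation as in \eqref{eq:posterior_prob} but with the true conditional $p(\cdot\mid x)$ in place of $p_\theta(\cdot\mid x)$; by the same cancellation argument this equals the normalized weights $w_k/\sum_j w_j$ with $w_k := p(y_k\mid x)/p_\xi(y_k\mid x)$. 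The key observation is that, by averaging over the uniformly-chosen index and using the symmetry already exploited in the paper, $L(\theta,\xi)$ can be rewritten (up to an additive constant independent of $\theta$) as
\[
    L(\theta,\xi) = \text{const} - \E_{x}\, \E_{\augset} \sum_{k=1}^{K+1} q_{\star,\xi}(k\mid x,\augset)\,\log\frac{q_{\star,\xi}(k\mid x,\augset)}{q_{\theta,\xi}(k\mid x,\augset)},
\]
i.e. $L(\theta,\xi)$ equals a constant minus an expected KL divergence $\E_x\E_{\augset}\, \KL{q_{\star,\xi}(\cdot\mid x,\augset)}{q_{\theta,\xi}(\cdot\mid x,\augset)}$ (the expectation over $\augset$ taken under the mixture that draws the special index uniformly, which is the marginal of $\augset$). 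Here the averaging-over-index trick is what turns the single-coordinate log-likelihood into a genuine cross-entropy; this is exactly the content of the equivalence between \eqref{eq:population} and \eqref{eq:population_sym}.

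Given this identity, the proof splits into two directions. For the ``$\supseteq$'' direction: if $\theta\in\argmax L(\cdot,\xi)$, then since KL divergence is nonnegative and $L$ is bounded above by the constant (attained when the posteriors agree), we must have $q_{\theta,\xi}(\cdot\mid x,\augset) = q_{\star,\xi}(\cdot\mid x,\augset)$ for almost every $(x,\augset)$. Reading off the ratio structure, $q_{\theta,\xi}(k\mid x,\augset) = q_{\star,\xi}(k\mid x,\augset)$ for all $k$ forces $p_\theta(y_k\mid x)/p_\xi(y_k\mid x) \propto p(y_k\mid x)/p_\xi(y_k\mid x)$ as functions of the $y_k$'s, hence $p_\theta(y\mid x) = c(x)\, p(y\mid x)$ for a.e. $(x,y)$; since both sides integrate to $1$ over $\sfY$, $c(x)=1$, so $\theta\in\Theta_\star$. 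Some care is needed to promote ``for a.e. tuple $(y_1,\dots,y_{K+1})$ drawn from the relevant product-ish measure'' to ``for a.e. $y$'': this uses absolute continuity of $\sfP_{Y\mid X}$ and of $p_\xi(\cdot\mid x)$ with respect to $\mu_Y$, and the fact that $p_\xi$ and the true density are (essentially) strictly positive on their supports, or at least that the support of the relevant product measure has full $\mu_Y^{\otimes(K+1)}$-measure on the region where $p>0$; I would handle this by fixing one coordinate's value generically and varying another. For the ``$\subseteq$'' direction: if $\theta_\star\in\Theta_\star$, then $p_{\theta_\star}(\cdot\mid x)=p(\cdot\mid x)$ a.e., so $q_{\theta_\star,\xi}=q_{\star,\xi}$ a.e., the KL term vanishes, and $L(\theta_\star,\xi)$ attains the constant upper bound, hence $\theta_\star\in\argmax L(\cdot,\xi)$. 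Realizability guarantees $\Theta_\star\neq\emptyset$, so the argmax is this nonempty set.

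The main obstacle I anticipate is not the cross-entropy argument itself, which is routine, but the measure-theoretic bookkeeping in the forward direction: carefully justifying that $q_{\theta,\xi}=q_{\star,\xi}$ on a set of full measure (with respect to the law of $(x,\augset)$ under \eqref{eq:population_sym}) implies the pointwise density identity $p_\theta(y\mid x)=p(y\mid x)$ for a.e. $(x,y)$ under $\mu_X\times\mu_Y$, rather than merely on the support of $\sfP_{X,Y}$. This requires using that $p_\xi(\cdot\mid x)$ has the same (or larger) support as $\mu_Y$ — or otherwise restricting the claim appropriately — and exploiting the freedom to vary individual coordinates $y_k$ over a full-measure set while holding the others fixed, together with Fubini. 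A secondary, minor point is confirming that $L(\theta,\xi)$ is indeed finite and bounded above so that ``$\argmax$'' is well-defined and the manipulation ``$L = \text{const} - \text{KL}$'' is legitimate; this is where Assumption~\ref{assmp:bounded} (uniform integrability) and the continuity already established are used. I would also double-check the claimed additive constant is genuinely independent of $\theta$ — it is the entropy-like term $\E_x\E_{\augset}\sum_k q_{\star,\xi}(k\mid x,\augset)\log q_{\star,\xi}(k\mid x,\augset)$ plus a normalization term coming from the change between the single-index form and the symmetrized form — and that it is finite.
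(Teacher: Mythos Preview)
Your proposal is correct and follows essentially the same approach as the paper: rewrite $L(\theta,\xi)$ via the averaging-over-index symmetry and change of measure as a (weighted) cross-entropy between the true posterior $q_{\star,\xi}$ and the model posterior $q_{\theta,\xi}$, then use strict concavity of $\log$ (equivalently, nonnegativity of KL) to characterize the maximizers. The paper phrases the pointwise optimization via a Lagrange-multiplier claim rather than KL, but these are equivalent; your identification of the measure-theoretic step (promoting ``$q_{\theta,\xi}=q_{\star,\xi}$ for a.e.\ $(x,\augset)$'' to ``$p_\theta(y\mid x)=p(y\mid x)$ for a.e.\ $(x,y)$ under $\mu_X\times\mu_Y$'') as the main obstacle is spot-on---the paper handles it by a Tonelli/cross-section argument that reduces the $(K{+}1)$-tuple statement to a pairwise one and then integrates out the auxiliary coordinate.
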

We note that the proof of \Cref{prop:optimality}
mostly follows that of \citet[Theorem 4.1]{ma2018_ranking_nce}, but 
includes the measure-theoretic arguments necessary to extend it to our 
more general setting.

\begin{rmk}
Notice that the optimality of $\Theta_\star$ is independent of the negative proposal distribution, thereby allowing us to use any generative model for
negative samples $\negset$, provided it is easily samplable and the computation of $\log\pn(\cdot \mid x)$ is tractable.
The question of which negative proposal distribution to use is addressed in \Cref{sec:rnce_special}.
\end{rmk}

\subsubsection{Comparison to the IBC Objective}

In \citet{florence2022implicit}, the following implicit behavior cloning (IBC) objective is proposed:
\begin{align}
    L_{\mathrm{ibc}}(\theta, \xi) = \E_{x,y} \E_{\negset \mid x; \xi} \log\left[ \frac{\exp(\en_\theta(x, y)) }{ \sum_{y' \in \{y\} \cup \negset} \exp( \en_\theta(x, y')) } \right].
\label{eq:ibc_obj}
\end{align}

The IBC objective can be seen as a special case of the
R-NCE objective, with the particular choice of negative sampling distribution $p_\xi(y \mid x)$ as
the uniform distribution on $\sfY$ for a.e.\ $x \in \sfX$.
Note that the uniform distribution is \emph{the only choice}
which renders the objective unbiased (this observation was also made in \citet{ta2022_ebm}).
\Cref{fig:ibc_vs_rnce} illustrates a toy one dimensional setting
with a Gaussian proposal distribution, illustrating the bias inherent in the population objective when a non-uniform proposal distribution is used. 
Shortly, we will characterize the optimizers of the IBC objective, quantifying the bias.

\begin{figure}[H]
\centering
\includegraphics[width=0.7\textwidth]{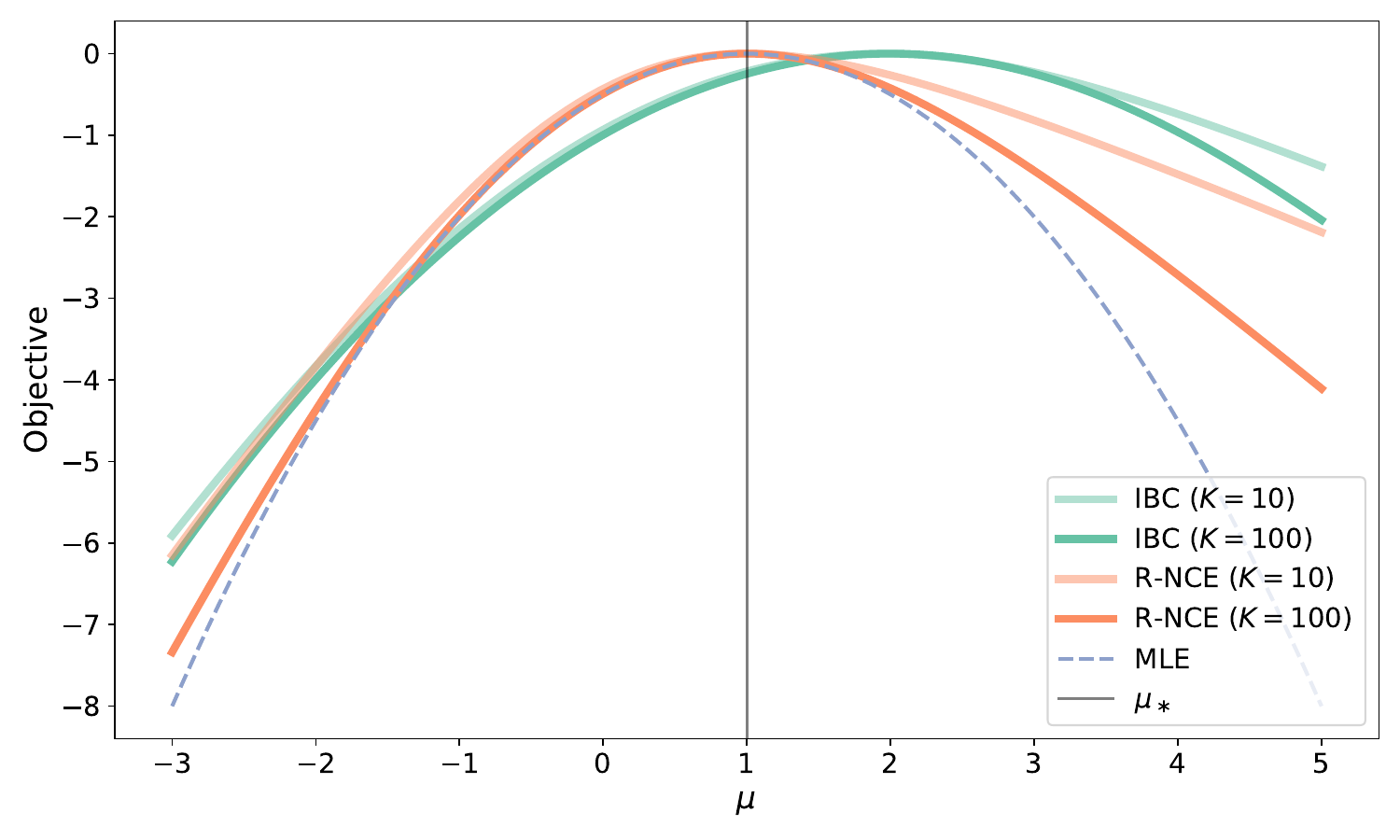}
\caption{
The population objective landscape
for the IBC objective \eqref{eq:ibc_obj} 
versus the R-NCE objective \eqref{eq:population}, for $K \in \{10, 100\}$.
The true distribution $p(y \mid x)$ is a $\calN(1, 1)$ distribution (for simplicity, the context $x$ is
not relevant), and the proposal distribution is $\calN(0, 1)$.
The energy model function class is comprised of unit variance Gaussians, i.e.,
$\scrF = \{ (x, y) \mapsto -\frac{1}{2}(y-\mu)^2 \mid \mu \in \R \}$. Note that
for ease of comparison, all objectives have been shifted
so that the maximum values are zero. 
Furthermore, the maximum likelihood objective \eqref{eq:mle_obj}
has also been plotted for reference.
}
\label{fig:ibc_vs_rnce}
\end{figure}

The fact that the IBC objective forces the use of a uniform proposal distribution substantially limits its applicability in 
high dimension. Indeed, in \Cref{sec:rnce_special} 
we will see that a proposal distribution which is non-informative 
causes the gradient to vanish, and hence training to stall. 
This issue has been empirically observed in many follow up 
works~\cite{ta2022_ebm,reuss2023goal,chi2023diffusion,pearce2023imitating}.

\paragraph{Optimizers of the IBC Objective.}

The optimizers of the IBC objective can be studied using 
the optimality of the R-NCE objective (\Cref{prop:optimality})
in conjunction with a change of variables.
We first introduce the notion of $\xi$-realizability,
which posits that the density \emph{ratio} $p(y \mid x)/p_\xi(y \mid x)$
is representable by the function class $\scrF$.
\begin{mydef}[$\xi$-realizability]
\label{def:xi_realizable}
Let $\xi \in \Xi$. We say that $\scrF$ is $\xi$-realizable if
there exists $\theta_\star \in \Theta$ such that
for a.e.\ $(x, y)$:
\begin{align*}
    \frac{\exp(\en_{\theta_\star}(x, y))}{\int_{\sfY} \exp(\en_{\theta_\star}(x, y)) p_\xi(y \mid x) \,\rmd \mu_Y} = \frac{p(y \mid x)}{p_\xi(y \mid x)}.
\end{align*}
We let $\Theta_\xi$ denote the subset of $\Theta$ such that the above condition holds.
\end{mydef}
With this definition, we can now characterize the optimizers of the
IBC objective.
\begin{myprop}
\label{prop:ibc_optimality}
Let $\xi \in \Xi$, and suppose that $\scrF$ is $\xi$-realizable (\Cref{def:xi_realizable}). We have:
\begin{align*}
    \Theta_\xi = \argmax_{\theta \in \Theta} L_{\mathrm{ibc}}(\theta, \xi).
\end{align*}
\end{myprop}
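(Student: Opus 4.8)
The plan is to reduce the IBC objective to an instance of the R-NCE objective by reparametrizing the energy class, and then invoke \Cref{prop:optimality}. Fix the $\xi$ from the statement and define the transformed energy class
\[
  \scrF' := \left\{ \en'_\theta(x, y) := \en_\theta(x, y) + \log p_\xi(y \mid x) \ :\ \theta \in \Theta \right\}.
\]
The key algebraic observation is that feeding $\en'_\theta$ into the R-NCE integrand \eqref{eq:log_ranking_loss} (with the \emph{same} proposal $p_\xi$) makes the $\log p_\xi$ terms cancel: $\en'_\theta(x, y') - \log p_\xi(y' \mid x) = \en_\theta(x, y')$ for every $y'$, so $\ell_{\en'_\theta, \xi}(x, y \mid \negset)$ is pointwise equal to the bracketed quantity inside the IBC objective \eqref{eq:ibc_obj}. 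Since the law of the negative samples $\negset \sim \sfP^K_{\negset \mid x;\xi}$ and the data law $\sfP_{X,Y}$ are identical in both objectives, taking expectations yields the exact identity $L(\theta, \xi)\big|_{\scrF'} = L_{\mathrm{ibc}}(\theta, \xi)\big|_{\scrF}$ for all $\theta \in \Theta$; in particular the two have the same maximizers over $\theta$.

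Next I would verify that \Cref{prop:optimality} is applicable to $\scrF'$ at this $\xi$. Continuity of $\theta \mapsto \en'_\theta(x,y)$ and measurability of $(x,y) \mapsto \en'_\theta(x,y)$ follow from \Cref{assmp:continuous} together with the fact that $p_\xi(\cdot \mid \cdot)$ is a fixed measurable function; here one uses that $p_\xi(y \mid x) > 0$ for a.e.\ $(x,y)$ so that $\log p_\xi$ is a.e.\ finite (otherwise one restricts all statements to the support of $p_\xi$). The uniform-integrability hypothesis \Cref{assmp:bounded} for $\scrF'$ at $\xi$ is exactly an integrability condition on the IBC integrand, which is $\leq 0$ and controlled below in terms of differences of the energies $\en_\theta$ on the augmented sample, and hence holds under the blanket regularity assumptions in force. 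Applying \Cref{prop:optimality} to $\scrF'$ then gives $\argmax_\theta L_{\mathrm{ibc}}(\theta, \xi) = \argmax_\theta L(\theta, \xi)\big|_{\scrF'} = \Theta'_\star$, where by \Cref{def:realizabiilty} $\Theta'_\star$ is the set of $\theta$ for which $\exp(\en'_\theta(x,y)) / \int_\sfY \exp(\en'_\theta(x,y))\, \rmd\mu_Y = p(y \mid x)$ for a.e.\ $(x,y)$.

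Finally I would identify $\Theta'_\star$ with $\Theta_\xi$. Since $\exp(\en'_\theta(x,y)) = \exp(\en_\theta(x,y))\, p_\xi(y \mid x)$ and $p_\xi$ is a density with respect to $\mu_Y$, the normalizing integral equals $\int_\sfY \exp(\en_\theta(x,y))\, p_\xi(y \mid x)\, \rmd\mu_Y$, so the defining equation of $\Theta'_\star$ rearranges to
\[
  \frac{\exp(\en_\theta(x, y))}{\int_{\sfY} \exp(\en_\theta(x,y))\, p_\xi(y \mid x)\, \rmd\mu_Y} = \frac{p(y \mid x)}{p_\xi(y \mid x)} \quad \text{for a.e.\ } (x,y),
\]
which is precisely the $\xi$-realizability condition of \Cref{def:xi_realizable}; hence $\Theta'_\star = \Theta_\xi$. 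In particular ``$\scrF'$ is realizable'' is the same statement as ``$\scrF$ is $\xi$-realizable,'' the hypothesis of the proposition, so \Cref{prop:optimality} indeed applies. Combining the three steps yields $\Theta_\xi = \argmax_\theta L_{\mathrm{ibc}}(\theta,\xi)$.

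The main obstacle is not the algebra, which is a one-line cancellation, but the measure-theoretic hygiene involved in invoking \Cref{prop:optimality} as a black box on a class it was not originally stated for: ensuring $\log p_\xi$ is a.e.\ finite (equivalently, handling the support of the proposal correctly), confirming that $\scrF'$ inherits \Cref{assmp:continuous} and \Cref{assmp:bounded} at the relevant $\xi$, and checking that the density ratio $p(y\mid x)/p_\xi(y\mid x)$ appearing in \Cref{def:xi_realizable} is meaningful (one implicitly needs $p(\cdot\mid x) \ll p_\xi(\cdot\mid x)$, failing which the conclusion should be read on the common support). These points are routine but worth spelling out.
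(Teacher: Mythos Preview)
Your proof is correct and follows essentially the same approach as the paper: define the shifted energy class $\scrF' = \{\en_\theta + \log p_\xi\}$, observe that the R-NCE objective on $\scrF'$ with proposal $p_\xi$ coincides pointwise with the IBC objective on $\scrF$, note that realizability of $\scrF'$ is precisely $\xi$-realizability of $\scrF$, and invoke \Cref{prop:optimality}. The paper's proof is more terse and does not spell out the measure-theoretic checks you raise (a.e.\ finiteness of $\log p_\xi$, inheritance of \Cref{assmp:continuous} and \Cref{assmp:bounded}); your attention to these points is a welcome addition rather than a deviation.
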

\begin{proof}
The proof follows immediately from \Cref{prop:optimality}
after a simple transformation. In particular, we note that:
\begin{align*}
    L_{\mathrm{ibc}}(\theta, \xi) = \E_{x,y} \E_{\negset \mid x; \xi} \log\left[ \frac{\exp(\en_\theta(x, y) + \log{p_\xi(y \mid x)} - \log{p_\xi(y \mid x)} ) }{ \sum_{y' \in \{y\} \cup \negset} \exp( \en_\theta(x, y') + \log{p_\xi(y' \mid x)} - \log{p_\xi(y' \mid x)}  ) } \right].
\end{align*}
This is equivalent to the R-NCE objective over the shifted function class:
\begin{align*}
    \scrF_\xi := \{ \en_\theta(x, y) + \log p_\xi(y \mid x) \mid \theta \in \Theta \}.
\end{align*}
Note that realizability of $\scrF_\xi$ (\Cref{def:realizabiilty})
is equivalent to $\xi$-realizability of $\scrF$ (\Cref{def:xi_realizable}),
from which the claim now follows by \Cref{prop:optimality}.
\end{proof}

\Cref{prop:ibc_optimality} illustrates that optimizing the IBC objective
results in an energy model which represents the density ratio
$p(y \mid x)/p_\xi(y \mid x)$, explaining the bias in the IBC objective
when the negative sampling distribution is non-uniform.
The reason for this is because the IBC objective is based on an incorrect 
application of InfoNCE~\cite{vandenOord2018_cpc}, which is designed to maximize a 
lower bound on mutual information between contexts $x$ and events $y$,
rather than extract an energy model
$\en_\theta(x, y)$ to model $p(y \mid x)$~\cite{ta2022_ebm}.

\subsection{Asymptotic Convergence}
\label{sec:results:asymptotic_convergence}

In this section, we study the asymptotics of the R-NCE joint optimization algorithm. We begin by defining the finite-sample version of the R-NCE population objective:
\begin{equation}
    L_n(\theta, \xi) := \hat{\E}_{x,y}\ \bar{\ell}_{\theta, \xi}(x, y) := \dfrac{1}{n}\sum_{i=1}^{n} \bar{\ell}_{\theta, \xi}(x_i, y_i).
    \label{eq:rnce_finite}
\end{equation}
We consider a sequence of arbitrary negative sampler parameters $\{\hat{\xi}_n\} \subset \Xi$ which 
are random variables depending on the corresponding prefix of data points $\{(x_i, y_i)\}$.
From these negative sampler parameters, we select energy model parameters via the optimization:
\begin{equation}
    \hat{\theta}_n \in \argmax_{\theta \in \Theta}\ L_n(\theta, \hat{\xi}_n).
\label{eq:finite_n_theta}
\end{equation}
We note that the definition of $\hat{\theta}_n$ in 
\eqref{eq:finite_n_theta} is a stylized version of
\Cref{alg:rnce}, where the training procedure (e.g., gradient-based optimization)
is assumed to suceed.
We now establish consistency of the sequence of estimators $\{\hat{\theta}_n\}$ to the set $\Theta_\star$. 
Let $d(z, \calA)$ denote the set distance function between the set $\calA$ and the point $z$, i.e.,
\begin{align*}
    d(z, \calA) := \inf_{z' \in \calA} \|z - z'\|.
\end{align*}

\begin{restatable}[General Consistency]{thm}{generalconsistency}
\label{prop:consistency}
Suppose that $\scrF$ is realizable, $\Theta_\star$ is contained in the interior of $\Theta$, and Assumptions \ref{assmp:continuous} and \ref{assmp:bounded} hold. Let $\hat{\theta}_n \in \argmax_{\theta \in \Theta} L_n(\theta, \hat{\xi}_n)$ denote
an arbitrary empirical risk maximizer from $n$ samples. Then, $d(\hat{\theta}_n, \Theta_\star) \overset{n\rightarrow \infty}{\longrightarrow} 0$ almost surely.
\end{restatable}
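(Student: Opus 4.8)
The plan is to run the classical argmax-consistency argument in the style of Wald, adapted to the fact that $\hat{\xi}_n$ is an arbitrary data-dependent sequence. The two ingredients are: (i) a uniform strong law of large numbers for the empirical R-NCE objective, \emph{uniformly over the entire parameter space} $\Theta\times\Xi$; and (ii) a well-separated-maximum property obtained from \Cref{prop:optimality} (which is what makes the nuisance parameter harmless, since it pins down $\argmax_\theta L(\theta,\xi)=\Theta_\star$ for \emph{every} $\xi$). Once these are in hand, a short sandwich inequality closes the argument. Note $\Theta_\star$ is nonempty and compact by realizability together with \Cref{prop:optimality} (it is the maximizer set of the continuous map $L(\cdot,\xi)$ over compact $\Theta$), so $d(\cdot,\Theta_\star)$ is well behaved.

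\emph{Step 1: uniform law of large numbers.} First observe that $\ell_{\theta,\xi}(x,y\mid\negset)$ is the logarithm of a quantity in $(0,1]$, so $\bar{\ell}_{\theta,\xi}(x,y)\le 0$ for a.e.\ $(x,y)$. Combined with \Cref{assmp:bounded} (which supplies a $\sfP_{X,Y}$-integrable envelope for the centered family $\{\bar{\ell}_{\theta,\xi}-\bar{\ell}_{\theta_0,\xi_0}:(\theta,\xi)\in\Theta\times\Xi\}$) and with the fact that $L(\theta_\star,\xi_0)\ge-\log(K+1)$ for $\theta_\star\in\Theta_\star$ --- because $q_{\theta_\star,\xi_0}$ is then the true posterior over the positive index and $L(\theta_\star,\xi_0)$ is accordingly minus an average conditional entropy over $K+1$ classes, so $L(\theta_0,\xi_0)=\E_{x,y}\bar{\ell}_{\theta_0,\xi_0}(x,y)$ is finite --- the family $\{\bar{\ell}_{\theta,\xi}\}$ has an integrable envelope. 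Together with the a.e.\ continuity of $(\theta,\xi)\mapsto\bar{\ell}_{\theta,\xi}(x,y)$ (established in the excerpt) and compactness of $\Theta\times\Xi$, a standard covering argument --- cover $\Theta\times\Xi$ by finitely many small balls, bound the oscillation of $\bar{\ell}$ over each ball by a modulus dominated by the envelope and vanishing with the radius (by dominated convergence), and invoke the strong law at the ball centers --- yields
\[
    \gamma_n := \sup_{(\theta,\xi)\in\Theta\times\Xi}\bigabs{L_n(\theta,\xi)-L(\theta,\xi)}\ \overset{n\rightarrow\infty}{\longrightarrow}\ 0\qquad\text{almost surely.}
\]
This is the principal obstacle; the only non-standard wrinkle is that $\xi$ appears both inside the integrand $\ell_{\theta,\xi}$ and in the law $\sfP^K_{\negset\mid x;\xi}$ of the negatives, but this is entirely absorbed into the a.e.\ continuity and envelope statements and does not otherwise affect the covering argument.

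\emph{Step 2: well-separated maximum.} By \Cref{prop:optimality}, $\argmax_{\theta\in\Theta}L(\theta,\xi)=\Theta_\star$ for every $\xi\in\Xi$. Set $L^\star(\xi):=\max_{\theta\in\Theta}L(\theta,\xi)$; since $L$ is jointly continuous and $\Theta$ is compact, $L^\star$ is continuous on $\Xi$. Fix $\e>0$ small enough that $\{\theta\in\Theta:d(\theta,\Theta_\star)\ge\e\}\ne\emptyset$ (otherwise nothing is to be proved for that $\e$). The set $C_\e:=\{(\theta,\xi)\in\Theta\times\Xi:d(\theta,\Theta_\star)\ge\e\}$ is compact, and on it the continuous function $(\theta,\xi)\mapsto L^\star(\xi)-L(\theta,\xi)$ is strictly positive, because $\theta\notin\Theta_\star=\argmax_{\theta}L(\theta,\xi)$. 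Hence $\delta(\e):=\min_{(\theta,\xi)\in C_\e}\bigl(L^\star(\xi)-L(\theta,\xi)\bigr)>0$.

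\emph{Step 3: sandwich.} Work on the probability-one event on which $\gamma_n\to 0$, and fix any $\theta_\star\in\Theta_\star$. For every $n$ with $\gamma_n<\delta(\e)/2$,
\[
    L^\star(\hat{\xi}_n)=L(\theta_\star,\hat{\xi}_n)\le L_n(\theta_\star,\hat{\xi}_n)+\gamma_n\le L_n(\hat{\theta}_n,\hat{\xi}_n)+\gamma_n\le L(\hat{\theta}_n,\hat{\xi}_n)+2\gamma_n<L(\hat{\theta}_n,\hat{\xi}_n)+\delta(\e),
\]
where the first equality is \Cref{prop:optimality} applied at $\xi=\hat{\xi}_n$, the middle inequality uses that $\hat{\theta}_n$ maximizes $L_n(\cdot,\hat{\xi}_n)$, and the remaining steps use the uniform bound $\gamma_n$. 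Thus $L^\star(\hat{\xi}_n)-L(\hat{\theta}_n,\hat{\xi}_n)<\delta(\e)$, which by the definition of $\delta(\e)$ is impossible unless $(\hat{\theta}_n,\hat{\xi}_n)\notin C_\e$, i.e.\ $d(\hat{\theta}_n,\Theta_\star)<\e$. Since $\e>0$ was arbitrary, $d(\hat{\theta}_n,\Theta_\star)\to 0$ almost surely.
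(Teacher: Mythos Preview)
Your proposal is correct and follows essentially the same route as the paper: both arguments combine (i) a uniform strong law for $L_n\to L$ over all of $\Theta\times\Xi$, (ii) the key fact from \Cref{prop:optimality} that $\argmax_\theta L(\theta,\xi)=\Theta_\star$ for every $\xi$, and (iii) a well-separation/sandwich step. The only cosmetic differences are that the paper packages well-separation via the auxiliary function $g(\theta)=\sup_\xi[L(\theta,\xi)-L(\theta_\star,\xi)]$ and a dedicated separation lemma (and cites Ferguson's ULLN after a centering trick), whereas you argue compactness of $C_\e$ directly and sketch the covering argument; your observation that $L(\theta_\star,\xi_0)\ge-\log(K{+}1)$ to secure integrability of $\bar{\ell}_{\theta_0,\xi_0}$ is a nice detail the paper leaves implicit.
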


The proof of \Cref{prop:consistency} requires some care to deal with the 
fact that even for a fixed $\theta$, the empirical objective 
$L_n(\theta, \hat{\xi}_n)$ is \emph{not} an unbiased estimate
of the population objective $L(\theta, \hat{\xi}_n)$, since the negative sampler 
parameters $\hat{\xi}_n$ in general
depend on the data points $\{(x_i, y_i)\}_{i=1}^{n}$.
The key, however, lies in \Cref{prop:optimality}, which states that \emph{for any} $\xi$, the
map $\theta \mapsto L(\theta, \xi)$ has the same set of maximizers $\Theta_\star$. 
With this observation,
we show that it suffices to establish uniform convergence jointly over $\Theta \times \Xi$, i.e., 
$\sup_{\theta \in \Theta, \xi \in \Xi} \abs{L_n(\theta, \xi) - L(\theta, \xi)} \to 0$ a.s.\ as $n \to \infty$.

As with the optimality result in \Cref{prop:optimality}, the asymptotic consistency of the R-NCE estimator is independent of the sequence of noise distributions, parameterized by $\{\hat{\xi}_n\}$. In Section~\ref{sec:rnce_special}, we study the algorithmic implications of the design of the noise distribution and its concurrent optimization. We now characterize the asymptotic normality of $\hat{\theta}_n$, under certain regularity assumptions:

\begin{assmp}[$\calC^2$-Continuity]
\label{assmp:c2}
Assume the following conditions are satisfied:
\begin{enumerate}
    \item For a.e.\ $(x, y)$, the maps $\theta \mapsto \en_{\theta}(x, y)$ and $\xi \mapsto p_{\xi}(y \mid x)$ are $\calC^2$-continuous.
    \item For a.e.\ $(x, y)$ and all $(\theta, \xi) \in \Theta \times \Xi$:
    \begin{align}
        \exists\,\e > 0 \quad \mathrm{s.t.} \quad &\E_{\negset \mid x;\xi} \sup_{\norm{\theta'-\theta} \leq \e, \norm{\xi' - \xi} \leq \e} \opnorm{\nabla^2_{(\theta,\xi)} \ell_{\theta', \xi'}(x, y \mid \negset) } < \infty.
    \end{align}
\end{enumerate}
Then, by the Lebesgue Dominated Convergence theorem, the map $(\theta, \xi) \mapsto \bar{\ell}_{\theta, \xi}(x, y)$ is $\calC^2$-continuous for a.e.\ $(x,y)$. Additionally, assume that for all $(\theta, \xi) \in \Theta \times \Xi$:
\begin{align}
    \exists\,\e > 0 \quad \mathrm{s.t.} \quad &\E_{x, y} \sup_{\norm{\theta'-\theta} \leq \e, \norm{\xi' - \xi} \leq \e} \opnorm{\nabla^2_{(\theta,\xi)} \bar{\ell}_{\theta', \xi'}(x, y) } < \infty.
\end{align}
It follows then that the map $(\theta, \xi) \mapsto L(\theta, \xi)$ is $\calC^2$-continuous.
\end{assmp}

We are now ready to formalize the asymptotic distribution of $\hat{\theta}_n$.

\begin{restatable}[Asymptotic Normality]{thm}{asymptoticnormality}
\label{prop:as_norm_general}
Suppose that $\scrF$ is realizable and Assumptions \ref{assmp:continuous}, \ref{assmp:bounded}, and \ref{assmp:c2} hold. Let $\{\hat{\theta}_n\}$ be a sequence of optimizers as defined via~\eqref{eq:finite_n_theta}. Denote the joint parameter $\gamma := (\theta, \xi)$ and joint r.v.\ $z := (x, y)$, and assume the following properties hold:
\begin{enumerate}
    \item The negative proposal distribution parameters $\{\xi_n\}$ are \emph{$\sqrt{n}$-consistent} about a fixed point $\xi_\star$, i.e., $\sqrt{n}(\hat{\xi}_n - \xi_\star) = O_p(1)$.
    
    \item The set $\Theta_\star = \{\theta_\star\}$ is a singleton,
    and $\gamma_\star := (\theta_\star, \xi_\star)$ lies in
    the interior of $\Theta \times \Xi$.
    \item The Hessian of $\bar{\ell}_{\gamma}$ is Lipschitz-continuous, i.e.,
    \[
        \opnorm{\nabla_\gamma^2 \bar{\ell}_{\gamma_1}(z) - \nabla_\gamma^2 \bar{\ell}_{\gamma_2}(z) } \leq M(z) \| \gamma_1 - \gamma_2 \|, \quad \E_{z} \left[M(z)\right] < \infty,\ \forall \gamma_1, \gamma_2 \in \Theta \times \Xi.
    \]
    \item The $(\theta, \theta)$ block of the Hessian $\nabla^2_{\gamma} L(\theta_\star, \xi_\star)$, denoted $\nabla_{\theta}^2 L(\theta_\star, \xi_\star)$, satisfies
    $\nabla^2_{\theta} L(\theta_\star, \xi_\star) \prec 0$.
\end{enumerate}
Then, $\sqrt{n}(\hat{\theta}_n - \theta_\star) \distconv \mathcal{N}(0, V_{\theta})$, where 
\[
    V_{\theta} = -\nabla^2_{\theta}L(\theta_\star, \xi_\star)^{-1} = -\E_{z}\left[ \nabla_{\theta}^2 \bar{\ell}_{\gamma_\star}(z) \right] ^{-1}.
\]
\end{restatable}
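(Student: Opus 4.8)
The plan is to view $\hat\theta_n$ as an M-estimator solving a stationarity condition and apply a Taylor-expansion (delta-method-style) argument, carefully tracking the perturbation introduced by the data-dependent sampler parameter $\hat\xi_n$. Let me sketch the steps.

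First, I would establish that $\hat\theta_n$ is eventually an interior maximizer, so that the first-order condition $\nabla_\theta L_n(\hat\theta_n, \hat\xi_n) = 0$ holds with probability tending to one. This uses Theorem 3.7 (consistency), which gives $\hat\theta_n \to \theta_\star$ a.s., combined with the assumption that $\gamma_\star = (\theta_\star, \xi_\star)$ lies in the interior of $\Theta \times \Xi$ and the $\sqrt n$-consistency of $\hat\xi_n$, which forces $\hat\xi_n \to \xi_\star$. Then I would expand $\nabla_\theta L_n$ around $(\theta_\star, \xi_\star)$ using the $\calC^2$-continuity from Assumption 3.10 and the Hessian-Lipschitz condition (property 3):
\begin{align*}
    0 = \nabla_\theta L_n(\hat\theta_n, \hat\xi_n) = \nabla_\theta L_n(\theta_\star, \xi_\star) + \nabla^2_\theta L_n(\bar\gamma_n)(\hat\theta_n - \theta_\star) + \nabla^2_{\theta\xi} L_n(\bar\gamma_n)(\hat\xi_n - \xi_\star),
\end{align*}
for some intermediate $\bar\gamma_n$ on the segment between $(\hat\theta_n,\hat\xi_n)$ and $(\theta_\star,\xi_\star)$ (done coordinatewise in the gradient, strictly speaking). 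Rearranging yields
\begin{align*}
    \sqrt n(\hat\theta_n - \theta_\star) = -\left[\nabla^2_\theta L_n(\bar\gamma_n)\right]^{-1}\left( \sqrt n\, \nabla_\theta L_n(\theta_\star,\xi_\star) + \nabla^2_{\theta\xi} L_n(\bar\gamma_n)\,\sqrt n(\hat\xi_n - \xi_\star)\right).
\end{align*}

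Next I would handle the three pieces. The Hessian blocks $\nabla^2_\theta L_n(\bar\gamma_n)$ and $\nabla^2_{\theta\xi}L_n(\bar\gamma_n)$ converge a.s.\ (or in probability) to $\nabla^2_\theta L(\theta_\star,\xi_\star)$ and $\nabla^2_{\theta\xi}L(\theta_\star,\xi_\star)$ respectively, by a uniform law of large numbers over a neighborhood of $\gamma_\star$ (using the domination in Assumption 3.10) together with $\bar\gamma_n \to \gamma_\star$; property 4 ($\nabla^2_\theta L(\theta_\star,\xi_\star) \prec 0$) makes the leading Hessian invertible. The score term $\sqrt n\,\nabla_\theta L_n(\theta_\star,\xi_\star) = \sqrt n\, \hat\E_{x,y}[\nabla_\theta \bar\ell_{\gamma_\star}(x,y)]$ is a genuine empirical average of i.i.d.\ mean-zero terms — mean zero because $\theta_\star$ maximizes $\theta \mapsto L(\theta,\xi_\star)$ in the interior by Theorem 3.5 (optimality), so $\nabla_\theta L(\theta_\star,\xi_\star)=0$ — and hence is asymptotically $\mathcal N(0, \Sigma)$ by the CLT with $\Sigma = \var_z[\nabla_\theta \bar\ell_{\gamma_\star}(z)]$. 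The crucial cross term is $\nabla^2_{\theta\xi}L_n(\bar\gamma_n)\sqrt n(\hat\xi_n - \xi_\star)$: here the factor $\sqrt n(\hat\xi_n-\xi_\star) = O_p(1)$ by property 1, but I claim the limiting coefficient $\nabla^2_{\theta\xi}L(\theta_\star,\xi_\star)$ is actually zero. This is the key structural fact and the main obstacle — it must be proven, not assumed. I would prove it by differentiating the identity from Theorem 3.5: for every $\xi$, $\nabla_\theta L(\theta_\star,\xi) = 0$ (since $\theta_\star \in \argmax_\theta L(\theta,\xi)$ and lies in the interior). Differentiating this identity in $\xi$ gives $\nabla^2_{\theta\xi} L(\theta_\star,\xi) = 0$ for all $\xi$, in particular at $\xi_\star$; interchanging $\nabla_\xi$ with the expectation is justified by the domination hypotheses in Assumption 3.10. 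Consequently the cross term is $o_p(1)\cdot O_p(1) = o_p(1)$ and drops out.

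Assembling the pieces via Slutsky's theorem:
\begin{align*}
    \sqrt n(\hat\theta_n - \theta_\star) \distconv -\nabla^2_\theta L(\theta_\star,\xi_\star)^{-1}\,\mathcal N(0,\Sigma) = \mathcal N\!\left(0,\ \nabla^2_\theta L(\theta_\star,\xi_\star)^{-1}\Sigma\, \nabla^2_\theta L(\theta_\star,\xi_\star)^{-1}\right).
\end{align*}
To collapse the sandwich into the stated form $V_\theta = -\nabla^2_\theta L(\theta_\star,\xi_\star)^{-1}$, I would show the information-matrix identity $\Sigma = -\nabla^2_\theta L(\theta_\star,\xi_\star)$, i.e., $\var_z[\nabla_\theta\bar\ell_{\gamma_\star}(z)] = -\E_z[\nabla^2_\theta\bar\ell_{\gamma_\star}(z)]$. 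This is the analogue of the classical MLE Bartlett identity and should follow from the fact that when $\scrF$ is realizable, $\bar\ell_{\gamma_\star}$ is (up to an additive constant) the log-posterior of a correctly specified conditional model for the index $D$ — one verifies that $\nabla_\theta \bar\ell_{\gamma_\star}$ has the structure of a score whose second Bartlett identity holds. I expect the cross-derivative vanishing and this information identity to be the two substantive steps; the rest (interior FOC, ULLN for Hessians, CLT for the score) is routine M-estimation machinery given Assumptions 3.3, 3.4, and 3.10.
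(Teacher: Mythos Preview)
Your proposal is correct and follows essentially the same route as the paper: a Taylor expansion of the empirical score around $\gamma_\star$, the key structural observation that $\nabla^2_{\theta\xi}L(\theta_\star,\xi)=0$ (obtained exactly as you describe, by differentiating the optimality condition of Theorem~3.5 in $\xi$), and the Bartlett-type identity $\mathrm{Var}_z[\nabla_\theta\bar\ell_{\gamma_\star}(z)]=-\nabla^2_\theta L(\theta_\star,\xi_\star)$ to collapse the sandwich. The paper proves the latter identity via an explicit symmetrization computation (their Lemma~C.4 and Proposition~C.5) rather than by invoking the Bartlett identity abstractly, but your intuition that it is the information equality for the correctly-specified posterior classifier is exactly right.
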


A few remarks are in order regarding this result. First, note that the fixed point for the negative distribution parameters $\xi_\star$ need not be optimal in any sense, as long as it is approached by the sequence $\hat{\xi}_n$ sufficiently fast. Second, the proof itself begins by using the standard machinery for proving asymptotic normality of $M$-estimators for the joint set of parameters $\gamma := (\theta, \xi)$~\citep[see e.g.][]{ferguson2017course}. In order to extract the marginal asymptotics for $\hat{\theta}_n$, while employing minimal assumptions regarding the convergence of $\hat{\xi}_n$, again
the key step is to leverage \Cref{prop:optimality},
which states that $\theta_\star$ is optimal for the population objective $L(\cdot, \xi)$ \emph{for any} $\xi \in \Xi$. This implies that the joint Hessian $\nabla^2_\gamma L(\theta_\star, \xi_\star)$ is block-diagonal, yielding the necessary simplification. 
Finally, we remark that
the result in~\citet[Theorem 4.6]{ma2018_ranking_nce} is a special case of \Cref{prop:as_norm_general} where (i) the negative distribution is held fixed, i.e., $\hat{\xi}_n = \xi_\star$ for all $n$, and (ii) the event spaces $\sfX$ and $\sfY$ are finite.

\subsection{Learning the Proposal Distribution}
\label{sec:rnce_special}

Given that the consistency and asymptotic normality for the R-NCE estimator are guaranteed under the very mild condition that the negative proposal distribution parameters converge to a fixed point (cf.~\Cref{prop:as_norm_general}), there is considerable algorithmic flexibility in generating the sequence of proposal models, indexed by $\{\hat{\xi}_n\}$. We begin our discussion with first establishing the importance of having a learnable proposal distribution. Using the posterior probability notation \eqref{eq:posterior_q_indexed}, we can write the R-NCE gradient as:
\begin{align*}
    \nabla_\theta L(\theta, \xi) &= \E_{x,y} \nabla_\theta \en_\theta(x, y) - \E_x\E_{\augset \mid x;\xi}\left[ \sum_{i=1}^{K+1} q_{\theta,\xi}(i \mid x, \augset) \nabla_\theta \en_\theta(x, y_i) \right].
\end{align*}
Further examination gives us the following key intuition about training: \emph{the proposal distribution $p_\xi(y \mid x)$ should not be significantly less informative than the energy model $\en(x, y)$}. If $p_\xi(y \mid x)$ is uninformative, then training will stall. This is because the posterior distribution $q_{\theta,\xi}(\cdot \mid x, \augset)$
will concentrate nearly all its mass on the positive example, as the energy of the true sample will be
significantly higher than the energy of the negative samples. As a consequence, the
{R{-}NCE} gradient approximately vanishes without necessarily having $\en(x, y) \propto \log p(y \mid x)$,
stalling the training procedure. We term this phenomenon \emph{posterior collapse}. Indeed, this is consistent with the findings in~\cite{liu2022_analyzing,lee2023_pitfalls} in the context of binary NCE, and motivates having a \emph{learnable} (cf.\ \Cref{alg:rnce}) negative sampling distribution $p_\xi(y \mid x)$ during training, to avoid early gradient collapses that result in a sub-optimal energy model. We next attempt to characterize the properties of an ``optimal" negative sampler.

\subsubsection{Nearly Asymptotically Efficient Negative Sampler}
\label{sec:nearly_efficient_noise}

In principle, the asymptotic normality result \Cref{prop:as_norm_general}
gives us a recipe to characterize an optimal proposal model:
simply select the distribution $p_\xi$ which minimizes the trace of the 
asymptotic variance matrix $V_\theta$ (which itself is a function of $p_\xi$).
However, this is an intractable optimization problem in general~\cite[see e.g.,][Section 2.4]{gutmann2010_nce}.
Nevertheless, we will show that considerable insight may be gained by setting
the negative distribution $p_\xi(y \mid x)$ equal to the conditional 
data distribution $p(y \mid x)$.

To begin, we first define the conditional Fisher information matrix,
which generalizes the classical Fisher information matrix from asymptotic
statistics.
\begin{mydef}[Fisher information matrix]
The conditional Fisher information matrix of
a conditional density $p_\theta(y \mid x)$ at a point $x \in \sfX$
is defined as:
\begin{align*}
    I(\theta \mid x) := \E_{y \sim p_\theta(\cdot \mid x)}[ \nabla_\theta \log p_\theta(y \mid x) \nabla_\theta \log p_\theta(y \mid x)^\T].
\end{align*}
Note that for our parameterization $\scrF$, we have:
\begin{align*}
    I(\theta \mid x) = \E_{y \sim p_\theta(\cdot \mid x)}[ \nabla_\theta \en_\theta(x, y) \nabla_\theta \en_\theta(x, y)^\T ] - \E_{y \sim p_\theta(\cdot \mid x)}[ \nabla_\theta \en_\theta(x, y)]\E_{y \sim p_\theta(\cdot \mid x)}[ \nabla_\theta \en_\theta(x, y)^\T ].
\end{align*}
\end{mydef}
Under suitable conditions similar to those in \Cref{prop:as_norm_general},
it is well known
that the inverse 
Fisher information matrix governs the asymptotic behavior of the 
maximum likelihood estimator:
\begin{align*}
    \sqrt{n}(\hat{\theta}_n^{\mathrm{mle}} - \theta_\star) \distconv N(0, \E_{x}[I(\theta_\star \mid x)]^{-1}), \quad \textrm{where} \quad \hat{\theta}_{n}^{\mathrm{mle}} \in \argmax_{\theta \in \Theta} \frac{1}{n}\sum_{i=1}^{n} \log{p_\theta(y_i \mid x_i)}.
\end{align*}

Our next result shows that by selecting $p_\xi(y \mid x) = p(y \mid x)$,
the R-NCE estimator has similar asymptotic efficiency as the MLE.
\begin{restatable}{thm}{asymptoticvariancetrue}
\label{thm:nce_noise_is_true}
Consider the setting of \Cref{prop:as_norm_general}, and 
suppose that $p_{\xi_\star}(y \mid x) = p(y \mid x)$. Then, the asymptotic
variance matrix $V_\theta$ is:
\begin{align*}
    V_\theta = \left(1+\frac{1}{K}\right) \E_x[I(\theta_\star \mid x)]^{-1}.
\end{align*}
\end{restatable}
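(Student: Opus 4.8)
The plan is to compute the $(\theta,\theta)$-block of the Hessian of the population objective at $(\theta_\star,\xi_\star)$ directly, and then invoke the identity $V_\theta = -\nabla^2_\theta L(\theta_\star,\xi_\star)^{-1}$ supplied by \Cref{prop:as_norm_general}. The crucial structural observation is this: when $p_{\xi_\star}(y\mid x) = p(y\mid x)$, the positive sample and the $K$ negative samples comprising $\augset$ are \emph{all} i.i.d.\ draws from $p(\cdot\mid x)$; and, combining this with realizability (\Cref{def:realizabiilty}), the shifted energy $\en_{\theta_\star}(x,y) - \log p_{\xi_\star}(y\mid x) = \log Z_{\theta_\star}(x)$ is constant in $y$, so the posterior $q_{\theta_\star,\xi_\star}(\cdot\mid x,\augset)$ evaluated at the optimum is exactly the uniform distribution on $\{1,\dots,K+1\}$. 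These two facts together collapse the Hessian computation to a short second-moment calculation.

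Concretely, I would first differentiate the symmetric form \eqref{eq:population_sym}, noting that the sampling measure $\augP_{\augset\mid x;\xi,k}$ carries no $\theta$-dependence (cf.\ \Cref{rmk:rnce_gradients}), so only $\log q_{\theta,\xi_\star}$ is differentiated. Writing $u_j := \en_\theta(x,y_j) - \log p_{\xi_\star}(y_j\mid x)$ and $q_j := q_{\theta,\xi_\star}(j\mid x,\augset)$, a routine softmax computation (with differentiation under the integral justified by \Cref{assmp:c2}) gives
\[
    \nabla^2_\theta \log q_{\theta,\xi_\star}(k\mid x,\augset) = \Big(\nabla^2_\theta\en_\theta(x,y_k) - \textstyle\sum_j q_j \nabla^2_\theta\en_\theta(x,y_j)\Big) - \mathrm{Cov}_{j\sim q}\big(\nabla_\theta\en_\theta(x,y_j)\big),
\]
i.e.\ a second-order energy term minus the $q$-weighted covariance of the energy gradients (here we used that $\log p_{\xi_\star}$ has no $\theta$-dependence, so $\nabla_\theta u_j = \nabla_\theta\en_\theta(x,y_j)$). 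Evaluating at $\theta=\theta_\star$ and taking $\E_x\E_{\augset\mid x;\xi_\star,k}$: since $q_j\equiv 1/(K+1)$ and the $y_j$ are i.i.d.\ $\sim p(\cdot\mid x)$, the second-order energy term has zero conditional mean (both $\E[\nabla^2_\theta\en_{\theta_\star}(x,y_k)]$ and $\E[\tfrac{1}{K+1}\sum_j\nabla^2_\theta\en_{\theta_\star}(x,y_j)]$ equal $\E_{y\sim p(\cdot\mid x)}[\nabla^2_\theta\en_{\theta_\star}(x,y)]$), so only the covariance term survives.

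It then remains to evaluate $\E_{\augset}[\mathrm{Cov}_{j\sim\mathrm{Unif}}(g_j)]$ for $x$ fixed, where $g_j := \nabla_\theta\en_{\theta_\star}(x,y_j)$ are i.i.d.\ with mean $m_x := \E_{y\sim p(\cdot\mid x)}[\nabla_\theta\en_{\theta_\star}(x,y)]$ and second moment $S_x := \E[g_1 g_1^\T]$. Splitting the $(K+1)^2$ cross-terms in $\E[(\tfrac{1}{K+1}\sum_j g_j)(\tfrac{1}{K+1}\sum_l g_l)^\T]$ into the $K+1$ diagonal contributions ($\E[g_j g_j^\T]=S_x$) and the $(K+1)K$ off-diagonal contributions ($\E[g_j g_l^\T]=m_x m_x^\T$ by independence) gives $\tfrac{1}{K+1}S_x + \tfrac{K}{K+1}m_x m_x^\T$, hence $\E[\mathrm{Cov}_{j\sim\mathrm{Unif}}(g_j)] = S_x - \tfrac{1}{K+1}S_x - \tfrac{K}{K+1}m_x m_x^\T = \tfrac{K}{K+1}(S_x - m_x m_x^\T) = \tfrac{K}{K+1}\,I(\theta_\star\mid x)$, the last equality being exactly the definition of the conditional Fisher information for our parameterization (using $p_{\theta_\star}=p$). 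Therefore $\nabla^2_\theta L(\theta_\star,\xi_\star) = -\tfrac{K}{K+1}\E_x[I(\theta_\star\mid x)]$, and negating and inverting as in \Cref{prop:as_norm_general} yields $V_\theta = \tfrac{K+1}{K}\E_x[I(\theta_\star\mid x)]^{-1} = (1+\tfrac1K)\E_x[I(\theta_\star\mid x)]^{-1}$. The step I expect to require the most care is the one–two punch that makes everything tractable: verifying that at $(\theta_\star,\xi_\star)$ the posterior really is uniform (this needs both realizability \emph{and} $p_{\xi_\star}=p$), and then the bookkeeping of diagonal versus off-diagonal terms in the covariance — it is precisely the split into $(K+1)$ and $(K+1)K$ terms that produces the $\tfrac{K}{K+1}$ factor, i.e.\ the $(1+1/K)$ inflation over the MLE asymptotic variance $\E_x[I(\theta_\star\mid x)]^{-1}$. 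The remaining ingredients (exchanging derivatives with expectations, the softmax Hessian identity) are routine given Assumptions~\ref{assmp:continuous}--\ref{assmp:c2}.
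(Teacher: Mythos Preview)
Your proposal is correct and arrives at the same Hessian value $\nabla_\theta^2 L(\theta_\star,\xi_\star) = -\tfrac{K}{K+1}\E_x[I(\theta_\star\mid x)]$ as the paper. The route differs slightly in how the second-order energy terms are eliminated. The paper does not recompute the softmax Hessian from scratch; instead it recycles the intermediate identity \eqref{eq:hessian_intermediate}, established during the proof of \Cref{prop:var_grad_equals_neg_hessian} via the symmetrization lemma (\Cref{prop:symmetry}), which already expresses $\nabla_\theta^2 L(\theta_\star,\xi_\star)$ as $-\E_{x,y}[g g^\T] + \E\big[(\sum_i q_i g_i)^{\otimes 2}\big]$ with the $\nabla_\theta^2\en$ terms gone. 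It then specializes to $p_{\xi_\star}=p$ to evaluate the second expectation. Your argument instead writes the Hessian as ``second-order energy difference minus $q$-covariance'' and uses the i.i.d.\ structure directly (all $y_j\sim p(\cdot\mid x)$) to kill the second-order difference in expectation, bypassing the symmetrization lemma entirely. Both then perform the same diagonal/off-diagonal bookkeeping on $\E[(\tfrac{1}{K+1}\sum_j g_j)^{\otimes 2}]$ to extract the $\tfrac{K}{K+1}$ factor. Your version is more self-contained for this special case; the paper's is more modular, reusing machinery built for the general asymptotic-normality result.
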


\Cref{thm:nce_noise_is_true} extends a similar result for
binary NCE in \cite[Corollary 7]{gutmann2010_nce}
to the ranking NCE setting. 
As an aside, we note that $p_\xi(y \mid x) = p(y \mid x)$ is
actually \emph{not} the optimal choice for minimizing asymptotic variance~\cite{chehab2022_optimal_nce_noise}.
Nevertheless, even for reasonable values of $K$, the sub-optimality factor is 
fairly well controlled.

We now compare \Cref{thm:nce_noise_is_true} to 
\cite[Theorem 4.6]{ma2018_ranking_nce}.
In \cite[Theorem 4.6]{ma2018_ranking_nce}, it is shown that
regardless of the (fixed) proposal distribution $p_\xi$, 
one has that $\opnorm{ V_R - \E_{x}[I(\theta_\star \mid x)] } \leq O(1/\sqrt{K})$,
where the $O(\cdot)$ hides constants depending on the proposal distribution $p_\xi$.
These hidden constants are not made explicit, and are likely quite
pessimistic. On the other hand, in \Cref{thm:nce_noise_is_true},
we show that under the idealized setting when $p_\xi(y \mid x) = p(y \mid x)$, we have a sharper multiplicative bound w.r.t.\ the maximum likelihood estimator.
Furthermore, since our analysis and algorithm allows for
learnable proposal distributions, $p_\xi(y \mid x) = p(y \mid x)$ is much more of a
reasonable assumption for theoretical analysis.

The near-asymptotically efficient result of~\Cref{thm:nce_noise_is_true} strongly motivates a learnable proposal model, but still leaves undetermined the algorithm by which such a result can be attained. We next investigate adversarial training for partial clues.

\paragraph{Is Adversarial Training the Answer?}

Recently, several papers which study the binary version of the NCE objective
have advocated for an adversarial ``GAN-like" approach where $\xi$ is optimized via concurrently minimizing the NCE objective~\cite{gao2020_flow_contrastive,bose2018adversarial}.
This approach is justified from the viewpoint of ``hard negative" sampling when, 
similar to our earlier discussion on posterior collapse, the proposal distribution is easily distinguishable by a sub-optimal EBM. The following result establishes a close connection between the near-optimal asymptotic variance in~\Cref{thm:nce_noise_is_true} and adversarial optimization, under a realizability assumption for the set of proposal distributions.

\begin{restatable}{myprop}{rnceminimax}
\label{prop:adv_opt}
Suppose that $\scrF$ is realizable, and furthermore, the family of proposal distributions $\scrF_{\mathrm{n}}$ is realizable. That is, there exists some non-empty $\Xi_\star \subset \Xi$ s.t.\ for every $\xi_\star \in \Xi_\star$, $p_{\xi_\star}(y \mid x) = p(y \mid x)$ for a.e.\ $(x, y)$. Then:
\begin{equation}
    \max_{\theta \in \Theta} \min_{\xi \in \Xi} L(\theta, \xi) = L(\theta_\star, \xi_\star) = \log\left(\dfrac{1}{K+1}\right),
\label{eq:pop_maxmin_game}
\end{equation}
where $(\theta_\star, \xi_\star) \in \Theta_\star \times \Xi_\star$.
\end{restatable}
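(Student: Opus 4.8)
The plan is to sandwich $\max_{\theta}\min_{\xi}L(\theta,\xi)$ between two copies of $\log\frac{1}{K+1}$, using \Cref{prop:optimality} for the upper side and a Bayes-posterior/entropy argument for the lower side. First I would record the value of $L(\theta_\star,\xi_\star)$: for $\theta_\star\in\Theta_\star$ and $\xi_\star\in\Xi_\star$ we have $p_{\theta_\star}(y\mid x)=p(y\mid x)=p_{\xi_\star}(y\mid x)$ for a.e.\ $(x,y)$, so every density ratio $p_{\theta_\star}(y_j\mid x)/p_{\xi_\star}(y_j\mid x)$ appearing in the closed form \eqref{eq:posterior_prob} equals $1$; hence $q_{\theta_\star,\xi_\star}(k\mid x,\augset)\equiv\frac{1}{K+1}$, and \eqref{eq:population_sym} gives $L(\theta_\star,\xi_\star)=\log\frac{1}{K+1}$. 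For the upper bound I would use $\max_\theta\min_\xi L\le\min_\xi\max_\theta L$ and control the right side: fixing $\xi\in\Xi$, \Cref{prop:optimality} (applicable since $\scrF$ is realizable) gives $\argmax_\theta L(\theta,\xi)=\Theta_\star$ with maximal value $L(\theta_\star,\xi)$ for any representative $\theta_\star\in\Theta_\star$ (all such $\theta_\star$ induce the same conditional density $p(\cdot\mid x)$, hence the same $q_{\theta_\star,\xi}$ and the same value). Therefore $\min_\xi\max_\theta L(\theta,\xi)=\min_\xi L(\theta_\star,\xi)\le L(\theta_\star,\xi_\star)=\log\frac{1}{K+1}$, using $\xi_\star\in\Xi$.

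For the lower bound it suffices to exhibit one $\theta$ with $\min_\xi L(\theta,\xi)\ge\log\frac{1}{K+1}$, and I claim $\theta=\theta_\star\in\Theta_\star$ works, i.e.\ $L(\theta_\star,\xi)\ge\log\frac{1}{K+1}$ for \emph{every} $\xi\in\Xi$. The key observation is that when $p_{\theta_\star}(\cdot\mid x)=p(\cdot\mid x)$, the quantities $\bar{p}_{\theta_\star,\xi}(\augset\mid x,D{=}k)=p_{\theta_\star}(y_k\mid x)\prod_{j\ne k}p_\xi(y_j\mid x)$ from the ``Intuition'' paragraph are \emph{exactly} the true class-conditional densities of $\augset$ under the generative law behind \eqref{eq:population_sym}, so $q_{\theta_\star,\xi}(\cdot\mid x,\augset)$ is exactly the Bayes posterior of the index variable $D$ given $(X,\augset)$. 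Averaging the symmetric identity \eqref{eq:population_sym} over $k=1,\dots,K+1$ and writing $(X,D,\augset)$ for the joint draw ($X\sim\sfP_X$; $D$ uniform on $\{1,\dots,K+1\}$; $\augset$ class-conditional given $(X,D)$), this gives
\[
    L(\theta_\star,\xi)=\E_{X,D,\augset}\bigl[\log q_{\theta_\star,\xi}(D\mid X,\augset)\bigr]=-\E_{X,\augset}\,H\bigl(q_{\theta_\star,\xi}(\cdot\mid X,\augset)\bigr)\ \ge\ -\log(K+1),
\]
where the middle equality is Gibbs' identity (valid precisely because $q_{\theta_\star,\xi}$ is the true posterior) and the inequality uses that a probability vector on $K+1$ atoms has Shannon entropy at most $\log(K+1)$. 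Since $\log q\le 0$ throughout, the expectation is unambiguously defined in $[-\infty,0]$, and the display shows it is in fact finite; hence $\min_\xi L(\theta_\star,\xi)\ge\log\frac{1}{K+1}$.

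Chaining the bounds yields $\log\frac{1}{K+1}\le\min_\xi L(\theta_\star,\xi)\le\max_\theta\min_\xi L(\theta,\xi)\le\min_\xi\max_\theta L(\theta,\xi)=\min_\xi L(\theta_\star,\xi)\le\log\frac{1}{K+1}$, so all terms collapse to $\log\frac{1}{K+1}=L(\theta_\star,\xi_\star)$; the same two facts ($\max_\theta L(\cdot,\xi_\star)=L(\theta_\star,\xi_\star)$ and $L(\theta_\star,\xi)\ge L(\theta_\star,\xi_\star)$ for all $\xi$) also show $L(\theta,\xi_\star)\le L(\theta_\star,\xi_\star)\le L(\theta_\star,\xi)$, so every $(\theta_\star,\xi_\star)\in\Theta_\star\times\Xi_\star$ is a saddle point. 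The step I expect to demand the most care is the middle equality in the lower bound: in the general (non-discrete) setting one must verify, via regularity of $\sfP_{Y\mid X}$ and the a.e.\ identities from realizability, that $q_{\theta_\star,\xi}$ genuinely coincides with a regular conditional distribution of $D$ given $(X,\augset)$, and confirm that points where $p_\xi$ fails to dominate $p$ (some ratios $+\infty$) are harmless since $\log q\le 0$ keeps the objective well defined. Everything else is routine bookkeeping on top of \Cref{prop:optimality}.
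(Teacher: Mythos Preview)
Your proof is correct and arrives at the same sandwich inequality as the paper, but the lower bound $L(\theta_\star,\xi)\ge\log\frac{1}{K+1}$ is established by a genuinely different mechanism. The paper introduces a technical lemma (\Cref{prop:saddle_opt}) showing that the map $\eta\mapsto \log\bigl(\eta_k/\ip{\eta}{\ind}\bigr)\,\eta_k$ is convex on $\R_{\ge 0}^{K+1}$, rewrites $L(\theta_\star,\xi)$ as an expectation of this function evaluated at the vector of density ratios $\eta_j=p(y_j\mid x)/p_\xi(y_j\mid x)$ (via the same change of measure appearing in \eqref{eq:pop_obj_cross_entropy}), and then applies Jensen's inequality together with $\E[\eta_j]=1$ to obtain the bound. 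Your argument instead recognizes that, for $\theta_\star\in\Theta_\star$, the quantity $q_{\theta_\star,\xi}(\cdot\mid x,\augset)$ is \emph{exactly} the Bayes posterior of the positive-sample index, so that $L(\theta_\star,\xi)=-H(D\mid X,\augset)\ge -H(D)=-\log(K+1)$ by the elementary fact that conditioning reduces entropy. This is more conceptual and avoids the auxiliary convexity lemma entirely; it also makes transparent why the bound is tight precisely when the posterior is uniform, i.e.\ when $p_\xi=p$. The paper's route, on the other hand, stays purely analytic and makes no appeal to the probabilistic interpretation of $q_{\theta_\star,\xi}$. Both approaches deliver the saddle-point conclusion via the same final chain of inequalities, and your caution about regularity and domination of $p$ by $p_\xi$ is well placed but, as you note, ultimately harmless since $\log q\le 0$ keeps the objective bounded above.
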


Recall from \Cref{thm:nce_noise_is_true} that if the fixed point for the proposal distribution $p_{\xi_\star}(\cdot \mid x)$ equals $p(\cdot \mid x)$, the asymptotic variance for $\hat{\theta}_n$ equals a $1+1/K$ multiplicative factor of the Cram{\'e}r-Rao optimal variance. The result above corresponds precisely to such a scenario: given a realizable family of proposal distributions, the adversarial saddle-point for R-NCE is characterized by the equalities 
$p_{\theta_\star}(\cdot \mid x) = p_{\xi_\star}(\cdot \mid x) = p(\cdot \mid x)$. Thus, when a sufficiently rich family of proposal distributions (i.e., $\scrF_{\mathrm{n}}$ is realizable) is paired with a tractable convergent algorithm that ensures $d(\hat{\xi}_n, \Xi_\star) \overset{n\rightarrow \infty}{\longrightarrow} 0$, there is \emph{no additional benefit} to using adversarial R-NCE: the asymptotic distribution of $\hat{\theta}_n$ is unaffected.

Notwithstanding, our preferred setup assumes, for both computational and conceptual reasons, that $\scrF_{\mathrm{n}}$ is \emph{not} realizable. This is motivated by the fact that the proposal distribution is used both for sampling and likelihood evaluation within the R-NCE objective, and is thus a significantly simpler model for computational reasons. In the following, we investigate the asymptotic properties of adversarial R-NCE under such conditions.

\subsubsection{Equilibria and Convergence for Adversarial R-NCE}

Let us define the finite-sample maxmin objective as follows:
\begin{equation}
    \max_{\theta \in \Theta} \min_{\xi \in \Xi}\ L_n(\theta, \xi) := \hat{\E}_{x, y} \, \, \bell_{\theta, \xi} (x, y)
\label{eq:maxmin_game}
\end{equation}
In order to study the statistical properties of this game, we first define two families of solutions: Nash and Stackelberg equilibria.

\begin{mydef}[Maxmin Nash equilibria]
Let $f : \Theta \times \Xi \rightarrow \R$.
A pair $(\theta_\star, \xi_\star) \in \Theta \times \Xi$
is a \emph{Nash equilibrium} w.r.t.\ $f$ if:
\begin{align*}
    f(\theta, \xi_\star) \leq f(\theta_\star, \xi_\star) \leq f(\theta_\star, \xi) \quad \forall\, (\theta, \xi) \in \Theta \times \Xi.
\end{align*}
\end{mydef}
A Nash equilibrium is characterized as follows: taking the other ``player'' as fixed, each player has no incentive to change their ``action''. It is straightforward to verify from the proof of \Cref{prop:adv_opt} that any pair $(\theta_\star, \xi_\star) \in \Theta_\star \times \Xi_\star$ is in fact a Nash equilibrium w.r.t.\ the population objective $L$. However, for general nonconcave-nonconvex objectives, e.g., the finite-sample game defined in~\eqref{eq:maxmin_game}, the existence of a global or local Nash equilibrium is not guaranteed~\cite{jin2020local}. Thus, we need an alternative definition of optimality, which comes from considering games as sequential.
\begin{mydef}[Maxmin Stackelberg equilibria]
Let $f : \Theta \times \Xi \rightarrow \R$.
A pair $(\bar{\theta}, \bar{\xi}) \in \Theta \times \Xi$
is a \emph{Stackelberg equilibrium} w.r.t.\ $f$ if:
\begin{align*}
    \inf_{\xi' \in \Xi} f(\theta, \xi') \leq f(\bar{\theta}, \bar{\xi}) \leq f(\bar{\theta}, \xi)  \quad  \forall\, (\theta, \xi) \in \Theta \times \Xi.
\end{align*}
\end{mydef}
That is, $\bar{\xi}$ minimizes $f(\bar{\theta}, \cdot)$, whereas
$\bar{\theta}$ maximizes $\theta \mapsto \inf_{\xi' \in \Xi} f(\theta, \xi')$.
In contrast to global Nash equilibrium, global Stackelberg  equilibrium points are guaranteed to exist provided continuity of $f$ and compactness of $\Theta \times \Xi$~\cite{jin2020local}.

For what follows, we will study the finite-sample Stackelberg equilibrium points $(\hat{\theta}_n, \hat{\xi}_n)$:
\begin{equation}
    \inf_{\xi \in \Xi} L_n(\theta, \xi) \leq L_n(\hat{\theta}_n, \hat{\xi}_n) \leq L_n(\hat{\theta}_n, \xi) \quad \forall \theta \in \Theta, \xi \in \Xi.
\label{maxmin_fs}
\end{equation}

Prior to proving a consistency statement about the convergence of $(\hat{\theta}_n, \hat{\xi}_n)$, we first establish some basic
facts regarding Stackelberg optimal pairs for the population game.

\begin{restatable}{myprop}{nashstackpop}
\label{prop:stackelberg_pop}
For any Stackelberg optimal pair $(\bar{\theta}, \bar{\xi})$
to the population game~\eqref{eq:pop_maxmin_game}, it holds that
$\bar{\theta} \in \Theta_\star$. Conversely, for any $\theta_\star \in \Theta_\star$,
there exists an $\bar{\xi} \in \Xi$ such that $(\theta_\star, \bar{\xi})$ is Stackelberg optimal for the population game \eqref{eq:pop_maxmin_game}.
\end{restatable}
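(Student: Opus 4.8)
The plan is to reduce Stackelberg optimality to a statement purely about the value function $g(\theta) := \inf_{\xi \in \Xi} L(\theta, \xi)$, and then to identify $\argmax_\theta g$ using \Cref{prop:optimality} and \Cref{prop:adv_opt}. First I would unpack the definition of a Stackelberg optimal pair $(\bar\theta, \bar\xi)$. The right-hand inequality $L(\bar\theta,\bar\xi) \le L(\bar\theta,\xi)$ for all $\xi$ says that $\bar\xi$ minimizes $L(\bar\theta,\cdot)$, so $L(\bar\theta,\bar\xi) = g(\bar\theta)$; the infimum defining $g(\bar\theta)$ is in fact attained because $\Xi$ is compact and $L$ is continuous. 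The left-hand inequality $\inf_{\xi'} L(\theta,\xi') \le L(\bar\theta,\bar\xi)$ for all $\theta$ then reads $g(\theta) \le g(\bar\theta)$ for all $\theta$, i.e.\ $\bar\theta \in \argmax_\theta g(\theta)$. Conversely, if $\bar\theta \in \argmax_\theta g$ and $\bar\xi$ attains $\min_\xi L(\bar\theta,\xi)$, the same two observations show $(\bar\theta,\bar\xi)$ is Stackelberg optimal. Hence the whole proposition follows once I show $\argmax_\theta g(\theta) = \Theta_\star$ and exhibit, for each $\theta_\star \in \Theta_\star$, some $\bar\xi$ attaining $\min_\xi L(\theta_\star,\xi)$.

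Next I would establish $\argmax_\theta g(\theta) = \Theta_\star$; this is the heart of the argument. For $\Theta_\star \subseteq \argmax_\theta g$: fix $\theta_\star \in \Theta_\star$ and $\xi_\star \in \Xi_\star$ (nonempty by the hypothesis of \Cref{prop:adv_opt}). By the Nash property recorded after \Cref{prop:adv_opt}, $(\theta_\star,\xi_\star)$ is a Nash equilibrium for $L$, so $L(\theta_\star,\xi_\star) \le L(\theta_\star,\xi)$ for all $\xi$, whence $g(\theta_\star) = L(\theta_\star,\xi_\star) = \log\frac{1}{K+1}$ using the value from \Cref{prop:adv_opt}; since also $\sup_\theta g(\theta) = \max_\theta \min_\xi L(\theta,\xi) = \log\frac{1}{K+1}$ by \Cref{prop:adv_opt}, we conclude $\theta_\star \in \argmax_\theta g$. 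For the reverse inclusion, let $\bar\theta \in \argmax_\theta g$, so $g(\bar\theta) = \log\frac{1}{K+1}$, and pick any $\xi_\star \in \Xi_\star$. Then $\log\frac{1}{K+1} = g(\bar\theta) = \min_\xi L(\bar\theta,\xi) \le L(\bar\theta,\xi_\star)$, while \Cref{prop:optimality} applied at $\xi = \xi_\star$ gives $\argmax_\theta L(\theta,\xi_\star) = \Theta_\star$ and hence $L(\bar\theta,\xi_\star) \le \max_\theta L(\theta,\xi_\star) = L(\theta_\star,\xi_\star) = \log\frac{1}{K+1}$. Thus $L(\bar\theta,\xi_\star) = \max_\theta L(\theta,\xi_\star)$, which forces $\bar\theta \in \Theta_\star$ again by \Cref{prop:optimality}. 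This is the step where I crucially reuse the fact, central throughout this section, that $\Theta_\star$ is the maximizer set of $L(\cdot,\xi)$ for \emph{every} $\xi$.

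Finally the two halves of the proposition drop out. A Stackelberg optimal $(\bar\theta,\bar\xi)$ has $\bar\theta \in \argmax_\theta g = \Theta_\star$ by the reduction above. For the converse, given $\theta_\star \in \Theta_\star$, take any $\bar\xi \in \Xi_\star$: the Nash property gives $L(\theta_\star,\bar\xi) \le L(\theta_\star,\xi)$ for all $\xi$ (so the right Stackelberg inequality holds, with $L(\theta_\star,\bar\xi) = g(\theta_\star) = \log\frac{1}{K+1}$), and $g(\theta) \le \log\frac{1}{K+1} = L(\theta_\star,\bar\xi)$ for all $\theta$ gives the left inequality; hence $(\theta_\star,\bar\xi)$ is Stackelberg optimal. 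I do not anticipate a serious obstacle: the only things to watch are invoking compactness of $\Xi$ to know the inner infimum is attained, and keeping straight that ``Stackelberg optimal'' here means the leader $\theta$ maximizes the \emph{lower} value $g$. If one prefers not to cite the Nash remark, the single missing ingredient is the bound $L(\theta_\star,\xi) \ge \log\frac{1}{K+1}$ for all $\xi$ and $\theta_\star \in \Theta_\star$, which I would obtain by rewriting $L(\theta_\star,\xi)$ as the negative conditional entropy $-H(D \mid X, \augset)$ of the true index $D$ (with uniform prior) given the augmented batch, and using $H(D \mid X, \augset) \le H(D) = \log(K+1)$.
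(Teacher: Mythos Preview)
Your reduction to the value function $g(\theta)=\inf_{\xi}L(\theta,\xi)$ and the equivalence ``$(\bar\theta,\bar\xi)$ Stackelberg $\Leftrightarrow$ $\bar\theta\in\argmax_\theta g$ and $\bar\xi\in\argmin_\xi L(\bar\theta,\cdot)$'' is exactly right, and it is what the paper does implicitly. The divergence is in how you establish $\argmax_\theta g=\Theta_\star$.

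You route this through \Cref{prop:adv_opt}: you pick $\xi_\star\in\Xi_\star$, use the Nash property (or your entropy alternative) to pin the saddle value at $\log\tfrac{1}{K+1}$, and then compare against this value. This works, but it imports the extra hypothesis of \Cref{prop:adv_opt} that $\scrF_{\mathrm{n}}$ is realizable, i.e.\ $\Xi_\star\neq\emptyset$. That hypothesis is \emph{not} part of \Cref{prop:stackelberg_pop}, and more importantly the downstream results (\Cref{thm:adversarial_consistency} and \Cref{thm:as_norm_adv}) invoke \Cref{prop:stackelberg_pop} precisely in the regime where $\scrF_{\mathrm{n}}$ need not be realizable; your entropy alternative does not help, since it still needs a $\xi_\star$ with $L(\theta_\star,\xi_\star)=\log\tfrac{1}{K+1}$ to anchor the argument.

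The paper's proof sidesteps this entirely by using only \Cref{prop:optimality}. Forward direction: if $\bar\theta\notin\Theta_\star$, then for every fixed $\xi$ one has the strict inequality $L(\theta_\star,\xi)>L(\bar\theta,\xi)$; taking infima over the compact $\Xi$ (so both are attained) preserves strictness, contradicting $\bar\theta\in\argmax_\theta g$. Converse: for any $\theta,\xi$, $\inf_{\xi'}L(\theta,\xi')\le \sup_{\theta'}L(\theta',\xi)=L(\theta_\star,\xi)$ again by \Cref{prop:optimality}, and compactness yields $\bar\xi$ attaining $\min_\xi L(\theta_\star,\xi)$. This is both shorter and strictly more general than your route; in particular no knowledge of the saddle value $\log\tfrac{1}{K+1}$ is needed.
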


We now establish convergence of the finite-sample Stackelberg optimal pairs $(\hat{\theta}_n, \hat{\xi}_n)$ to a Stackelberg optimal pair for the population objective.

\begin{restatable}[Adversarial Consistency]{mythm}{advconsistency}
\label{thm:adversarial_consistency}
Suppose that \Cref{assmp:continuous} and \Cref{assmp:bounded} hold.
Let $(\hat{\theta}_n, \hat{\xi}_n) \in \Theta \times \Xi$ be a sequence of Stackelberg optimal pairs, as defined in~\eqref{maxmin_fs}, for the finite sample game~\eqref{eq:maxmin_game}.
Then, the following results hold:
\begin{enumerate}
    \item 
    Suppose that $\Theta_\star$ is contained in the interior of
    $\Theta$.
    We have that $d(\hat{\theta}_n, {\Theta_\star}) \overset{n \to \infty}{\longrightarrow} 0$ a.s.
    \item 
    Suppose furthermore that \Cref{assmp:c2} holds.
    Let $\bar{\Xi} := \{ \xi \in \Xi \mid L(\theta_\star, \xi) = \inf_{\xi' \in \Xi} L(\theta_\star, \xi') \}$ for an arbitrary
    choice of $\theta_\star \in \Theta_\star$ (the set definition is independent of this choice), and suppose $\bar{\Xi}$ is contained
    in the interior of $\Xi$.
    We have that $d(\hat{\xi}_n, \bar{\Xi}) \overset{n \to \infty}{\longrightarrow} 0$ a.s.
\end{enumerate}
\end{restatable}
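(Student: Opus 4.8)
The plan is to reduce both claims to one deterministic ingredient — a uniform law of large numbers over the joint parameter space,
\[
\sup_{(\theta,\xi)\in\Theta\times\Xi}\bigl|L_n(\theta,\xi)-L(\theta,\xi)\bigr|\xrightarrow{n\to\infty}0\quad\text{a.s.},
\]
which holds under Assumptions~\ref{assmp:continuous} and~\ref{assmp:bounded} by exactly the argument in the proof of \Cref{prop:consistency} (the present setting is in fact easier, since $L_n(\theta,\xi)=\tfrac1n\sum_i\bar\ell_{\theta,\xi}(x_i,y_i)$ is an i.i.d.\ empirical average with no data-dependent negative sampler). I will also use twice the elementary fact that if $f_n\to f$ uniformly on a compact metric space with $f$ continuous and $\kappa_n\in\argmax f_n$, then $d(\kappa_n,\argmax f)\to0$ (and symmetrically for $\argmin$), via the standard well-separation/subsequence argument.

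For the first claim, I would first unwrap the Stackelberg inequalities~\eqref{maxmin_fs}: the right-hand inequality gives $\hat\xi_n\in\argmin_\xi L_n(\hat\theta_n,\xi)$, hence $L_n(\hat\theta_n,\hat\xi_n)=\inf_\xi L_n(\hat\theta_n,\xi)=:\phi_n(\hat\theta_n)$, and substituting this into the left-hand inequality yields $\phi_n(\theta)\leq\phi_n(\hat\theta_n)$ for all $\theta$, i.e.\ $\hat\theta_n\in\argmax_\theta\phi_n(\theta)$ with $\phi_n(\theta):=\inf_\xi L_n(\theta,\xi)$. Since $|\inf_\xi L_n(\theta,\cdot)-\inf_\xi L(\theta,\cdot)|\leq\sup_\xi|L_n(\theta,\cdot)-L(\theta,\cdot)|$, the uniform LLN gives $\sup_\theta|\phi_n-\phi|\to0$ a.s.\ with $\phi(\theta):=\inf_\xi L(\theta,\xi)$ continuous (infimum of the jointly continuous $L$ over compact $\Xi$), so the elementary fact yields $d(\hat\theta_n,\argmax_\theta\phi)\to0$ a.s. It then remains to identify $\argmax_\theta\phi=\Theta_\star$: the inclusion $\supseteq$ is immediate from \Cref{prop:optimality}, since for $\theta_\star\in\Theta_\star$ and every $\xi$ we have $L(\theta,\xi)\leq L(\theta_\star,\xi)$, hence $\phi(\theta)\leq\phi(\theta_\star)$; the inclusion $\subseteq$ is exactly \Cref{prop:stackelberg_pop}, since any maximizer $\bar\theta$ of $\phi$ extends (take $\bar\xi\in\argmin_\xi L(\bar\theta,\xi)$) to a Stackelberg optimal pair, whose $\theta$-component must lie in $\Theta_\star$. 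The interior hypothesis on $\Theta_\star$ plays the same role here as in \Cref{prop:consistency}.

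For the second claim, the right Stackelberg inequality gives $\hat\xi_n\in\argmin_\xi L_n(\hat\theta_n,\xi)$, so it is enough to prove $\sup_\xi|L_n(\hat\theta_n,\xi)-L(\theta_\star,\xi)|\to0$ a.s.\ for one fixed $\theta_\star\in\Theta_\star$, whereupon the $\argmin$ version of the elementary fact delivers $d(\hat\xi_n,\bar{\Xi})\to0$ a.s.\ with $\bar{\Xi}=\argmin_\xi L(\theta_\star,\cdot)$ as defined. Bounding by $\sup_\xi|L_n(\hat\theta_n,\xi)-L(\hat\theta_n,\xi)|+\sup_\xi|L(\hat\theta_n,\xi)-L(\theta_\star,\xi)|$, the first term is $\leq\sup_{\Theta\times\Xi}|L_n-L|\to0$; for the second, I would pick $\theta_\star^{(n)}\in\Theta_\star$ with $\|\hat\theta_n-\theta_\star^{(n)}\|=d(\hat\theta_n,\Theta_\star)\to0$ ($\Theta_\star$ is closed, hence compact, by \Cref{assmp:continuous} and dominated convergence), and use the key observation that $\xi\mapsto L(\theta',\xi)$ is one and the same function for every $\theta'\in\Theta_\star$ — because realizability forces $p_{\theta'}(\cdot\mid x)=p(\cdot\mid x)$, and $L(\theta',\xi)$ depends on $\theta'$ only through $p_{\theta'}$. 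Hence the second term equals $\sup_\xi|L(\hat\theta_n,\xi)-L(\theta_\star^{(n)},\xi)|$, which vanishes by uniform continuity of $L$ on the compact set $\Theta\times\Xi$ together with $d(\hat\theta_n,\Theta_\star)\to0$ from the first claim. \Cref{assmp:c2} and the interior hypothesis on $\bar{\Xi}$ serve here, as in the first claim, only to localize the minimizing set of $L(\theta_\star,\cdot)$.

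The step I expect to be the main obstacle is the reduction in the first claim: recognizing that the $\theta$-iterate of a Stackelberg pair is an approximate maximizer of the \emph{value function} $\phi_n(\theta)=\inf_\xi L_n(\theta,\xi)$ (rather than of $L_n(\cdot,\hat\xi_n)$ or $\sup_\xi L_n(\cdot,\xi)$), and that its population counterpart $\phi$ has argmax set exactly $\Theta_\star$. This last identification is where \Cref{prop:optimality} is essential: $\theta_\star$ is optimal for $L(\cdot,\xi)$ \emph{uniformly in $\xi$}, so optimality survives the infimum over $\xi$. The analogous subtlety in the second claim is that $\hat\theta_n$ is itself random and moving, so one cannot simply freeze a single $\theta_\star$; this is resolved by the observation that $L(\cdot,\xi)$ is constant along $\Theta_\star$, collapsing the moving target to a single continuous function of $\xi$.
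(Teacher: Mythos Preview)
Your proposal is correct. Part (1) is essentially the paper's argument, packaged slightly differently: you cite an ``elementary fact'' for $\argmax$-convergence where the paper invokes its \Cref{prop:set_separation}, and you identify $\argmax_\theta\phi=\Theta_\star$ via \Cref{prop:optimality} and \Cref{prop:stackelberg_pop} exactly as the paper does.

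Part (2) differs in a pleasant way. The paper bounds $|L(\hat\theta_\star,\hat\xi_n)-L(\hat\theta_\star,\bar\xi)|$ by three terms $T_1+T_2+T_3$ and controls $T_1=\sup_\xi|L(\hat\theta_\star,\xi)-L(\hat\theta_n,\xi)|$ via a uniform Lipschitz bound $B_L=\sup_{\theta,\xi}\|\nabla_\theta L(\theta,\xi)\|$, which is precisely where \Cref{assmp:c2} enters (it guarantees $L\in\calC^1$). You instead show directly that $\xi\mapsto L_n(\hat\theta_n,\xi)$ converges uniformly to $\xi\mapsto L(\theta_\star,\xi)$, handling the ``moving target'' term $\sup_\xi|L(\hat\theta_n,\xi)-L(\theta_\star,\xi)|$ with two observations: (i) $L(\theta',\xi)$ is constant in $\theta'$ along $\Theta_\star$ because $\ell_{\theta,\xi}$ depends on $\theta$ only through $p_\theta$ (the partition function cancels in the softmax), and (ii) uniform continuity of $L$ on the compact $\Theta\times\Xi$, which already follows from \Cref{assmp:continuous} and \Cref{assmp:bounded}. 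This is cleaner and in fact shows that \Cref{assmp:c2} is not needed for Part (2). Your closing remark that \Cref{assmp:c2} ``serves only to localize the minimizing set'' is therefore a slight misattribution: in the paper it provides the Lipschitz constant, and in your argument it is simply unused.
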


We note that part (1) of the consistency result is perhaps not surprising in light of \Cref{prop:consistency}, which merely assumed that $\hat{\theta}_n$ maximizes the finite-sample objective $L_n$ for any particular $\hat{\xi}_n$. In particular, no assumptions are placed upon the negative distribution parameter sequence $\{\hat{\xi}_n\}$. Under a mild regularity assumption however, we are additionally able to show convergence of the noise distribution parameters to a population Stackelberg adversary. In the setting where the family of proposal distributions $\scrF_{\mathrm{n}}$ is realizable, the set $\bar{\Xi}$ is indeed the Nash optimal set $\Xi_\star$, as defined in \Cref{prop:adv_opt}. Next, we characterize the asymptotic normality of the sequence of finite-sample Stackelberg-optimal pairs $(\hat{\theta}_n, \hat{\xi}_n)$. 

\begin{restatable}[Adversarial Normality]{thm}{advnormality}
\label{thm:as_norm_adv}
Suppose that $\scrF$ is realizable and Assumptions \ref{assmp:continuous}, \ref{assmp:bounded}, and \ref{assmp:c2} hold. Let $\hat{\gamma}_n := (\hat{\theta}_n, \hat{\xi}_n) \in \Theta \times \Xi$ be a sequence of Stackelberg optimal pairs, as defined in~\eqref{maxmin_fs}, for the finite sample game~\eqref{eq:maxmin_game}, and define $\bar{\Xi}$ as in \Cref{thm:adversarial_consistency}. Assume then that the following properties hold:
\begin{enumerate}
    \item The sets $\Theta_\star, \bar{\Xi}$ are singletons, i.e., $\Theta_\star = \{\theta_\star\}$, and $\bar{\Xi} = \{\bxi\}$, and $\gamma_\star := (\theta_\star, \bxi)$ lies in
    the interior of $\Theta \times \Xi$.
    \item The Hessian of $\bar{\ell}_{\gamma}$ is Lipschitz-continuous, i.e.,
    \[
        \opnorm{\nabla_\gamma^2 \bar{\ell}_{\gamma_1}(z) - \nabla_\gamma^2 \bar{\ell}_{\gamma_2}(z) } \leq M(z) \| \gamma_1 - \gamma_2 \|, \quad \E_{z} \left[M(z)\right] < \infty,\ \forall \gamma_1, \gamma_2 \in \Theta \times \Xi.
    \]
    \item The $(\theta, \theta)$ block of the Hessian $\nabla^2_{\gamma} L(\theta_\star, \bxi)$, denoted $\nabla_{\theta}^2 L(\theta_\star, \bxi)$, satisfies
    $\nabla^2_{\theta} L(\theta_\star, \bxi) \prec 0$, and the $(\xi, \xi)$ block, denoted $\nabla_{\xi}^2 L(\theta_\star, \bxi)$, satisfies $\nabla^2_{\xi} L(\theta_\star, \bxi) \succ 0$.
\end{enumerate}
Then, $\sqrt{n}(\hat{\gamma}_n - \gamma_\star) \distconv \mathcal{N}(0, V_\gamma^a)$, where 
\begin{equation}
    V_\gamma^a = \E_{z}\left[ \nabla_{\gamma}^2 \bar{\ell}_{\gamma_\star}(z) \right] ^{-1} \mathrm{Var}_{z}\left[\nabla_{\gamma} \bar{\ell}_{\gamma_\star}(z)\right] \E_{z}\left[ \nabla_{\gamma}^2 \bar{\ell}_{\gamma_\star}(z) \right]^{-1}.
\label{eq:adv_norm_var}
\end{equation}
\end{restatable}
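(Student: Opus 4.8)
The plan is to reduce the finite-sample Stackelberg conditions to the single statement that $\hat\gamma_n := (\hat\theta_n, \hat\xi_n)$ is a critical point of $L_n$, and then run the standard $Z$-estimator (M-estimator) asymptotic normality argument (cf.~\citet{ferguson2017course}), exactly as in the proof of \Cref{prop:as_norm_general}. First I record the relevant population-level facts. By \Cref{thm:adversarial_consistency} (whose hypotheses are implied by those assumed here, in particular $\bar\Xi = \{\bxi\}$ and \Cref{assmp:c2}), $\hat\gamma_n \to \gamma_\star = (\theta_\star, \bxi)$ almost surely, so since $\gamma_\star$ is interior, $\hat\gamma_n$ eventually lies in a fixed neighborhood $N$ of $\gamma_\star$ contained in $\mathrm{int}(\Theta\times\Xi)$. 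Moreover $\gamma_\star$ is a critical point of $L$: $\nabla_\theta L(\theta_\star, \bxi) = 0$ because $\theta_\star$ is an interior maximizer of $L(\cdot, \bxi)$ (\Cref{prop:optimality}), and $\nabla_\xi L(\theta_\star, \bxi) = 0$ because $\bxi$ is an interior minimizer of $L(\theta_\star,\cdot)$ by definition of $\bar\Xi$. Finally, since \Cref{prop:optimality} gives $\nabla_\theta L(\theta_\star, \xi) = 0$ for \emph{every} $\xi$, differentiating in $\xi$ shows the off-diagonal block $\nabla^2_{\xi\theta} L(\theta_\star, \bxi)$ vanishes; hence the population Hessian $H_\star := \nabla^2_\gamma L(\gamma_\star) = \E_z[\nabla^2_\gamma\bell_{\gamma_\star}(z)]$ is block-diagonal with blocks $\nabla^2_\theta L(\gamma_\star)\prec 0$ and $\nabla^2_\xi L(\gamma_\star)\succ 0$, and is therefore invertible.

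The crux---and the step I expect to be the main obstacle---is showing that $\nabla_\gamma L_n(\hat\gamma_n) = 0$ for all large $n$, a.s. The $\xi$-component is immediate: the right-hand inequality of \eqref{maxmin_fs} says $\hat\xi_n$ minimizes $\xi\mapsto L_n(\hat\theta_n,\xi)$ over $\Xi$, and since $\hat\xi_n$ is eventually interior, $\nabla_\xi L_n(\hat\gamma_n) = 0$. The $\theta$-component requires an envelope (Danskin) argument. Chaining the two inequalities of \eqref{maxmin_fs} shows $L_n(\hat\gamma_n) = \inf_{\xi}L_n(\hat\theta_n,\xi)$ and that $\hat\theta_n$ is a \emph{global} maximizer over $\Theta$ of $\phi_n(\theta) := \inf_{\xi\in\Xi}L_n(\theta,\xi)$. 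Using the uniform convergence $\nabla^2_\xi L_n \to \nabla^2_\xi L$ on $N$ (a uniform law of large numbers, justified by \Cref{assmp:c2} as in the proof of \Cref{prop:consistency}) together with $\nabla^2_\xi L(\gamma_\star)\succ 0$ and continuity, the slice $\xi\mapsto L_n(\theta,\xi)$ is strongly convex on $N$ for $\theta$ near $\theta_\star$ and $n$ large; combined with a standard argmin-continuity argument (compactness of $\Xi$, the uniform convergence $L_n\to L$ on $\Theta\times\Xi$ established in the proof of \Cref{prop:consistency}, and uniqueness of the global minimizer $\bxi$ of $L(\theta_\star,\cdot)$), the \emph{global} minimizer of $L_n(\hat\theta_n,\cdot)$ over $\Xi$ is unique and equals $\hat\xi_n$. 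Danskin's theorem then yields that $\phi_n$ is differentiable at $\hat\theta_n$ with $\nabla\phi_n(\hat\theta_n) = \nabla_\theta L_n(\hat\theta_n,\hat\xi_n)$; since $\hat\theta_n$ is an interior global maximizer of $\phi_n$, this gradient vanishes. Hence $\nabla_\gamma L_n(\hat\gamma_n) = 0$.

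With joint stationarity in hand the rest is routine. A first-order Taylor expansion gives $0 = \nabla_\gamma L_n(\hat\gamma_n) = \nabla_\gamma L_n(\gamma_\star) + \bar H_n(\hat\gamma_n - \gamma_\star)$, where $\bar H_n := \int_0^1 \nabla^2_\gamma L_n(\gamma_\star + t(\hat\gamma_n - \gamma_\star))\,\rmd t$. Since $\hat\gamma_n\to\gamma_\star$ a.s., using the Lipschitz-Hessian hypothesis together with the uniform convergence of $\nabla^2_\gamma L_n$ to $\nabla^2_\gamma L$ on $N$, we get $\bar H_n \to H_\star$ a.s., so $\bar H_n$ is eventually invertible with $\bar H_n^{-1}\to H_\star^{-1}$. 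For the score term, $\E_z[\nabla_\gamma\bell_{\gamma_\star}(z)] = \nabla_\gamma L(\gamma_\star) = 0$ (interchange of gradient and expectation being valid under \Cref{assmp:c2}) and the second moment is finite by the same domination, so the central limit theorem gives $\sqrt{n}\,\nabla_\gamma L_n(\gamma_\star) \distconv \mathcal{N}\big(0,\ \mathrm{Var}_z[\nabla_\gamma\bell_{\gamma_\star}(z)]\big)$. Therefore $\sqrt{n}(\hat\gamma_n - \gamma_\star) = -\bar H_n^{-1}\sqrt{n}\,\nabla_\gamma L_n(\gamma_\star) \distconv -H_\star^{-1}\mathcal{N}(0,\ \mathrm{Var}_z[\nabla_\gamma\bell_{\gamma_\star}(z)])$ by Slutsky, which, using symmetry of $H_\star = \E_z[\nabla^2_\gamma\bell_{\gamma_\star}(z)]$, is exactly $\mathcal{N}(0, V_\gamma^a)$ with $V_\gamma^a$ as in \eqref{eq:adv_norm_var}. (The block-diagonality of $H_\star$ noted above further decouples the $(\theta,\theta)$ and $(\xi,\xi)$ blocks of $V_\gamma^a$, though this is not needed for the statement; note also that, in contrast to \Cref{prop:as_norm_general}, here $\bxi$ is endogenous to the optimization so the $(\theta,\theta)$ block does not collapse to $-\nabla^2_\theta L(\gamma_\star)^{-1}$.)
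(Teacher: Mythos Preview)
Your proof is correct and follows essentially the same route as the paper: establish consistency via \Cref{thm:adversarial_consistency}, verify population stationarity and block-diagonality of $H_\star$, prove empirical stationarity $\nabla_\gamma L_n(\hat\gamma_n)=0$ (the crux), then Taylor-expand and apply CLT plus Slutsky. The only procedural difference is that you invoke Danskin's theorem (via uniqueness of the inner minimizer) to obtain $\nabla_\theta L_n(\hat\gamma_n)=0$, whereas the paper uses the implicit function theorem on $\nabla_\xi L_n(\theta,\cdot)=0$ (enabled by the nonsingularity of $\nabla^2_\xi L_n(\hat\gamma_n)$) to write $\inf_\xi L_n(\theta,\xi)=L_n(\theta,\hat G_n(\theta))$ locally and then differentiate; both are standard and equivalent here.

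One correction to your final parenthetical: the $(\theta,\theta)$ block of $V_\gamma^a$ \emph{does} collapse to $-\nabla^2_\theta L(\theta_\star,\bxi)^{-1}$, exactly as in \Cref{prop:as_norm_general}. This is the content of the paper's \Cref{prop:var_grad_equals_neg_hessian}, which shows $\mathrm{Var}_z[\nabla_\theta\bell_{\gamma_\star}(z)] = -\nabla^2_\theta L(\theta_\star,\bxi)$ for \emph{any} $\bxi\in\Xi$ (only $\theta_\star\in\Theta_\star$ is used); the ``endogeneity'' of $\bxi$ plays no role in this identity, and the resulting marginal statement is recorded as a corollary immediately after the theorem.
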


A particularly important corollary of this result is as follows:
\begin{restatable}[Marginal Normality]{mycorollary}{marginalnormality}
Under the assumptions of \Cref{thm:as_norm_adv}, 
the parameters $\{\theta_n\}$ satisfy $\sqrt{n}(\hat{\theta}_n - \theta_\star) \distconv \mathcal{N}(0, V_{\theta}^a)$, where
\[
    V_{\theta}^a = -\nabla^2_\theta L(\theta_\star, \bxi)^{-1}
\]
\end{restatable}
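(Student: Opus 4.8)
The plan is to read off the marginal law of $\hat\theta_n$ directly from the joint asymptotic normality already supplied by \Cref{thm:as_norm_adv}, and then simplify the relevant block of the asymptotic covariance. Concretely, \Cref{thm:as_norm_adv} gives $\sqrt{n}(\hat\gamma_n - \gamma_\star) \distconv \mathcal{N}(0, V_\gamma^a)$ for $\gamma = (\theta,\xi)$ and $\gamma_\star = (\theta_\star,\bxi)$. Since any fixed linear image of a Gaussian vector is again Gaussian with the correspondingly transformed covariance, projecting onto the $\theta$-coordinates yields $\sqrt{n}(\hat\theta_n - \theta_\star) \distconv \mathcal{N}(0, [V_\gamma^a]_{\theta\theta})$, where $[V_\gamma^a]_{\theta\theta}$ denotes the $(\theta,\theta)$ block of $V_\gamma^a$. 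It therefore suffices to establish $[V_\gamma^a]_{\theta\theta} = -\nabla_\theta^2 L(\theta_\star,\bxi)^{-1}$.

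To compute this block I would first exploit \Cref{prop:optimality}: because $\theta_\star$ maximizes $\theta \mapsto L(\theta,\xi)$ for \emph{every} $\xi \in \Xi$ and $\gamma_\star$ lies in the interior of $\Theta\times\Xi$, the stationarity condition $\nabla_\theta L(\theta_\star,\xi) = 0$ holds identically in $\xi$; differentiating this identity with respect to $\xi$ at $\bxi$ (permissible by the $\calC^2$-regularity of $L$ afforded by \Cref{assmp:c2}) gives $\nabla_{\xi\theta}^2 L(\theta_\star,\bxi) = 0$. Hence, writing $H := \nabla_\gamma^2 L(\theta_\star,\bxi) = \E_z[\nabla_\gamma^2 \bar\ell_{\gamma_\star}(z)]$ (the exchange of derivative and expectation being justified by the domination in \Cref{assmp:c2}), the matrix $H$ is block-diagonal, $H = \mathrm{diag}(H_{\theta\theta}, H_{\xi\xi})$ with $H_{\theta\theta} = \nabla_\theta^2 L(\theta_\star,\bxi) \prec 0$ and $H_{\xi\xi} = \nabla_\xi^2 L(\theta_\star,\bxi) \succ 0$, both invertible by assumption~(3) of \Cref{thm:as_norm_adv}. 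Consequently $H^{-1} = \mathrm{diag}(H_{\theta\theta}^{-1}, H_{\xi\xi}^{-1})$, and reading the $(\theta,\theta)$ block of the sandwich $V_\gamma^a = H^{-1}\,\mathrm{Var}_z[\nabla_\gamma \bar\ell_{\gamma_\star}(z)]\,H^{-1}$, the vanishing off-diagonal blocks of $H^{-1}$ annihilate the cross terms, leaving $[V_\gamma^a]_{\theta\theta} = H_{\theta\theta}^{-1}\,\mathrm{Var}_z[\nabla_\theta \bar\ell_{\gamma_\star}(z)]\,H_{\theta\theta}^{-1}$.

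It remains to collapse this into a single inverse Hessian, which is exactly where the information-matrix (second Bartlett) identity enters: under realizability and the regularity in Assumptions~\ref{assmp:continuous}--\ref{assmp:c2}, one has $\mathrm{Var}_z[\nabla_\theta \bar\ell_{\gamma_\star}(z)] = -\E_z[\nabla_\theta^2 \bar\ell_{\gamma_\star}(z)] = -H_{\theta\theta}$ --- this is the same identity established in the proof of \Cref{prop:as_norm_general}, being precisely what turns the generic $M$-estimator sandwich into the clean form $V_\theta = -\nabla_\theta^2 L(\theta_\star,\xi_\star)^{-1}$ there. Substituting gives $[V_\gamma^a]_{\theta\theta} = H_{\theta\theta}^{-1}(-H_{\theta\theta})H_{\theta\theta}^{-1} = -H_{\theta\theta}^{-1} = -\nabla_\theta^2 L(\theta_\star,\bxi)^{-1} = V_\theta^a$, as claimed. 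I expect the only genuinely delicate point to be the justification of this identity for the integrated loss $\bar\ell_{\gamma_\star}$: it rests on differentiating the R-NCE posterior normalization under the integral sign together with $\theta_\star$ being a stationary point of $L(\cdot,\bxi)$, both of which are already in hand via \Cref{prop:optimality} and the dominated-convergence bounds of \Cref{assmp:c2}. Alternatively, one can bypass it entirely by observing that, since $\hat\xi_n$ is $\sqrt{n}$-consistent about $\bxi$ (a byproduct of \Cref{thm:as_norm_adv}) and the cross-Hessian vanishes, the first-order expansion governing $\hat\theta_n$ is essentially the one carried out in the proof of \Cref{prop:as_norm_general}, so the $\theta$-marginal conclusion transfers with $\xi_\star$ there replaced by $\bxi$.
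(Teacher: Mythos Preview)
Your proposal is correct and follows essentially the same approach as the paper: extract the $(\theta,\theta)$ block of the sandwich using the block-diagonality of $\nabla_\gamma^2 L(\theta_\star,\bxi)$ (which the paper also derives from \Cref{prop:optimality}), and then collapse it via the information-matrix identity $\mathrm{Var}_z[\nabla_\theta \bar\ell_{\gamma_\star}(z)] = -\nabla_\theta^2 L(\theta_\star,\bxi)$, which in the paper is stated separately as \Cref{prop:var_grad_equals_neg_hessian}. Your writeup is somewhat more explicit about the projection step and the justification of block-diagonality, but the substance is identical.
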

The variance $V_\theta^a$ is similar to $V_\theta$ in~\Cref{prop:as_norm_general}, with the difference being the specification of a particular fixed-point for the proposal distribution parameters: the Stackelberg adversary of $\theta_\star$.
In the special case when the family of proposal distributions is realizable,
since the fixed-point involves the Nash adversary $\xi_\star$ (cf.~\Cref{prop:adv_opt}),
these two variances coincide.
However, when we lack realizability of the proposal distributions, we are left to compare the
asymptotic variance $V_\theta$ from \Cref{prop:as_norm_general} with the new asymptotic variance $V_\theta^a$. 
Given the quite non-trivial relationship between the proposal parameters that a non-adversarial negative
sampler converges to versus the Stackleberg adversary,
it is not clear that a general statement can be made comparing these two variances.
Therefore, the theoretical benefits of adversarial training are quite unclear.

Furthermore, actually finding Stackleberg solutions comes with its own set of challenges in practice.
Computationally, differentiating the R-NCE objective with respect to the negative proposal parameters would involve differentiating both the negative samples $\negset$ as well as the likelihoods $\log \pn(\cdot \mid x)$, a non-trivial computational bottleneck. In contrast, differentiating the negative samples $\negset$ with respect to the proposal model parameters $\xi$ is \emph{not required} in the non-adversarial setting (cf.~\Cref{rmk:rnce_gradients}). For this work, we assume that the independent optimization of $\xi$ via a suitable objective for the generative model $\pn$ is sufficient to optimize the EBM, without assuming realizability for $\scrF_{\mathrm{n}}$. We leave the explicit minimization of some function of $V_{\theta}$ to future work.

Finally, we conclude by noting that the proofs of 
\Cref{thm:adversarial_consistency} (adversarial consistency) and
\Cref{thm:as_norm_adv} (adversarial normality) are based off of ideas from
\citet{biau2020_gans}, who study the asymptotics of GAN training.
The main difference is that we are able to exploit specific properties of the R-NCE objective
to derive simpler expressions for the resulting asymptotic variances.

\subsection{Typical Training Profiles}

The results of \Cref{sec:results:asymptotic_convergence} and
\Cref{sec:rnce_special} focus on the asymptotic properties of R-NCE training,
and do not consider the optimization aspects of the problem.
Here, in \Cref{fig:rnce_ideal_training_curve}, we characterize typical training curves that one can expect when jointly training
an EBM and negative proposal distribution with~\Cref{alg:rnce}.

\definecolor{PlotOrange}{HTML}{fc8d62}
\definecolor{PlotGreen}{HTML}{66c2a5}
\definecolor{PlotPink}{HTML}{e78ac3}
\definecolor{PlotGold}{HTML}{e5c494}
\definecolor{PlotGrey}{HTML}{b3b3b3}
\newcommand\crule[1][black]{\textcolor{#1}{\rule{8pt}{8pt}}}
\begin{figure}[ht!]
\centering
\includegraphics[width=0.8\textwidth]{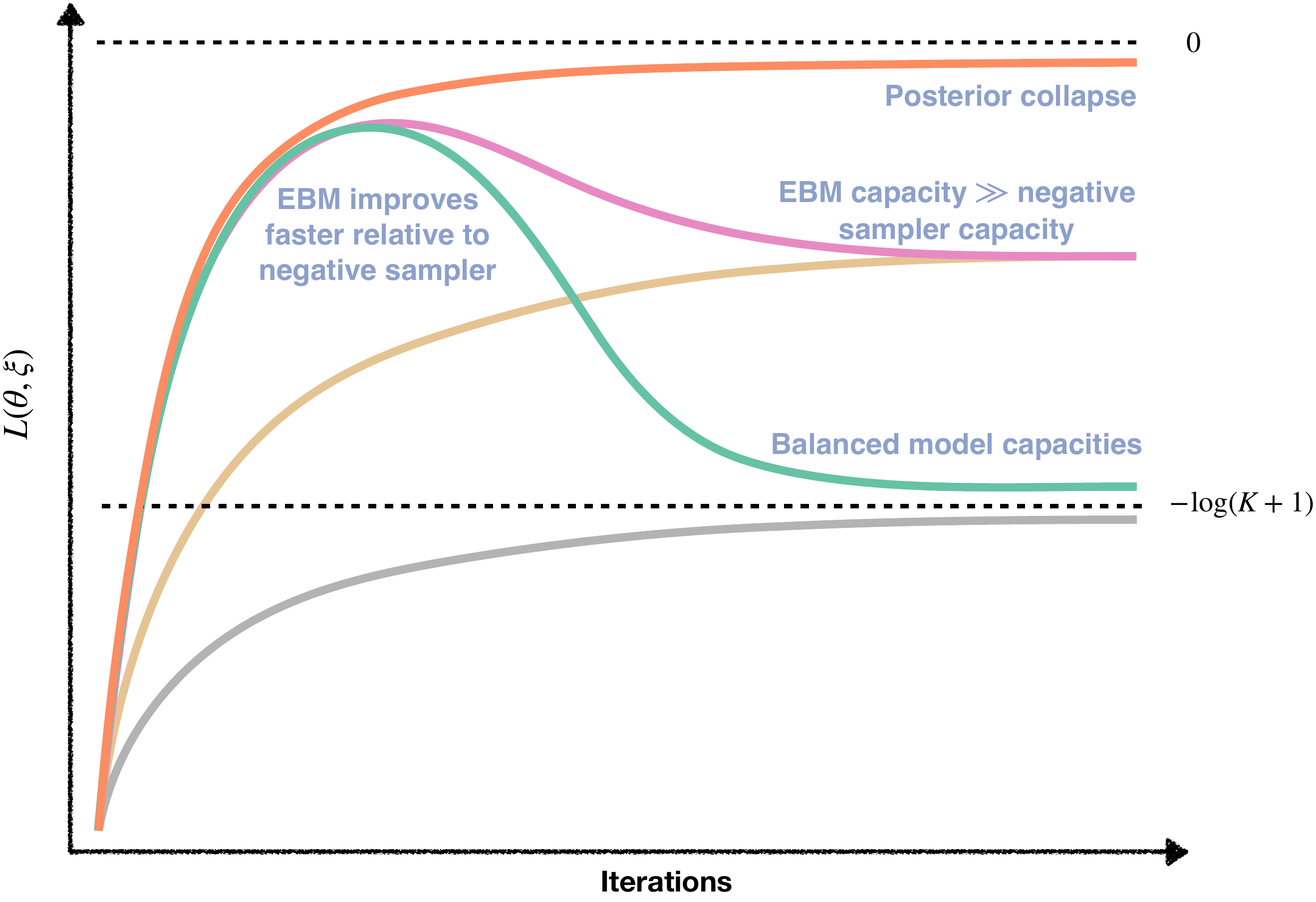}
\caption{An illustration of
the different qualitative behaviors that R-NCE training curves
can exhibit. The \crule[PlotOrange] curve illustrates an example of posterior collapse, where the negative sampler is completely
un-informative, leading to a trivial classification problem.
Both the \crule[PlotGreen] and \crule[PlotPink] curves illustrate a phase transition in
the training: at first the EBM learns quicker relative to the
negative sampler (and hence the classification accuracy increases), but then 
the negative sampler catches up in learning, resulting in a drop in
the R-NCE objective. The main difference between
the \crule[PlotGreen] and \crule[PlotPink] curves is the final value each curve converges 
towards. The \crule[PlotGreen] curve tends toward $-\log(K+1)$, which indicates that
both the EBM and negative sampler have sufficient capacity to represent the
true distribution $p(y \mid x)$ (cf.~\Cref{eq:posterior_prob} in the case where $p_{\theta_\star}(\cdot \mid x) = p_{\xi_\star}(\cdot \mid x) = p(\cdot \mid x)$). 
On the other hand, the \crule[PlotPink] curve tends
towards a final value larger than $-\log(K+1)$ but bounded away from zero, indicating
that the EBM capacity exceeds that of the negative sampler.
The remaining \crule[PlotGold] and \crule[PlotGrey] curves represent 
when the negative sampler learns faster than the EBM; this occurs when, for example,
a pre-trained negative sampler is used. Here, the training curve
approaches its final value from below. As before, the difference between the
final value of both the 
\crule[PlotGold] and \crule[PlotGrey] curves is due to negative sampler
model capacity, with the \crule[PlotGold] curve denoting when the EBM has much larger
capacity than the negative sampler, and the \crule[PlotGrey] curve denoting
when the model capacities are relatively balanced.
}
\label{fig:rnce_ideal_training_curve}
\end{figure}

\subsection{Convergence of Divergences}
\label{sec:converge_diverge}

To conclude this section,
we translate asymptotic normality in the parameter
space to convergence of both KL-divergence and relative Fisher information.
For two measures $\mu$ and $\nu$ over the same measure space,
the KL-divergence and the relative Fisher information of
$\mu$ w.r.t.\ $\nu$ are defined as:
\begin{align*}
    \KL{\mu}{\nu} := \E_\mu\left[ \log\left(\frac{\mu}{\nu}\right) \right], \quad \FI{\mu}{\nu} := \E_\mu\left[ \bignorm{ \nabla\log\left(\frac{\mu}{\nu}\right) }^2 \right].
\end{align*}
Here, we have (a) overloaded the same notation for measures and densities, and
(b) in the definition of relative Fisher information we have assumed smoothness of the densities.
The following
propositions are straightforward applications of the second-order delta method.
\begin{myprop}
\label{prop:kl_convergence}
Suppose that $\theta_\star \in \Theta_\star$, and that
$\{\hat{\theta}_n\}$ is a sequence of estimators satisfying
$\sqrt{n}(\hat{\theta}_n - \theta_\star) \distconv \calN(0, \Sigma_\star)$.
Then:
\begin{align*}
    n \cdot \E_{x}[ \KL{p(y \mid x)}{p_{\hat{\theta}_n}(y \mid x)} ] \distconv \frac{1}{2}\norm{Z}^2, \quad Z \sim \calN(0, \Sigma_\star^{1/2} \E_x[I(\theta_\star \mid x)] \Sigma_\star^{1/2}).
\end{align*}
\end{myprop}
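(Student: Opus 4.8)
The plan is to apply the second-order delta method to the smooth functional $g(\theta) := \E_x[\KL{p(y\mid x)}{p_\theta(y\mid x)}]$, regarded as a function of $\theta$ near $\theta_\star$. The two facts to establish beforehand are that $\theta_\star$ is a critical point of $g$ with $g(\theta_\star) = 0$, and that $\nabla^2 g(\theta_\star) = \E_x[I(\theta_\star \mid x)]$. Writing $\KL{p(y\mid x)}{p_\theta(y\mid x)} = \E_{y \sim p(\cdot\mid x)}[\log p(y\mid x)] - \E_{y \sim p(\cdot\mid x)}[\log p_\theta(y\mid x)]$ and differentiating under the expectation (justified by compactness of $\Theta$ and the domination hypotheses of Assumption~\ref{assmp:c2}, which also give that $g$ is $\calC^2$ near $\theta_\star$), one obtains $\nabla_\theta \KL{p}{p_\theta} = -\E_{y\sim p(\cdot\mid x)}[\nabla_\theta \log p_\theta(y\mid x)]$ and $\nabla^2_\theta \KL{p}{p_\theta} = -\E_{y\sim p(\cdot\mid x)}[\nabla^2_\theta \log p_\theta(y\mid x)]$. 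Evaluating at $\theta = \theta_\star$ and invoking realizability, so that $p_{\theta_\star}(\cdot\mid x) = p(\cdot\mid x)$ for a.e.\ $x$, the first quantity is the expected score under $p_{\theta_\star}$, which vanishes, and the second is $I(\theta_\star\mid x)$ by the information-matrix equality; averaging over $x \sim \sfP_X$ gives $\nabla g(\theta_\star) = 0$ and $\nabla^2 g(\theta_\star) = H := \E_x[I(\theta_\star\mid x)]$, with $g(\theta_\star) = 0$ being immediate.

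Next I would perform a second-order Taylor expansion of $g$ about $\theta_\star$. Since the assumed $\sqrt{n}$-consistency implies $\hat\theta_n \to \theta_\star$ in probability, $g(\hat\theta_n) = \tfrac12 (\hat\theta_n - \theta_\star)^\T H (\hat\theta_n - \theta_\star) + R_n$ with $R_n = o_p(\norm{\hat\theta_n - \theta_\star}^2)$. Writing $W_n := \sqrt{n}(\hat\theta_n - \theta_\star)$ and multiplying by $n$ yields $n\, g(\hat\theta_n) = \tfrac12 W_n^\T H W_n + n R_n$, where $n R_n = \norm{W_n}^2\cdot o_p(1) = O_p(1)\cdot o_p(1) = o_p(1)$, using that $W_n \distconv \mathcal{N}(0, \Sigma_\star)$ forces $W_n = O_p(1)$. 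Applying the continuous mapping theorem to the quadratic form $w \mapsto \tfrac12 w^\T H w$, and Slutsky's lemma to discard $n R_n$, I conclude $n\, g(\hat\theta_n) \distconv \tfrac12 W^\T H W$ with $W \sim \mathcal{N}(0, \Sigma_\star)$.

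To match the stated form, write $W = \Sigma_\star^{1/2} G$ with $G \sim \mathcal{N}(0, I)$, so that $W^\T H W = G^\T \Sigma_\star^{1/2} H \Sigma_\star^{1/2} G$; since $\Sigma_\star^{1/2} H \Sigma_\star^{1/2}$ is symmetric positive semidefinite, setting $Z := (\Sigma_\star^{1/2} H \Sigma_\star^{1/2})^{1/2} G$ gives $Z \sim \mathcal{N}(0, \Sigma_\star^{1/2} H \Sigma_\star^{1/2})$ and $\norm{Z}^2 = W^\T H W$, hence $\tfrac12 W^\T H W = \tfrac12 \norm{Z}^2$, which is precisely the claim with $H = \E_x[I(\theta_\star\mid x)]$. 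I do not expect a genuine obstacle: the only delicate points are the interchange of differentiation and integration and the $o_p$-control of the Taylor remainder, both of which are routine under Assumption~\ref{assmp:c2} and compactness of $\Theta$ — consistent with the proposition being flagged as a ``straightforward application'' of the delta method.
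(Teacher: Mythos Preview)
Your proposal is correct and follows precisely the approach the paper indicates: the paper does not spell out a proof but states that the result is a ``straightforward application of the second-order delta method,'' and that is exactly what you carry out, establishing $g(\theta_\star)=0$, $\nabla g(\theta_\star)=0$, $\nabla^2 g(\theta_\star)=\E_x[I(\theta_\star\mid x)]$ via realizability and the information-matrix equality, then invoking the second-order delta method and rewriting the limiting quadratic form.
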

Hence, in the setting of \Cref{prop:as_norm_general},
if $p_{\xi_\star}(y \mid x) = p(y \mid x)$ and if $\Theta \subset \R^p$, then we have:
\begin{align}
    n \cdot \E_{x}[ \KL{p(y \mid x)}{p_{\hat{\theta}_n}(y \mid x)} ] \distconv \frac{1}{2}\left(1+\frac{1}{K}\right) \chi^2(p), \label{eq:KL_special_case_conv}
\end{align}
where $\chi^2(p)$ is a Chi-squared distribution with $p$ degrees of freedom.
Now turning to the relative Fisher information, we have the following result.
\begin{myprop}
\label{prop:fisher_info_convergence}
Suppose that $\theta_\star \in \Theta_\star$, and that
$\{\hat{\theta}_n\}$ is a sequence of estimators satisfying
$\sqrt{n}(\hat{\theta}_n - \theta_\star) \distconv \calN(0, \Sigma_\star)$.
Then:
\begin{align*}
    n \cdot \E_{x}[ \FI{p(y \mid x)}{p_{\hat{\theta}_n}(y \mid x)} ] \distconv \norm{Z}^2, \quad Z \sim \calN(0, \Sigma_\star^{1/2} \E_x[ J(\theta_\star \mid x) ] \Sigma_\star^{1/2} ),
\end{align*}
where $J(\theta \mid x) := \E_{y \sim p_\theta(\cdot \mid x)}[ \partial_\theta \nabla_y \en_\theta(x, y)^\T \partial_\theta \nabla_y \en_\theta(x, y) ]$.
\end{myprop}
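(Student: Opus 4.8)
The plan is to reduce the statement to a single application of the second‑order delta method to the deterministic functional $g(\theta) := \E_x[\FI{p(y\mid x)}{p_\theta(y\mid x)}]$ evaluated at the random argument $\hat\theta_n$. The first step is to exploit realizability together with the EBM structure to put $g$ into a tractable form. Since $\theta_\star \in \Theta_\star$, realizability gives $p(y\mid x) = p_{\theta_\star}(y\mid x)$ for a.e.\ $(x,y)$. Moreover $\log p_\theta(y\mid x) = \en_\theta(x,y) - \log Z_\theta(x)$, and because $Z_\theta(x)$ does not depend on $y$, the $y$‑score is simply $\nabla_y \log p_\theta(y\mid x) = \nabla_y \en_\theta(x,y)$. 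Hence $\FI{p(y\mid x)}{p_\theta(y\mid x)} = \E_{y\sim p_{\theta_\star}(\cdot\mid x)}\bignorm{\nabla_y\en_{\theta_\star}(x,y) - \nabla_y\en_\theta(x,y)}^2$, so that $g(\theta) = \E_x\E_{y\sim p_{\theta_\star}(\cdot\mid x)}\bignorm{\nabla_y\en_{\theta_\star}(x,y) - \nabla_y\en_\theta(x,y)}^2$.

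Next I would Taylor‑expand $g$ about $\theta_\star$. Since $g \geq 0$ and $g(\theta_\star) = 0$, the point $\theta_\star$ is a global minimizer, so (granted enough regularity to differentiate under the $\E_x\E_y$ integrals) $\nabla_\theta g(\theta_\star) = 0$. For the Hessian, write $\delta = \theta - \theta_\star$ and use $\nabla_y\en_\theta(x,y) = \nabla_y\en_{\theta_\star}(x,y) + \partial_\theta\nabla_y\en_{\theta_\star}(x,y)\,\delta + O(\norm{\delta}^2)$; squaring the difference, the linear term contributes $\delta^\T \partial_\theta\nabla_y\en_{\theta_\star}(x,y)^\T\partial_\theta\nabla_y\en_{\theta_\star}(x,y)\,\delta$, and taking the inner $y$‑expectation yields $\E_{y\sim p_{\theta_\star}(\cdot\mid x)}\bignorm{\nabla_y\en_{\theta_\star}-\nabla_y\en_\theta}^2 = \delta^\T J(\theta_\star\mid x)\,\delta + O(\norm{\delta}^3)$ by the definition of $J(\theta_\star\mid x)$. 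Taking $\E_x$ gives $g(\theta) = \delta^\T \E_x[J(\theta_\star\mid x)]\,\delta + O(\norm{\delta}^3)$, i.e.\ $\nabla_\theta^2 g(\theta_\star) = 2\,\E_x[J(\theta_\star\mid x)]$.

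The final step is the delta method itself. Writing $\delta_n := \hat\theta_n - \theta_\star = O_p(n^{-1/2})$ by hypothesis, we have $n\,g(\hat\theta_n) = (\sqrt n\,\delta_n)^\T \E_x[J(\theta_\star\mid x)] (\sqrt n\,\delta_n) + n\cdot O_p(\norm{\delta_n}^3)$, where the remainder is $O_p(n^{-1/2})\to 0$. Applying the continuous mapping theorem to $\sqrt n\,\delta_n \distconv \mathcal N(0,\Sigma_\star)$ and then Slutsky's theorem gives $n\,g(\hat\theta_n) \distconv W^\T\E_x[J(\theta_\star\mid x)]W$ with $W\sim\mathcal N(0,\Sigma_\star)$. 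Writing $W = \Sigma_\star^{1/2} G$ with $G\sim\mathcal N(0,I)$, this quadratic form equals $\norm{Z}^2$ in distribution with $Z\sim\mathcal N\!\big(0,\Sigma_\star^{1/2}\E_x[J(\theta_\star\mid x)]\Sigma_\star^{1/2}\big)$, which is the claim. (The proof of \Cref{prop:kl_convergence} is identical in structure, the only change being that the Hessian of $\theta\mapsto\E_x[\KL{p_{\theta_\star}(y\mid x)}{p_\theta(y\mid x)}]$ at $\theta_\star$ is $\E_x[I(\theta_\star\mid x)]$ rather than $2\,\E_x[J(\theta_\star\mid x)]$, which accounts for the extra factor $\tfrac12$ there.)

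The only genuine obstacle is the routine‑but‑necessary regularity bookkeeping: justifying the interchange of differentiation with the $\E_x$ and $\E_{y\sim p_{\theta_\star}(\cdot\mid x)}$ expectations, and dominating the third‑order Taylor remainder by an $x$‑integrable function uniformly on a neighborhood of $\theta_\star$ so that the $n\cdot O_p(\norm{\delta_n}^3)$ step is legitimate. These are of the same flavor as the $\calC^2$‑continuity and Lipschitz‑Hessian hypotheses already used in \Cref{prop:as_norm_general}; with analogous conditions in force (in particular, $\E_x[J(\theta_\star\mid x)]$ finite and the relevant third derivatives locally dominated), the rest is direct.
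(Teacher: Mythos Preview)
Your proposal is correct and matches the paper's approach exactly: the paper states only that \Cref{prop:fisher_info_convergence} (and \Cref{prop:kl_convergence}) are ``straightforward applications of the second-order delta method,'' and you have carried out precisely that, including the key observation that $\nabla_y\log p_\theta(y\mid x)=\nabla_y\en_\theta(x,y)$ (the partition function drops), the identification of $\nabla_\theta^2 g(\theta_\star)=2\,\E_x[J(\theta_\star\mid x)]$, and the rewriting of the limiting quadratic form as $\norm{Z}^2$. Your parenthetical remark comparing to \Cref{prop:kl_convergence} is also apt.
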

The asymptotic variance from \Cref{prop:fisher_info_convergence}
is more difficult to interpret than the corresponding asymptotic variance
from \Cref{prop:kl_convergence}. 
Unfortunately in general, there is no relationship between the two, as we will see shortly.
In order to aid our interpretation, suppose the energy model
follows the form of an exponential family, i.e.,\
$\en_\theta(x, y) = \ip{\psi(x, y)}{\theta}$, for some
fixed sufficient statistic $\psi$.
Then, a straightforward computation yields the following:
$$
    I(\theta \mid x) = \mathrm{Cov}_{p_\theta(y \mid x)}( \psi(x, y) ), \quad J(\theta \mid x) = \E_{p_\theta(y \mid x)}[ \partial_y \psi(x, y) \partial_y \psi(x, y)^\T ].
$$
Hence, assuming as in \eqref{eq:KL_special_case_conv} that $p_{\xi_\star}(y \mid x) = p(y \mid x)$,
then the asymptotic variance is determined by the variance of the smoothness of the sufficient statistic
(in the event variable $y$) relative to the covariance of the sufficient statistic itself.
Let us now consider a specific example. 
Suppose that $p_\theta(y \mid x) = \calN(\mu, \sigma^2)$ (so the context variable $x$ is not used).
Here, the sufficient statistic is $\psi(x, y) = (y, y^2)$. A straightforward computation yields that:
$$
    \Tr( \E_x[I(\theta \mid x)]^{-1} \E_x[J(\theta \mid x)] ) = \frac{3}{\sigma^2} + \frac{4\mu^2}{\sigma^4}.
$$
Thus, depending on the specific values of the parameters $(\mu, \sigma^2)$, the asymptotic variance quantity
above can be either much smaller or much larger than $2$.

\section{Interpolating EBMs}

Recent methods such as diffusion models~\cite{sohl2015deep,ho2020_ddpm,song2020denoising,song2021_diffusion} and stochastic interpolants~\cite{albergo2022building,albergo2023_stochastic_interpolants} advocate for modeling a \emph{family} of distributions with a single shared network, indexed by a single scale parameter lying in a fixed interval. At one end of this interval, the distribution corresponds to standard Gaussian noise, while the other end models the true data distribution. Sampling from this generative model begins with sampling a ``latent'' vector $z \in \sfY$ from the base Gaussian distribution, and then propagating this sample through a sequence of pushforwards over the interval. This propagation can either refer to a fixed discrete sequence of Gaussian models, or solving a continuous-time SDE or ODE. The strength of such a framework has been attributed to an implicit annealing of the data distribution over the interval~\cite{song2019generative}, easing the generative process by stepping through a sequence of distributions instead of jumping directly from pure noise to the data.

\subsection{Stochastic Interpolants}
\label{sec:interpolant_nf}

A useful instantiation of this framework that subsequently allows generalizing all other interval-based models as special cases, corresponds to the stochastic interpolants formulation~\cite{albergo2022building}. In this setup, a continuous stochastic process over the interval $[0, 1]$ is implicitly constructed by defining an \emph{interpolant function} $I_t: \sfY \times \sfY \rightarrow \sfY$, $t \in [0, 1]$, where $I_0(z, y) = z$ and $I_1(z, y) = y$. For a given $x$, define the continuous-time stochastic \emph{process} $Y_t \mid x$ as:
\[
    y_t \mid x = I_t(z, y), \quad z \sim \mathcal{N}(0, I), \quad y \sim \sfP_{Y\mid x}, \quad z \perp y.
\]
By construction, the law of $Y_t \mid x$ at $t=1$ is exactly $\sfP_{Y \mid x}$. An example interpolant function from~\cite{albergo2022building} is given as $I_t(z, y) := \cos(\frac{1}{2}\pi t) z + \sin(\frac{1}{2}\pi t)y$. The first key result from~\cite{albergo2022building} is that \emph{law} of the process $Y_t \mid x$, denoted as $\sfP_{Y_t \mid x}$ with associated time-varying density $\pt(y_t \mid x)$, satisfies the following continuity equation:
\begin{equation}
    \partial_t \pt + \mathrm{div}_y (v_t \pt) = 0,
\label{eq:interp_transport}
\end{equation}
where $v_t: [0, 1] \times \sfX \times \sfY \rightarrow \R^{\mathrm{dim}(\sfY)}$ is a
particular time-varying velocity field (described subsequently),
yielding the following continuous normalizing flow (CNF) generative model:
\begin{equation}
    \frac{\rmd y_{t\mid x}(z)}{\rmd t} = v_t(x, y_{t\mid x}(z)), \quad y_{t=0\mid x}(z) = z \sim \mathcal{N}(0, I).
\label{eq:interp_cnf}
\end{equation}
The distribution of the samples from this model at $t=1$ correspond to the desired distribution $\sfP_{Y\mid x}$. One can now parameterize any function approximator $\hat{v}_{t,\xi}$ to approximate this vector field.

While CNFs as a generative modeling framework are certainly not new, their training has previously relied on leveraging the associated log-probability ODE~\cite{chen2018neural,grathwohl2018_ffjord} (cf.~\Cref{sec:log_prob_computation}), corresponding to the change-of-variables formula, and using maximum likelihood estimation as the optimization objective. Unfortunately, solving this ODE involves computing the Jacobian of the parameterized vector field $\hat{v}_{t,\xi}$ with respect to $y_t$; gradients of the objective w.r.t.\ $\xi$ therefore require $2^{\mathrm{nd}}$-order gradients of $\hat{v}_{t,\xi}$. As a result of these computational bottlenecks, CNFs have been largely overtaken as a framework for generative models. However, the second key result from~\cite{albergo2022building} is that the true vector field $v_t$ is given as the minimum of a simple quadratic objective:
\[
    v_t(x, \cdot) := \argmin_{\hat{v}_t(x, \cdot)} \quad \E_{z}\E_{y\sim \sfP_{Y\mid x}} \left[ \|\hat{v}_t(x, I_t(z, y)) \|^2 -2 \partial_t I_t(z, y) \cdot \hat{v}_t(x, I_t(z, y)) \right] \quad \forall x, t.
\]
This now enables the use of expressive models for $\hat{v}_{t,\xi}$, paired with the following population optimization objective:
\begin{equation}
    \min_{\xi}\ \ \E_{(x, y) \sim \sfP_{X,Y}} \E_{t \sim \Lambda}\E_{z}\ \ \left[ \|\hat{v}_{t,\xi}(x, I_t(z, y)) \|^2 -2 \partial_t I_t(z, y) \cdot \hat{v}_{t,\xi}(x, I_t(z, y)) \right],
\label{eq:interp_loss}
\end{equation}
where $\Lambda$ is a fixed distribution over the interval $[0, 1]$ (e.g., uniform).

\subsection{Building Interpolating EBMs}
\label{sec:interpolating_ebm}

A natural way of using the Interpolant-CNF model described previously
is to use the pushforward distribution at $t=1$ as the proposal generative model $\pn(y\mid x)$, with a relatively simple parameterized vector field $\hat{v}_{t,\xi}$. Paired with a parameterized energy function $\en_\theta(x, y)$, one can now proceed with optimizing both models as outlined in \Cref{alg:rnce}.

However, by truncating the flow at any intermediate time $t \in (0, 1)$, the CNF gives an approximation $\pn^t(y_t \mid x)$ of the distribution $\pt(y_t \mid x)$. Thus, we can define a time-indexed EBM $\en_{\theta}(x, y, t)$ to approximate this same distribution as:
\[
    \pt(y_t \mid x) \propto \exp(\en_{\theta}(x, y_t, t)).
\]
We refer to the above model as an Interpolating-EBM (I-EBM). Notice that a single shared network $\en_\theta$ is used to represent the distribution across all $t \in [0, 1]$.

By pairing the EBM at time $t$ with the \emph{truncated} CNF at time $t$ as the proposal model, we can straightforwardly define a time-indexed R-NCE objective; see~\Cref{fig:i_ebm} for an illustration.

\begin{figure}[H]
\centering
\includegraphics[width=\textwidth]{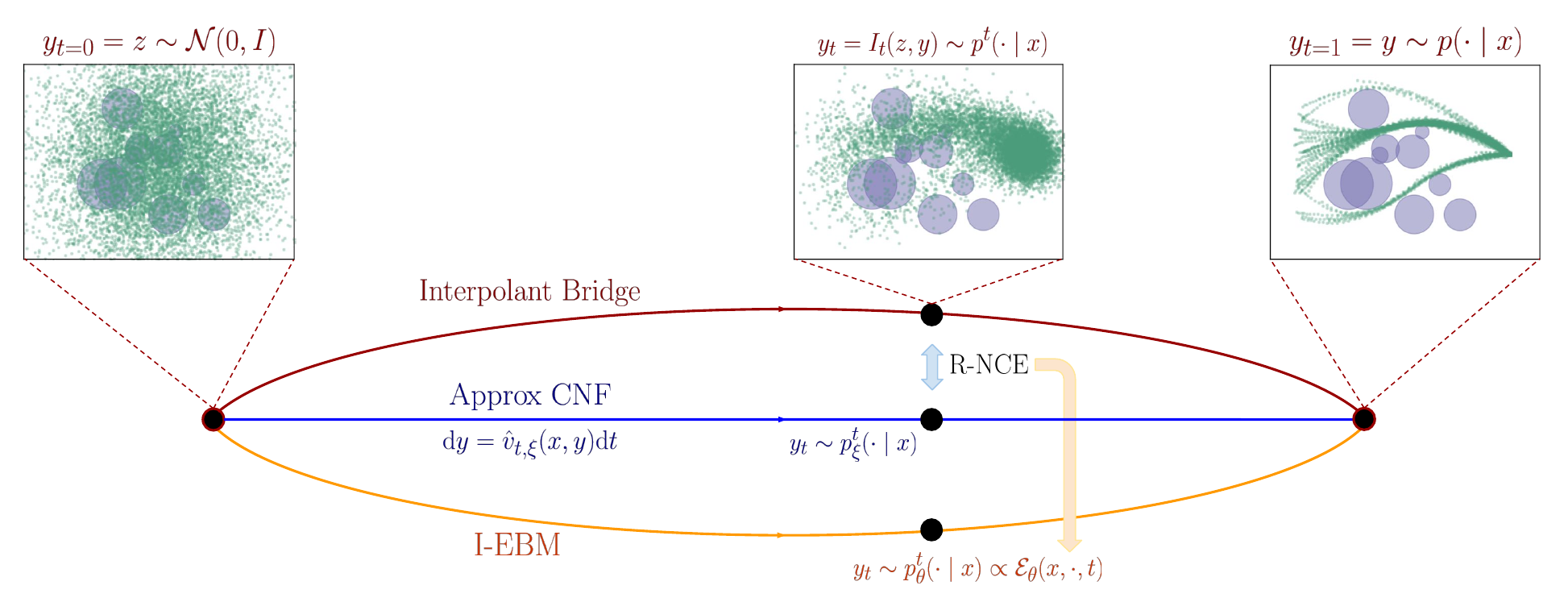}
\caption{An illustration of the Interpolating-EBM (I-EBM) framework applied to a motion planning problem. Here, the event space corresponds to the waypoint trajectory (green circles) of the robot in the obstacle environment. The interpolant function $I_t$ defines a continuous time stochastic bridge process over $t \in [0, 1]$ between Gaussian noise at $t=0$ and the true conditional distribution at $t=1$; the time-varying density of this process is notated as $\pt(\cdot \mid x)$. The CNF with (a relatively simply parameterized) vector field $\hat{v}_{t,\xi}$ is an approximation of this continuous-time process, and is independently trained using~\eqref{eq:interp_loss}; the truncation of the flow of the CNF at time $t$ yields samples from the distribution $\pt_{\xi}(\cdot \mid x)$. The I-EBM defines a family of (more expressively parameterized) EBMs $\pt_{\theta}(\cdot \mid x)$, indexed by $t$, to better approximate $\pt$. At a given time $t$, we pair $K$ contrastive samples from the CNF and a true sample from the interpolant bridge to define a time-indexed R-NCE objective for $\en_{\theta}$.}
\label{fig:i_ebm}
\end{figure}

In particular, let $K \in \N_+$, the number of negative samples defining the R-NCE objective be fixed as before. For a given $x$, let $\sfP^{K}_{\negset_t \mid x;\xi}$ denote the product conditional sampling distribution over $\sfY^{K}$ where $\negset_t \sim \sfP^{K}_{\negset_t \mid x;\xi}$ denotes a random vector $\negset_t = (y_{t_k})_{k=1}^{K} \in \sfY^{K}$ with $y_{t_k} \sim \pn^t(\cdot \mid x)$ for $k=1,\ldots, K$.

Now, for a given $x \sim \sfP_X$, $t\in (0, 1]$, \emph{positive} sample $y_t \sim \sfP_{Y_t \mid x}$, \emph{negative} samples $\negset_t \sim \sfP^K_{\negset_t \mid x; \xi}$, and parameters $\theta \in \Theta$, $\xi \in \Xi$, define:
\begin{equation}
    \ell_{\theta,\xi}(x, y_t, t \mid \negset_t) := \log \left[\frac{\exp(\en_\theta(x, y_t, t) - \log \pn^t(y_t \mid x))}{\sum_{y' \in \{y_t\} \cup \negset_t } \exp(\en_\theta(x, y', t) - \log \pn^t(y' \mid x))}\right],
    \label{eq:tv_log_ranking_loss}
\end{equation}
as the R-NCE loss at time $t$. We can thus define the overall population objective as:
\begin{equation}
    L(\theta, \xi) := \E_{t\sim \Lambda} \E_{x \sim \sfP_X}\E_{y_t \sim \sfP_{Y_t\mid x}}  \E_{\negset_t \sim \sfP^K_{\negset_t \mid x;\xi} }\ [\ell_{\theta,\xi}(x, y_t, t \mid \negset_t)].
\label{eq:tv_population}
\end{equation}
Notice that this is simply an expectation of a time-indexed population R-NCE objective. We refer to the objective above as the Interpolating-R-NCE (I-R-NCE) objective. Provided that the function class of time-indexed conditional energy models:
\[
    \scrF_t := \{\en_\theta(x, y, t): \sfX \times \sfY \rightarrow \R \mid \theta \in \Theta \},
\]
is realizable (cf.~\Cref{def:realizabiilty}) for \emph{all} $t \in (0, 1]$, i.e., there exists a $\theta_\star \in \Theta$ s.t.\ for all $t \in (0, 1]$ and a.e.\ $(x, y_t)$:
\[
    p^t_{\theta_\star}(y_t \mid x) := \dfrac{\exp(\en_{\theta_\star}(x, y_t, t))}{\int_{\sfY} \exp(\en_{\theta_\star}(x, y, t)) \,\rmd\mu_Y} = \pt(y_t \mid x),
\]
then the population objective in~\eqref{eq:tv_population} is maximized by $\theta_\star$ for each $t$.

\subsubsection{Training}
The adjustment to \Cref{alg:rnce} for training is straightforward for I-EBMs. In particular, for each $(x_i, y_i) \in \mathcal{B}$, we sample $m$ values of $t$, i.e., $\{t_{ij}\}_{j=1}^{m} \sim \Lambda$, latents $\{z_{ij}\}_{j=1}^{m} \sim \mathcal{N}(0, I)$, and negative-sample sets $\{\negset_{ij}\}_{j=1}^{m}$ where $\negset_{ij} \sim \sfP^{K}_{\negset_{t_{ij}} \mid x_i;\xi}$. The net R-NCE loss for $(x_i, y_i)$ is then defined as the average of the R-NCE loss over $\{(x_i, y_{ij}, t_{ij}, \negset_{ij})\}_{j=1}^{m}$, where $y_{ij} = I_{t_{ij}}(z_{ij}, y_i)$. The training pseudocode is presented in~\Cref{alg:irnce}, with the modifications to \Cref{alg:rnce}
highlighted in \textcolor{teal}{teal}.
We note that choosing values of $m > 1$ serves as variance reduction, and we also apply
a similar variance reduction trick to the stochastic interpolant loss in 
\Cref{line:interp_variance_reduction}.

\begin{algorithm}[h]
\caption{Interpolating-R-NCE with learnable negative sampler}
\label{alg:irnce}
\KwData{Dataset $\calD = \{(x_i, y_i)\}_{i=1}^{n}$, 
number of outer steps $T_{\mathrm{outer}}$,
number of sampler steps per outer step $T_{\mathrm{samp}}$,
number of R-NCE steps per outer step $T_{\mathrm{rnce}}$,
number of negative samples $K$,
\color{teal}{interpolating time distribution $\Lambda$,
number of interpolating times $m \in \mathbb{N}_{\geq 1}$.}
}
\KwResult{Energy model parameters $\theta$, negative sampler parameters $\xi$.}
Initialize $\theta$ and $\xi$.\\
Set $\ell_{\mathrm{vf},\xi}(x, z, y, t) = \norm{v_{t,\xi}(x, I_t(z, y))}^2 - 2 \partial_t I_t(z, y) \cdot v_{t,\xi}(x, I_t(z, y))$.\\
Set $\hat{L}_{\mathrm{interp}}(\xi; \calB) = \sum_{{(x_i,y_i,\{(z_{ij}, t_{ij})\}_{j=1}^{m}) \in \calB}} \frac{1}{m}\sum_{j=1}^{m} \ell_{\mathrm{vf},\xi}(x_i, z_{ij}, y_i, t_{ij})$.\\
Set $\hat{L}_{\color{teal}{\mathrm{irnce}}}(\theta, \xi; \bar{\calB}) = -\sum_{\color{teal}{(x_i, \{(y_{ij}, t_{ij}, \negset_{ij})\}_{j=1}^{m}) \in \bar{\calB}}} \color{teal}{\frac{1}{m} \sum_{j=1}^{m} \ell_{\theta,\xi}(x_i, y_{ij}, t_{ij} \mid \negset_{ij})}$. \\
\For{$T_{\mathrm{outer}}$ iterations}{
    \For{$T_{\mathrm{samp}}$ iterations}{
        $\calB \gets$ batch of $\calD$.\\
        \Comment{Augment batch with latent variables and times.}
        $\calB \gets \{ (x_i, y_i, \{(z_{ij}, t_{ij})\}_{j=1}^{m}) \mid (x_i, y_i) \in \calB \}$, with
        $t_{ij} \sim \Lambda$, $z_{ij} \sim \calN(0, I)$. \label{line:interp_variance_reduction} \\ 
        $\xi \gets$ minimize using $\nabla_\xi \hat{L}_{\mathrm{interp}}(\xi; \calB)$. \label{line:irnce_alg_nf_update} \\
    }
    \For{$T_{\mathrm{rnce}}$ iterations}{
        $\calB \gets$ batch of $\calD$.\\
        {\color{teal}{\Comment{Convert batch to time-indexed positive and negative samples.}}}
        $\bar{\calB} \gets {\color{teal}{\{ (x_i, \{(y_{ij}, t_{ij}, \negset_{ij})\}_{j=1}^{m}) \mid (x_i, y_i) \in \calB \}}}$, with {\color{teal}{$t_{ij} \sim \Lambda$, $z_{ij} \sim \calN(0, I)$, $y_{ij} = I_{t_{ij}}(z_{ij}, y_i)$, 
        $\negset_{ij} \sim \sfP^K_{\negset_{t_{ij}} \mid x_i; \xi}$}}. \\
        $\theta \gets$ minimize using $\nabla_\theta \hat{L}_{\color{teal}{\mathrm{irnce}}}(\theta, \xi; \bar{\calB})$. \label{line:irnce_alg_rnce_update} \\
    }
    \Return{$(\theta,\xi)$}
}
\end{algorithm}

\begin{rmk}
A na{\"{i}}ve implementation of variance reduction for the R-NCE loss in \Cref{alg:irnce} 
would generate $m$ independent sets of $K$ negative samples, i.e., $\{\negset_{ij}\}_{j=1}^{m}$, corresponding to the $m$ intermediate times $\{t_{ij}\}_{j=1}^{m}$. To make this more efficient, we allow the negative sample sets to be correlated across time by generating a single set of $K$ trajectories, spanning the interval $(0, \max_jt_{ij}]$, and sub-sampling these trajectories at $m$ intermediate times.
\end{rmk}

\subsubsection{Sampling}

In follow-up work~\cite{albergo2023_stochastic_interpolants}, 
the stochastic equivalent of the transport PDE~\eqref{eq:interp_transport} is derived using the 
Fokker-Planck equation, which yields the following, \emph{equivalent in law}, SDE-based generative model:
\begin{equation}
    \rmd y_{t\mid x}(z) = \bigg[\underbrace{v_t(x, y_{t\mid x}(z)) + \eta \nabla_y \log \pt(y_{t\mid x}(z) \mid x)}_{:= \bm{v}_t^\eta(x, y_{t\mid x}(z))}\bigg]\rmd t + \sqrt{2\eta}\, \rmd W_t, \quad y_{t=0\mid x}(z) = z \sim \mathcal{N}(0, I),
\label{eq:interp_sde}
\end{equation}
where $\eta \geq 0$ and $W_t$ is the standard Wiener process. Given the simplicity of the former ODE-based model in~\eqref{eq:interp_cnf}, a natural question is: why is the SDE-based model necessary? The answer lies in studying the divergence between the true conditional distribution $p(y\mid x)$ and the approximate distribution $p^{t=1}_{\xi}(y \mid x)$ induced by the learned vector field $\hat{v}_{t,\xi}$. In particular, from \cite[Lemma 2.19]{albergo2023_stochastic_interpolants}, for a CNF parameterized by the approximate vector field $\hat{v}_{t,\xi}$:
\begin{align*}
    &\KL{p(y\mid x)}{p^{t=1}_{\xi}(y \mid x)} = \\
    & \qquad \int_{0}^{1} \int_{\sfY} (\nabla_y \log \pt_{\xi}(y\mid x) - \nabla_y \log \pt(y\mid x)) \cdot (\hat{v}_{t,\xi}(x, y) - v_{t}(x, y))\, \pt(y\mid x)\, \rmd\mu_Y \rmd t.
\end{align*}
It follows that control over the vector field error, i.e., the objective minimized in~\eqref{eq:interp_loss}, does not provide any direct control over the Fisher divergence, i.e., error in the learned score function $\nabla_y \log \pt_{\xi}(y \mid x)$, thus preventing any control over the target KL divergence.

In contrast, let $\hat{\bm{v}}_t^\eta(x, y)$ be an approximation of the drift term $\bm{v}_t^\eta(x, y)$ in~\eqref{eq:interp_sde}. Define the following approximate generative model and associated Fokker-Planck transport PDE for the induced density $\hat{p}^t(y_t \mid x)$:
\begin{align}
    \rmd y_{t\mid x}(z) &= \hat{\bm{v}}_t^\eta (x, y_{t\mid x}(z))\, \rmd t + \sqrt{2\eta}\, \rmd W_t, \quad y_{t\mid x}(z) = z \sim \mathcal{N}(0, I) \label{eq:sde_gen_approx} \\
    \partial_t \hat{p}^t + \mathrm{div}_y (\hat{\bm{v}}_t^\eta \hat{p}^t) &= \eta \Delta \hat{p}^t.
\end{align}
Then,~\cite[Lemma 2.20]{albergo2023_stochastic_interpolants} states that:
\[
\KL{p(y\mid x)}{\hat{p}^{t=1}(y \mid x)} \leq \dfrac{1}{4\eta} \int_{0}^{1} \int_{\sfY} \| \bm{v}_t^\eta(x, y) - \hat{\bm{v}}_t^\eta(x, y) \|^2 \, \pt(y\mid x)\, \rmd\mu_Y \rmd t.
\]
Thus, if one can control the net approximation error of the SDE drift, then the SDE-based generative model should yield samples which are closer to the target distribution in KL. While error in the velocity field $\|v_t - \hat{v}_{t,\xi}\|$ is controllable via the minimization of~\eqref{eq:interp_loss}, the I-EBM framework gives us a handle on the other component to the SDE drift: an approximation of the time-varying score function,\footnote{We note that in~\cite{albergo2023_stochastic_interpolants}, the authors introduce a stochastic extension to the interpolant framework from~\cite{albergo2022building} which elicits a quadratic loss function for the time-varying score function, similar in spirit to the canonical denoising loss used for training diffusion models. Our framework however, is agnostic to the type of bridge created between the base distribution and true data. Notably, I-EBMs and the associated I-R-NCE objective are well-defined for any type of interpolating/diffusion bridge.} i.e., $\nabla_y \en_{\theta}(x, y_t, t) \approx \nabla_y \log \pt(y_t \mid x)$. Indeed, from~\Cref{prop:fisher_info_convergence}, convergence to the global maximizer $\theta_\star$ yields controllable convergence (in distribution) on the relative Fisher information between $\pt(y_t\mid x)$ and $p_{\theta_\star}^t(y_t \mid x)$.~\Cref{alg:iebm_sample} outlines a three-stage procedure for sampling with I-EBM.

\begin{algorithm}
\caption{I-EBM: Sample from $p_\theta(y \mid x)$ for Euclidean $\sfY$}
\label{alg:iebm_sample}
\KwData{Context $x \in \sfX$, energy model parameters $\theta$, negative sampler $\xi$, initial sample time $\underline{t} \in (0, 1)$, MCMC steps $T_{\mathrm{mcmc}}$, MCMC step size $\eta$.}
\KwResult{Sample $y \sim p_\theta(y \mid x)$.}
Set $y_{\underline{t}} \sim p^{\underline{t}}_\xi(\cdot \mid x)$. \label{line:truncated_cnf} \\
\Comment{Leverage the SDE in~\eqref{eq:sde_gen_approx} to transport $y_{\underline{t}}$ from $t=\underline{t}$ to $t=1$.}
$y' \leftarrow\texttt{SDESolve}(\hat{\bm{v}}_t^\eta(x, y) = \hat{v}_{t,\xi}(x, y) + \eta \nabla_y \en_\theta(x, y, t);\  [\underline{t}, 1], y_{\underline{t}})$. \label{line:combined_sde} \\
\Comment{Finite-step MCMC with $\en_\theta(x, y, 1)$ starting at $y'$ (cf.~\Cref{alg:rnce_sample}).}
$y \leftarrow \texttt{MCMC}(T_{\mathrm{mcmc}}, \eta, y')$. \label{line:final_mcmc} \\
\Return{$y$}
\end{algorithm}
Notice in~\Cref{line:truncated_cnf}, we use the CNF parameterized by vector field $\hat{v}_{t,\xi}$, with the flow truncated at time $\underline{t}$ to form the initial sample. This is reasonable since for $t \ll 1$, the proposal model $\pt_{\xi}$ becomes more accurate with the true process $Y_t \mid x$ tending towards an isotropic Gaussian, dispelling the need for an EBM. In~\Cref{line:combined_sde}, we leverage the SDE generative model from~\eqref{eq:sde_gen_approx} to transport the sample to $t=1$, making use of both the proposal model vector field $\hat{v}_{t,\xi}$ and the I-EBM's score function $\nabla_y \en_{\theta}$ to form the approximation of the drift term. Finally, in~\Cref{line:final_mcmc}, we run $T_{\mathrm{mcmc}}$ steps of MCMC (either Langevin or HMC, cf.~\Cref{alg:rnce_sample}) at a fixed (or annealed) step-size of $\eta$ with the EBM at $t=1$ to generate the final sample.

\section{Efficient Log-Probability Computation}
\label{sec:log_prob_computation}

In this section, we discuss efficient computation of log-probability
for continuous normalizing flows. We first discuss the generic setup, 
and then describe where the setup is instantiated in our work.

\paragraph{Setup.}
Let $v_t(x, z)$ denote a vector field, and
consider the following continuous flow:
\begin{align}
    \frac{\rmd z}{\rmd t}(t) = v_t(x, z(t)), \quad z(0) \sim p_0(\cdot \mid x). \label{eq:cnf}
\end{align}
For any $T > 0$, a standard computation~\cite{grathwohl2018_ffjord,chen2018neural} 
shows that the random variable $z(T)$ has 
log-probability given by the following formula:
\begin{align}
    \log p(z(T) \mid x) = \log p_0(z(0) \mid x) - \int_0^T \mathrm{div}_z( v_t(x, z(t)) ) \, \rmd t. \label{eq:log_prob_ode}
\end{align}
Recall that for a vector field $f : \R^d \mapsto \R^d$, we have $\mathrm{div}(f) = \Tr(\partial f) = \sum_{i=1}^{d} \frac{\partial f}{\partial x_i}$.

\paragraph{Applications.}
We make use of the computation described in \eqref{eq:log_prob_ode} 
in several key places for our work.
First, we use this formula to compute $\log p_\xi(y \mid x)$ inside the R-NCE objective (cf.~\eqref{eq:log_ranking_loss}).
Second, we use this formula to compute $\log p_\theta(y \mid x)$ for a diffusion model where the denoiser is directly parameterized (instead of indirectly via the gradient of a scalar function);
for diffusion models, this log-probability calculation is necessary for ranking samples in next action selection (cf.~\Cref{sec:experiments:path_planning}).
Here, we apply \eqref{eq:log_prob_ode} by interpreting the reverse probability flow ODE as a continuous normalizing
flow of the form \eqref{eq:cnf}.

\paragraph{Efficient Computation.}
The main bottleneck of \eqref{eq:log_prob_ode} is 
in computing the integrand, which involves computing the
divergence of the vector field $v_t(x, z)$ w.r.t.\ the flow variable $z$.
Assuming the flow variable $z \in \R^d$, the computational complexity
of a single divergence computation scales \emph{quadratically} in $d$,
i.e.,~$O(d^2)$. 
The quadratic scaling arises due to the fact that for exact divergence
computation, one needs to separately materialize each column of the Jacobian, and hence $O(d)$ computation is repeated $d$ times.\footnote{Randomized methods such as the standard Hutchinson trace estimator
are not applicable here, since we need to be able to compare and rank log-probabilities
across a small batch of samples. Generically, unless the Hutchinson trace estimator is applied $\Omega(d)$ times (thus negating the computational benefits), the variance of the estimator will overwhelm
the signal needed for ranking samples; this is a consequence of the
Hanson-Wright inequality~\cite{meyer2021_hutch_opt}.}
The typical presentation of CNFs advocates solving an
augmented ODE which simultaneously performs sampling and log-probability computation:
\begin{align*}
    \frac{\rmd}{\rmd t} \cvectwo{z}{\psi}(t) = \cvectwo{v_t(x, z(t))}{-\mathrm{div}_z(v_t(x, z(t)))}, \quad z(0) \sim p_0(\cdot \mid x),\quad \psi(0) = \log{p_0(z(0) \mid x)}.
\end{align*}
This augmented ODE suffers from one key drawback: it forces both the sample 
variable $z$ and the log-probability variable $\psi$ to be integrated at the same resolution.
Due to the quadratic complexity of computing the divergence, this integration
becomes prohibitively slow for fine time resolutions. However, high sample
quality depends on having a relatively small discretization error.

We resolve this issue by using a two time-scale approach.
In our implementation, we separately integrate the sample variable $z$
and the log-probability variable $\psi$. We first integrate $z$ at a fine
resolution. However, instead of computing the divergence of $v$ at every integration timestep, we compute and save the divergence values at a much coarser subset of timesteps.
Then, once the sample variable is finished
integrating, we finish off the computation by integrating the $\psi$ variable with one-dimensional trapezoidal integration (e.g., using \texttt{numpy.trapz}). Splitting the computation in this way allows one
to decouple sample quality from log-probability accuracy; empirically, 
we have found that using at most $64$ log-probability steps suffices
even for the most challenging tasks we consider.
A reference implementation is provided in \Cref{sec:appendix:log_prob_computation}, \Cref{fig:log_prob_computation}.

\section{Experiments}
\label{sec:experiments}

In this section, we present experimental validation of R-NCE as a competitive
model for generative policies. We implement our models in the \texttt{jax}~\cite{jax2018github}
ecosystem, using \texttt{flax}~\cite{flax2020github} for training neural networks and
\texttt{diffrax}~\cite{kidger2021on} for numerical integration. All models are trained using the
Adam~\cite{kingma2015_adam} optimizer from \texttt{optax}~\cite{deepmind2020jax}.

\subsection{Models Under Evaluation and Overview of Results}
\label{sec:experiments:models}

Here, we collect the types of generative models which we evaluate in our experiments.
Note that the more complex experiments only evaluate a subset of these models, as not all 
of the following models work well for high dimensional problems.
\begin{itemize}
    \item {\bf NF} (normalizing flow): a
    CNF~\cite{grathwohl2018_ffjord}, where for computational efficiency the
    vector field parameterizing the flow ODE is learned via the stochastic interpolant  
    framework~\cite{albergo2023_stochastic_interpolants} instead of maximum likelihood; see~\Cref{sec:interpolant_nf}. The flow ODE is integrated with Heun.
    During training, we sample interpolant times from the push-forward measure of the
    uniform distribution on $[0, 1]$ transformed with the function $t \mapsto t^{1/\alpha}$, 
    where $\alpha$ is a hyperparameter.
    \item {\bf IBC} (implicit behavior cloning): the InfoNCE~\cite{vandenOord2018_cpc} inspired 
    IBC objective of \cite{florence2022implicit}, defined in~\eqref{eq:ibc_obj}, for training EBMs. Langevin sampling (cf.~\Cref{alg:rnce_sample}) is used to produce samples.
    \item {\bf R-NCE} and {\bf I-R-NCE} (ranking noise contrastive estimation): the learning algorithm presented in \Cref{alg:rnce} for training EBMs, and its interpolating variation,~\Cref{alg:irnce}, presented in \Cref{sec:interpolating_ebm} for training I-EBMs; both leveraging the stochastic interpolant CNF as the learnable negative sampler. Sampling is performed using either \Cref{alg:rnce_sample} for EBM or \Cref{alg:iebm_sample} for I-EBM.
    \item {\bf Diffusion-EDM}: A standard score-based diffusion model. We base our implementation heavily off the
    specific parameterization described in~\cite{karras2022_edm}, including the use of the reverse probability flow
    ODE for sampling in lieu of the reverse SDE, which we integrate via the Heun integrator.
    We also treat the reverse probability flow ODE as a CNF for the purposes of
    log-probability computation (cf.~\Cref{sec:log_prob_computation}).
    \item {\bf Diffusion-EDM-$\phi$}: Same as Diffusion-EDM, except instead of directly parameterizing the denoiser
    function, we represent the denoiser as the gradient of a parameterized energy model~\cite{salimans2021_energy,du2023_reduce}. 
    This allows us to use
    identical architectures for models as IBC, R-NCE, and I-R-NCE. More details regarding recovering the
    relative likelihood model from the parameterization of \cite{karras2022_edm} are available in 
    \Cref{sec:exp_details:diffusion_scalar}.
    We note that this model has an extra hyperparameter, $\sigma_{\mathrm{rel}}$, which indicates
    the noise level at which to utilize the energy model $\phi$ for relative likelihood scores.\footnote{
    We find that, much akin to how the diffusion reverse process for sampling is typically terminated at a small but non-zero time,
    this is also necessary for the numerical stability of the relative likelihood computation
    (cf.\ \Cref{sec:exp_details:diffusion_scalar}, \Cref{eq:diffusion_scalar_relative_likelihood}).
    Furthermore, we find that the best stopping time for the latter is typically an order of magnitude higher than the former.
    We leave further investigation into this for future work.
    }
\end{itemize}

Note for both diffusion models, for brevity we often drop the ``-EDM'' label when it is clear from context that we are referring to 
this particular parameterization of diffusion models.

\paragraph{Basic Principles for Model Comparisons.}
As this list represents a wide variety of different methodologies for generative modeling, some care is necessary
in order to conduct a fair comparison. Here, we outline some basic principles which we utilize throughout our experiments
to ensure the fairest comparisons:
\begin{enumerate}[label=(\alph*),nolistsep]
    \item \emph{Comparable parameter counts}:
    We keep the parameter counts between different models in similar ranges,
    regardless of whether or not a particular model parameterizes vector fields (NF, Diffusion) or
    energy models (IBC, R-NCE, I-R-NCE, Diffusion-$\phi$).
    Furthermore, for R-NCE and I-R-NCE, we count the total number of parameters between both the
    negative sampler and the energy model.
    \item \emph{Comparable functions evaluations for sampling}: We keep the number of 
    function evaluations made to either vector fields or energy models in similar ranges 
    regardless of the model, so that inference times are comparable.
    \item \emph{Avoiding excessive hyperparameter tuning}: In order to limit the scope for
    comparison, we omit exploring many hyperparameter settings which apply equally to all models,
    but may give some marginal performance increases. A few examples of the types of 
    optimizations we omit include
    periodic activation functions~\cite{sitzmann2020_periodic},
    pre vs.\ post normalization for self-attention~\cite{xiong2020_preLN}, and
    exponential moving average parameter updates~\cite{yazici2019_gan,song2020_improved}.
    While these types of optimizations have been reported in the literature to
    non-trivially improve performance, since they apply equally to all models, we omit them
    in the interest of limiting the scope of comparison.
\end{enumerate}

\paragraph{Overview of Results.}
We provide a brief overview of the results to be presented.
First, we consider synthetic two-dimensional conditional distributions (\Cref{sec:experiments:synthetic_2d})
as a basic sanity check. In these two-dimensional examples, we see that all models perform comparably
except for IBC, which has visually worse sample quality. 
The next two examples feature robotic tasks where we learn policies that predict
multiple actions into the future. This multistep horizon prediction is motivated by 
arguments given by \citet[Section IV.C]{chi2023diffusion},
namely temporal consistency, robustness to idle actions, and eliciting multimodal behaviors. 

The first robotic task is a high dimensional obstacle avoidance path planning problem
(\Cref{sec:experiments:path_planning}). Here, we see again that IBC yields policies with non-trivially
higher collision rates and costs than the other models. More importantly, we start to see a separation
between the other methods, with R-NCE ultimately yielding both the lowest collision rate and cost.
The next and final task is our most challenging benchmark: a contact-rich block pushing task
(\Cref{sec:experiments:pusht}). For this block pushing task, we utilize the full power of the 
interpolating EBM framework (cf.~\Cref{sec:interpolating_ebm}). 
Our experiments show that I-R-NCE yields the policy with the highest final goal coverage.
Thus, the key takeaway from our experimental evaluation is that training EBMs with R-NCE does indeed
yield high quality generative policies which are competitive with, and even outperform, diffusion and
stochastic interpolant based policies.

\subsection{Synthetic Two-Dimensional Examples}
\label{sec:experiments:synthetic_2d}

We first evaluate R-NCE and various baselines on several conditional
two-dimensional problems, {\bf Pinwheel} and {\bf Spiral}.
\begin{itemize}
    \item {\bf Pinwheel}: The context space is a uniform distribution over
    $\sfX = \{4, 5, 6, 7\}$, denoting the number of spokes of a planar pinwheel.
    We apply a $10$ dimensional sinusoidal positional embedding to embed $\sfX$ into $\R^{10}$.
    The event space $\sfY = \R^2$, denoting the planar coordinates of the sample.
    The distribution is visualized in the upper left row of \Cref{tbl:synthetic_2d_samples},
    with only $x \in \{4, 5, 6\}$ visualized for clarity.
    \item {\bf Spiral}: The context space is a uniform distribution over $\sfX = [400, 800]$,
    denoting the length of a parametric curve representing a planar spiral.
    This range is normalized to the interval $[-1, 1]$ before being passed into the models.
    The event space $\sfY = \R^2$, again denoting the planar coordinates of the sample.
    The distribution is visualized in the upper right row of \Cref{tbl:synthetic_2d_samples},
    with only $x \in \{400, 500, 600\}$ visualized for clarity.
\end{itemize}

\newcommand{\samplescale}{0.4}

\begin{table}[h!]
  \begin{tabular}
      {ccc} \hline {\bf Model} & {\bf Pinwheel} & {\bf Spiral} \\ 
      \hline \hline Actual & \includegraphics[scale=\samplescale]{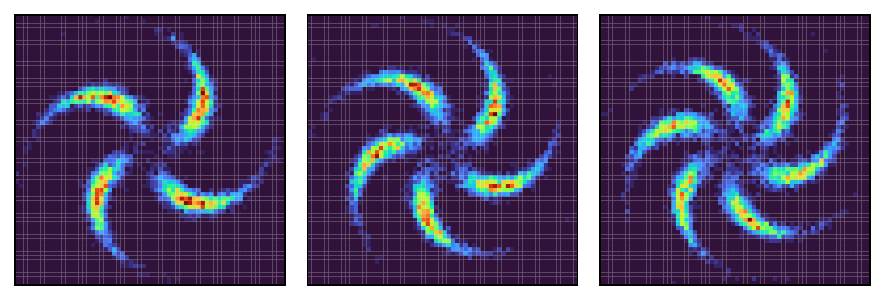} & \includegraphics[scale=\samplescale]{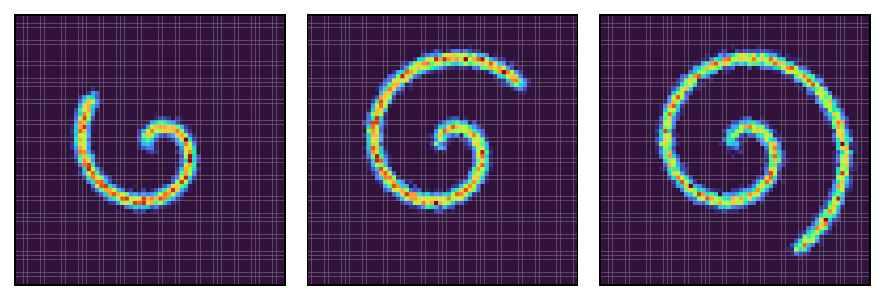} \\
      \hline NF & \includegraphics[scale=\samplescale]{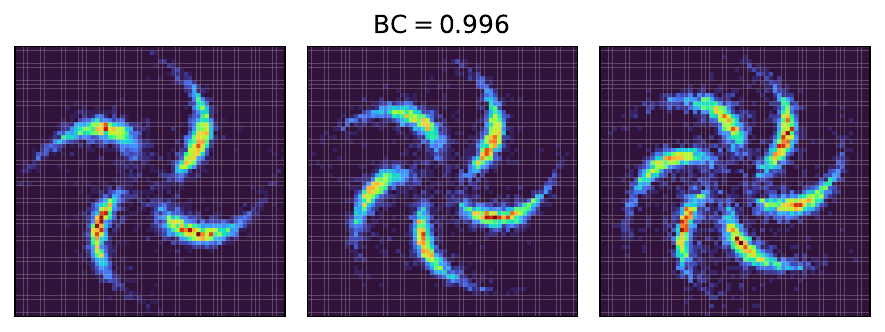} & \includegraphics[scale=\samplescale]{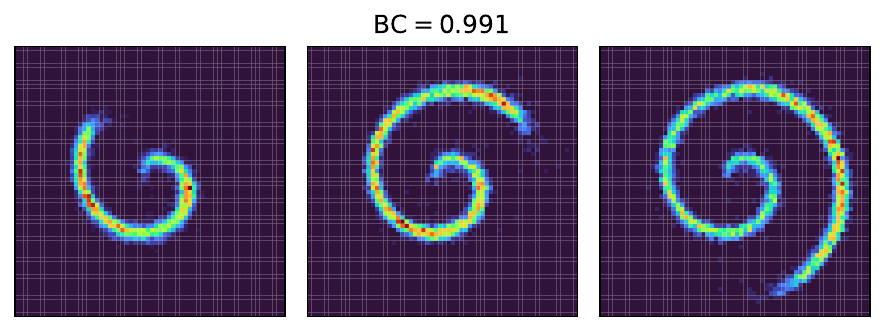} \\
      \hline IBC & \includegraphics[scale=\samplescale]{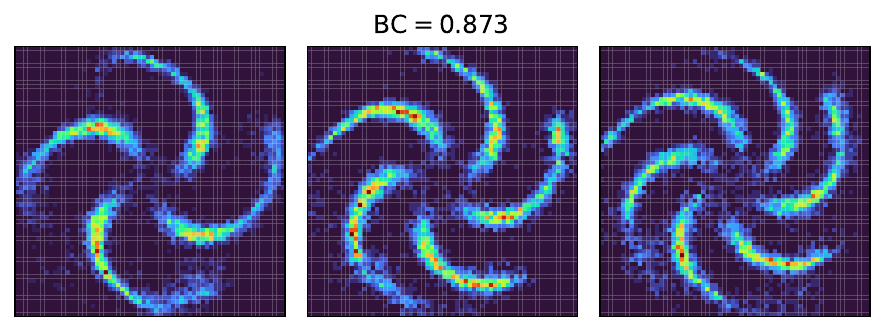} & \includegraphics[scale=\samplescale]{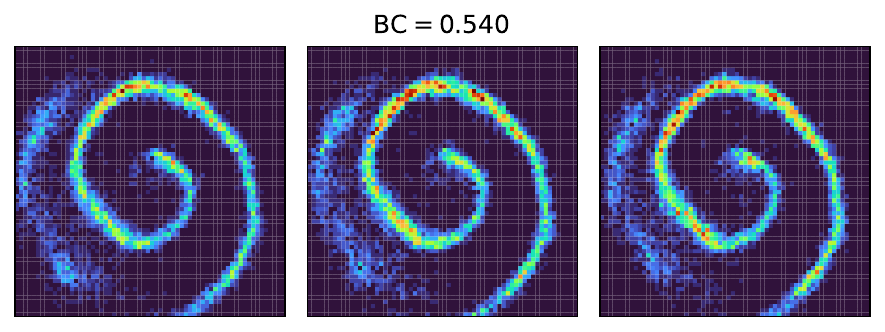} \\
      \hline R-NCE & \includegraphics[scale=\samplescale]{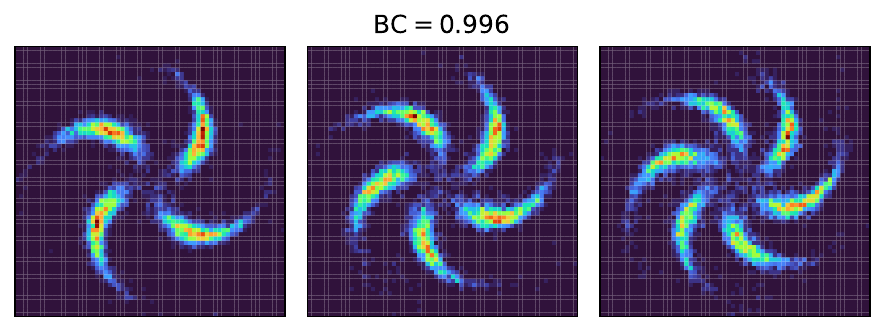} & \includegraphics[scale=\samplescale]{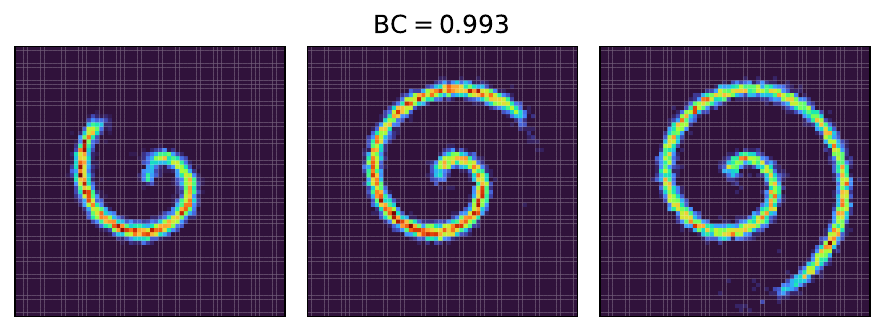} \\
      \hline Diffusion & \includegraphics[scale=\samplescale]{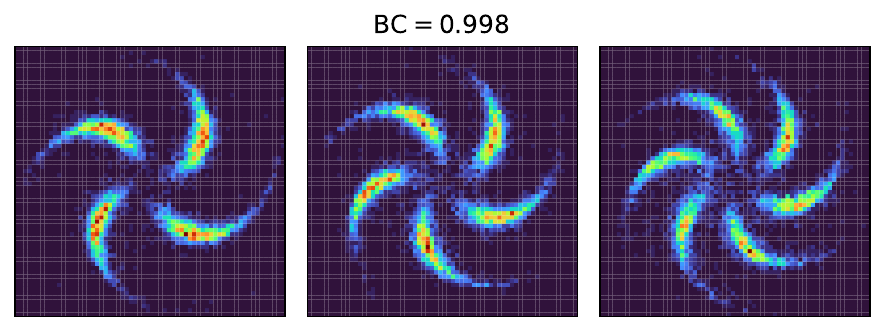} & \includegraphics[scale=\samplescale]{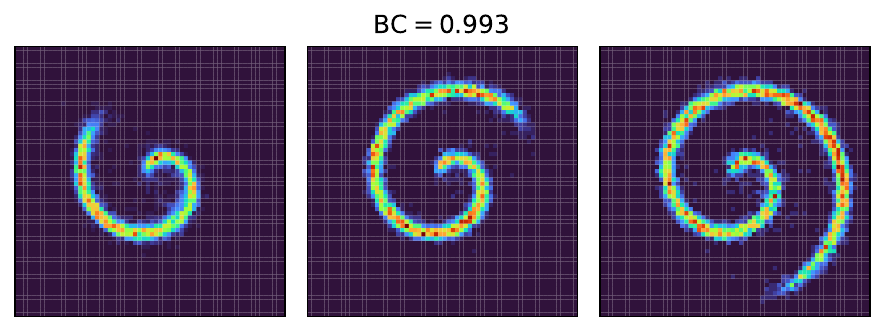} \\
      \hline Diffusion-$\phi$ & \includegraphics[scale=\samplescale]{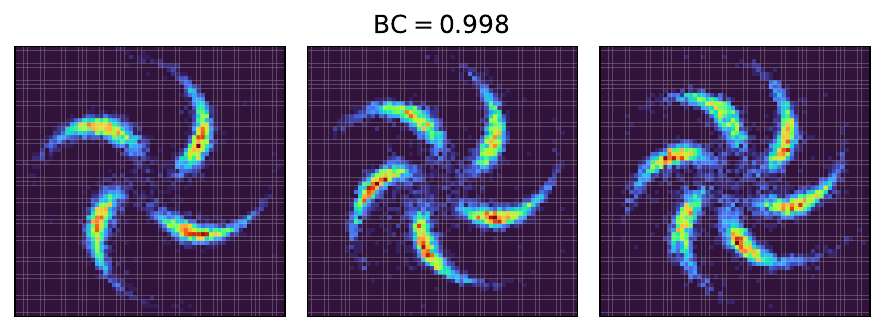} & \includegraphics[scale=\samplescale]{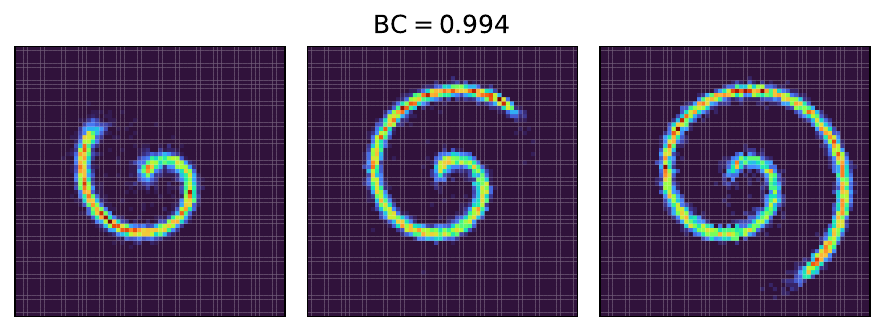} \\
      \hline
  \end{tabular}
  \caption{A two-dimensional histogram plot of the samples generated by each of the models
  listed in \Cref{sec:experiments:models}. The samples are binned at a resolution of $64$ bins
  per coordinate. Each column represents a different context value $x$; see \Cref{sec:experiments:synthetic_2d} for the specific values for each problem.
  The Bhattacharyya coefficient (BC) is computed as described in \Cref{sec:experiments:synthetic_2d}.
  }
  \label{tbl:synthetic_2d_samples}
\end{table}

\begin{table}[h!]
  \begin{tabular}
      {ccc} \hline {\bf Model} & {\bf Pinwheel} & {\bf Spiral} \\
      \hline \hline Actual & \includegraphics[scale=\samplescale]{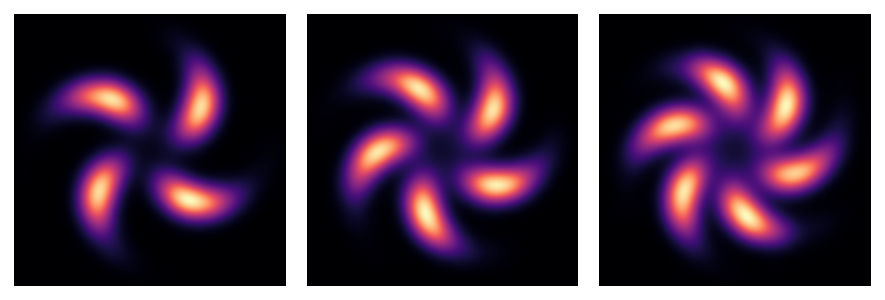} & \includegraphics[scale=\samplescale]{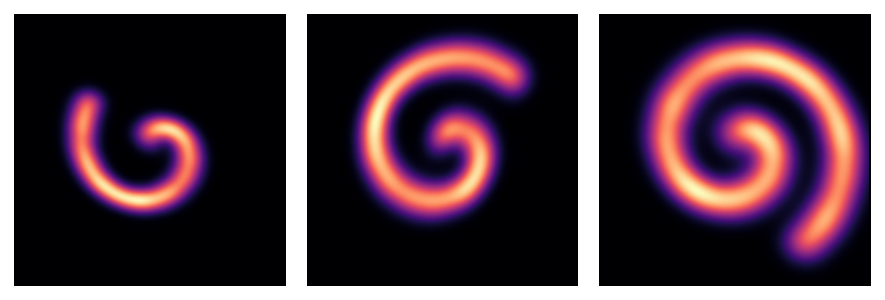} \\
      \hline NF & \includegraphics[scale=\samplescale]{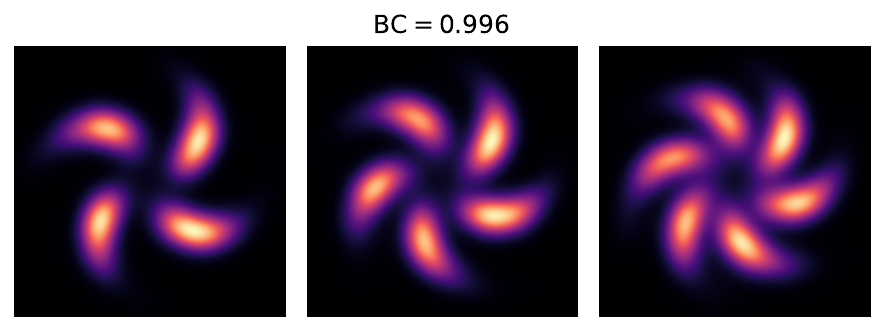} & \includegraphics[scale=\samplescale]{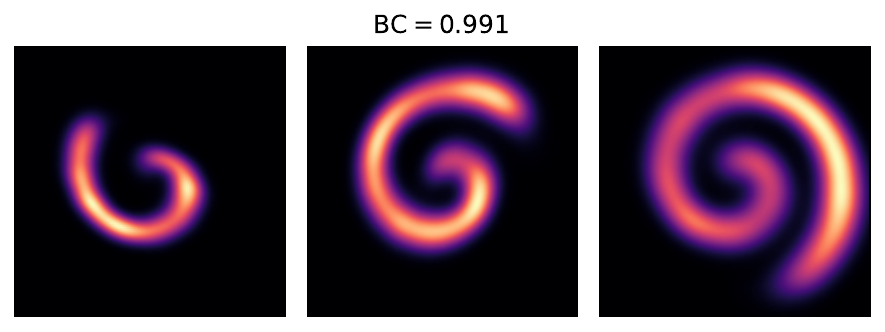} \\
      \hline IBC & \includegraphics[scale=\samplescale]{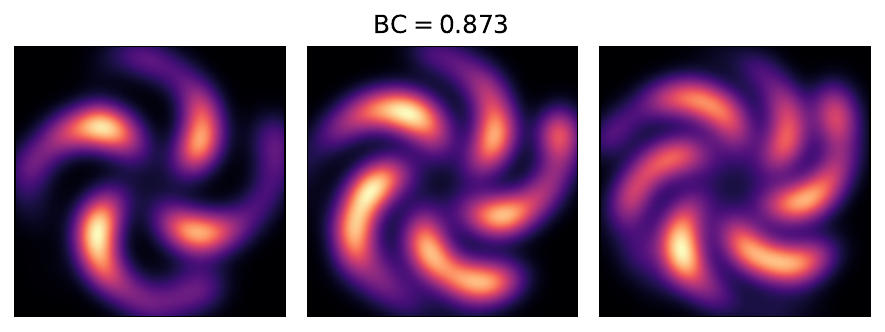} & \includegraphics[scale=\samplescale]{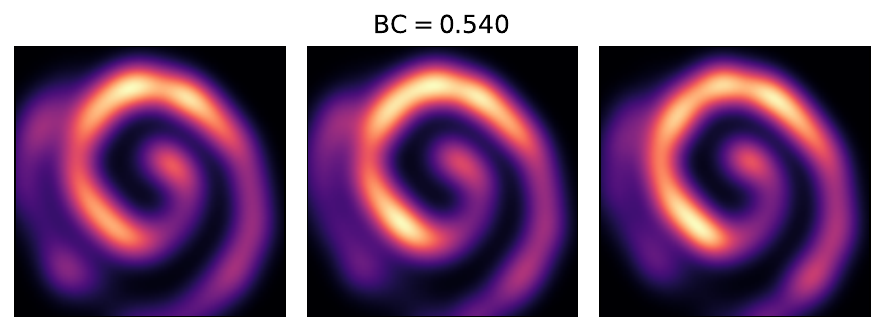} \\
      \hline R-NCE & \includegraphics[scale=\samplescale]{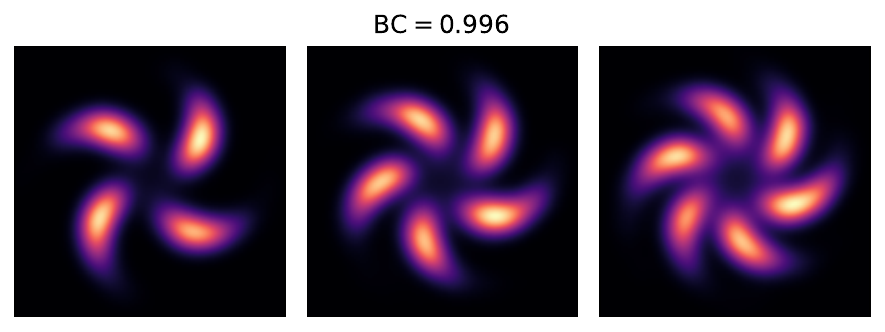} & \includegraphics[scale=\samplescale]{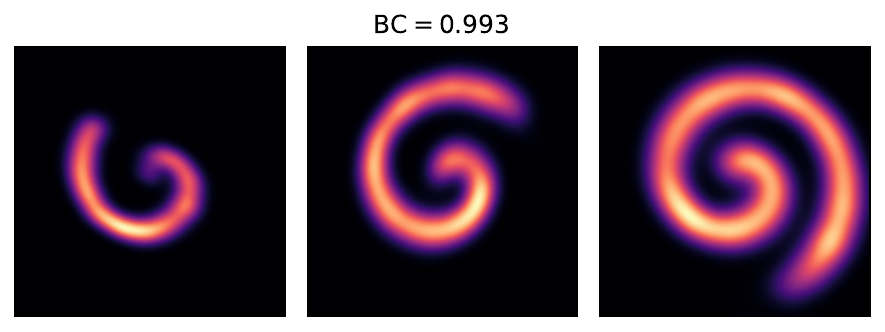} \\
      \hline Diffusion & \includegraphics[scale=\samplescale]{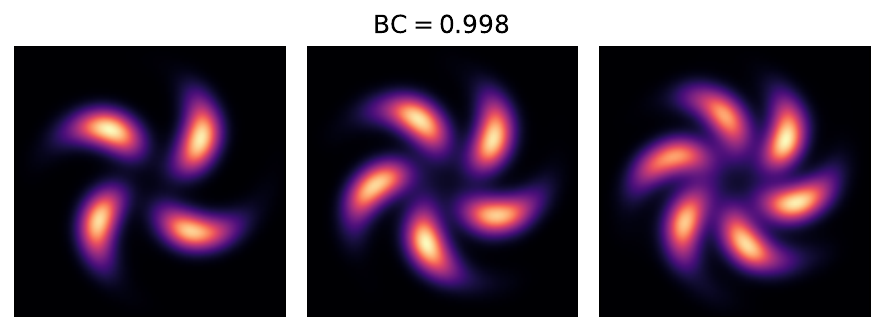} & \includegraphics[scale=\samplescale]{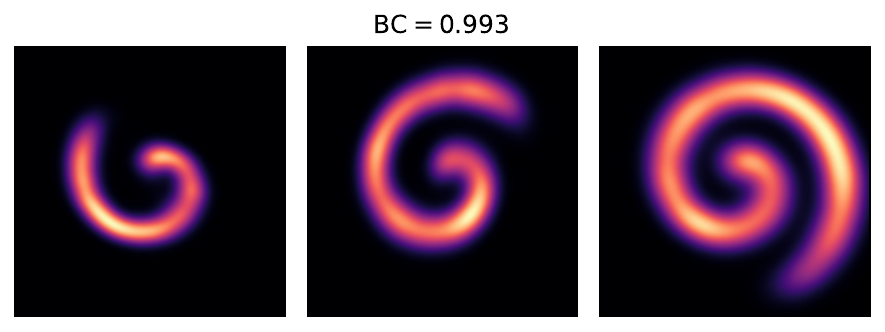} \\
      \hline Diffusion-$\phi$ & \includegraphics[scale=\samplescale]{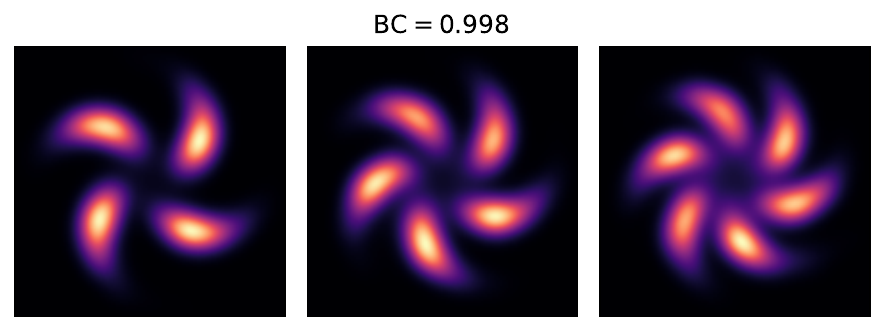} & \includegraphics[scale=\samplescale]{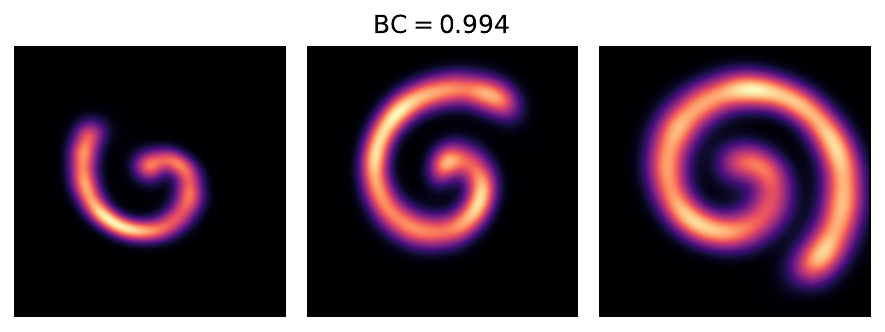} \\
      \hline
  \end{tabular}
  \caption{A Gaussian KDE estimate generated via $8{,}192$ samples for each of the models
  listed in \Cref{sec:experiments:models}.
  The KDE is computed via \texttt{scipy.stats.gaussian\_kde} with default values
  for the \texttt{bw\_method} and \texttt{weights} parameters.
  Each column represents a different context value $x$; see \Cref{sec:experiments:synthetic_2d} for the specific values for each problem. The Bhattacharyya coefficient (BC) is computed as described in \Cref{sec:experiments:synthetic_2d}.}
  \label{tbl:synthetic_2d_density}
\end{table}

The performance of the models listed in \Cref{sec:experiments:models} is shown in
\Cref{tbl:synthetic_2d_samples} (samples) and \Cref{tbl:synthetic_2d_density} (KDE plots).
The specific details of model architectures, training details, and hyperparameter values
are given in \Cref{sec:exp_details:synthetic_2d}. 
For all models, we restrict the parameter count of models to not exceed ${\sim}22$k;
for R-NCE, this number is a limit on the sum of the NF+EBM parameters.
We report the Bhattacharyya coefficient (BC) 
between the sampling distribution and the true distribution, computed as follows. First, recall that
the BC between true conditional distribution $p(y \mid x)$ and a learned conditional distribution
$p_\theta(y \mid x)$ is:
\begin{align}
    \mathrm{BC}(p(y \mid x), p_\theta(y \mid x)) = \int_{\sfY} \sqrt{ p(y \mid x) p_\theta(y \mid x) }\, \rmd\mu_Y \quad \forall x \in \sfX. \label{eq:bc_def}
\end{align}

Note that the BC lies between $[0, 1]$, with a value of one 
indicating that $p(y \mid x) = p_\theta(y \mid x)$.
We estimate both $p(y \mid x)$ and $p_\theta(y \mid x)$ via the following procedure,
which we apply for each context $x$ that we evaluate separately.
First, we compute a Gaussian KDE using $8{,}192$ samples with \texttt{scipy.stats.gaussian\_kde},
invoked with the default parameters. Next, we discretize the square $[-4, 4] \times [-4, 4]$ into $256 \times 256$ grid points, and compute \eqref{eq:bc_def} via numerical integration using
the KDE estimates of the density.
Finally, we report the minimum of the BC estimate over all $x$ values that we consider.

The main findings in \Cref{tbl:synthetic_2d_samples} and \Cref{tbl:synthetic_2d_density}
are that the highest quality samples are generated by the R-NCE, Diffusion, and NF models, between which the quality is indistinguishable, followed by a noticeable drop off in quality for IBC. Note that for IBC we use a uniform noise distribution to generate negative samples. While relatively simple, this example is sufficient to validate our implementations, albeit still illustrating the limitations of IBC.

\subsection{Path Planning}
\label{sec:experiments:path_planning}

We next turn to a higher dimensional problem, one of optimal path planning around
a set of obstacles. 
We study this task due to its inherent multi-modality.
The environment is a planar environment with $10$ randomly sampled spherical obstacles, a 
random goal location, and a random starting point. The objective is to learn an autoregressive navigation policy which,
when conditioned on the scene (i.e., obstacles and goal) and the last $N_{\mathrm{ctx}}$ positions of the agent, models the distribution over the next $N_{\mathrm{pred}}$ positions.

\paragraph{Dataset Generation.} In order to generate the demonstration optimal trajectories, we use the stochastic Gaussian process motion planner (StochGPMP) of \cite{urain2022_implicit_priors}, in which we optimize samples generated from an endpoints conditioned Gaussian process (GP) prior with an obstacle avoidance cost.
The GP prior captures dynamic feasibility (without considering the obstacles) and the following key properties:
(i) closeness to the specified start-state with weight $\sigma_s > 0$, 
(ii) closeness to the specified goal-state with weight $\sigma_g > 0$, and 
(iii) trajectory smoothness with weight $\sigma_v > 0$; see~\cite{urain2022_implicit_priors,mukadam2018_gp,barfoot2014_gp} for details on the computation of the prior.
Roughly speaking, given a trajectory $\{y[i]\}_{i=1}^{N}$, a start location $s$, and a goal location $g$, 
the primary components of the effective ``cost function" associated with the GP prior scale as:
$$
    \frac{1}{\sigma_s^2} \norm{y[1] - s}^2 + \frac{1}{\sigma_g^2} \norm{y[N] - g}^2 + \frac{1}{\sigma_v^2} \sum_{i=1}^{N-1} \norm{ y[i+1] - y[i] }^2.
$$
In generating optimal trajectories from StochGPMP, we set the trajectory length to $N=50$. The obstacle cost $c_{\mathrm{obs}}$ is parameterized by the obstacle set $O = \{(c_k, r_k)\}_{k=1}^{10}$,
where $c_k \in \R^2$ and $r_k \in \R$ are the centers and radii of the spherical obstacles, respectively:
\[
    c_{\mathrm{obs}}(\{y[i]\}_{i=1}^{N}, O) = \sum_{i=1}^{N-1} \sum_{k=1}^{10} \sqrt{r_k^2 - \mathrm{dist}^2(c_k, (y[i], y[i+1]))} \ind\{ \mathrm{dist}(c_k, (y[i], y[i+1])) \leq r_k \}.
\]
The contribution to the StochGPMP planner from the obstacle cost scales as $c_{\mathrm{obs}}/\sigma_o^2$. Following the generation of global trajectories with the planner, we recursively split the trajectories into $(N_{\mathrm{ctx}} + N_{\mathrm{pred}})$-length snippets, and refine these snippets further with the StochGPMP planner, using the endpoints of the snippets as the ``start" and ``goal" locations. The collection of these snippets form the training and evaluation datasets.

\paragraph{Policy and Rollouts.} The autoregressive policy is then defined as the following conditional distribution:
$$
    p(y[i+1], \dots, y[i+N_{\mathrm{pred}}] \mid y[i], \dots, y[i-N_{\mathrm{ctx}}+1], g, O).
$$
For simplicity, we assume perfect state observation and
perform this modeling in position space, i.e., $y[i] \in \R^2$ denotes the planar
coordinates of the trajectory at time $i$. Furthermore, we assume perfect knowledge of the
goal $g$ and obstacles $O$. We leave relaxations of these assumptions to future work.
We set $N_{\mathrm{ctx}} = 3$ and $N_{\mathrm{pred}} = 10$, yielding a
$20$-dimensional event space. In order to generate a full length trajectory, we autoregressively sample and rollout the policy. Given a prediction horizon of $N_{\mathrm{pred}} = 10$ and a total episode length of $N=50$ steps, we sample the policy 5 times over the course of an episode. In order to encourage more optimal trajectories from sampling,
for each context/policy-step we sample $48$ future prediction snippets, and
take the one with the highest (relative) log-likelihood. For R-NCE and Diffusion-$\phi$, this computation is a straightforward forward pass through
the energy model. On the other hand,
for NF and Diffusion, this computation relies on the differential form of the
evolution of log-probability described in~\Cref{sec:log_prob_computation}.

\paragraph{Evaluation.} For evaluation, we compute the cost of the entire trajectory resulting from the autoregressive rollout as the summation of three terms:
(i) goal-reaching, (ii) smoothness, and (iii) obstacle collision cost. 
To appropriately scale these costs in a manner consistent with the demonstration data, the evaluation cost is modeled after the primary cost terms in the GP cost plus the obstacle avoidance cost, used to generate the data:
\begin{equation}
    \mathrm{Obj}(y, g, O) = \frac{1}{\bar{\sigma}_g^2} \norm{y[N] - g}^2 + \frac{1}{\bar{\sigma}_o^2} c_{\mathrm{obs}}(\{y[i]\}_{i=1}^{N}, O) + \frac{1}{\bar{\sigma}_v^2} \sum_{i=1}^{N-1} \norm{y[i] - y[i-1]}^2. \label{eq:path_planning_cost}
\end{equation}
Here, $\bar{\sigma}_g$, $\bar{\sigma}_o$, and $\bar{\sigma}_v$ are derived by normalizing the corresponding weights
from the StochGPMP planner.

\paragraph{Architecture and Training.} For both energy models and vector fields, we 
design architectures based on 
encoder-only transformers~\cite{vaswani2017_attention}. Specifically,
we lift all conditioning and event inputs to a common embedding space,
apply a positional embedding to the lifted tokens, pass the tokens through
multiple transformer encoder layers, and conclude with a final dense accumulation MLP.
However, as previously discussed in \Cref{sec:nearly_efficient_noise}, 
for computational reasons we make the vector field parameterizing the
proposal model for R-NCE a dense MLP, which is significantly simpler compared to the EBM.
Indeed, the proposal CNF on its own is insufficient to solve this task, but
is powerful enough as a negative sampler.
More specific architecture, training, and hyperparameter tuning details are found in \Cref{sec:exp_details:path_planning}.
Note that for IBC, we do not use a fixed proposal distribution, as doing so results in
poor quality energy models. Instead, we use the same NF proposal distribution as used for R-NCE, trained jointly as outlined in \Cref{alg:rnce}. 

For all models considered, we limit the parameter count to a maximum of
${\sim}500$k (for IBC/R-NCE, this is a limit on the EBM+NF parameters). Finally, for all models, during training, we perturbed the context and prediction snippet, i.e., $\{y[i-N_{\mathrm{ctx}}+1], \ldots, y[i+N_{\mathrm{pred}}]\}$, using Gaussian noise with variance $\sigma_{\mathrm{pert}}^2$, where $\sigma_{\mathrm{pert}}$ is annealed starting from $0.01$ down to $0.005$ over half the training steps, and held fixed thereafter. We found that this technique significantly boosted the performance of \emph{all} models, and provide some intuition in the discussion of the results.

\newcommand{\pathplanningscale}{0.45}
\begin{table}[p!]
  \begin{tabular}
      {cccc} \hline {Model} & {Env $1$} & {Env $10$} & {Env $18$} \\
      \hline \hline NF & \includegraphics[scale=\pathplanningscale]{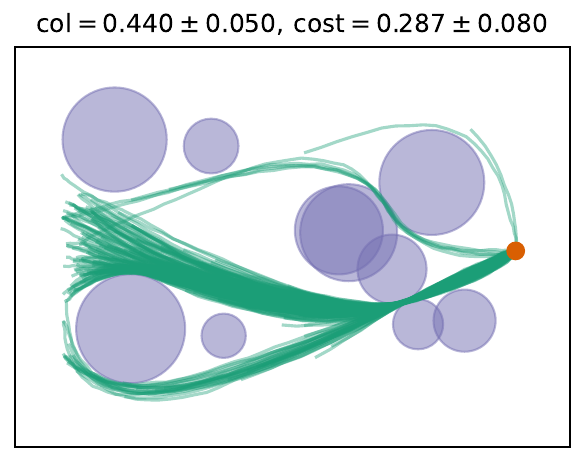} & \includegraphics[scale=\pathplanningscale]{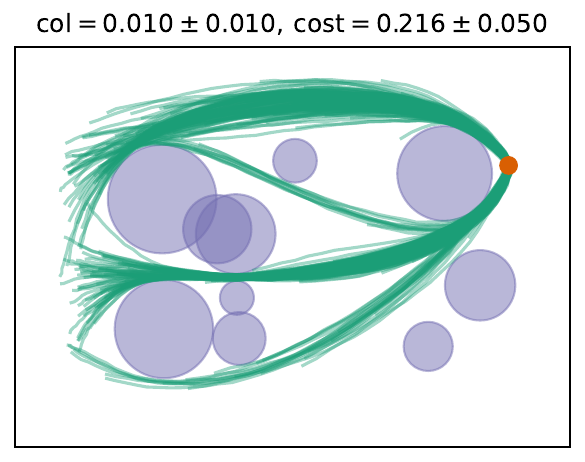} & 
      \includegraphics[scale=\pathplanningscale]{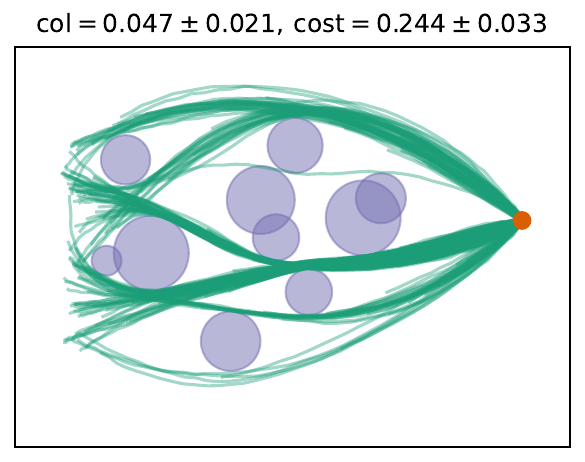} \\
      \hline IBC & \includegraphics[scale=\pathplanningscale]{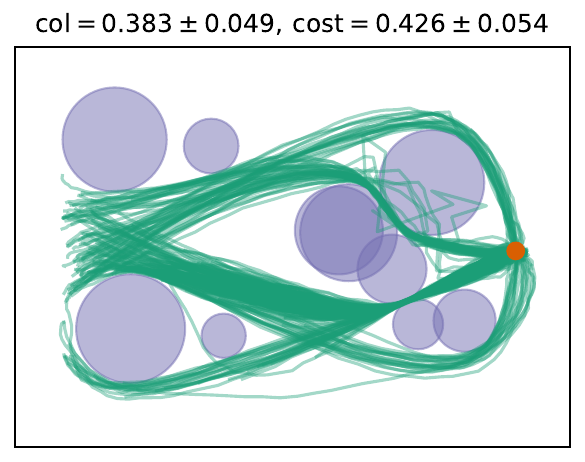} & \includegraphics[scale=\pathplanningscale]{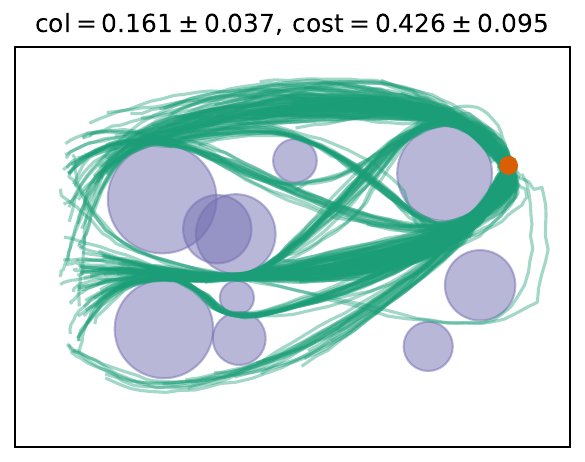} & 
      \includegraphics[scale=\pathplanningscale]{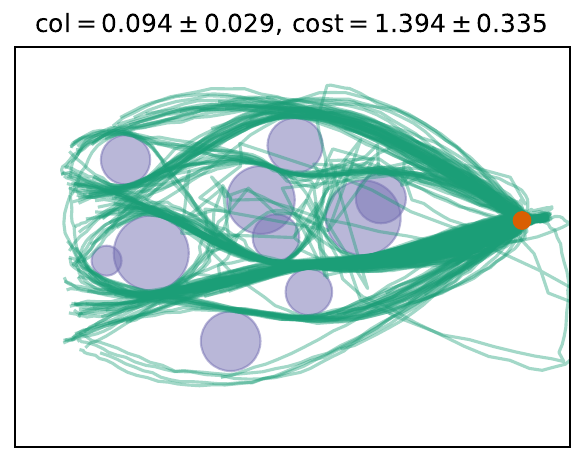} \\
      \hline R-NCE & \includegraphics[scale=\pathplanningscale]{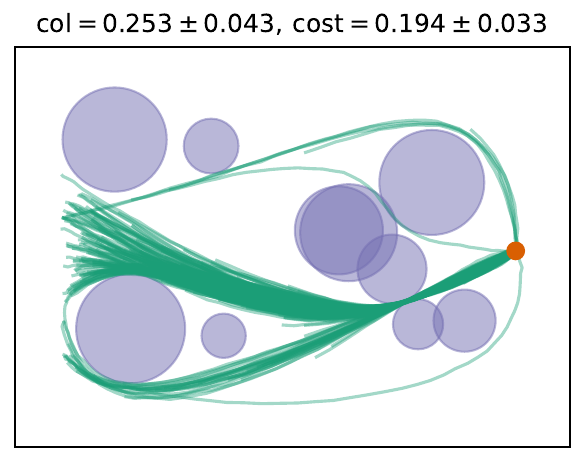} & \includegraphics[scale=\pathplanningscale]{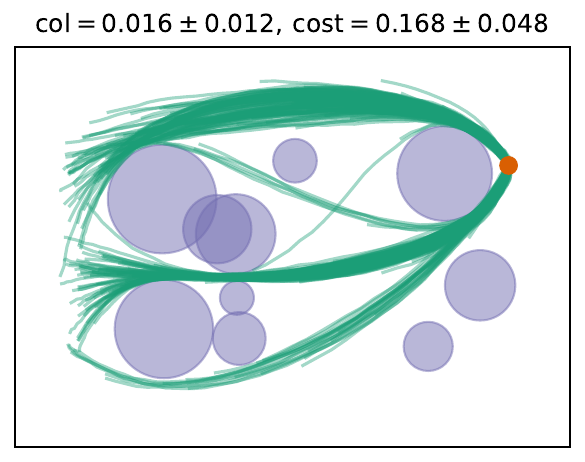} & 
      \includegraphics[scale=\pathplanningscale]{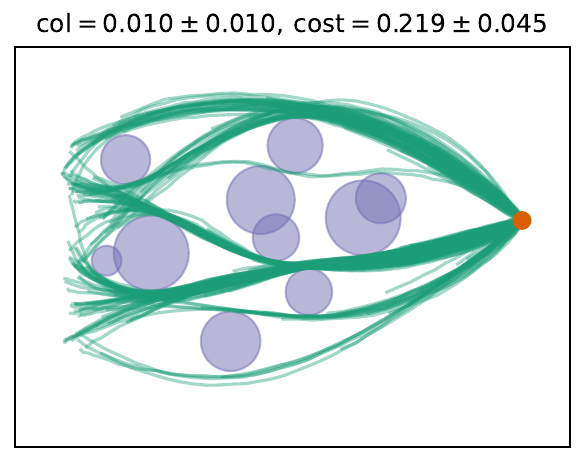} \\
      \hline Diffusion & \includegraphics[scale=\pathplanningscale]{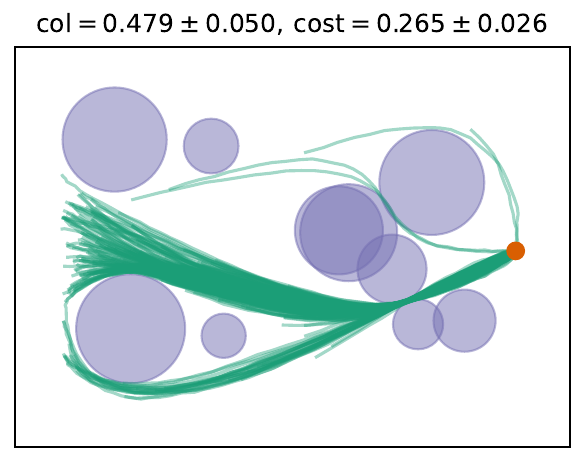} & \includegraphics[scale=\pathplanningscale]{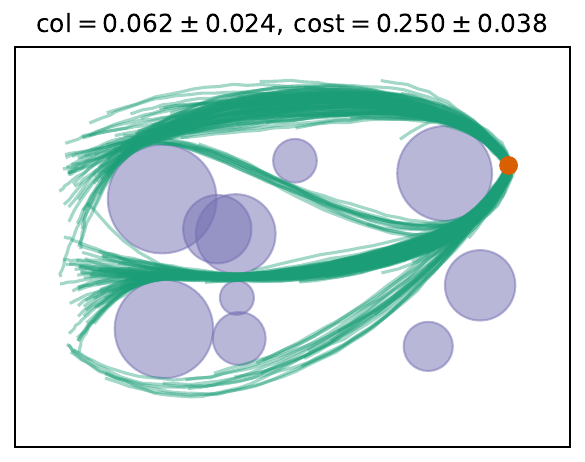} & 
      \includegraphics[scale=\pathplanningscale]{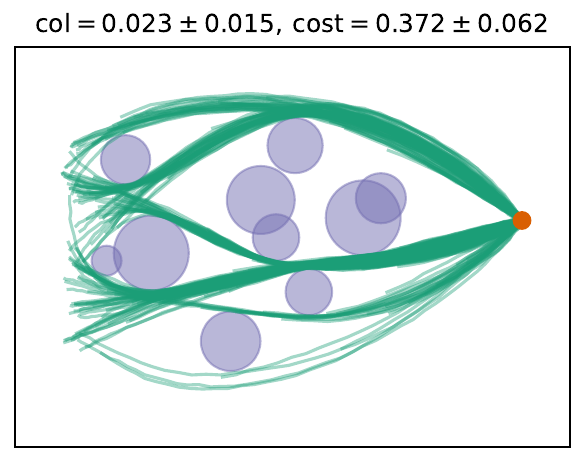} \\
      \hline Diffusion-$\phi$ & \includegraphics[scale=\pathplanningscale]{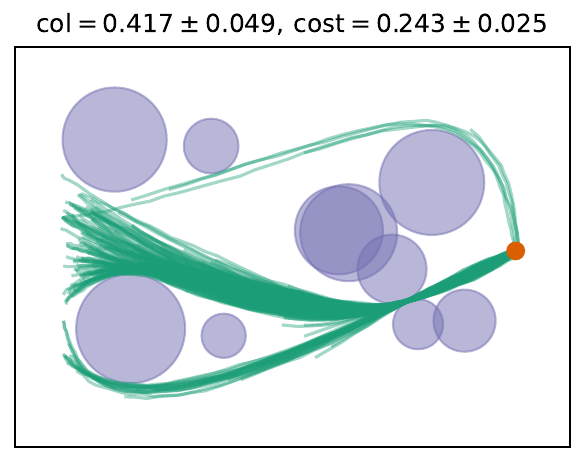} & \includegraphics[scale=\pathplanningscale]{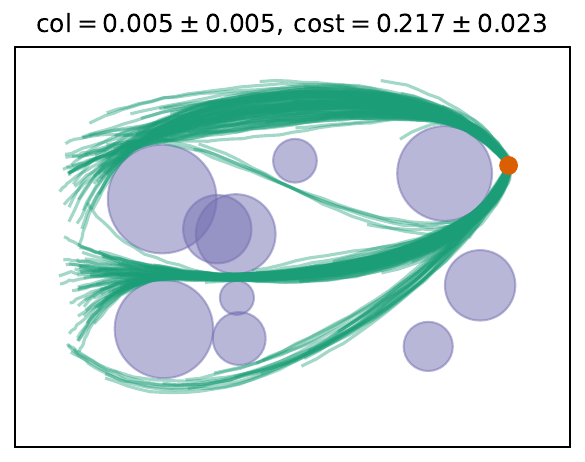} & 
      \includegraphics[scale=\pathplanningscale]{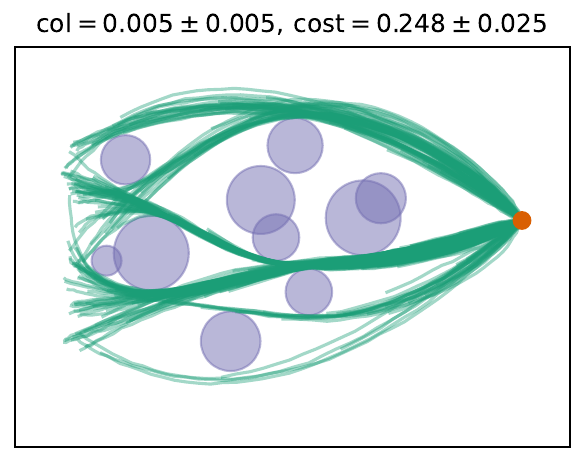} \\
      \hline
  \end{tabular}
  \caption{Sampled trajectories for three separate path planning test environments (out of $25$).
  The specific environments are shown to highlight multi-modal solutions.
  Each environment contains $384$ trajectories sampled autoregressively, $10$
  obstacles, and one goal location indicated by the orange marker. The collision rate
  and cost values are shown for each environment, along with a symmetric $95\%$ confidence interval
  computed via a standard normal approximation. The ``Env $1$'' column highlights the failure cases for the 
  best performing models (cf.~\Cref{tbl:path_planning_bar_col_plot}), 
  where collisions arise due to the models generating trajectories which attempt to 
  pass through narrow passageways.
  }
  \label{tbl:path_planning_examples}
\end{table}

\begin{figure}[ht!]
    \centering
    \includegraphics[scale=0.72]{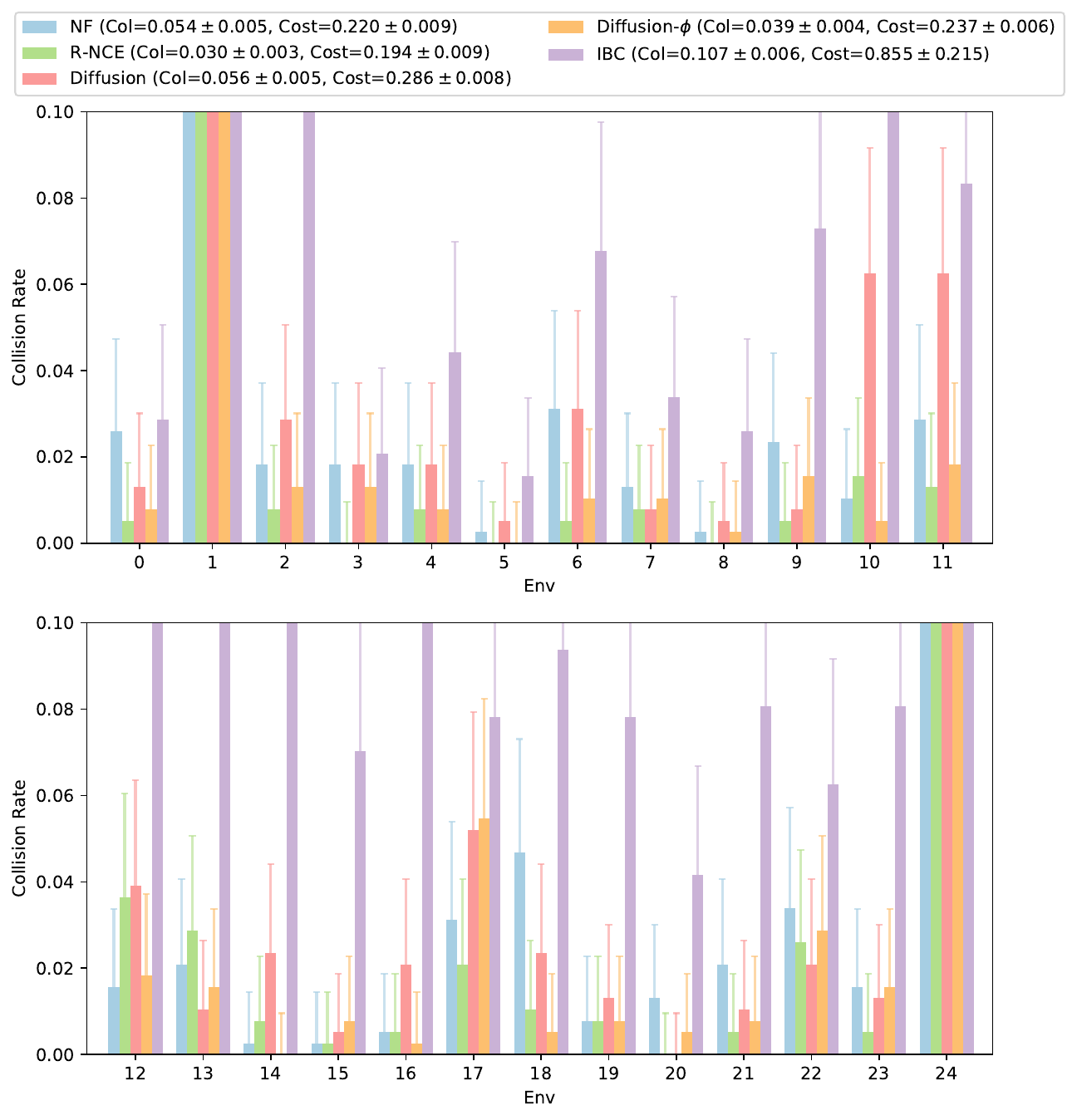}
    \caption{A bar plot comparing the collision rates amongst different models, for each
    of the $25$ test environments. For each environment, a (environment conditioned) $95\%$ confidence interval 
    for the collision rate is computed via the Clopper-Pearson method and shown
    via an error bar. 
    Furthermore, a symmetric $95\%$ confidence interval for both the collision rate and the cost
    is shown for each model in the legend, computed via the standard normal approximation.
    Each color corresponds to a different model, with the color legend shown in the 
    top plot. Note that for several of the environments, the collision rate
    extends beyond the limits of the displayed plots.
    For all models, Environments $1$ and $24$ are particularly challenging; we illustrate the qualitative behavior
    of this failure case in \Cref{tbl:path_planning_examples}.
    }
    \label{tbl:path_planning_bar_col_plot}
\end{figure}

\paragraph{Results.}
We focus our evaluation on two metrics: (a) collision rate, and (b) the objective cost \eqref{eq:path_planning_cost}. 
Our main finding is that R-NCE
yields the sampler with both the lowest collision rate and the lowest obstacle cost.
\Cref{tbl:path_planning_bar_col_plot} provides a quantitative evaluation, 
showing the collision rates for each model on all $25$ test environments.
Furthermore, in \Cref{tbl:path_planning_examples}, we show the performance of the various models
on three test environments. 
Observe that in \Cref{tbl:path_planning_bar_col_plot}, there are two environments (specifically 
Environments $1$ and $24$) for which the collision rate exceeds $10\%$ on every model.
Example trajectories from Environment $1$ are shown in \Cref{tbl:path_planning_examples} to illustrate
this failure scenario (note that Environment $24$ exhibits similar behavior). 
Here, the failures arise from the policy attempting to aggressively cut through 
a very narrow slot between two obstacles.

For R-NCE, we additionally experimented with leveraging a pre-trained proposal model; all other relevant training hyperparameters were held the same as joint training,
other than holding $\sigma_{\mathrm{pert}}$ fixed at $0.005$. The resulting model's performance yielded an average collision rate of $5.8\%$ and cost $0.24$, notably worse that the joint training equivalent. We hypothesize that data noising improves the conditioning of the generative learning problem. Indeed, the trajectories in the training dataset arguably live on a much smaller sub-manifold of the ambient event space. By annealing the perturbation noise, we are able to \emph{guide} the convergence of the learned distributions towards this sub-manifold. This is not possible, however, if the proposal model is pre-trained and the perturbation noise is annealed during EBM training.

Finally, in \Cref{tbl:path_planning_rollouts_one}, we show that generating multiple ($\ell=48$) 
samples per step and selecting the one with highest log-probability has a non-trivial effect on the
quality of the resulting trajectories, thus demonstrating the importance of ranking samplings.\footnote{
The necessity of ranking multiple samples per step
is a major drawback of using conditional variational autoencoders (CVAE)~\cite{doersch2016_vae} models
as generative policies, which has been recently proposed by several authors~\cite{zhao2023_aloha,gomez2020_cvae,ivanovic2021_cvae}.
The issue is that the ELBO loss which CVAEs optimize is insufficient to support
ranking samplings. To see this, first recall that
CVAEs utilize the following decomposition of the log-probability~\cite{doersch2016_vae}:
\begin{align}
    \log{p(y \mid x)} - \KL{q(z \mid x, y)}{p(z \mid x, y)} = \E_{z \sim q(\cdot \mid x, y)}[ \log{p(y \mid x, z)} ] - \KL{q(z \mid x, y)}{p(z \mid x)}. \label{eq:cvae_loss}
\end{align}
The RHS of \eqref{eq:cvae_loss} lower bounds the log-probability $\log p(y \mid x)$, and is the ELBO loss used during training;
it only references the encoder $q(z \mid x, y)$, the decoder $p(y \mid x, z)$, and the prior $p(z \mid x)$,
and is therefore computable. The LHS of \eqref{eq:cvae_loss} contains the desired $\log{p(y \mid x)}$, plus an extra error term 
$\phi(x, y) := -\KL{q(z \mid x, y)}{p(z \mid x, y)}$, which cannot be computed due to its 
dependence on the true posterior $p(z \mid x, y)$. From this decomposition, we see that for a given shared context $x$ and a set of samples 
$y_1, \dots, y_k$, the RHS of \eqref{eq:cvae_loss} cannot serve as a reliable ranking score function, since the unknown error terms
$\phi(x, y_i)$ for $i=1, \dots, k$ are in general all different and not comparable.
}

\begin{table}[htpb]
\centering
\begin{tabular}{ccccc} \hline Model & Col ($\ell=48$) & Col ($\ell=1$) & Cost ($\ell=48$) & Cost ($\ell=1$) \\
\hline \hline NF & $0.054 \pm 0.005$ & $0.101 \pm 0.006$ & $0.220 \pm 0.009$ & $0.500 \pm 0.016$ \\
\hline R-NCE & $0.030 \pm 0.003$ & $0.116 \pm 0.006$ & $0.194 \pm 0.009$ & $2.287 \pm 1.089$ \\
\hline Diffusion & $0.056 \pm 0.005$ & $0.099 \pm 0.006$ & $0.286 \pm 0.008$ & $0.569 \pm 0.017$ \\
\hline Diffusion-$\phi$ & $0.039 \pm 0.004$ & $0.087 \pm 0.006$ & $0.237 \pm 0.006$ & $0.489 \pm 0.015$ \\
\hline IBC & $0.107 \pm 0.006$ & $0.203 \pm 0.008$ & $0.855 \pm 0.215$ & $8.734 \pm 1.677$ \\
\hline
\end{tabular}
\caption{A study to illustrate the necessity of generating multiple samples per timestep,
and selecting the sample with the highest score. Here, we let the variable $\ell$ denote the number of
samples which are generated at every timestep.
Symmetric $95\%$ confidence interval for both the collision rate and the cost
are computed via the standard normal approximation.
Uniformly across all models, both the collision rate and trajectory cost rise significantly 
going from $\ell=48$ samples per step down to $\ell=1$.
}
\label{tbl:path_planning_rollouts_one}
\end{table}

\subsection{Contact-Rich Block Pushing (Push-T)}
\label{sec:experiments:pusht}

Our final task features contact-rich multi-modal planar manipulation which involves
using a circular end effector to push a T-shaped block into a goal 
configuration~\cite{chi2023diffusion,florence2022implicit}. 
In this environment, the initial pose of the T-shaped block is randomized.
The agent receives both an RGB image observation of the environment
(which also contains a rendering of the target pose) and its current end effector
position, and
is tasked with outputting a sequence of position coordinates for the end effector.
The target position coordinates are then tracked via a PD controller.
This task is the most challenging of the tasks we study, due to
(a) contact-rich behavior and (b) reliance on visual feedback.
Indeed, we find that successfully solving this task with R-NCE requires the
I-R-NCE machinery introduced in \Cref{sec:interpolating_ebm}. 

Our specific simulation environment and training data comes
from~\citet{chi2023diffusion}, 
which uses the \texttt{pygame}\footnote{Package website: \url{https://github.com/pygame/pygame}}
physics engine for contact simulation,
and provides a dataset of $136$ expert human teleoperated demonstrations.

\paragraph{Policy and Rollouts.}
Similar to path planning (cf.~\Cref{sec:experiments:path_planning}),
we use an autoregressive policy to perform rollouts:
\begin{align}
    p(y[i+1], \dots, y[i + N_{\mathrm{pred}}] \mid y[i], o[i], \dots, y[i - N_{\mathrm{ctx}} + 1], o[i - N_{\mathrm{ctx}} - 1]). \label{eq:pusht_policy} 
\end{align}
Here, $y[i]$ denotes the end effector position at timestep $i$, and
$o[i]$ denotes the corresponding RGB image observation.
Following~\citet{chi2023diffusion}, we set $N_{\mathrm{pred}} = 16$ and
$N_{\mathrm{ctx}} = 2$, yielding a $32$-dimensional event space.
Furthermore, during policy execution, even though we predict
$N_{\mathrm{pred}} = 16$ positions into the future, we only 
play the first $8$ predictions in open loop, followed by replanning
using \eqref{eq:pusht_policy};
\citet[Figure 6]{chi2023diffusion} showed that this ratio of prediction to
action horizon yielded the optimal performance on this task.
As with the path planning example, for each policy step, we sample 8 predictions and execute the sequence with the highest (relative) log-likelihood.

\paragraph{Evaluation.}
Each rollout is scored with the following protocol. At each timestep $i$, the 
score of the current configuration is determined by first computing
the ratio $r[i]$ of the area of the intersection between the current block pose
and the target pose 
to the area of the block. The score $s[i]$ is then set to $s[i] = \min(r[i]/0.95, 1)$.
The episode terminates when either (a) the score $s[i] = 1$, or (b) $200$ timesteps have elapsed. The final score assigned to the episode is the maximum score over all the 
episode timesteps. Our final evaluation is done by sampling $256$ random seeds (i.e., randomized initial configurations), and
rolling out each policy $32$ times within each environment (with different policy sampling randomness
seeds).

\paragraph{Architecture and Training.}
We design two-stage architectures, which first encode the visual observations 
$\{o[i], \dots, o[i - N_{\mathrm{ctx}} + 1]\}$ into
latent representations via a convolutional ResNet, with coordinates appended to the channel dimension of the input~\cite{liu2018_coordconv}, and spatial softmax layers at the end~\cite{levine2016_visuomotor}. The spatial features at the end are flattened and combined with both the 
agent positions and time index $t$ (the time index of the generative model, not the
trajectory timestep index $i$), and passed through 
encoder-only transformers similar to the architectures used in path planning (cf.~\Cref{sec:experiments:path_planning}). For the interpolant NF which forms the
negative sampler for I-R-NCE, special care is needed to design an architecture which is 
simultaneously expressive and computationally efficient;
a dense MLP as used in path planning for the proposal distribution led to posterior collapse 
during training (cf.~\Cref{sec:rnce_special}).
We detail our design 
in \Cref{sec:exp_details:push_t}, in addition to various training and hyperparameter
tuning details.
For all models, we limit the parameter count to ${\sim}3.3$M
(for I-R-NCE, this limit applies to the sum of the NF+EBM parameters).\footnote{We find that models which are several orders of magnitude smaller than the models used in \citet{chi2023diffusion} suffice.}
Finally, similar to the training procedure for path planning, we employed the same annealed data perturbation schedule to guide training, with noise applied only to the sequence of end effector positions.

\definecolor{PushTBegin}{HTML}{c3e2e3}
\definecolor{PushTEnd}{HTML}{27213f}
\definecolor{PushTGoal}{HTML}{66c2a5}
\newcommand{\pushtscale}{0.65}
\begin{table}[ht!]
  \centering
  \begin{tabular}
      {cc} \hline {Model} & {Examples} \\
      \hline \hline NF & \includegraphics[scale=\pushtscale]{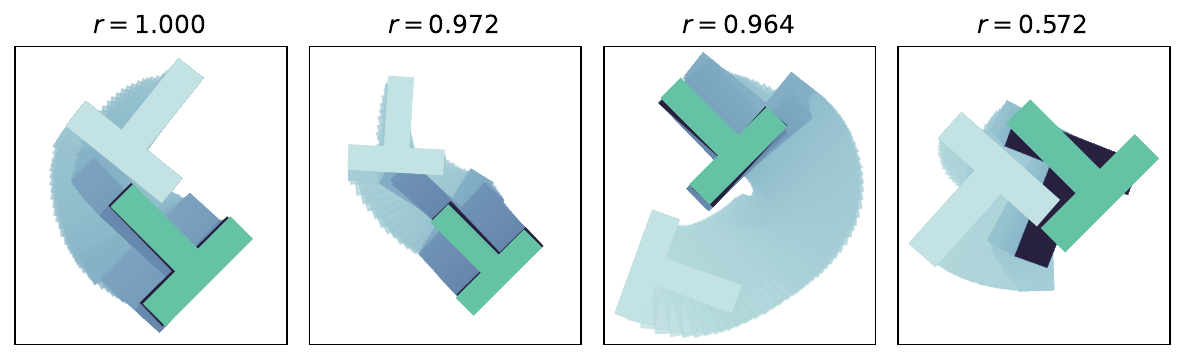} \\
      \hline I-R-NCE & \includegraphics[scale=\pushtscale]{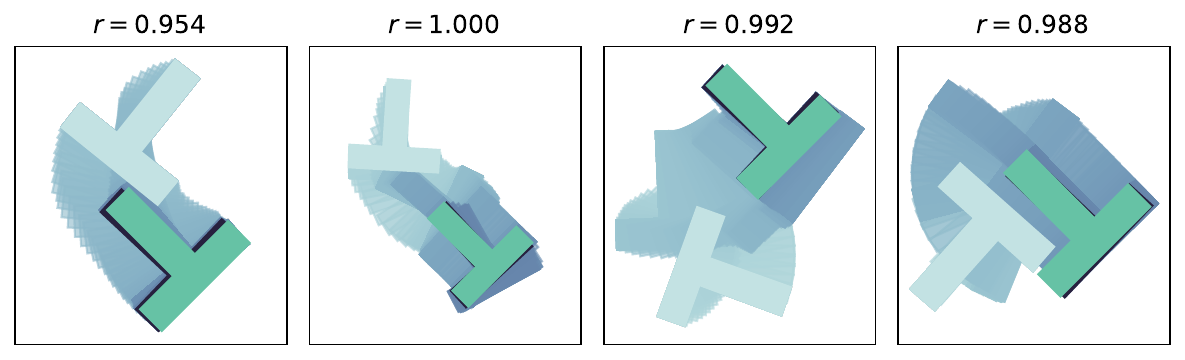} \\
      \hline Diffusion & \includegraphics[scale=\pushtscale]{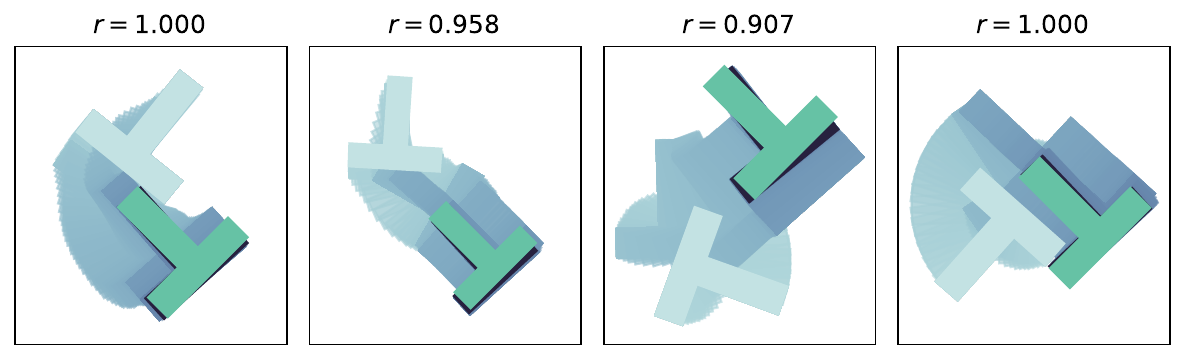} \\
      \hline Diffusion-$\phi$ & \includegraphics[scale=\pushtscale]{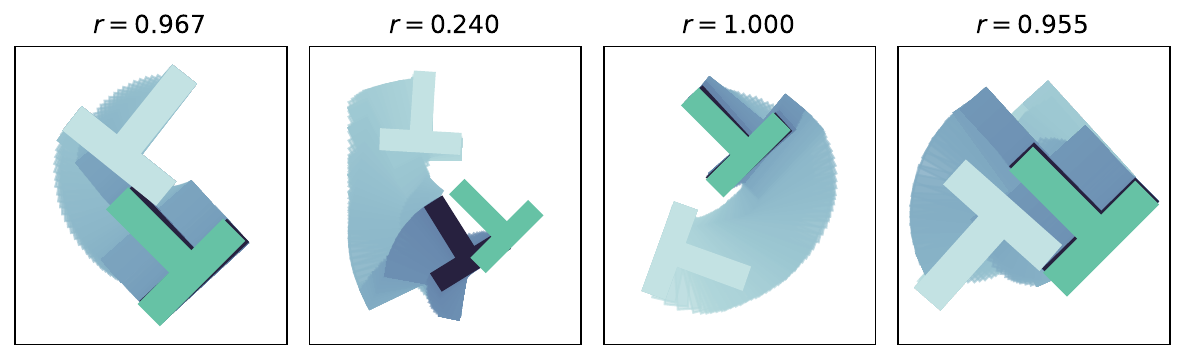} \\
      \hline
  \end{tabular}
  \caption{A visualization of the policy trajectories of each model in 
  four different initial conditions. The trajectory is color coded using time, with the
  beginning of the trajectory represented using the lightest shade~\crule[PushTBegin], and the end of the trajectory with the darkest shade~\crule[PushTEnd].
  The desired goal configuration is show in the green color~\crule[PushTGoal]. 
  Above each environment,
  the score assigned to the rollout is shown (see the evaluation paragraph in 
  \Cref{sec:experiments:pusht} for the definition of the score).}
  \label{tbl:pusht:examples}
\end{table}

\paragraph{Results.}
We show our results in 
\Cref{tbl:pusht:examples} and \Cref{tbl:pusht:scores}.
Note that we do not evaluate IBC here due to its previous poor performance
both in path planning, and poor performance shown for this task in \citet{chi2023diffusion}.
\Cref{tbl:pusht:examples} contains, for each model, visualizations
of policy trajectories in four different evaluation environments.
In \Cref{tbl:pusht:scores}, we show the final evaluation score achieved by each of the
models; we additionally include results for I-R-NCE paired with the simpler two-stage sampling method from~\Cref{alg:rnce_sample}, and the simpler non-interpolating R-NCE (\Cref{alg:rnce}). We note that the best performance is achieved by I-R-NCE, paired with the three-stage sampling method from~\Cref{alg:iebm_sample}. This is closely followed by I-R-NCE with the two-stage sampling method from~\Cref{alg:rnce_sample}; both I-R-NCE results notably outperform non-interpolating R-NCE. This suggests that training via I-R-NCE yields significantly superior models, even when the EBM is only used at $t=1$ during sampling. A reasonable hypothesis would be that training via I-R-NCE yields ``multi-scale variance reduction" whereby the shared network $\en_{\theta}$ benefits from learning across all times within $(0, 1]$, similar to the benefits attributed to time-varying score modeling. We anticipate that the gap between the two- and three-stage sampling algorithms should widen with even more challenging distributions. Meanwhile, the non-interpolating variant of R-NCE does not inherit any multi-scale learning benefits. Finally, as with path planning, I-R-NCE (with or without the SDE sampling stage) outperforms the NF, Diffusion, and Diffusion-$\phi$ models.

\begin{table}[htb]
\centering
\begin{tabular}{cccccc} \hline NF & R-NCE & I-R-NCE (Alg.~\ref{alg:rnce_sample}) & I-R-NCE & Diffusion & Diffusion-$\phi$ \\
\hline \hline $0.866 \pm 0.006$ & $0.824 \pm 0.006$ & $0.880 \pm 0.005$ & $0.884 \pm 0.005$ & $0.864 \pm 0.006$ & $0.860 \pm 0.006$ \\
\hline
\end{tabular}
\caption{The final evaluation score across $256$ randomly sampled initial configurations, and $32$ rollouts within each environment configuration. The reported score is a scalar between $[0, 1]$ which describes the maximum
coverage of the target achieved by the policy; the precise definition is given in
the evaluation paragraph of \Cref{sec:experiments:pusht}. A $95\%$ symmetric confidence interval
computed via a normal approximation is shown for each score. Here, we see that the model trained
with I-R-NCE outperforms the other models. The difference between I-R-NCE and I-R-NCE (Alg.~\ref{alg:rnce_sample}) corresponds to the use of~\Cref{alg:iebm_sample} for the former and~\Cref{alg:rnce_sample} for the latter during sampling.}
\label{tbl:pusht:scores}
\end{table}

\section{Conclusion}
\label{sec:conclusion}

We showed that in the context of policy representation,
energy-based models are a competitive alternative
to other state-of-the-art generative models such as diffusion models
and stochastic interpolants. This was done through several key ideas:
(i) optimizing the ranking NCE objective in lieu of the InfoNCE objective 
proposed in the IBC work~\cite{florence2022implicit},
(ii) employing a learnable negative sampler which
is jointly, but non-adversarially, trained with the energy model, and 
(iii) training a family of 
energy models indexed by a scale variable to learn a stochastic process that
forms a continuous bridge between the data distribution and latent noise.

This work naturally raises several broader points for further investigation.
First, while we focused on theoretical foundations in addition to a comprehensive evaluation of R-NCE trained EBMs compared with other
generative models, in future work we plan to deploy these EBMs on real hardware platforms. 
Second, this work investigated the use of R-NCE for training EBMs in the supervised setting, i.e., behavioral cloning. A promising future direction would be to adapt our R-NCE algorithm and insights for training EBM policies via reinforcement learning~\cite{levine2018reinforcement}.
Finally, despite our motivations being targeted towards robotic applications, 
many of our technical contributions actually apply to 
generative modeling more broadly. Another direction for future research
is to apply our techniques to other domains such as text-to-image generation, super-resolution, and inverse problems in medical imaging, and
see if the benefits transfer over to other domains.

\section*{Acknowledgements}
We thank Michael Albergo and Nicholas Boffi for many enlightening discussions on
stochastic interpolants, and for suggesting
using a non-uniform time scaling for training interpolants 
and computing a kernel density estimation
to numerically evaluate distribution closeness for our two dimensional examples.
We also thank Zhenjia Xu for setting up much of the evaluation infrastructure for the 
Push-T task, and discussing the implementation details of the diffusion models in \citet{chi2023diffusion} with us.
Finally, we thank Pete Florence and Andy Zeng for engaging with us in many clarifying discussions
regarding the Implicit Behavior Cloning objective,
and Nicolas Heess for helpful feedback regarding our manuscript.

{
\bibliographystyle{plainnat}

\begin{thebibliography}{86}
\providecommand{\natexlab}[1]{#1}
\providecommand{\url}[1]{\texttt{#1}}
\expandafter\ifx\csname urlstyle\endcsname\relax
  \providecommand{\doi}[1]{doi: #1}\else
  \providecommand{\doi}{doi: \begingroup \urlstyle{rm}\Url}\fi

\bibitem[Adams et~al.(2022)Adams, Cody, and Beling]{adams2022survey}
Stephen Adams, Tyler Cody, and Peter~A. Beling.
\newblock A survey of inverse reinforcement learning.
\newblock \emph{Artificial Intelligence Review}, 55\penalty0 (6):\penalty0
  4307--4346, 2022.

\bibitem[Ajay et~al.(2022)Ajay, Du, Gupta, Tenenbaum, Jaakkola, and
  Agrawal]{ajay2022conditional}
Anurag Ajay, Yilun Du, Abhi Gupta, Joshua Tenenbaum, Tommi Jaakkola, and Pulkit
  Agrawal.
\newblock Is conditional generative modeling all you need for decision-making?
\newblock \emph{arXiv preprint arXiv:2211.15657}, 2022.

\bibitem[Albergo and Vanden-Eijnden(2023)]{albergo2022building}
Michael~S. Albergo and Eric Vanden-Eijnden.
\newblock Building normalizing flows with stochastic interpolants.
\newblock In \emph{International Conference on Learning Representations}, 2023.

\bibitem[Albergo et~al.(2023)Albergo, Boffi, and
  Vanden{-}Eijnden]{albergo2023_stochastic_interpolants}
Michael~S. Albergo, Nicholas~M. Boffi, and Eric Vanden{-}Eijnden.
\newblock Stochastic interpolants: A unifying framework for flows and
  diffusions.
\newblock \emph{arXiv preprint arXiv:2303.08797}, 2023.

\bibitem[Arbel et~al.(2021)Arbel, Zhou, and Gretton]{arbel2021generalized}
Michael Arbel, Liang Zhou, and Arthur Gretton.
\newblock Generalized energy based models.
\newblock In \emph{International Conference on Learning Representations}, 2021.

\bibitem[Babuschkin et~al.(2020)Babuschkin, Baumli, Bell, Bhupatiraju, Bruce,
  Buchlovsky, Budden, Cai, Clark, Danihelka, Dedieu, Fantacci, Godwin, Jones,
  Hemsley, Hennigan, Hessel, Hou, Kapturowski, Keck, Kemaev, King, Kunesch,
  Martens, Merzic, Mikulik, Norman, Papamakarios, Quan, Ring, Ruiz, Sanchez,
  Schneider, Sezener, Spencer, Srinivasan, Stokowiec, Wang, Zhou, and
  Viola]{deepmind2020jax}
Igor Babuschkin, Kate Baumli, Alison Bell, Surya Bhupatiraju, Jake Bruce, Peter
  Buchlovsky, David Budden, Trevor Cai, Aidan Clark, Ivo Danihelka, Antoine
  Dedieu, Claudio Fantacci, Jonathan Godwin, Chris Jones, Ross Hemsley, Tom
  Hennigan, Matteo Hessel, Shaobo Hou, Steven Kapturowski, Thomas Keck, Iurii
  Kemaev, Michael King, Markus Kunesch, Lena Martens, Hamza Merzic, Vladimir
  Mikulik, Tamara Norman, George Papamakarios, John Quan, Roman Ring, Francisco
  Ruiz, Alvaro Sanchez, Rosalia Schneider, Eren Sezener, Stephen Spencer,
  Srivatsan Srinivasan, Wojciech Stokowiec, Luyu Wang, Guangyao Zhou, and Fabio
  Viola.
\newblock The {D}eep{M}ind {JAX} {E}cosystem, 2020.
\newblock URL \url{http://github.com/deepmind}.

\bibitem[Barfoot et~al.(2014)Barfoot, Tong, and
  S{\"{a}}rkk{\"{a}}]{barfoot2014_gp}
Timothy~D. Barfoot, Chi~Hay Tong, and Simo S{\"{a}}rkk{\"{a}}.
\newblock Batch continuous-time trajectory estimation as exactly sparse
  gaussian process regression.
\newblock In \emph{Proceedings of Robotics: Science and Systems}, 2014.

\bibitem[Biau et~al.(2020)Biau, Cadre, Sangnier, and Tanielian]{biau2020_gans}
G{\'e}rard Biau, Beno{\^i}t Cadre, Maxime Sangnier, and Ugo Tanielian.
\newblock {Some theoretical properties of GANS}.
\newblock \emph{The Annals of Statistics}, 48\penalty0 (3):\penalty0
  1539--1566, 2020.

\bibitem[Blattmann et~al.(2023)Blattmann, Rombach, Ling, Dockhorn, Kim, Fidler,
  and Kreis]{blattmann2023_videoldm}
Andreas Blattmann, Robin Rombach, Huan Ling, Tim Dockhorn, Seung~Wook Kim,
  Sanja Fidler, and Karsten Kreis.
\newblock Align your latents: High-resolution video synthesis with latent
  diffusion models.
\newblock In \emph{Proceedings of the IEEE/CVF Conference on Computer Vision
  and Pattern Recognition}, pages 22563--22575, 2023.

\bibitem[Bond-Taylor et~al.(2022)Bond-Taylor, Leach, Long, and
  Willcocks]{bond2021deep}
Sam Bond-Taylor, Adam Leach, Yang Long, and Chris~G. Willcocks.
\newblock Deep generative modelling: A comparative review of {VAEs}, {GANs},
  normalizing flows, energy-based and autoregressive models.
\newblock \emph{IEEE Transactions on Pattern Analysis and Machine
  Intelligence}, 44\penalty0 (11):\penalty0 7327--7347, 2022.

\bibitem[Bose et~al.(2018)Bose, Ling, and Cao]{bose2018adversarial}
Avishek~Joey Bose, Huan Ling, and Yanshuai Cao.
\newblock Adversarial contrastive estimation.
\newblock In \emph{Proceedings of the 56th Annual Meeting of the Association
  for Computational Linguistics (Volume 1: Long Papers)}, pages 1021--1032,
  2018.

\bibitem[Bradbury et~al.(2018)Bradbury, Frostig, Hawkins, Johnson, Leary,
  Maclaurin, Necula, Paszke, Vander{P}las, Wanderman-{M}ilne, and
  Zhang]{jax2018github}
James Bradbury, Roy Frostig, Peter Hawkins, Matthew~James Johnson, Chris Leary,
  Dougal Maclaurin, George Necula, Adam Paszke, Jake Vander{P}las, Skye
  Wanderman-{M}ilne, and Qiao Zhang.
\newblock {JAX}: composable transformations of {P}ython+{N}um{P}y programs,
  2018.
\newblock URL \url{http://github.com/google/jax}.

\bibitem[Chehab et~al.(2022)Chehab, Gramfort, and
  Hyv{\"a}rinen]{chehab2022_optimal_nce_noise}
Omar Chehab, Alexandre Gramfort, and Aapo Hyv{\"a}rinen.
\newblock The optimal noise in noise-contrastive learning is not what you
  think.
\newblock In \emph{Proceedings of the Thirty-Eighth Conference on Uncertainty
  in Artificial Intelligence}, volume 180, pages 307--316. PMLR, 2022.

\bibitem[Chen et~al.(2018)Chen, Rubanova, Bettencourt, and
  Duvenaud]{chen2018neural}
Ricky T.~Q. Chen, Yulia Rubanova, Jesse Bettencourt, and David~K. Duvenaud.
\newblock Neural ordinary differential equations.
\newblock In \emph{Advances in Neural Information Processing Systems},
  volume~31, 2018.

\bibitem[Chi et~al.(2023)Chi, Feng, Du, Xu, Cousineau, Burchfiel, and
  Song]{chi2023diffusion}
Cheng Chi, Siyuan Feng, Yilun Du, Zhenjia Xu, Eric Cousineau, Benjamin
  Burchfiel, and Shuran Song.
\newblock Diffusion policy: Visuomotor policy learning via action diffusion.
\newblock In \emph{Proceedings of Robotics: Science and Systems}, 2023.

\bibitem[Dai et~al.(2019)Dai, Liu, Dai, He, Gretton, Song, and
  Schuurmans]{dai2019exponential}
Bo~Dai, Zhen Liu, Hanjun Dai, Niao He, Arthur Gretton, Le~Song, and Dale
  Schuurmans.
\newblock Exponential family estimation via adversarial dynamics embedding.
\newblock In \emph{Advances in Neural Information Processing Systems},
  volume~32, 2019.

\bibitem[Dhariwal and Nichol(2021)]{dhariwal2021_diffusion}
Prafulla Dhariwal and Alexander Nichol.
\newblock Diffusion models beat gans on image synthesis.
\newblock In \emph{Advances in Neural Information Processing Systems},
  volume~34, 2021.

\bibitem[Doersch(2016)]{doersch2016_vae}
Carl Doersch.
\newblock Tutorial on variational autoencoders.
\newblock \emph{arXiv preprint arXiv:1606.05908}, 2016.

\bibitem[Du and Mordatch(2019)]{du2019implicit}
Yilun Du and Igor Mordatch.
\newblock Implicit generation and modeling with energy based models.
\newblock In \emph{Advances in Neural Information Processing Systems},
  volume~32, 2019.

\bibitem[Du et~al.(2020)Du, Lin, and Mordatch]{yilun2019_mbp}
Yilun Du, Toru Lin, and Igor Mordatch.
\newblock Model-based planning with energy-based models.
\newblock In \emph{Proceedings of the 3rd Conference on Robot Learning}, volume
  100, pages 374--383. PMLR, 2020.

\bibitem[Du et~al.(2021)Du, Li, Tenenbaum, and Mordatch]{du2021improved}
Yilun Du, Shuang Li, Joshua Tenenbaum, and Igor Mordatch.
\newblock Improved contrastive divergence training of energy-based models.
\newblock In \emph{International Conference on Machine Learning}, volume 139,
  pages 2837--2848. PMLR, 2021.

\bibitem[Du et~al.(2023)Du, Durkan, Strudel, Tenenbaum, Dieleman, Fergus,
  Sohl-Dickstein, Doucet, and Grathwohl]{du2023_reduce}
Yilun Du, Conor Durkan, Robin Strudel, Joshua~B. Tenenbaum, Sander Dieleman,
  Rob Fergus, Jascha Sohl-Dickstein, Arnaud Doucet, and Will~Sussman Grathwohl.
\newblock Reduce, reuse, recycle: Compositional generation with energy-based
  diffusion models and {MCMC}.
\newblock In \emph{International Conference on Machine Learning}, volume 202,
  pages 8489--8510. PMLR, 2023.

\bibitem[Ferguson(2017)]{ferguson2017course}
Thomas~S. Ferguson.
\newblock \emph{A course in large sample theory}.
\newblock Routledge, 2017.

\bibitem[Florence et~al.(2022)Florence, Lynch, Zeng, Ramirez, Wahid, Downs,
  Wong, Lee, Mordatch, and Tompson]{florence2022implicit}
Pete Florence, Corey Lynch, Andy Zeng, Oscar~A. Ramirez, Ayzaan Wahid, Laura
  Downs, Adrian Wong, Johnny Lee, Igor Mordatch, and Jonathan Tompson.
\newblock Implicit behavioral cloning.
\newblock In \emph{Proceedings of the 5th Conference on Robot Learning}, volume
  164, pages 158--168. PMLR, 2022.

\bibitem[Folland(1999)]{folland1999_book}
Gerald~B. Folland.
\newblock \emph{Real Analysis: Modern Techniques and Their Applications}.
\newblock Wiley, 1999.

\bibitem[Gao et~al.(2020)Gao, Nijkamp, Kingma, Xu, Dai, and
  Wu]{gao2020_flow_contrastive}
Ruiqi Gao, Erik Nijkamp, Diederik~P. Kingma, Zhen Xu, Andrew~M. Dai, and
  Ying~Nian Wu.
\newblock Flow contrastive estimation of energy-based models.
\newblock In \emph{Proceedings of the IEEE/CVF Conference on Computer Vision
  and Pattern Recognition}, pages 7518--7528, 2020.

\bibitem[Gomez-Gonzalez et~al.(2020)Gomez-Gonzalez, Prokudin, Sch{\"{o}}lkopf,
  and Peters]{gomez2020_cvae}
Sebastian Gomez-Gonzalez, Sergey Prokudin, Bernhard Sch{\"{o}}lkopf, and Jan
  Peters.
\newblock Real time trajectory prediction using deep conditional generative
  models.
\newblock \emph{IEEE Robotics and Automation Letters}, 5\penalty0 (2):\penalty0
  970--976, 2020.

\bibitem[Grathwohl et~al.(2019)Grathwohl, Chen, Bettencourt, and
  Duvenaud]{grathwohl2018_ffjord}
Will Grathwohl, Ricky T.~Q. Chen, Jesse Bettencourt, and David Duvenaud.
\newblock Scalable reversible generative models with free-form continuous
  dynamics.
\newblock In \emph{International Conference on Learning Representations}, 2019.

\bibitem[Grathwohl et~al.(2021)Grathwohl, Kelly, Hashemi, Norouzi, Swersky, and
  Duvenaud]{grathwohl2021no}
Will Grathwohl, Jacob Kelly, Milad Hashemi, Mohammad Norouzi, Kevin Swersky,
  and David Duvenaud.
\newblock No {MCMC} for me: Amortized sampling for fast and stable training of
  energy-based models.
\newblock In \emph{International Conference on Learning Representations}, 2021.

\bibitem[Gutmann and Hirayama(2011)]{gutmann2011bregman}
Michael~U. Gutmann and Jun-ichiro Hirayama.
\newblock Bregman divergence as general framework to estimate unnormalized
  statistical models.
\newblock In \emph{Proceedings of the Twenty-Seventh Conference on Uncertainty
  in Artificial Intelligence}, pages 283--290, 2011.

\bibitem[Gutmann and Hyv{{\"a}}rinen(2012)]{gutmann2010_nce}
Michael~U. Gutmann and Aapo Hyv{{\"a}}rinen.
\newblock Noise-contrastive estimation of unnormalized statistical models, with
  applications to natural image statistics.
\newblock \emph{Journal of Machine Learning Research}, 13\penalty0
  (11):\penalty0 307--361, 2012.

\bibitem[Haarnoja et~al.(2017)Haarnoja, Tang, Abbeel, and
  Levine]{haarnoja2017reinforcement}
Tuomas Haarnoja, Haoran Tang, Pieter Abbeel, and Sergey Levine.
\newblock Reinforcement learning with deep energy-based policies.
\newblock In \emph{International Conference on Machine Learning}, volume~70,
  pages 1352--1361. PMLR, 2017.

\bibitem[Haarnoja et~al.(2018)Haarnoja, Zhou, Abbeel, and
  Levine]{haarnoja2018soft}
Tuomas Haarnoja, Aurick Zhou, Pieter Abbeel, and Sergey Levine.
\newblock Soft actor-critic: Off-policy maximum entropy deep reinforcement
  learning with a stochastic actor.
\newblock In \emph{International Conference on Machine Learning}, volume~80,
  pages 1861--1870. PMLR, 2018.

\bibitem[Heek et~al.(2023)Heek, Levskaya, Oliver, Ritter, Rondepierre, Steiner,
  and van {Z}ee]{flax2020github}
Jonathan Heek, Anselm Levskaya, Avital Oliver, Marvin Ritter, Bertrand
  Rondepierre, Andreas Steiner, and Marc van {Z}ee.
\newblock {F}lax: A neural network library and ecosystem for {JAX}, 2023.
\newblock URL \url{http://github.com/google/flax}.

\bibitem[Heess et~al.(2013)Heess, Silver, and Teh]{heess2013actor}
Nicolas Heess, David Silver, and Yee~Whye Teh.
\newblock Actor-critic reinforcement learning with energy-based policies.
\newblock In \emph{European Workshop on Reinforcement Learning}, volume~24,
  pages 45--58. PMLR, 2013.

\bibitem[Ho and Ermon(2016)]{ho2016generative}
Jonathan Ho and Stefano Ermon.
\newblock Generative adversarial imitation learning.
\newblock In \emph{Advances in Neural Information Processing Systems},
  volume~29, 2016.

\bibitem[Ho et~al.(2020)Ho, Jain, and Abbeel]{ho2020_ddpm}
Jonathan Ho, Ajay Jain, and Pieter Abbeel.
\newblock Denoising diffusion probabilistic models.
\newblock In \emph{Advances in Neural Information Processing Systems},
  volume~33, 2020.

\bibitem[Ivanovic et~al.(2021)Ivanovic, Leung, Schmerling, and
  Pavone]{ivanovic2021_cvae}
Boris Ivanovic, Karen Leung, Edward Schmerling, and Marco Pavone.
\newblock Multimodal deep generative models for trajectory prediction: A
  conditional variational autoencoder approach.
\newblock \emph{IEEE Robotics and Automation Letters}, 6\penalty0 (2):\penalty0
  295--302, 2021.

\bibitem[Janner et~al.(2022)Janner, Du, Tenenbaum, and
  Levine]{janner2022planning}
Michael Janner, Yilun Du, Joshua Tenenbaum, and Sergey Levine.
\newblock Planning with diffusion for flexible behavior synthesis.
\newblock In \emph{International Conference on Machine Learning}, volume 162,
  pages 9902--9915. PMLR, 2022.

\bibitem[Jin et~al.(2020)Jin, Netrapalli, and Jordan]{jin2020local}
Chi Jin, Praneeth Netrapalli, and Michael Jordan.
\newblock What is local optimality in nonconvex-nonconcave minimax
  optimization?
\newblock In \emph{International Conference on Machine Learning}, volume 119,
  pages 4880--4889. PMLR, 2020.

\bibitem[Jozefowicz et~al.(2016)Jozefowicz, Vinyals, Schuster, Shazeer, and
  Wu]{jozefowicz2016exploring}
Rafal Jozefowicz, Oriol Vinyals, Mike Schuster, Noam Shazeer, and Yonghui Wu.
\newblock Exploring the limits of language modeling.
\newblock \emph{arXiv preprint arXiv:1602.02410}, 2016.

\bibitem[Karras et~al.(2022)Karras, Aittala, Aila, and Laine]{karras2022_edm}
Tero Karras, Miika Aittala, Timo Aila, and Samuli Laine.
\newblock Elucidating the design space of diffusion-based generative models.
\newblock In \emph{Advances in Neural Information Processing Systems},
  volume~35, 2022.

\bibitem[Kidger(2021)]{kidger2021on}
Patrick Kidger.
\newblock \emph{{O}n {N}eural {D}ifferential {E}quations}.
\newblock PhD thesis, University of Oxford, 2021.

\bibitem[Kingma and Ba(2015)]{kingma2015_adam}
Diederik~P. Kingma and Jimmy~Lei Ba.
\newblock Adam: A method for stochastic optimization.
\newblock In \emph{International Conference on Learning Representations}, 2015.

\bibitem[Kong et~al.(2021)Kong, Ping, Huang, Zhao, and
  Catanzaro]{kong2021_diffwave}
Zhifeng Kong, Wei Ping, Jiaji Huang, Kexin Zhao, and Bryan Catanzaro.
\newblock Diffwave: A versatile diffusion model for audio synthesis.
\newblock In \emph{International Conference on Learning Representations}, 2021.

\bibitem[Kumar et~al.(2019)Kumar, Ozair, Goyal, Courville, and
  Bengio]{kumar2019maximum}
Rithesh Kumar, Sherjil Ozair, Anirudh Goyal, Aaron Courville, and Yoshua
  Bengio.
\newblock Maximum entropy generators for energy-based models.
\newblock \emph{arXiv preprint arXiv:1901.08508}, 2019.

\bibitem[Le-Khac et~al.(2020)Le-Khac, Healy, and Smeaton]{le2020contrastive}
Phuc~H. Le-Khac, Graham Healy, and Alan~F. Smeaton.
\newblock Contrastive representation learning: A framework and review.
\newblock \emph{IEEE Access}, 8:\penalty0 193907--193934, 2020.

\bibitem[Lecun et~al.(2006)Lecun, Chopra, Hadsell, Ranzato, and
  Huang]{lecun2006tutorial}
Yann Lecun, Sumit Chopra, Raia Hadsell, {Marc Aurelio} Ranzato, and {Fu Jie}
  Huang.
\newblock \emph{A tutorial on energy-based learning}.
\newblock MIT Press, 2006.

\bibitem[Lee et~al.(2023)Lee, Pabbaraju, Sevekari, and
  Risteski]{lee2023_pitfalls}
Holden Lee, Chirag Pabbaraju, Anish~Prasad Sevekari, and Andrej Risteski.
\newblock Pitfalls of gaussians as a noise distribution in {NCE}.
\newblock In \emph{International Conference on Learning Representations}, 2023.

\bibitem[Levine(2018)]{levine2018reinforcement}
Sergey Levine.
\newblock Reinforcement learning and control as probabilistic inference:
  Tutorial and review.
\newblock \emph{arXiv preprint arXiv:1805.00909}, 2018.

\bibitem[Levine et~al.(2016)Levine, Finn, Darrell, and
  Abbeel]{levine2016_visuomotor}
Sergey Levine, Chelsea Finn, Trevor Darrell, and Pieter Abbeel.
\newblock End-to-end training of deep visuomotor policies.
\newblock \emph{Journal of Machine Learning Research}, 17\penalty0
  (39):\penalty0 1--40, 2016.

\bibitem[Lipman et~al.(2023)Lipman, Chen, Ben-Hamu, Nickel, and
  Le]{lipman2023_flow}
Yaron Lipman, Ricky T.~Q. Chen, Heli Ben-Hamu, Maximilian Nickel, and Matthew
  Le.
\newblock Flow matching for generative modeling.
\newblock In \emph{International Conference on Learning Representations}, 2023.

\bibitem[Liu et~al.(2022)Liu, Rosenfeld, Ravikumar, and
  Risteski]{liu2022_analyzing}
Bingbin Liu, Elan Rosenfeld, Pradeep~Kumar Ravikumar, and Andrej Risteski.
\newblock Analyzing and improving the optimization landscape of
  noise-contrastive estimation.
\newblock In \emph{International Conference on Learning Representations}, 2022.

\bibitem[Liu et~al.(2021)Liu, He, Xu, and Zhang]{liu_ebil_2021}
Minghuan Liu, Tairan He, Minkai Xu, and Weinan Zhang.
\newblock Energy-based imitation learning.
\newblock In \emph{Proceedings of the 20th International Conference on
  Autonomous Agents and MultiAgent Systems}, pages 809--817, 2021.

\bibitem[Liu et~al.(2018)Liu, Lehman, Molino, Petroski~Such, Frank, Sergeev,
  and Yosinski]{liu2018_coordconv}
Rosanne Liu, Joel Lehman, Piero Molino, Felipe Petroski~Such, Eric Frank, Alex
  Sergeev, and Jason Yosinski.
\newblock An intriguing failing of convolutional neural networks and the
  coordconv solution.
\newblock In \emph{Advances in Neural Information Processing Systems},
  volume~31, 2018.

\bibitem[Liu et~al.(2023)Liu, Gong, and Liu]{liu2023_flow}
Xingchao Liu, Chengyue Gong, and Qiang Liu.
\newblock Flow straight and fast: Learning to generate and transfer data with
  rectified flow.
\newblock In \emph{International Conference on Learning Representations}, 2023.

\bibitem[Ma and Collins(2018)]{ma2018_ranking_nce}
Zhuang Ma and Michael Collins.
\newblock Noise contrastive estimation and negative sampling for conditional
  models: Consistency and statistical efficiency.
\newblock In \emph{Proceedings of the 2018 Conference on Empirical Methods in
  Natural Language Processing}, pages 3698--3707, 2018.

\bibitem[Meyer et~al.(2021)Meyer, Musco, Musco, and
  Woodruff]{meyer2021_hutch_opt}
Raphael~A. Meyer, Cameron Musco, Christopher Musco, and David~P. Woodruff.
\newblock Hutch++: Optimal stochastic trace estimation.
\newblock In \emph{SIAM Symposium on Simplicity in Algorithms}, pages 142--155,
  2021.

\bibitem[Mukadam et~al.(2018)Mukadam, Dong, Yan, Dellaert, and
  Boots]{mukadam2018_gp}
Mustafa Mukadam, Jing Dong, Xinyan Yan, Frank Dellaert, and Byron Boots.
\newblock Continuous-time gaussian process motion planning via probabilistic
  inference.
\newblock \emph{The International Journal of Robotics Research}, 37\penalty0
  (11):\penalty0 1319--1340, 2018.

\bibitem[Nijkamp et~al.(2022)Nijkamp, Gao, Sountsov, Vasudevan, Pang, Zhu, and
  Wu]{nijkamp2022mcmc}
Erik Nijkamp, Ruiqi Gao, Pavel Sountsov, Srinivas Vasudevan, Bo~Pang, Song-Chun
  Zhu, and Ying~Nian Wu.
\newblock {MCMC} should mix: learning energy-based model with neural transport
  latent space {MCMC}.
\newblock In \emph{International Conference on Learning Representations}, 2022.

\bibitem[Pearce et~al.(2023)Pearce, Rashid, Kanervisto, Bignell, Sun,
  Georgescu, Macua, Tan, Momennejad, Hofmann, and Devlin]{pearce2023imitating}
Tim Pearce, Tabish Rashid, Anssi Kanervisto, Dave Bignell, Mingfei Sun, Raluca
  Georgescu, Sergio~Valcarcel Macua, Shan~Zheng Tan, Ida Momennejad, Katja
  Hofmann, and Sam Devlin.
\newblock Imitating human behaviour with diffusion models.
\newblock In \emph{International Conference on Learning Representations}, 2023.

\bibitem[Perez et~al.(2018)Perez, Strub, de~Vries, Dumoulin, and
  Courville]{perez2018film}
Ethan Perez, Florian Strub, Harm de~Vries, Vincent Dumoulin, and Aaron
  Courville.
\newblock Film: Visual reasoning with a general conditioning layer.
\newblock In \emph{AAAI Conference on Artificial Intelligence}, 2018.

\bibitem[Ramachandran et~al.(2017)Ramachandran, Zoph, and
  Le]{ramachandran2017_swish}
Prajit Ramachandran, Barret Zoph, and Quoc~V. Le.
\newblock Searching for activation functions.
\newblock \emph{arXiv preprint arXiv:1710.05941}, 2017.

\bibitem[Reuss et~al.(2023)Reuss, Li, Jia, and Lioutikov]{reuss2023goal}
Moritz Reuss, Maximilian Li, Xiaogang Jia, and Rudolf Lioutikov.
\newblock Goal-conditioned imitation learning using score-based diffusion
  policies.
\newblock \emph{arXiv preprint arXiv:2304.02532}, 2023.

\bibitem[Rudin(2008)]{rudin2008function}
Walter Rudin.
\newblock \emph{Function theory in the unit ball of $\mathbb{C}^n$}.
\newblock Springer Science \& Business Media, 2008.

\bibitem[Saharia et~al.(2023)Saharia, Ho, Chan, Salimans, Fleet, and
  Norouzi]{saharia2023_superres}
Chitwan Saharia, Jonathan Ho, William Chan, Tim Salimans, David~J. Fleet, and
  Mohammad Norouzi.
\newblock Image super-resolution via iterative refinement.
\newblock \emph{IEEE Transactions on Pattern Analysis and Machine
  Intelligence}, 45\penalty0 (4):\penalty0 4713--4726, 2023.

\bibitem[Salimans and Ho(2021)]{salimans2021_energy}
Tim Salimans and Jonathan Ho.
\newblock Should {EBM}s model the energy or the score?
\newblock In \emph{International Conference on Learning Representations (Energy
  Based Models Workshop)}, 2021.

\bibitem[Sitzmann et~al.(2020)Sitzmann, Martel, Bergman, Lindell, and
  Wetzstein]{sitzmann2020_periodic}
Vincent Sitzmann, Julien Martel, Alexander Bergman, David Lindell, and Gordon
  Wetzstein.
\newblock Implicit neural representations with periodic activation functions.
\newblock In \emph{Advances in Neural Information Processing Systems},
  volume~33, 2020.

\bibitem[Sohl-Dickstein et~al.(2015)Sohl-Dickstein, Weiss, Maheswaranathan, and
  Ganguli]{sohl2015deep}
Jascha Sohl-Dickstein, Eric Weiss, Niru Maheswaranathan, and Surya Ganguli.
\newblock Deep unsupervised learning using nonequilibrium thermodynamics.
\newblock In \emph{International Conference on Machine Learning}, volume~35,
  pages 2256--2265. PMLR, 2015.

\bibitem[Song et~al.(2020)Song, Meng, and Ermon]{song2020denoising}
Jiaming Song, Chenlin Meng, and Stefano Ermon.
\newblock Denoising diffusion implicit models.
\newblock In \emph{International Conference on Learning Representations}, 2020.

\bibitem[Song and Ermon(2019)]{song2019generative}
Yang Song and Stefano Ermon.
\newblock Generative modeling by estimating gradients of the data distribution.
\newblock In \emph{Advances in Neural Information Processing Systems},
  volume~32, 2019.

\bibitem[Song and Ermon(2020)]{song2020_improved}
Yang Song and Stefano Ermon.
\newblock Improved techniques for training score-based generative models.
\newblock In \emph{Advances in Neural Information Processing Systems},
  volume~33, 2020.

\bibitem[Song and Kingma(2021)]{song_howtotrain_2021}
Yang Song and Diederik~P. Kingma.
\newblock How to train your energy-based models.
\newblock \emph{arXiv preprint arXiv:2101.03288}, 2021.

\bibitem[Song et~al.(2021)Song, Sohl{-}Dickstein, Kingma, Kumar, Ermon, and
  Poole]{song2021_diffusion}
Yang Song, Jascha Sohl{-}Dickstein, Diederik~P. Kingma, Abhishek Kumar, Stefano
  Ermon, and Ben Poole.
\newblock Score-based generative modeling through stochastic differential
  equations.
\newblock In \emph{International Conference on Learning Representations}, 2021.

\bibitem[Song et~al.(2022)Song, Shen, Xing, and Ermon]{song2022_solving}
Yang Song, Liyue Shen, Lei Xing, and Stefano Ermon.
\newblock Solving inverse problems in medical imaging with score-based
  generative models.
\newblock In \emph{International Conference on Learning Representations}, 2022.

\bibitem[Ta et~al.(2022)Ta, Cousineau, Zhao, and Feng]{ta2022_ebm}
Duy-Nguyen Ta, Eric Cousineau, Huihua Zhao, and Siyuan Feng.
\newblock Conditional energy-based models for implicit policies: The gap
  between theory and practice.
\newblock \emph{arXiv preprint arXiv:2207.05824}, 2022.

\bibitem[Tieleman(2008)]{tieleman2008training}
Tijmen Tieleman.
\newblock Training restricted boltzmann machines using approximations to the
  likelihood gradient.
\newblock In \emph{International Conference on Machine Learning}, 2008.

\bibitem[Urain et~al.(2022{\natexlab{a}})Urain, Funk, Chalvatzaki, and
  Peters]{urain2022se}
Julen Urain, Niklas Funk, Georgia Chalvatzaki, and Jan Peters.
\newblock {SE}(3)-{DiffusionFields}: Learning cost functions for joint grasp
  and motion optimization through diffusion.
\newblock \emph{arXiv preprint arXiv:2209.03855}, 2022{\natexlab{a}}.

\bibitem[Urain et~al.(2022{\natexlab{b}})Urain, Le, Lambert, Chalvatzaki,
  Boots, and Peters]{urain2022_implicit_priors}
Julen Urain, An~T. Le, Alexander Lambert, Georgia Chalvatzaki, Byron Boots, and
  Jan Peters.
\newblock Learning implicit priors for motion optimization.
\newblock In \emph{IEEE/RSJ International Conference on Intelligent Robots and
  Systems}, pages 7672--7679, 2022{\natexlab{b}}.

\bibitem[van~den Oord et~al.(2018)van~den Oord, Li, and
  Vinyals]{vandenOord2018_cpc}
Aaron van~den Oord, Yazhe Li, and Oriol Vinyals.
\newblock Representation learning with contrastive predictive coding.
\newblock \emph{arXiv preprint arXiv:1807.03748}, 2018.

\bibitem[Vaswani et~al.(2017)Vaswani, Shazeer, Parmar, Uszkoreit, Jones, Gomez,
  Kaiser, and Polosukhin]{vaswani2017_attention}
Ashish Vaswani, Noam Shazeer, Niki Parmar, Jakob Uszkoreit, Llion Jones,
  Aidan~N. Gomez, {\L}ukasz Kaiser, and Illia Polosukhin.
\newblock Attention is all you need.
\newblock In \emph{Advances in Neural Information Processing Systems},
  volume~30, 2017.

\bibitem[Xiao et~al.(2021)Xiao, Kreis, Kautz, and Vahdat]{xiao2021vaebm}
Zhisheng Xiao, Karsten Kreis, Jan Kautz, and Arash Vahdat.
\newblock {VAEBM}: A symbiosis between variational autoencoders and
  energy-based models.
\newblock In \emph{International Conference on Learning Representations}, 2021.

\bibitem[Xiong et~al.(2020)Xiong, Yang, He, Zheng, Zheng, Xing, Zhang, Lan,
  Wang, and Liu]{xiong2020_preLN}
Ruibin Xiong, Yunchang Yang, Di~He, Kai Zheng, Shuxin Zheng, Chen Xing,
  Huishuai Zhang, Yanyan Lan, Liwei Wang, and Tieyan Liu.
\newblock On layer normalization in the transformer architecture.
\newblock In \emph{International Conference on Machine Learning}, volume 119,
  pages 10524--10533. PMLR, 2020.

\bibitem[Yang et~al.(2022)Yang, Zhang, Song, Hong, Xu, Zhao, Zhang, Cui, and
  Yang]{yang2022_diffusionsurvey}
Ling Yang, Zhilong Zhang, Yang Song, Shenda Hong, Runsheng Xu, Yue Zhao, Wentao
  Zhang, Bin Cui, and Ming-Hsuan Yang.
\newblock Diffusion models: A comprehensive survey of methods and applications.
\newblock \emph{arXiv preprint arXiv:2209.00796}, 2022.

\bibitem[Yaz{\i}c{\i} et~al.(2019)Yaz{\i}c{\i}, Foo, Winkler, Yap, Piliouras,
  and Chandrasekhar]{yazici2019_gan}
Yasin Yaz{\i}c{\i}, Chuan-Sheng Foo, Stefan Winkler, Kim-Hui Yap, Georgios
  Piliouras, and Vijay Chandrasekhar.
\newblock The unusual effectiveness of averaging in {GAN} training.
\newblock In \emph{International Conference on Learning Representations}, 2019.

\bibitem[Zhao et~al.(2023)Zhao, Kumar, Levine, and Finn]{zhao2023_aloha}
Tony~Z. Zhao, Vikash Kumar, Sergey Levine, and Chelsea Finn.
\newblock Learning fine-grained bimanual manipulation with low-cost hardware.
\newblock In \emph{Proceedings of Robotics: Science and Systems}, 2023.

\end{thebibliography}

}

\appendix

\newpage

\section{Reference Two Time-Scale Implementation of Joint Sample and Log-Probability Computation}
\label{sec:appendix:log_prob_computation}

\Cref{fig:log_prob_computation} provides a simple reference implementation of the two time-scale approach
to jointly sampling and computing log-probability described in \Cref{sec:log_prob_computation}.

\begin{figure}[H]
\begin{lstlisting}
from typing import Callable

import jax
import jax.numpy as jnp
import jax.scipy as jsp
import diffrax as dfx

Array, PRNGKey = jax.Array, jax.random.PRNGKey
Time, Context, Event = float, Array, Array

VectorField = Callable[[Time, Event, Context], Event]

EVENT_DIM = ...

def latent_sample_and_log_prob(key: PRNGKey) -> tuple[Event, float]:
    # Isotropic Gaussian base distribution for simplicity.
    z0 = jax.random.normal(key, shape=(EVENT_DIM,))
    return z0, jnp.sum(jsp.stats.norm.logpdf(z0))

def sample_and_log_prob(
    vf: VectorField, key: PRNGKey, ctx: Context, ode_ts: Array, lp_ts: Array
) -> tuple[Event, float]:
    def d_dt_psi(t, z, x):
        fn = lambda z: vf(t, z, x)
        _, vjp_fn = jax.vjp(fn, z)
        (dfdz,) = jax.vmap(vjp_fn)(jnp.eye(EVENT_DIM))
        return -jnp.trace(dfdz)
    traj_subsaveat = dfx.SubSaveAt(ts=lp_ts, fn=d_dt_psi)
        
    z0, psi0 = latent_sample_and_log_prob(key)
    (z,), d_dt_psis = dfx.diffeqsolve(
        dfx.ODETerm(vf), solver=dfx.Heun(), t0=ode_ts[0], t1=ode_ts[-1],
        dt0=None, y0=z0, args=ctx, stepsize_controller=dfx.StepTo(ode_ts),
        saveat=dfx.SaveAt(subs=[dfx.SubSaveAt(t1=True), traj_subsaveat]))
    return z, psi0 + jnp.trapz(d_dt_psis, lp_ts)
\end{lstlisting}
\caption{A reference implementation of the
two time-scale approach
described in \Cref{sec:log_prob_computation}
for jointly computing samples
and log-probabilities from a continuous normalizing flow.}
\label{fig:log_prob_computation}
\end{figure}

\section{Experimental Details}

\subsection{Recovering the Energy Model in Diffusion-$\phi$}
\label{sec:exp_details:diffusion_scalar}

The score function $\nabla \log p(y \mid x, t)$ in the Diffusion-EDM parameterization~\cite{karras2022_edm}
is not represented directly by a neural network, but rather indirectly through a sequence of affine 
transformations. In order to recover (up to a constant) the corresponding function
$\phi(x, y, t) = \log p(y \mid x, t)$, we need to integrate the score across the affine transformations.
Specifically, letting $f(x, y, t)$ denote the raw vector-valued network, Diffusion-EBM parameterizes the score as~\citep[Eq.~3 and Eq.~7]{karras2022_edm}:
\begin{align*}
    \nabla \log p(y \mid x, t) &= (d(x, y, \sigma(t)) - y)/\sigma(t)^2, \\
    d(x, y, t) &= c_{\mathrm{skip}}(t) y + c_{\mathrm{out}}(t) f(c_{\mathrm{in}}(t) y, c_{\mathrm{noise}}(t)). 
\end{align*}
Now, if the network $f(x, y, t)$ has special gradient structure $f(x, y, t) = \nabla_y \phi(x, y, t)$, then a
straightforward computation yields:
\begin{align}
    \log p(y \mid x, t) &= r(x, y, t) + \mathrm{const}(x, t), \nonumber \\
    r(x, y, t) &= \frac{1}{2t^2}( c_{\mathrm{skip}}(t) - 1) \norm{y}^2 + \frac{c_{\mathrm{out}}(t)}{t^2 c_{\mathrm{in}}(t)} \phi(c_{\mathrm{in}}(t) y, c_{\mathrm{noise}}(t)). \label{eq:diffusion_scalar_relative_likelihood}
\end{align}
where $\mathrm{const}(x, t)$ is a function that is unknown, but only depends on $x$ and $t$.
Hence, for a fixed $x \in \sfX$ and events $\{y_i\} \subset \sfY$, we can compute the \emph{relative}
likelihoods $\{ r(x, y_t, \varepsilon) \}$, where $\varepsilon$ is a value close to zero (but not equal to zero as 
\eqref{eq:diffusion_scalar_relative_likelihood} becomes degenerate at $t=0$).
These relative likelihood values can be used to compare the likelihood of samples for a given context,
and can be hence used to perform fast action selection.

\subsection{Synthetic Two-Dimensional Examples}
\label{sec:exp_details:synthetic_2d}

For each problem (i.e., pinwheel and spiral), 
the training data consists of $50{,}000$ samples from the joint distribution $\sfP_{X,Y}$.

\paragraph{Architectures and EBM Sampling.}
For all energy models (IBC, Diffusion-$\phi$, and R-NCE-EBM, as we do not use the interpolating variant of R-NCE for this benchmark), 
we model the energy function $\en(x, y)$ by first concatenating $(x, y)$ and then passing through $8$
dense layers with residual connections.
Similarly for the NF vector field and Diffusion denoiser, 
which is also a vector-valued map $v_t(x, y)$, we first embed the time argument through $2$ fully connected MLP layers of width $10$, and then pass the concatenated arguments $(x, y, \mathrm{MLP}(t))$ into 
$8$ dense layers of width $64$ with
residual connections.
Finally for parameter efficiency reasons, the R-NCE-NF vector field first concatenates $z = (x, y)$ and then
passes the inputs $(z, t)$ into $2$ ConcatSquash\footnote{\url{https://docs.kidger.site/diffrax/examples/continuous_normalising_flow/}} layers. Both here and throughout our experiments, 
we utilize the \texttt{swish} activation function~\cite{ramachandran2017_swish}. EBM sampling (i.e., for IBC and R-NCE) is done using~\Cref{alg:rnce_sample} with Langevin MCMC.
\Cref{tbl:synthetic_2d_ebm_hps} and \Cref{tbl:synthetic_2d_diffusion_hps} contains the hyperparameters we search over.

\begin{table}[htp!]
  \centering
  \footnotesize
  \begin{tabular}
      {c|ccccccc} \hline {Model} & {Width} & {LR} & {WD} & {\bf $T_{\mathrm{mcmc}}$} & {$\eta$} & $K$ \\
      \hline \hline IBC & $\{64, 128\}$ & $\{10^{-3}, 5 \cdot 10^{-4}\}$ & $\{0, 10^{-4}\}$ & $\{250, 500, 750, 1000\}$ & $\{10^{-3}, 5 \cdot 10^{-4}\}$ & $\{9, 63, 127, 255\}$ \\
      \hline R-NCE-EBM & $64$ & $\{10^{-3}, 5 \cdot 10^{-4}\}$ & $\{0, 10^{-4}\}$ & \multirow{2}*{$\{500, 1000\}$} & \multirow{2}*{$\{10^{-3}, 5 \cdot 10^{-4}\}$} & \multirow{2}*{9} \\
      R-NCE-NF & $\{32,64\}$ & $\{10^{-3}, 5 \cdot 10^{-4}\}$ & $\{0, 10^{-4}\}$ & & & \\
      \hline
  \end{tabular}
  \caption{List of hyperparameters for the EBMs. 
  LR denotes learning rate, WD denotes weight decay, $T_{\mathrm{mcmc}}$ refers to the number
  of Langevin steps taken, $\eta$ refers to the Langevin step size, and $K$ denotes the number of negative examples. Note that $K$ is odd here since it was more computationally efficient to sample $K+1$ negative samples and replace one with the positive sample.
  Note that all models considered have at most ${\sim}11$k NF+EBM parameters in total.
  }
  \label{tbl:synthetic_2d_ebm_hps}
\end{table}

\paragraph{Training.}
We use a constant learning rate for all models, a batch size of $128$ examples, and train the models for $20{,}000$ total batch steps. For R-NCE, $20{,}000$ R-NCE batch steps are taken, but the negative sampler varies between
$20{,}000$ or $50{,}000$ interpolant loss batch steps depending on the specific hyperparameter setting. In particular, we tune both the number of  negative sampler updates ($T_{\mathrm{samp}}$ in \Cref{alg:rnce}, \Cref{alg:negative_start}) in addition to the R-NCE updates ($T_{\mathrm{rnce}}$ in \Cref{alg:rnce}, \Cref{alg:rnce_start}) in the main loop. Specifically, recalling that $T_{\mathrm{outer}}$ denotes the total number of outer loop iterations (\Cref{alg:rnce}, \Cref{alg:main_loop_start}), we search over the grid:
\begin{align*}
    \{ (T_{\mathrm{outer}}{=}4000, T_{\mathrm{samp}}{=}5, T_{\mathrm{rnce}}{=}5), (T_{\mathrm{outer}}{=}10000, T_{\mathrm{samp}}{=}5, T_{\mathrm{rnce}}{=}2) \}.
\end{align*}

\begin{table}[htp]
  \centering
  \footnotesize
  \begin{tabular}
      {c|cccccccc} \hline Model & Width & {LR} & {WD} & $\sigma_{\mathrm{data}}$ & $\sigma_{\max}$ & $\sigma_{\min}$ & $\rho$ & $\alpha$ \\
      \hline 
      \hline Diffusion & \multirow{3}*{$\{64,128\}$} & \multirow{3}*{$\{10^{-3}, 5\cdot 10^{-4}\}$} & \multirow{3}*{$\{0, 10^{-4}\}$} & \multirow{2}*{$\{0.5, 1\}$} & \multirow{2}*{$\{60, 80\}$} & \multirow{2}*{$0.002$} & \multirow{2}*{$\{7, 9\}$} & \multirow{2}*{-} \\
      Diffusion-$\phi$ &&&&&&&&  \\ 
      NF & && & - & - & - & - & $\{2, 3, 4\}$ \\
      \hline 
  \end{tabular}
  \caption{List of hyperparameters for diffusion and NF models. The parameter $\alpha$ is described in \Cref{sec:experiments:models},
  and the remaining parameters $\sigma_{\mathrm{data}}$, $\sigma_{\max}$, $\sigma_{\min}$, and $\rho$ 
  refer to various {Diffusion{-}EDM} parameters from \cite[Table 1]{karras2022_edm}.
  All models considered have at most ${\sim}22$k parameters.
  For sampling, all models use $768$ Heun steps ($1{,}024$ total function evaluations).
  }
  \label{tbl:synthetic_2d_diffusion_hps}
\end{table}

\subsection{Path Planning}
\label{sec:exp_details:path_planning}

\paragraph{Architectures and EBM Sampling.}

As discussed in \Cref{sec:experiments:path_planning}, we train architectures based on encoder-only transformers.
Here, we give a more detailed description. The inputs to our energy models and vector fields are
interpolation/noise-scale time\footnote{For the IBC-EBM and R-NCE-EBM models, $t$ is fixed to a constant value of $1$ as we do not use the I-EBM formulation here.} $t \in \R$, obstacles ${\bf o} = \R^{10 \times 3}$, a goal location $g \in \R^2$,
previous positions $\bm{\tau}_- = (y[i-N_{\mathrm{ctx}} + 1], \dots, y[i]) \in \R^{N_{\mathrm{ctx}} \times 2}$, and
future positions $\bm{\tau}_+ = (y[i+1], \dots, y[i+N_{\mathrm{pred}}]) \in \R^{N_{\mathrm{pred}} \times 2}$.
We fix an embedding dimension $d_{\mathrm{emb}}$, and denote the combined past and future
snippets as $\bm{\tau} = (\bm{\tau}_-, \bm{\tau}_+)$.
The architecture is depicted in \Cref{fig:self_attn_arch}, where
$\dout$ is equal to the dimensionality of the event space ($2N_{\mathrm{pred}}$ for path planning) for vector fields and is equal to 
one for energy models.

\DeclareRobustCommand{\swish}{\includegraphics[height = 2.0ex]{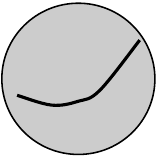}}
\DeclareRobustCommand{\posemb}{\includegraphics[height = 2.0ex]{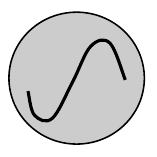}}

\begin{figure}[h!]
\centering
\includegraphics[scale=0.7]{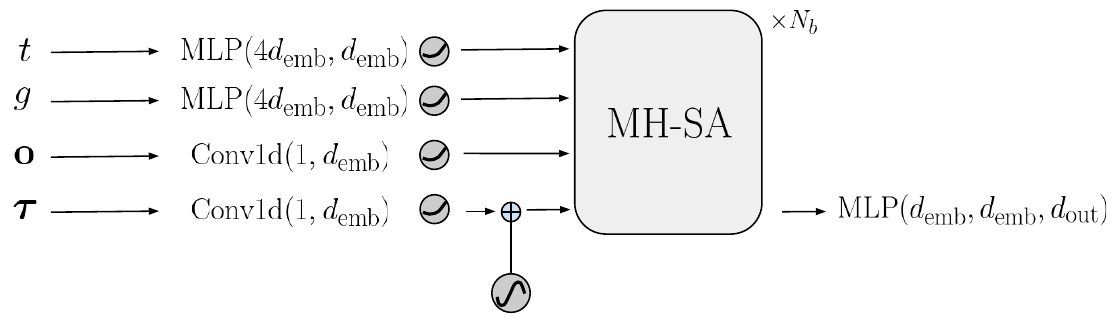}
\caption{The encoder-only transformer architecture we utilize for path planning models. Here, \swish$\ $refers to the 
\texttt{swish} activation function (which
is also the activation function used 
inside the MLP blocks), \posemb$\ $is the standard sinusoidal positional embedding, and the
first argument to Conv1d denotes the filter size. MH-SA denotes a standard Multi-Head Self-Attention encoder block~\cite{vaswani2017_attention}; we use $4$ self-attention heads in each block. The $\bm{\tau}$ tokens after the terminal block are flattened and passed through the output MLP.}
\label{fig:self_attn_arch}
\end{figure}

In addition, we also consider architectures which utilize a convolutional ``prior''
which operates only on the trajectory $\bm{\tau}$ tokens. Namely, 
after $\bm{\tau}$ is embedded into $\R^{\dembed}$ and
positionally encoded,
we pass the resulting trajectory tokens through
the following two blocks: 
\begin{subequations}\label{eqs:conv_prior}
\begin{align}
    \mathrm{Conv1d}(3, \dembed) &\rightarrow \mathrm{GroupNorm}(8) \rightarrow \sigma \rightarrow \mathrm{Dense}(\dembed) \rightarrow \sigma && \text{(First block)} \\
    \mathrm{Conv1d}(3, \dembed) &\rightarrow \mathrm{GroupNorm}(8) \rightarrow \sigma \rightarrow \mathrm{Dense}(\dout) \rightarrow \sigma && \text{(Second block)}. 
\end{align}
\end{subequations}
The output of the second block is then summed along the temporal axis, to produce the output of the prior. This is then added to the output of the final MLP, shown in Figure~\ref{fig:self_attn_arch}.

Furthermore, we remark that for computational and parameter efficiency, both IBC-NF and R-NCE-NF do not use the
transformer encoder architecture described in \Cref{fig:self_attn_arch}. Instead, all obstacle, goal, and trajectory inputs
are concatenated together and fed into several ConcatSquash layers. While this produces a proposal distribution which is nowhere near optimal,
it turns out to be a sufficiently powerful negative sampler for this problem. Sampling during inference from the EBM models is done using~\Cref{alg:rnce_sample} with Langevin sampling for the MCMC stage.

\begin{table}[ht]
  \centering
  \footnotesize
  \begin{tabular}
      {c|ccccccc} \hline {Model} & {Width} & $\dembed$ & $N_{\mathrm{b}}$ & {LR} & {WD} & {\bf $T_{\mathrm{mcmc}}$} & {$\eta$} \\
      \hline \hline IBC-EBM & - & $64$ & $\{3,4,5\}$ & $5 \cdot 10^{-4}$ & \multirow{2}*{$\{0, 10^{-4}\}$} & \multirow{2}*{$\{500, 750\}$} & \multirow{2}*{$10^{-4}$} \\
       IBC-NF & $\{64, 128, 256\}$ & - & - & $10^{-3}$ & & & \\
       \hline R-NCE-EBM & - & $64$ & $\{3,4,5\}$ & $5 \cdot 10^{-4}$ & \multirow{2}*{$\{0, 10^{-4}\}$} & \multirow{2}*{$\{500, 750\}$} & \multirow{2}*{$10^{-4}$} \\
       R-NCE-NF & $\{64, 128, 256\}$ & - & - & $10^{-3}$ & & & \\
  \end{tabular}
  \caption{List of hyperparameters for the IBC and R-NCE path planning models.
  Width denotes the width of the ConcatSquash layers,
  $\dembed$ and $N_{\mathrm{b}}$ are described in 
  \Cref{fig:self_attn_arch}, $T_{\mathrm{mcmc}}$ refers to the number
  of Langevin steps taken, and $\eta$ refers to the Langevin step size. The number of ConcatSquash layers for IBC-NF and R-NCE-NF is set equal to the number of self-attention blocks $N_{\mathrm{b}}$ within the EBM; the constant $\alpha$ for the proposal NFs was fixed at 2. Note that in our hyperparameter sweeps, we exclude combinations of Width and $N_{\mathrm{b}}$ which yield total (NF $+$ EBM) parameter accounts exceeding ${\sim}500$k.}\label{tbl:path_planning_ebm_hps}
\end{table}

\paragraph{Training.} 
We use a train batch size of $128$.
For both IBC and R-NCE,
we use $T_{\mathrm{outer}} = 250{,}000$ main loop steps;
each loop consists of either $T_{\mathrm{samp}} \in \{2, 4\}$ negative sampler batch steps, 
and $2$ IBC/R-NCE updates steps.
The IBC-NF/R-NCE-NF optimizers 
use a cosine decay learning rate without warmup,
whereas the IBC-EBM/R-NCE-EBM optimizers
use a warmup cosine decay rate with $5{,}000$ warmup steps. For both IBC and R-NCE, we use a fixed $K=63$ negative samples.
More hyperparameters are listed in \Cref{tbl:path_planning_ebm_hps}.
On the other hand, for Diffusion, Diffusion-$\phi$, and NF, 
we train our models for $500{,}000$ batch steps
and use a 
warmup cosine decay learning rate schedule,\footnote{\url{https://optax.readthedocs.io/en/latest/api.html\#optax.warmup_cosine_decay_schedule}} with $5{,}000$ warmup steps.
The remaining hyperparameters are listed in \Cref{tbl:path_planning_diffusion_nf_hps}.

\begin{table}[ht]
  \centering
  \footnotesize
  \begin{tabular}
      {c|ccccccccc} \hline Model & $\dembed$ & $N_{\mathrm{b}}$ & {LR} & {WD} & $\sigma_{\mathrm{data}}$ & $\sigma_{\max}$ & $\sigma_{\min}$ & $\sigma_{\mathrm{rel}}$ & $\rho$  \\
      \hline 
      \hline Diffusion & \multirow{2}*{$\{64, 80\}$} & \multirow{2}*{$\{3,4,5\}$} & \multirow{2}*{$\{10^{-3}, .0005\}$} & \multirow{2}*{$\{0, 10^{-4}\}$} & \multirow{2}*{$\{.5, 1\}$} & \multirow{2}*{$\{60, 80\}$} & \multirow{2}*{$.002$} & - &  \multirow{2}*{$\{7, 9\}$}  \\
      Diffusion-$\phi$ &&&&&&&& $\{.01, .02, .03\}$ &  \\ 
      \hline \hline 
      Model & $\dembed$ & $N_{\mathrm{b}}$ & LR & WD & $\alpha$ &  &  &  & \\
      \hline 
      \hline NF & $\{64, 80\}$ & $\{3,4,5\}$ & $\{10^{-3}, .0005\}$ & $\{0, 10^{-4}\}$ & $\{2, 3\}$ & & & &  \\
      \hline 
  \end{tabular}
  \caption{List of hyperparameters for diffusion and NF path planning models. Note that in our hyperparameter sweeps, we exclude combinations of $\dembed$ and $N_{\mathrm{b}}$ which yield 
  parameter accounts exceeding ${\sim}500$k.
  For sampling, all models use $375$ Heun steps ($750$ total function evaluations),
  and $64$ log-probability ODE steps (cf.~\Cref{sec:log_prob_computation}).
  }
  \label{tbl:path_planning_diffusion_nf_hps}
\end{table}

\subsection{Push-T}
\label{sec:exp_details:push_t}

\paragraph{Architectures and EBM Sampling.}
For all models, we use a similar encoder-only transformer architecture as for path planning. The inputs to the energy models and vector fields consist of: interpolation/noise-scale time $t \in \R$,\footnote{Note that interpolation time $t\in [0, 1]$ is also an input to the EBM as we use the I-R-NCE formulation.} past image observations ${\bf o} \in \R^{N_{\mathrm{ctx}} \times 96 \times 96 \times 3}$, past agent positions $\bm{\tau}_{-} = (y[i-N_{\mathrm{ctx}}+1], \ldots, y[i]) \in \R^{N_{\mathrm{ctx}}\times 2}$, and future positions $\bm{\tau}_{+} = (y[i+1], \dots, y[i+N_{\mathrm{pred}}]) \in \R^{N_{\mathrm{pred}} \times 2}$.

The image observations are first featurized into a sequence of embeddings $\R^{N_{\mathrm{ctx}} \times d_o}$ via a convolutional ResNet, with coordinates appended to the channel dimension of the images~\cite{liu2018_coordconv}, and spatial softmax layers at the end~\cite{levine2016_visuomotor}. We use a 4-layer ResNet, with width $d_{\mathrm{res}}$, and 4 spatial-softmax output filters, giving $d_o = 8$. The rest of the architecture resembles the self-attention architecture from \Cref{sec:exp_details:path_planning} (see~\Cref{fig:push_t_arch} for completeness). As before, we choose an embedding dimension $\dembed$; the output dimension $\dout$ is equal to the dimensionality of the event space, $2N_{\mathrm{pred}}$, for vector fields, and is equal to one for energy models. As with path planning, we also add the output from a convolution prior, defined in~\eqref{eqs:conv_prior}, which operates only on the prediction tokens $\bm{\tau}_{+}$ post positional embedding. The dimensions $d_{\mathrm{res}}$ and $\dembed$ were held fixed at $64$ and $128$ respectively, for all models.

\begin{figure}[h!]
\centering
\includegraphics[scale=0.68]{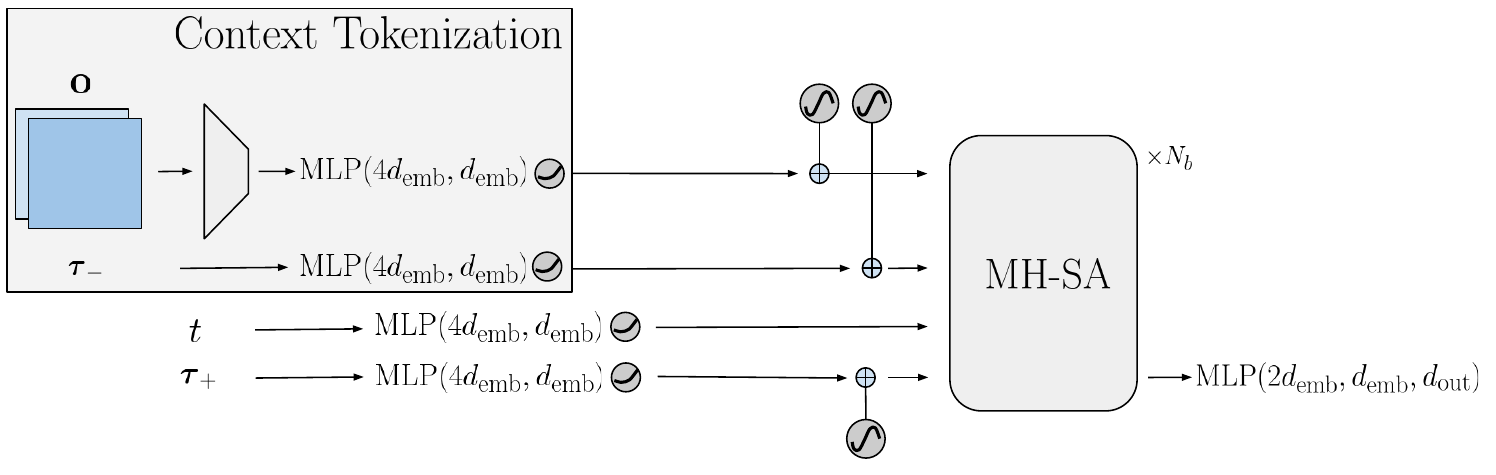}
\caption{The encoder-only transformer architecture for Push-T. The $\bm{\tau}_{+}$ tokens after the terminal block are flattened and passed through the output MLP. Each MH-SA block used 4 attention heads.}
\label{fig:push_t_arch}
\end{figure}

The negative sampler NF model's architecture for I-R-NCE needed to be designed differently since the self-attention layers in Figure~\ref{fig:push_t_arch} were computationally prohibitive for negative sampling and log-probability computations, both of which are necessary for defining the I-R-NCE objective. As a result, we devised a novel Conv1D-Multi-Head-FiLM (CMHF) block, inspired from~\cite{perez2018film}, which, while inferior to the general self-attention block, was sufficiently powerful for training via I-R-NCE and computationally cheaper. The overall EBM+NF architecture and CMHF block are shown in Figures~\ref{fig:ebm_nf_push_t} and~\ref{fig:mh_film}, respectively. Finally, drawing inspiration from the pre-conditioning function $c_{\mathrm{noise}}(t)$ presented in~\cite{karras2022_edm}, we applied the non-linear transform $t \mapsto -\log(1.1-t)$ on the interpolation time $t$ used within I-R-NCE-EBM and I-R-NCE-NF. The advantage of this transform is to counteract the effect of the decreasing step-sizes from the transform $t \mapsto t^{1/\alpha}$ defined in~\Cref{sec:experiments:models}.

\begin{figure}[H]
\centering
\includegraphics[scale=0.7]{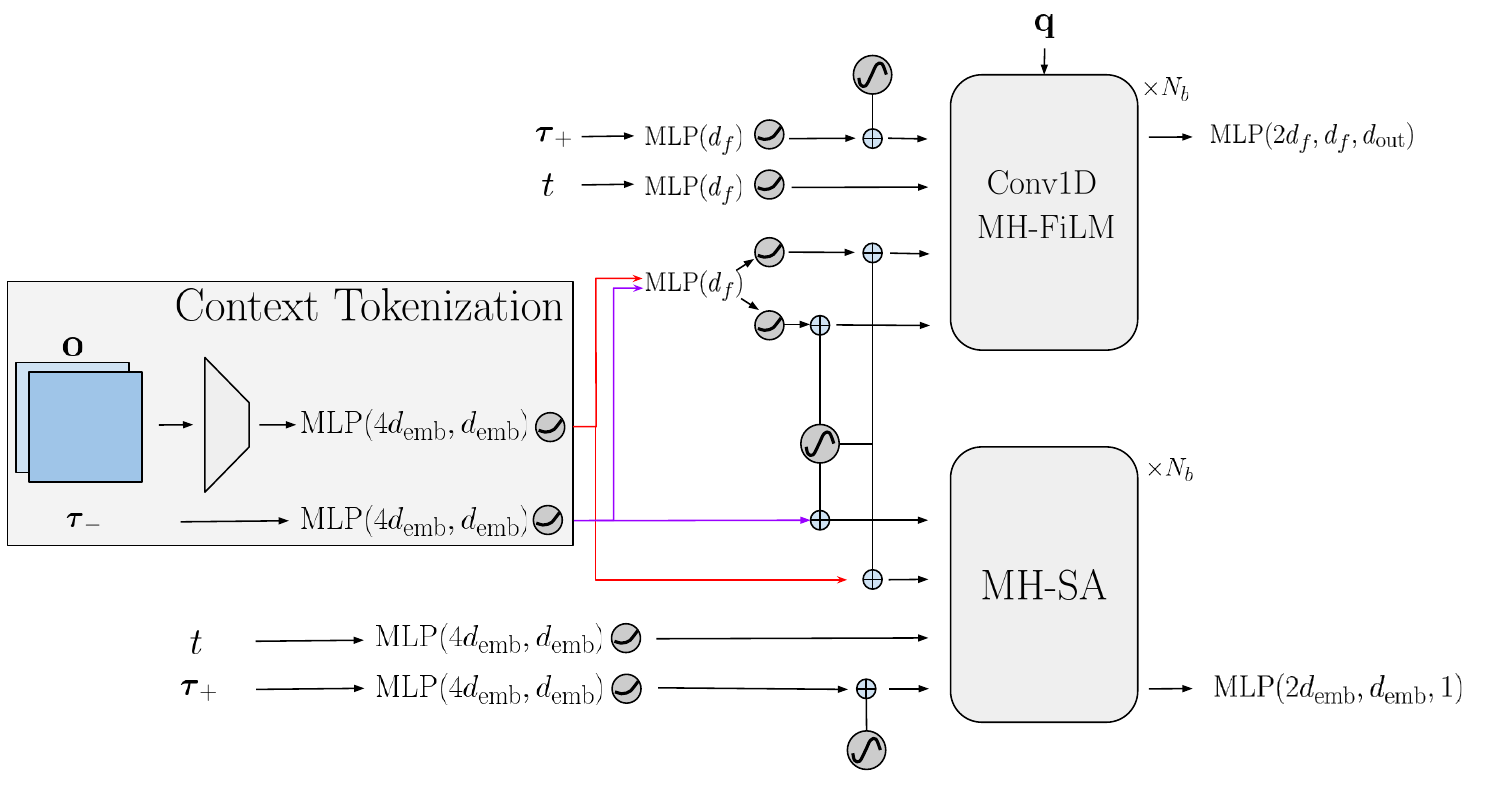}
\caption{Combined NF (top) $+$ EBM (bottom) architecture for I-R-NCE. To eliminate parameter redundancy, both models share the context tokenizer. However, only the sampler (cf.~\Cref{alg:irnce},~\Cref{line:irnce_alg_nf_update}) or the EBM (cf.~\Cref{alg:irnce},~\Cref{line:irnce_alg_rnce_update}) is allowed to update the context tokenizer; we swept over both choices. Post context tokenization, the NF blocks use a smaller feature embedding, $d_f = \dembed/2 = 64$. The design of the Conv1D-Multi-Head-FiLM block is shown in Figure~\ref{fig:mh_film}. We use $4$ heads within the EBMs self-attention blocks and $2$ heads within the NFs CMHF blocks. The additional input $\bm{q}$ to the CMHF blocks is explained in~\Cref{fig:mh_film}.}
\label{fig:ebm_nf_push_t}
\end{figure}

Sampling during inference from the EBM models is done via~\Cref{alg:iebm_sample} with HMC sampling for the MCMC stage. We split the net amount of evaluations of $\nabla_y \en_{\theta}(x, y, t)$ between the SDE and MCMC stages using the interpolation time lower-bound $\underline{t}$; in particular, $\lfloor \underline{t} \cdot T_{\mathrm{mcmc}}\rfloor$ evaluations are dedicated to the MCMC stage, while $\lfloor(1 - \underline{t}) \cdot T_{\mathrm{mcmc}}\rfloor$ evaluations are given to the SDE stage. This allows us to control the net inference-time computational fidelity.

\begin{figure}[h!]
\centering
\includegraphics[scale=0.7]{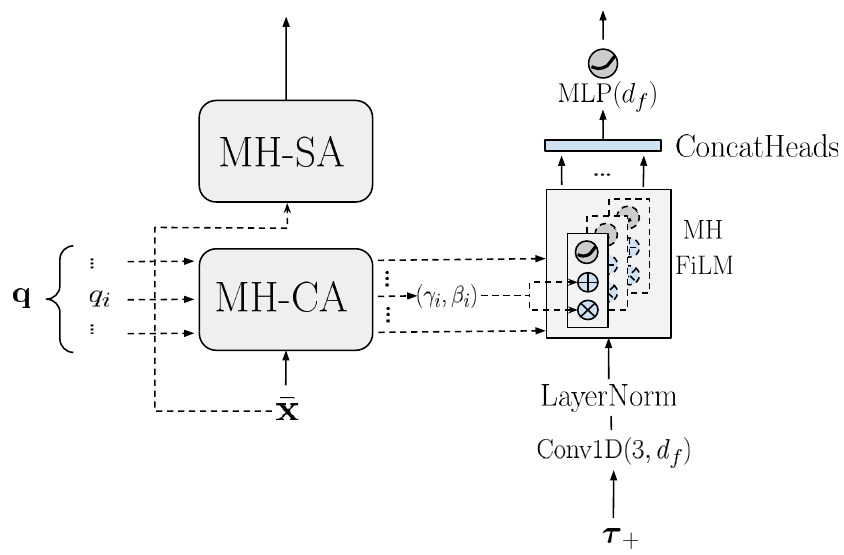}
\caption{The Conv1D-MH-FiLM block. The input $\bar{\mathbf{x}}$ is the combined set of context tokens from $\mathbf{o}, \bm{\tau}_{-},$ and $t$. Each block has its own independent set of set of query tokens $\mathbf{q} \in \R^{n_q \times d_f}$, treated as learnable parameters, that cross-attend (MH-CA) to the composite context tokens $\bar{\mathbf{x}}$ to produce FiLM weights $\{(\gamma_i, \beta_i)\}_{i=1}^{n_q}$, where $\gamma_i, \beta_i \in \R^{d_f}$. The number of queries in each CMHF block $n_q$ was set to the number of heads (2). Each set of FiLM weights $(\gamma_i, \beta_i)$ are broadcasted across the temporal dimension of the prediction tokens $\bm{\tau}_{+}$ and affinely transform the prediction tokens (see~\cite{perez2018film}) to yield $n_q$ copies of transformed prediction tokens. These are then concatenated (similar to a multi-head attention block) along the head-dimension, and projected back down to yield a set of output prediction tokens $\bm{\tau}_+$. Separately, the composite context tokens $\bar{\mathbf{x}}$ are propagated through a standard Multi-head self-attention block.}
\label{fig:mh_film}
\end{figure}

\begin{table}[ht]
  \centering
  \footnotesize
  \begin{tabular}
      {c|cccccc} \hline {Model} & $N_{\mathrm{b}}$ & {LR} & {WD} & {\bf $T_{\mathrm{mcmc}}$} & {$\eta$} & $\underline{t}$ \\
      \hline \hline I-R-NCE-EBM & $\{6, 8\}$ & $2\cdot 10^{-4}$ & $\{10^{-5}, 5\cdot 10^{-6}, 10^{-6}\}$ & $\{750, 1000, 1250\}$ & $\{5\cdot 10^{-4}, 10^{-3}\}$ & $\{0.45, 0.5\}$\\
       I-R-NCE-NF & 5 & $10^{-3}$ & $10^{-6}$ & - & - & - \\
       \hline
  \end{tabular}
  \caption{List of hyperparameters for I-R-NCE. The constant $N_{\mathrm{b}}$ is described in \Cref{fig:ebm_nf_push_t}.
  $T_{\mathrm{mcmc}}$ refers to the \emph{total} number of $\nabla_y \en_{\theta}$ evaluations, split between the SDE sampler and HMC leapfrog integrator steps, and $\eta$ refers to both the SDE noise-scale (see~\eqref{eq:sde_gen_approx}) and HMC leapfrog integrator step size; we used 50 leapfrog steps per HMC momentum sampling step. The value $\underline{t}$ is the initial CNF sample time; see~\Cref{alg:iebm_sample}. The constant $\alpha$ for the proposal NFs was fixed at $2.5$. Note that in our sweeps, we exclude combinations of $N_{\mathrm{b}}$ for the EBM and NF which yield total parameter accounts exceeding ${\sim}3.3$M.}
  \label{tbl:push_t_ebm_hps}
\end{table}

\paragraph{Training.} We use a train batch size of $128$ for the diffusion, NF, and I-R-NCE-NF models, and $64$ for I-R-NCE-EBM. For Diffusion, Diffusion-$\phi$, and NF, 
we train our models for $200{,}000$ batch steps
and use a  warmup cosine decay learning rate schedule with $5{,}000$ warmup steps.
For I-R-NCE, we use $T_{\mathrm{outer}}=100{,}000$ main loop steps; each loop consists of $T_{\mathrm{samp}}=4$ negative sampler batch steps,  and $T_{\mathrm{rnce}}=2$ R-NCE batch steps. Both models used a cosine decay learning rate schedule with a warmup of $1{,}500$ main loop steps. We fix the number of negative samples for I-R-NCE to $K=31$ and the number of interpolating times $m = 5$. For I-R-NCE, we also leveraged the computation trick from~\Cref{sec:log_prob_computation} by relying on $150$ time-steps with a Heun integrator for sampling from the proposal model, and sub-sampling at only 15 time-steps for computing the log probability via trapezoidal integration. The remaining set of hyperparameters for I-R-NCE are outlined in \Cref{tbl:push_t_ebm_hps}.

\begin{table}[ht]
  \centering
  \footnotesize
  \begin{tabular}
      {c|ccc} \hline Model & $N_{\mathrm{b}}$ & LR & $\sigma_{\mathrm{pert}}$ \\
      \hline \hline NF & \multirow{3}*{$\{4, 6, 8\}$} & \multirow{3}*{$\{10^{-4}, 2 \cdot 10^{-4}, 5 \cdot 10^{-4}\}$} & \multirow{3}*{$\{0, 0.01\}$} \\
      Diffusion & & & \\
      Diffusion-$\phi$ & & & \\
      \hline
  \end{tabular}
  \caption{List of hyperparameters for NF, Diffusion, and Diffusion-$\phi$.
  The constant $N_{\mathrm{b}}$ is described in
  \Cref{fig:push_t_arch}. The parameter $\sigma_{\mathrm{pert}}$ governs the data noising, as discussed in the main text.
  All models considered have parameter counts not exceeding ${\sim}3.3$M.
  }
  \label{tbl:push_t_diffusion_nf_hps}
\end{table}
Finally, we conclude by discussing the remaining set of hyperparameters for
NF, Diffusion, and Diffusion-$\phi$.
For all three models we hold the following hyperparameters fixed:
(a) $\mathrm{WD} = 10^{-6}$, 
(b) the number of sampling ODE steps equals $512$ (so $1024$ total number of function evaluations),
and (c) the number of log probability ODE steps equal to $48$ (cf.~\Cref{sec:log_prob_computation}).
For Diffusion and Diffusion-$\phi$, we use the default settings of
$\sigma_{\mathrm{data}}$, $\sigma_{\min}$, $\sigma_{\max}$, and $\rho$
from \cite{karras2022_edm}. We also fix $\sigma_{\mathrm{rel}} = 0.02$ for Diffusion-$\phi$.
Finally, for NF, we fix $\alpha=2$.
The remaining hyperparameter values we sweep over are listed in \Cref{tbl:push_t_diffusion_nf_hps}.

\section{Proofs}
\label{sec:proofs}

\subsection{Optimality}

Before we proceed, we state the following claim which will be used in the proof of
\Cref{prop:optimality}.
\begin{claim}
\label{claim:lagrange_simple}
Let $(\gamma_i)_{i=1}^{d}$ be a strictly positive vector in $\R^d$, 
and let $\calS_{d-1}$ denote the $(d-1)$-dimensional simplex on $\R^d$.
The function $F(q) = \sum_{i=1}^{d} \gamma_i \log{q_i}$
is uniquely maximized over $\calS_{d-1}$ by $q = \gamma / \sum_{i=1}^{d} \gamma_i$.
\end{claim}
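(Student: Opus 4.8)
The plan is to treat this as a constrained optimization problem and pin down the unique maximizer via a first-order condition together with strict concavity. First I would dispose of the boundary: since every $\gamma_i > 0$, any $q \in \calS_{d-1}$ with $q_i = 0$ for some $i$ gives $F(q) = -\infty$, so the supremum over the (compact) simplex is attained in the relative interior $\{q \in \calS_{d-1} : q_i > 0\ \forall i\}$, where $F$ is smooth.

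Next I would use that $F$ is \emph{strictly} concave on $(0,\infty)^d$: it is a sum $\sum_i \gamma_i \log q_i$ of the strictly concave maps $q_i \mapsto \log q_i$ with strictly positive weights $\gamma_i$, hence strictly concave. Since $\calS_{d-1}$ is convex, the interior maximizer is the unique stationary point of the Lagrangian $F(q) - \lambda\big(\sum_j q_j - 1\big)$. Setting the gradient to zero yields $\gamma_i/q_i = \lambda$ for all $i$, i.e.\ $q_i = \gamma_i/\lambda$, and enforcing $\sum_i q_i = 1$ forces $\lambda = \sum_i \gamma_i$, so $q = \gamma/\sum_i \gamma_i$, which indeed lies in $\calS_{d-1}$ with strictly positive entries. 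Strict concavity upgrades this to uniqueness.

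Alternatively — and perhaps more cleanly — one can bypass calculus: writing $S := \sum_i \gamma_i$ and $q^\star := \gamma/S$, for any $q$ in the relative interior we have $F(q^\star) - F(q) = S \sum_i q_i^\star \log(q_i^\star/q_i) = S \cdot \KL{q^\star}{q} \geq 0$, with equality iff $q = q^\star$, by Gibbs' inequality (Jensen applied to the strictly convex $-\log$). Either route is entirely routine; there is no genuine obstacle, the only points meriting a word of care being the handling of the simplex boundary and the strictness of the inequality, both of which are immediate from $\gamma_i > 0$ for all $i$.
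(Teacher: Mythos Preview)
Your proposal is correct and takes essentially the same approach as the paper, which simply states that the result ``follows from a standard Lagrange multiplier argument combined with the strict concavity of $\log x$ on the positive reals.'' You have filled in precisely those details (and additionally offered a clean KL-divergence variant), so there is nothing to add.
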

\begin{proof}
Follows from a standard Lagrange multiplier argument combined with the
strict concavity of $\log{x}$ on the positive reals.
\end{proof}

We now re-state and prove~\Cref{prop:optimality}.
\optimality*
\begin{proof}
As noted in \Cref{sec:results:optimality},
the proof strategy follows \citet[Theorem 4.1]{ma2018_ranking_nce}, but 
includes the measure-theoretic arguments necessary in our setting.
First, we note our assumptions ensure that
$L(\theta, \xi)$ is finite for every $\theta, \xi \in \Theta \times \Xi$. Second, since the index $k$ does not affect the value in~\eqref{eq:population_sym}, we can take the average
\[
    L(\theta, \xi) = \frac{1}{K+1} \sum_{k=1}^{K+1} \E_{x} \E_{\augset \mid x;\xi, k}\ \  \log q_{\theta,\xi} (k \mid x, \augset).
\]
We now use the existence of regular conditional
densities to perform a change of measure argument as follows:\footnote{Note that in the integral expressions that follow, we slightly abuse the measure-theoretic notation to imply the equivalence $\rmd y_{1:K+1} \equiv \mu_{Y}^{\times (K+1)}(\rmd y_{1:K+1})$, where $\mu_{Y}^{\times (K+1)}$ is the product measure.}
\begin{align}
    L(\theta, \xi) &= \frac{1}{K+1} \E_x\left[ \sum_{k=1}^{K+1} \E_{\augset \mid x; \xi,k}\  \log q_{\theta, \xi}(k \mid x, \augset) \right] \nonumber \\
    &= \frac{1}{K+1} \E_x\left[ \sum_{k=1}^{K+1}\int \log q_{\theta, \xi}(k \mid x, \augset) p(y_k \mid x) \prod_{j \neq k} \pn(y_j \mid x) \,\rmd y_{1:K+1} \right] \nonumber \\
    &= \frac{1}{K+1} \E_x\left[ \sum_{k=1}^{K+1}\int \log q_{\theta,\xi}(k \mid x, \augset) \frac{p(y_k \mid x)}{\pn(y_k \mid x)} \prod_{j=1}^{K+1} \pn(y_j \mid x) \,\rmd y_{1:K+1} \right] \nonumber \\
    &= \frac{1}{K+1} \E_{x} \E_{\augset \sim \sfP^{K+1}_{\augset | x;\xi}}\ \left[ \sum_{k=1}^{K+1} \log q_{\theta,\xi}(k \mid x, \augset) \frac{p(y_k \mid x)}{\pn(y_k \mid x)} \right]. \label{eq:pop_obj_cross_entropy}
\end{align}
As a reminder, the notation $\augset \sim \sfP^{K+1}_{\augset \mid x;\xi}$ indicates that the $K+1$ samples in $\augset$ are drawn (conditionally on $x$) from the $K+1$ product distribution
of $\pn(\cdot \mid x)$. 
Now, the term in the brackets
is the (negative, un-normalized) cross entropy between the 
following distributions over
$\{1, \dots, K+1 \}$: the (un-normalized) distribution $\{ \frac{p(y_k \mid x)}{\pn(y_k \mid x)} \}_{k=1}^{K+1}$ and the distribution $\{ q_{\theta,\xi}(k \mid x, \augset) \}_{k=1}^{K+1}$.

Let $\theta_\star \in \Theta_\star$. By the definition of realizability, 
we have for a.e.\ $(x, \augset)$:
\[
    q_{\theta_\star, \xi}(k \mid x, \augset) = \dfrac{p(y_k \mid x)/\pn(y_k \mid x)}{\sum_{j=1}^{K+1} p(y_j \mid x)/\pn(y_j \mid x)} \quad \forall k \in \{1, \dots, K+1\}.
\]
Thus, we see by \Cref{claim:lagrange_simple} that
the choice of $\theta_\star \in \Theta_\star$ pointwise maximizes
the cross-entropy term in the brackets in \eqref{eq:pop_obj_cross_entropy},
and hence $\Theta_\star \subseteq \argmax_{\theta \in \Theta} L(\theta, \xi)$.

Now, suppose that $\bar{\theta} \in \argmax_{\theta \in \Theta} L(\theta, \xi)$.
Let $\theta_\star \in \Theta_\star$ be arbitrary. Since $L(\bar\theta, \xi) = L(\theta_\star, \xi)$ and since 
$\theta_\star$ pointwise maximizes \eqref{eq:pop_obj_cross_entropy},
by \Cref{claim:lagrange_simple} we must have that for
a.e.\ $(x, \augset)$:
\begin{align}
    q_{\bar\theta,\xi}(k \mid x, \augset) = q_{\theta_\star,\xi}(k \mid x, \augset) \quad \forall k \in \{1, \dots, K+1\}. \label{eq:q_equality}
\end{align}
Note that when \eqref{eq:q_equality} holds,
we have:
\begin{align*}
    \frac{\exp(\en_{\bar\theta}(x, y_k))}{\exp(\en_{\theta_\star}(x, y_k))} = \frac{\sum_{y \in \augset} \exp(\en_{\bar\theta}(x, y) - \log p_\xi(y \mid x))}{ \sum_{y \in \augset} \exp(\en_{\theta_\star}(x, y) - \log p_\xi(y \mid x)) } \quad \forall k \in \{1, \dots, K+1\}.
\end{align*}
Because the expression on the RHS above is independent of $k$, then:
\begin{align}
    \exp(\en_{\bar\theta}(x, y_k) - \en_{\bar\theta}(x, y_\ell)) = \exp(\en_{\theta_\star}(x, y_k) - \en_{\theta_\star}(x, y_\ell)) \quad \forall k, \ell \in \{1, \dots, K+1\}. \label{eq:opt_pairwise_eq}
\end{align}
Thus, defining the set $A := \{(x, \augset) \mid \text{\eqref{eq:opt_pairwise_eq} holds} \}$, we have:
\begin{equation}
    (\mu_X \times \mu_Y^{\times (K+1)})(A) = 1.
\label{eq:good_set_measure_one}
\end{equation}
We now define the following set on $\sfX \times \sfY^2$:
\begin{align*}
    B := \{ (x, y, y') \mid \exp(\en_{\bar\theta}(x, y) - \en_{\bar\theta}(x, y')) \neq \exp(\en_{\theta_\star}(x, y) - \en_{\theta_\star}(x, y')) \}.
\end{align*}
Notice that $(x, y, y') \in B$ if and only if $(x, y, y') \bigcup \sfY^{K-1} \in A^c$. Thus, letting $c = \int \rmd \mu_Y^{\times K-1}$ (here we use the finiteness of the measure space), it follows from Tonelli's theorem
and \eqref{eq:good_set_measure_one}:
\begin{align*}
    (\mu_X \times \mu_Y^{\times 2})(B) &= \int \ind\{ (x, y_1, y_2) \in B \} \,\rmd (\mu_X \times \mu_Y^{\times 2}) \\
    &= c^{-1} \int \left[ \int \ind\{ (x, y_1, y_2) \in B \} \,\rmd (\mu_X \times \mu_Y^{\times 2}) \right] \rmd \mu_Y^{\times (K-1)} \\
    &= c^{-1} \int \ind\{ (x, y_1, y_2) \in B \} \, \rmd (\mu_X \times \mu_Y^{\times (K+1)}) \\
    &= c^{-1} \int \ind\{(x, \augset) \in A^c\} \, \rmd (\mu_X \times \mu_Y^{\times (K+1)}) = 0.
\end{align*}

Next, we have that  $\mu_\sfX \times \mu_\sfY$ measure of the set
\begin{align*}
    B' := \{ (x, y) \mid \textrm{the cross-section } B''(x, y) := \{ y' \mid (x, y, y') \in B \} \textrm{ is not $\mu_\sfY$-null} \}
\end{align*}
is also zero, due to the fact that if $\nu_1$ and $\nu_2$ are measures, then
cross-sections of $\nu_1 \times \nu_2$ null sets (holding the first variable fixed)
are $\nu_2$ null sets, $\nu_1$ almost everywhere~\citep[see e.g.][Ch.~2, Ex.~49]{folland1999_book}.
Hence for any $(x, y) \in (B')^c$: 
\begin{align*}
    p_{\bar\theta}(y \mid x) &= \frac{\exp(\en_{\bar\theta}(x, y))}{\int \exp(\en_{\bar\theta}(x, y'))\,\mu_Y(\rmd y')} \\
    &= \frac{1}{\int \exp(\en_{\bar\theta}(x, y') - \en_{\bar\theta}(x, y)) \, \mu_Y(\rmd y')} \\
    &= \frac{1}{\int_{B''(x, y)^c} \exp(\en_{\bar\theta}(x, y') - \en_{\bar\theta}(x, y)) \,\mu_Y(\rmd y')} \\
    &= \frac{1}{\int_{B''(x, y)^c} \exp(\en_{\theta_\star}(x, y') - \en_{\theta_\star}(x, y)) \,\mu_Y(\rmd y')} \\
    &= \frac{1}{\int \exp(\en_{\theta_\star}(x, y') - \en_{\theta_\star}(x, y)) \,\mu_Y(\rmd y')} \\
    &= p_{\theta_\star}(y \mid x).
\end{align*}
This shows that $p_{\bar\theta}(y \mid x) = p_{\theta_\star}(y \mid x)$ holds $\mu_\sfX \times \mu_\sfY$
almost everywhere,
and hence $\bar\theta \in \Theta_\star$.

\end{proof}

\subsection{Asymptotic Convergence}

\subsubsection{Consistency}
In order to prove \Cref{prop:consistency}, we require the following technical lemma.

\begin{mylemma}
\label{prop:set_separation}
Let $B_2(r)$ denote the closed Euclidean ball centered at the origin with radius $r$; the dimension will be implicit from context. Let $\Theta$ be a compact set, and let $f : \Theta \to \R$ be a continuous function.
Denote the maximization set 
$$\Theta_\star := \left\{ \theta \in \Theta \mid f(\theta) = \sup_{\theta' \in \Theta} f(\theta') \right\}.$$
Suppose that $\Theta_\star$ is contained in the interior of $\Theta$.
There exists a $\e_0 > 0$  such that for all $0 < \e \leq \e_0$,
\begin{align*}
    \sup_{\theta \in \mathrm{cl}(\Theta \setminus (\Theta_\star + B_2(\e)))} f(\theta) < f^\star := \sup_{\theta \in \Theta} f(\theta),
\end{align*}
where $\mathrm{cl}(\cdot)$ denotes the closure of a set.
\end{mylemma}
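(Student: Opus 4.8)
\textbf{Proof plan for Lemma~\ref{prop:set_separation}.}
The plan is to reduce the statement to a routine compactness argument. First I would record the basic structure of $\Theta_\star$: since $\Theta$ is compact and $f$ is continuous, $f$ attains its maximum $f^\star := \sup_{\theta \in \Theta} f(\theta)$, so $\Theta_\star$ is nonempty; moreover $\Theta_\star = f^{-1}(\{f^\star\})$ is a closed subset of the compact set $\Theta$, hence itself compact. I note that the hypothesis $\Theta_\star \subseteq \mathrm{int}(\Theta)$ is not in fact needed for the conclusion of this lemma (it is invoked later, in the consistency proof), so I would simply carry it along.

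Next, I would fix an arbitrary $\e > 0$ and study the set $K_\e := \mathrm{cl}\bigl(\Theta \setminus (\Theta_\star + B_2(\e))\bigr)$ through the distance function $\rho(\theta) := d(\theta, \Theta_\star) = \inf_{\theta' \in \Theta_\star} \| \theta - \theta' \|$, which is $1$-Lipschitz, hence continuous, on the ambient Euclidean space. Since $B_2(\e)$ is the \emph{closed} ball, $\Theta_\star + B_2(\e) = \{ z : \rho(z) \le \e \}$ is closed, so $\Theta \setminus (\Theta_\star + B_2(\e)) = \{ \theta \in \Theta : \rho(\theta) > \e \}$. Passing to closures and using continuity of $\rho$ together with the closedness of $\Theta$, I obtain $K_\e \subseteq \{ \theta \in \Theta : \rho(\theta) \ge \e \}$. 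In particular $K_\e \cap \Theta_\star = \emptyset$, because $\rho \equiv 0$ on $\Theta_\star$ while $\rho \ge \e > 0$ on $K_\e$.

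Finally, I would exploit compactness once more. The set $K_\e$ is closed and contained in the compact set $\Theta$, hence compact. If $K_\e = \emptyset$ then $\sup_{\theta \in K_\e} f(\theta) = -\infty < f^\star$ and there is nothing to prove; otherwise $f$ attains its supremum over $K_\e$ at some $\theta_\e \in K_\e$, and since $\theta_\e \notin \Theta_\star$ we have $f(\theta_\e) < f^\star$ by the very definition of $\Theta_\star$, i.e.\ $\sup_{\theta \in K_\e} f(\theta) = f(\theta_\e) < f^\star$. As this holds for every $\e > 0$, any $\e_0 > 0$ works (e.g.\ $\e_0 = 1$).

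I do not anticipate a genuine obstacle here; the only two points requiring a little care are (i) that passing to the closure can only enlarge $\Theta \setminus (\Theta_\star + B_2(\e))$, so one must verify — via continuity of the distance function — that the enlargement is still bounded away from $\Theta_\star$, and (ii) the degenerate case $\Theta = \Theta_\star$, where $K_\e$ is empty and the supremum over the empty set is read as $-\infty$.
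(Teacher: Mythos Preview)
Your argument is correct and follows essentially the same route as the paper: both show that $K_\e$ is compact and disjoint from $\Theta_\star$, then use attainment of the supremum on $K_\e$ to conclude strict inequality. Your version is a bit more careful in verifying disjointness via the distance function and in handling the empty case; your side observation that the interiority hypothesis is not actually needed for this lemma is also correct --- the paper uses it only to ensure $K_\e \neq \emptyset$, which you bypass with the $\sup \emptyset = -\infty$ convention.
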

\begin{proof}
For $\e > 0$, define $\Theta^\e := \mathrm{cl}(\Theta \setminus (\Theta_\star + B_2(\e)))$.
Since $\Theta_\star$ is assumed to be in
the interior of $\Theta$,
there exists an $\e_0 > 0$ such that for all $0 < \e \leq \e_0$,
$\Theta^\e$ is non-empty.
It is also compact by boundedness of $\Theta$.
Let $\{\theta_k\}_k$ be a sequence within $\Theta^\e$ such that $f(\theta_k) \rightarrow \sup_{\theta \in \Theta^\e} f(\theta)$. 
Suppose for a contradiction that $\sup_{\theta \in \Theta^\e} f(\theta) = f^\star$. By compactness of $\Theta^\e$, there exists a convergent subsequence $\{\theta_{k_j}\}_j$ with limit $\theta^\e \in \Theta^\e$. By continuity, $f(\theta^\e) = f^\star$, and thus $\theta^\e \in \Theta_\star$. 
But, this means that $\Theta^\e \cap \Theta_\star$ is non-empty,
a contradiction.
\end{proof}

We now restate and prove \Cref{prop:consistency}.
\generalconsistency*
\begin{proof}
Let $\theta_\star \in \Theta_\star$ be arbitrary.
For $\theta \in \Theta$, define the function $g(\theta)$ as:
\begin{align*}
    g(\theta) := \sup_{\xi \in \Xi}\  [ L(\theta, \xi) - L(\theta_\star, \xi) ].
\end{align*}
By the continuity of $L$ and the envelope theorem, $g(\theta)$ is continuous.
Also, we have:
\begin{align*}
    \sup_{\theta \in \Theta} g(\theta) = \sup_{\theta \in \Theta} \sup_{\xi \in \Xi}\  [L(\theta,\xi) - L(\theta_\star,\xi)] = \sup_{\xi \in \Xi} \sup_{\theta \in \Theta}\  [L(\theta,\xi) - L(\theta_\star,\xi)] = 0,
\end{align*}
by \Cref{prop:optimality}. By \Cref{prop:set_separation}, since $\Theta_\star$ is contained
in the interior of $\Theta$, there exists $\e_0 > 0$
such that for all $0 < \e \leq \e_0$:
\begin{align*}
    \theta \in \mathrm{cl}(\Theta \setminus (\Theta_\star + B_2(\e))) \Longrightarrow g(\theta) < 0.
\end{align*}
Next, by optimality of $\hat{\theta}_n$,
$L_n(\hat{\theta}_n, \hat{\xi}_n) \geq L_n(\theta_\star, \hat{\xi}_n)$.
Thus,
\begin{align*}
     L(\theta_\star, \hat{\xi}_n) - L(\hat{\theta}_n, \hat{\xi}_n) &= L(\theta_\star, \hat{\xi}_n) - L_n(\hat{\theta}_n, \hat{\xi}_n) + L_n(\hat{\theta}_n, \hat{\xi}_n) - L(\hat{\theta}_n, \hat{\xi}_n) \\
    &\leq L(\theta_\star, \hat{\xi}_n) - L_n(\theta_\star, \hat{\xi}_n) + L_n(\hat{\theta}_n, \hat{\xi}_n) - L(\hat{\theta}_n, \hat{\xi}_n) \\
    &\leq 2 \sup_{\theta \in \Theta, \xi \in \Xi} |L(\theta, \xi) - L_n(\theta, \xi)|.
\end{align*}
Therefore, we obtain the implication:
\begin{align*}
    d(\hat{\theta}_n, {\Theta_\star}) > \e &\Longrightarrow g(\hat{\theta}_n) < 0 \\
    &\Longrightarrow  L(\hat{\theta}_n, \hat{\xi}_n) - L(\theta_\star, \hat{\xi}_n) < 0 \\
    &\Longrightarrow 2 \sup_{\theta \in \Theta, \xi \in \Xi} |L(\theta, \xi) - L_n(\theta, \xi)| > 0.
\end{align*}

Now, let $(\theta_0, \xi_0)$ be as indicated in \Cref{assmp:bounded}, and define $U(\theta, \xi) := L(\theta, \xi) - L(\theta_0, \xi_0)$
and similarly $U_n(\theta, \xi) := L_n(\theta, \xi) - L_n(\theta_0, \xi_0)$. By triangle inequality:
\begin{align*}
     \sup_{\theta \in \Theta, \xi \in \Xi} \abs{L_n(\theta, \xi) - L(\theta, \xi)} \leq \sup_{\theta \in \Theta, \xi \in \Xi}  \abs{U_n(\theta, \xi) - U(\theta, \xi)} + \abs{L_n(\theta_0, \xi_0) - L(\theta_0, \xi_0)}.
\end{align*}
Hence, we have the following event inclusion:
\begin{align*}
    &\left\{ \limsup_{n \to \infty} d(\hat{\theta}_n, {\Theta_\star}) > \e \right\} \\
    &\subset \left\{ \limsup_{n \to \infty} \sup_{\theta \in \Theta, \xi \in \Xi}  \abs{U_n(\theta, \xi) - U(\theta, \xi)} > 0 \right\} \bigcup \left\{ \limsup_{n \to \infty} \abs{L_n(\theta_0, \xi_0) - L(\theta_0, \xi_0)} > 0 \right\}.
\end{align*}
By the compactness of $\Theta\times \Xi$, continuity of $(\theta, \xi) \mapsto \bar{\ell}_{\theta,\xi}(x, y)$, and \Cref{assmp:bounded}, the first event on the RHS has measure zero by the uniform converge result of~\cite[Theorem 16(a)]{ferguson2017course}.

On the other, the second event on the RHS also has measure
zero by the strong law of large numbers (SLLN), which simply requires the measurability of $\bar{\ell}_{\theta, \xi}$ and the existence of $(\theta_0, \xi_0)$. Hence, $\left\{ \limsup_{n \to \infty} d(\hat{\theta}_n, {\Theta_\star}) > \e \right\}$ is a measure zero set for every $\e > 0$ sufficiently small, establishing the result.
\end{proof}

\subsubsection{Asymptotic Normality}

The following technical lemma will be necessary for what follows.
\begin{mylemma}[{Symmetrization lemma,~\citep[cf.][Lemma B.4]{ma2018_ranking_nce}}]
\label{prop:symmetry}
Let $M(x, y)$ be a measurable function.
For any $k \in \{1, \dots, K+1\}$, define:
\begin{align*}
    \calZ_k[M](\theta, \xi) := \E_{x} \E_{\augset \mid x;\xi,k} \left[ \sum_{i=1}^{K+1} q_{\theta,\xi}(i \mid x, \augset) M(x, y_i) \right].
\end{align*}
For any $\theta_\star \in \Theta_\star$ and $\xi \in \Xi$, we have:
\begin{align*}
    \calZ_k[M](\theta_\star, \xi) = \E_{x,y}[ M(x, y) ].
\end{align*}
\end{mylemma}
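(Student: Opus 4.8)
The plan is to reduce the identity to a symmetry statement about a product measure. Under realizability we may substitute the explicit form of the Bayes posterior, then perform a change of measure that turns the ``one true sample, $K$ proposal samples'' law $\augP_{\augset \mid x;\xi,k}$ into the fully iid product law $\sfP^{K+1}_{\augset \mid x;\xi}$, at the cost of a density-ratio weight. After this reduction the result drops out by averaging over the index $k$.

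Concretely, fix $\theta_\star \in \Theta_\star$ and $\xi \in \Xi$, and for a given $x$ write $w(y) := p_{\theta_\star}(y \mid x)/p_\xi(y \mid x)$, which by \Cref{def:realizabiilty} equals $p(y \mid x)/p_\xi(y \mid x)$ for a.e.\ $(x,y)$ (the ratio is well-defined since $\log p_\xi(\cdot \mid x)$ appears in $\ell_{\theta,\xi}$, so $p_\xi(\cdot \mid x) > 0$ a.e.). From \eqref{eq:posterior_prob} we then have, a.e., $q_{\theta_\star,\xi}(i \mid x, \augset) = w(y_i)/S$ with $S := \sum_{j=1}^{K+1} w(y_j)$, so the integrand $\sum_{i=1}^{K+1} q_{\theta_\star,\xi}(i \mid x, \augset)\, M(x, y_i) = \sum_{i=1}^{K+1} \tfrac{w(y_i)}{S} M(x, y_i)$ is a \emph{symmetric} function of $(y_1,\dots,y_{K+1})$. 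Next, using $p(y_k \mid x) = w(y_k)\, p_\xi(y_k \mid x)$, the density $p(y_k \mid x)\prod_{j \neq k} p_\xi(y_j \mid x)$ defining $\augP_{\augset \mid x;\xi,k}$ becomes $w(y_k)\prod_{j=1}^{K+1} p_\xi(y_j \mid x)$. Assuming $\E_{x,y}\abs{M(x,y)} < \infty$ (which holds in all applications, where $M$ is a bounded moment of $\nabla_\theta\en_\theta$), Tonelli's theorem justifies the rearrangement and yields
\[
    \calZ_k[M](\theta_\star, \xi) = \E_x\, \E_{\augset \sim \sfP^{K+1}_{\augset \mid x;\xi}}\!\left[ w(y_k) \sum_{i=1}^{K+1} \frac{w(y_i)}{S}\, M(x, y_i) \right].
\]

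Now I would observe that the right-hand side does not depend on $k$: under $\sfP^{K+1}_{\augset \mid x;\xi}$ the $y_j$ are iid, so applying the coordinate permutation swapping positions $1$ and $k$ leaves the product measure and the symmetric inner sum invariant while sending $w(y_k) \mapsto w(y_1)$. Hence I may average the displayed expression over $k = 1,\dots,K+1$; pulling the sum inside and using $\sum_{k=1}^{K+1} w(y_k) = S$, the factor $S$ cancels $1/S$, leaving $\frac{1}{K+1}\E_x \E_{\augset \sim \sfP^{K+1}}\big[\sum_{i=1}^{K+1} w(y_i)\, M(x, y_i)\big]$. Finally, since each $y_i \sim p_\xi(\cdot \mid x)$, each term equals $\int \frac{p(y \mid x)}{p_\xi(y \mid x)} M(x,y)\, p_\xi(y \mid x)\,\rmd\mu_Y = \E_{y \sim p(\cdot \mid x)}[M(x,y)]$, so the $K+1$ identical terms cancel the prefactor $1/(K+1)$ and we obtain $\calZ_k[M](\theta_\star,\xi) = \E_{x,y}[M(x,y)]$.

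I do not anticipate a real obstacle: the computation is essentially bookkeeping, and the only points needing care are (i) recording the hypotheses making the change of measure and Tonelli legitimate — positivity of $p_\xi(\cdot \mid x)$ and integrability of $M$ against $\sfP_{X,Y}$ — and (ii) the small symmetry argument establishing $k$-independence before averaging (equivalently, one may run the averaging first and note the resulting value is manifestly symmetric, then deduce $k$-independence a posteriori).
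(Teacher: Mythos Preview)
Your proof is correct and follows essentially the same approach as the paper's: both perform the change of measure from $\augP_{\augset\mid x;\xi,k}$ to the full product $\sfP^{K+1}_{\augset\mid x;\xi}$ picking up the weight $p(y_k\mid x)/p_\xi(y_k\mid x)$, exploit the permutation symmetry of the integrand to average over $k$, and then cancel the normalizer $S$ against the sum of weights. The only cosmetic difference is ordering---the paper averages over $k$ first (for general $\theta$) and substitutes the explicit form of $q_{\theta_\star,\xi}$ at the end, whereas you substitute first and then average---but the computation is the same.
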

\begin{proof}
Wlog we can take $k=1$. By symmetry:
\begin{align*}
    \calZ_1[M](\theta, \xi) &= \frac{1}{K+1}\sum_{k=1}^{K+1} \calZ_k[M](\theta, \xi) \\
    &= \frac{1}{K+1}\sum_{k=1}^{K+1}\E_{x} \E_{\augset \sim \sfP^{K+1}_{\augset \mid x;\xi}} \left[ \left(\sum_{i=1}^{K+1} q_{\theta,\xi}(i \mid x, \augset) M(x, y_i)\right) \frac{p(y_k \mid x)}{\pn(y_k \mid x)} \right] \\
    &= \frac{1}{K+1} \E_x \E_{\augset \sim \sfP^{K+1}_{\augset \mid x;\xi}} \left[ \sum_{i=1}^{K+1} q_{\theta,\xi}(i \mid x, \augset) M(x, y_i) \left( \sum_{k=1}^{K+1} \frac{p(y_k \mid x)}{\pn(y_k \mid x)} \right) \right].
\end{align*}
For any $\theta \in \Theta_\star$:
\begin{align*}
    q_{\theta_\star,\xi}(i \mid x, \augset) = \frac{p(y_i \mid x)/\pn(y_i \mid x)}{ \sum_{k=1}^{K+1} p(y_k \mid x) / \pn(y_k \mid x)}. 
\end{align*}
Hence:
\begin{align*}
    \calZ_1[M](\theta_\star, \xi) &= \frac{1}{K+1}\E_x \E_{\augset \sim \sfP^{K+1}_{\augset \mid x;\xi}}\left[ \sum_{i=1}^{K+1} M(x, y_i) \frac{p(y_i \mid x)}{\pn(y_i \mid x)} \right] \\
    &= \frac{1}{K+1} \sum_{i=1}^{K+1} \E_{x} \E_{y_i \sim p(\cdot \mid x)}[ M(x, y_i) ] \\
    &= \E_{x,y}[ M(x, y) ].
\end{align*}
\end{proof}

We prove \Cref{prop:as_norm_general} in two parts, starting with the following proposition.

\begin{myprop}
\label{prop:var_grad_equals_neg_hessian}
Let $\theta_\star \in \Theta_\star$ and $\xi_\star \in \Xi$.
Put $\gamma_\star := (\theta_\star, \xi_\star)$.
We have that:
\begin{align*}
    \mathrm{Var}_z\left[\nabla_{\theta} \bar{\ell}_{\gamma_\star}(z) \right] = - \nabla^2_\theta L(\theta_\star, \xi_\star).
\end{align*}
\end{myprop}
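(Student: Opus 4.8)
Throughout, write $z=(x,y)$, $\gamma_\star=(\theta_\star,\xi_\star)$, $\augset=\{y\}\cup\negset$, and $g_i:=\nabla_\theta\en_\theta(x,y_i)$; I read the variance on the left as taken over the joint law of $(x,y)\sim\sfP_{X,Y}$ together with the negatives $\negset\sim\sfP^K_{\negset\mid x;\xi_\star}$ (equivalently, the exchangeable augmented law $\sfP^{K+1}_{\augset\mid x;\xi_\star}$ reweighted by a single importance ratio), which is the quantity that enters the $M$-estimator sandwich. The plan is to prove this as the classical information--matrix (second Bartlett) identity, specialized to R-NCE. Recall from the Intuition paragraph and \eqref{eq:posterior_q_indexed} that $\ell_{\theta,\xi}(x,y\mid\negset)=\log q_{\theta,\xi}(1\mid x,\augset)$, and that under realizability (\Cref{def:realizabiilty}) the model posterior $q_{\theta_\star,\xi}(\cdot\mid x,\augset)$ equals the \emph{true} posterior of the index $D$ of the data sample under the uniform prior. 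Hence, for a.e.\ $(x,\augset)$, the finite family $\theta\mapsto q_{\theta,\xi_\star}(\cdot\mid x,\augset)$ of distributions on $\{1,\dots,K+1\}$ is correctly specified at $\theta_\star$, and $\nabla_\theta\ell_{\gamma_\star}$ is exactly its score.

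First I would establish the two Bartlett identities at the level of a single fixed $(x,\augset)$: writing $q_\theta:=q_{\theta,\xi_\star}(\cdot\mid x,\augset)$, differentiating $\sum_k q_\theta(k)=1$ once and twice gives $\E_{k\sim q_{\theta_\star}}[\nabla_\theta\log q_{\theta_\star}(k)]=0$ and $\E_{k\sim q_{\theta_\star}}[\nabla_\theta\log q_{\theta_\star}(k)\,\nabla_\theta\log q_{\theta_\star}(k)^\T]=-\E_{k\sim q_{\theta_\star}}[\nabla^2_\theta\log q_{\theta_\star}(k)]$; finiteness of the index set makes these elementary, and \Cref{assmp:c2} together with dominated convergence supplies every interchange of $\nabla_\theta$ with the expectations that appear once I integrate. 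Because $q_{\theta_\star,\xi_\star}(\cdot\mid x,\augset)$ is the true posterior of $D$, I may replace $\E_{k\sim q_{\theta_\star,\xi_\star}(\cdot\mid x,\augset)}$ by $\E_{D\mid x,\augset}$ under the data law; integrating over the marginal of $(x,\augset)$ and using that the $K+1$ positions are exchangeable after relabeling (conditioning on $D=1$ recovers $\ell_{\theta,\xi}(x,y\mid\negset)$), both expectations collapse to expectations over $(x,y)\sim\sfP_{X,Y}$ and $\negset\sim\sfP^K_{\negset\mid x;\xi_\star}$. The first identity becomes $\E_{x,y}\E_{\negset\mid x;\xi_\star}[\nabla_\theta\ell_{\gamma_\star}(x,y\mid\negset)]=\nabla_\theta L(\theta_\star,\xi_\star)=0$, so the variance is a raw second moment; the second becomes $\E_{x,y}\E_{\negset\mid x;\xi_\star}[\nabla_\theta\ell_{\gamma_\star}\nabla_\theta\ell_{\gamma_\star}^\T]=-\E_{x,y}\E_{\negset\mid x;\xi_\star}[\nabla^2_\theta\ell_{\gamma_\star}]=-\nabla^2_\theta L(\theta_\star,\xi_\star)$, the last step being $L(\theta,\xi)=\E_{x,y}\bar\ell_{\theta,\xi}(x,y)$ differentiated twice.

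An equivalent route, which leans on the Symmetrization Lemma (\Cref{prop:symmetry}) already proven rather than Bartlett in the abstract, is the direct computation. From \eqref{eq:posterior_q_indexed}, $\nabla_\theta q_{\theta,\xi}(i\mid x,\augset)=q_{\theta,\xi}(i\mid x,\augset)\big(g_i-\textstyle\sum_j q_{\theta,\xi}(j\mid x,\augset)g_j\big)$, whence $\nabla_\theta\ell_{\theta,\xi}(x,y\mid\negset)=g_1-\sum_i q_{\theta,\xi}(i\mid x,\augset)g_i$ (with $y_1=y$) and $\nabla^2_\theta\ell_{\theta,\xi}(x,y\mid\negset)=\nabla^2_\theta\en_\theta(x,y)-\sum_i q_{\theta,\xi}(i\mid x,\augset)\nabla^2_\theta\en_\theta(x,y_i)-\big[\sum_i q_{\theta,\xi}(i\mid x,\augset)g_ig_i^\T-\big(\sum_i q_{\theta,\xi}(i\mid x,\augset)g_i\big)\big(\sum_i q_{\theta,\xi}(i\mid x,\augset)g_i\big)^\T\big]$. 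Evaluating at $\gamma_\star$, realizability makes $q_{\theta_\star,\xi_\star}(i\mid x,\augset)$ equal to the normalized density ratios $\{p(y_i\mid x)/p_{\xi_\star}(y_i\mid x)\}$, which is exactly the weighting under which \Cref{prop:symmetry} fires: with $M=\nabla^2_\theta\en_{\theta_\star}$ it cancels the two energy-Hessian terms against each other; with $M=\nabla_\theta\en_{\theta_\star}(\nabla_\theta\en_{\theta_\star})^\T$ it rewrites $\E\sum_i q_i g_ig_i^\T$ as $\E_{x,y}[\nabla_\theta\en_{\theta_\star}(x,y)\nabla_\theta\en_{\theta_\star}(x,y)^\T]$; and an exchangeability argument (the total-weight multiple of $\sum_i q_i g_i$ is $\sum_i(p(y_i\mid x)/p_{\xi_\star}(y_i\mid x))g_i$, and permuting coordinates under the product law $\sfP^{K+1}_{\augset\mid x;\xi_\star}$ is free) shows that every cross term in the expansion of $\E[\nabla_\theta\ell_{\gamma_\star}\nabla_\theta\ell_{\gamma_\star}^\T]$ equals $\E[(\sum_i q_i g_i)(\sum_i q_i g_i)^\T]$. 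Matching the two resulting expressions term by term yields the identity.

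The work is bookkeeping rather than any hard estimate, and the step needing the most care is precisely this: one must be scrupulous that the second moment and the expected Hessian are taken over the full joint law including the negatives (equivalently, over the reweighted exchangeable augmented law), and one must justify each interchange of $\nabla_\theta$ with the integrals defining $\bar\ell$ and $L$ using \Cref{assmp:c2} and dominated convergence. Realizability enters in exactly one place — it is what makes $q_{\theta_\star,\xi_\star}(\cdot\mid x,\augset)$ the true posterior, equivalently what makes the energy-Hessian terms cancel under \Cref{prop:symmetry} — and, as with \Cref{prop:optimality}, the conclusion is independent of the choice of $\xi_\star$.
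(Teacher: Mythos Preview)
Your proposal is correct. You offer two routes; the second --- direct expansion of $\nabla_\theta\ell$ and $\nabla^2_\theta\ell$, followed by an appeal to the symmetrization lemma (\Cref{prop:symmetry}) to cancel the energy-Hessian terms and to rewrite $\E\sum_iq_ig_ig_i^\T$ as $\E_{x,y}[gg^\T]$, together with an exchangeability argument for the cross terms --- is essentially the paper's proof, which derives the identity $\E_x\E_{\augset}\big[\sum_i q_ig_ig_k^\T\big]=\E_x\E_{\augset}\big[(\sum_iq_ig_i)^{\otimes 2}\big]$ by symmetry and then matches the expanded variance against the expanded Hessian. Your first route, via the pointwise Bartlett identities for the finite family $\theta\mapsto q_{\theta,\xi_\star}(\cdot\mid x,\augset)$ on $\{1,\dots,K+1\}$, is genuinely different and cleaner: it replaces all the explicit bookkeeping with the single observation that, under realizability, $q_{\theta_\star,\xi_\star}(\cdot\mid x,\augset)$ is the true posterior of the index $D$, so the second Bartlett identity holds conditionally and the result follows upon integrating and using exchangeability to reduce to $D=1$. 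This buys a short proof in which the role of realizability is transparent; the paper's direct computation, by contrast, surfaces the intermediate expression $\E_{x,y}[gg^\T]-\E_x\E_{\augset}\big[(\sum_iq_ig_i)^{\otimes 2}\big]$, which is immediately reused in the proof of \Cref{thm:nce_noise_is_true}.

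Your opening caveat --- reading the left-hand variance over the full joint law including the negatives --- is well placed and matches what the paper's proof silently does: both arguments actually compute $\E_x\E_{\augset\mid x;\xi_\star}\big[(\nabla_\theta\ell_{\gamma_\star})^{\otimes 2}\big]$, not $\E_{x,y}\big[(\E_{\negset\mid x;\xi_\star}\nabla_\theta\ell_{\gamma_\star})^{\otimes 2}\big]$.
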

\begin{proof}
We first observe that since $\E_z[\nabla_\theta \bar{\ell}_{\gamma_\star}(z)] = 0$ by optimality, the following identity holds:
\begin{align*}
    \mathrm{Var}_z\left[\nabla_{\theta} \bell_{\gamma_\star}(z)\right] = \E_z\left[ \nabla_{\theta} \bar{\ell}_{\gamma_\star}(z) \nabla_\theta \bar{\ell}_{\gamma_\star}(z)^\T \right].
\end{align*}
Now, define the following shorthand notation (with arguments $x$ and $\augset$ implicit):
\begin{align*}
    g_k := \nabla_\theta \en_{\theta_\star}(x, y_k), \quad q_k := q_{\theta_\star, \xi_\star}(k \mid x, \augset), \quad k \in \{1, \dots, K+1\}.
\end{align*}
we have by symmetry and the definition of $q_k$:
\begin{align*}
    \E_x \E_{\augset \mid x,\xi_\star,k } \left[ \sum_{i=1}^{K+1} q_i g_i g_k^\T \right] &= \frac{1}{K+1}\sum_{k=1}^{K+1} \E_{x} \E_{\augset \sim \sfP^{K+1}_{\augset | x;\xi}}\left[ \sum_{i=1}^{K+1} q_i g_i g_k^\T \frac{p(y_k \mid x)}{p_{\xi_\star}( y_k \mid x)} \right] \\
    &=\frac{1}{K+1}\sum_{k=1}^{K+1} \E_{x} \E_{\augset \sim \sfP^{K+1}_{\augset | x;\xi} }\left[ \sum_{i=1}^{K+1} q_i g_i g_k^\T q_k \left( \sum_{j=1}^{K+1} \frac{p(y_j \mid x)}{p_{\xi_\star}(y_j \mid x)} \right) \right] \\
    &= \frac{1}{K+1}\sum_{j=1}^{K+1}  \E_{x} \E_{\augset \sim \sfP^{K+1}_{\augset | x;\xi}}\left[ \frac{p(y_j \mid x)}{p_{\xi_\star}(y_j \mid x)} \sum_{i=1}^{K+1} \sum_{k=1}^{K+1} q_i g_i g_k^\T q_k \right] \\
    &= \frac{1}{K+1}\sum_{j=1}^{K+1}\E_x \E_{\augset \mid x,\xi_\star,j}\left[ \sum_{i=1,k=1}^{K+1} q_i g_i q_k g_k^\T \right] \\
    &= \E_x \E_{\augset \mid x,\xi_\star,k}\left[ \sum_{i=1,j=1}^{K+1} q_i g_i q_j g_j^\T \right] 
    = \E_x \E_{\augset \mid x,\xi_\star,k}\left[ \left( \sum_{i=1}^{K+1} q_i g_i \right)^{\otimes 2} \right],
\end{align*}
where $v^{\otimes 2} = vv^\T$ denotes the outer product of a vector $v$.
With this calculation, we have that:
\begin{align}
    &~~~\mathrm{Var}_z\left[\nabla_{\theta} \bar{\ell}_{\gamma_\star}(z)\right] \nonumber \\
    &= \E_x \E_{\augset \mid x,\xi_\star,k  }\left[ \left(\nabla_\theta \en_{\theta_\star}(x, y_k) - \sum_{i=1}^{K+1} q_{\theta_\star,\xi_\star}(i \mid x, \augset) \nabla_\theta \en_{\theta_\star}(x, y_i) \right)^{\otimes 2} \right] \nonumber \\
    &= \E_{x,y}[ \nabla_\theta \en_{\theta_\star}(x, y)\nabla_\theta \en_{\theta_\star}(x, y)^\T ] - \E_x \E_{ \augset \mid x,\xi_\star,k  }\left[ \left( \sum_{i=1}^{K+1} q_{\theta_\star,\xi_\star}(i \mid x, \augset) \nabla_\theta \en_{\theta_\star}(x, y_i) \right)^{\otimes 2} \right]. \label{eq:var_grad_calculation}
\end{align}
It remains to show that the expression \eqref{eq:var_grad_calculation}
is equal to $-\nabla_\theta^2 L(\theta_\star, \xi_\star)$, from which we conclude the result.
To do this, we next recall the following identity:
\begin{align*}
    \nabla_\theta q_{\theta,\xi}(k \mid x, \augset) =  q_{\theta,\xi}(k \mid x, \augset) \nabla_\theta \log q_{\theta,\xi}(k \mid x, \augset).
\end{align*}
Hence for any $k \in \{1,\dots,K+1\}$, letting $\partial_\theta (\cdot)$ denote the Jacobian of the argument,
\begin{align*}
    &~~~~\partial_\theta (q_{\theta,\xi}(k \mid x, \augset) \nabla_\theta \en_\theta(x, y_k)) \\
    &= q_{\theta,\xi}(k \mid x, \augset)\left[ \nabla_\theta^2 \en_\theta(x, y_k) + \nabla_\theta \en_\theta(x, y_k) \nabla_\theta \log q_{\theta,\xi}(k \mid x, \augset)^\T \right] \\
    &= q_{\theta,\xi}(k \mid x, \augset)\left[ \nabla_\theta^2 \en_\theta(x, y_k) + \nabla_\theta \en_\theta(x, y_k) \nabla_\theta \en_\theta(x, y_k)^\T - \sum_{i=1}^{K+1} q_{\theta,\xi}(i \mid x, \augset) \nabla_\theta \en_\theta(x, y_k) \nabla_\theta \en_\theta(x, y_i)^\T \right].
\end{align*}
Recalling the following expression for the gradient $\nabla_\theta L(\theta, \xi)$:
\begin{align*}
    \nabla_\theta L(\theta, \xi) &= \E_{x,y} \nabla_\theta \en_\theta(x, y) - \E_x\E_{\augset \mid x;\xi}\left[ \sum_{i=1}^{K+1} q_{\theta,\xi}(i \mid x, \augset) \nabla_\theta \en_\theta(x, y_i) \right],
\end{align*}
we have that the Hessian $\nabla_\theta^2 L(\theta, \xi)$ is:
\begin{align*}
    \nabla_\theta^2 L(\theta, \xi) &= \E_{x} \E_{\augset \mid x;\xi}\left[ \nabla_\theta^2 \en_\theta(x, y_k) - \sum_{i=1}^{K+1} q_{\theta,\xi}(i \mid x, \augset) \left\{  \nabla_\theta^2 \en_\theta(x, y_i) + \nabla_\theta \en_\theta(x, y_i)\nabla_\theta \en_\theta(x, y_i)^\T \right\}  \right] \\
    &\qquad + \E_{x} \E_{\augset \mid x;\xi}\left[ \sum_{i=1}^{K+1} \sum_{i'=1}^{K+1} q_{\theta,\xi}(i \mid x, \augset) q_{\theta,\xi}(i' \mid x, \augset) \nabla_\theta \en_\theta(x, y_i) \nabla_\theta \en_\theta(x, y_{i'})^\T \right] \\
    &= \E_{x} \E_{\augset \mid x;\xi}\left[ \nabla_\theta^2 \en_\theta(x, y_k) - \sum_{i=1}^{K+1} q_{\theta,\xi}(i \mid x, \augset) \left\{  \nabla_\theta^2 \en_\theta(x, y_i) + \nabla_\theta \en_\theta(x, y_i)\nabla_\theta \en_\theta(x, y_i)^\T \right\}  \right] \\
    &\qquad + \E_{x} \E_{\augset \mid x;\xi}\left[ \left(\sum_{i=1}^{K+1}  q_{\theta,\xi}(i \mid x, \augset)  \nabla_\theta \en_\theta(x, y_i)\right)^{\otimes 2} \right].
\end{align*}

Let us now simplify this by evaluating the Hessian at $(\theta,\xi) = (\theta_\star, \xi_\star)$.
Applying the symmetrization lemma (\Cref{prop:symmetry})
and the identity \eqref{eq:var_grad_calculation}, we have:
\begin{align}
    \nabla_\theta^2 L(\theta_\star, \xi_\star) &= - \E_{x,y}[ \nabla_\theta \en_{\theta_\star}(x, y) \nabla_\theta \en_{\theta_\star}(x, y)^\T ] +  \E_{x} \E_{\augset \mid x;\xi_\star}\left[ \left(\sum_{i=1}^{K+1}  q_{\theta_\star,\xi_\star}(i \mid x, \augset)  \nabla_\theta \en_{\theta_\star}(x, y_i)\right)^{\otimes 2} \right] \label{eq:hessian_intermediate} \\
    &= -\mathrm{Var}_z \left[ \nabla_\theta \bar{\ell}_{\gamma_\star}(z) \right]. \nonumber
\end{align}
\end{proof}

We now proceed to restate and prove \Cref{prop:as_norm_general}.
\asymptoticnormality*
\begin{proof}
We first require the following remainder version of the mean-value theorem, which is a consequence of Taylor's theorem.
\begin{claim}
\label{claim:taylor}
By the $\mathcal{C}^2$ smoothness and Lipschitz Hessian properties of $\gamma \mapsto \bell_{\gamma}(z)$, for any $z \in \sfX \times \sfY$, and $\gamma_1, \gamma_2 \in \Theta \times \Xi$, it holds:
\[
    \nabla_\gamma \bar{\ell}_{\gamma_2}(z) = \nabla_\gamma \bar{\ell}_{\gamma_1}(z) + \nabla^2_\gamma \bar{\ell}_{\gamma_1}(z) (\gamma_2 - \gamma_1) + R(\gamma_2 - \gamma_1),
\]
where $R$ is a remainder matrix dependent upon $z, \gamma_1, \gamma_2$, and satisfies: $\opnorm{R} \leq \frac{1}{2} M(z) \|\gamma_2 - \gamma_1\|$.
\end{claim}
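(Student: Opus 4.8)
The plan is to apply the fundamental theorem of calculus to the vector-valued map $s(t) := \nabla_\gamma \bar{\ell}_{\gamma_1 + t(\gamma_2 - \gamma_1)}(z)$ on $t \in [0,1]$, and then peel off its value of the Hessian at $\gamma_1$. Since $\gamma \mapsto \bar{\ell}_\gamma(z)$ is $\mathcal{C}^2$ (which, as noted after \Cref{assmp:c2}, follows by the dominated convergence theorem), $s$ is continuously differentiable with $s'(t) = \nabla^2_\gamma \bar{\ell}_{\gamma_1 + t(\gamma_2-\gamma_1)}(z)\,(\gamma_2 - \gamma_1)$ by the chain rule, and the integrand $t \mapsto \nabla^2_\gamma \bar{\ell}_{\gamma_1 + t(\gamma_2-\gamma_1)}(z)$ is continuous, hence integrable. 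Therefore
\[
    \nabla_\gamma \bar{\ell}_{\gamma_2}(z) = s(1) = s(0) + \int_0^1 s'(t)\,\rmd t = \nabla_\gamma \bar{\ell}_{\gamma_1}(z) + \left( \int_0^1 \nabla^2_\gamma \bar{\ell}_{\gamma_1 + t(\gamma_2-\gamma_1)}(z)\,\rmd t \right)(\gamma_2 - \gamma_1).
\]

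I would then add and subtract $\nabla^2_\gamma \bar{\ell}_{\gamma_1}(z)$ inside the integral, which reproduces exactly the identity in the claim with remainder matrix
\[
    R := \int_0^1 \left[ \nabla^2_\gamma \bar{\ell}_{\gamma_1 + t(\gamma_2-\gamma_1)}(z) - \nabla^2_\gamma \bar{\ell}_{\gamma_1}(z) \right] \rmd t.
\]
For the norm estimate, pull $\opnorm{\cdot}$ inside the integral (valid for continuous matrix-valued integrands) and apply the Lipschitz-Hessian hypothesis (item~3 among the assumptions of \Cref{prop:as_norm_general}), which gives $\opnorm{\nabla^2_\gamma \bar{\ell}_{\gamma_1 + t(\gamma_2-\gamma_1)}(z) - \nabla^2_\gamma \bar{\ell}_{\gamma_1}(z)} \le M(z)\,t\,\|\gamma_2 - \gamma_1\|$; integrating over $t \in [0,1]$ yields $\opnorm{R} \le \tfrac{1}{2} M(z)\,\|\gamma_2 - \gamma_1\|$, as claimed.

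There is no genuine obstacle here: the statement is just Taylor's theorem with integral remainder, packaged in the precise form needed later in the proof of \Cref{prop:as_norm_general}. The one point worth a remark is that the argument uses the segment $\{\gamma_1 + t(\gamma_2-\gamma_1) : t \in [0,1]\}$ to lie in the region where $\bar{\ell}_\cdot(z)$ is twice continuously differentiable; this is automatic under the standing regularity of the parameterizations (e.g.\ if $\Theta \times \Xi$ is convex, or the maps extend $\mathcal{C}^2$ to a convex neighborhood), and in any case holds where the claim is actually invoked, since there $\gamma_1 = \gamma_\star$ lies in the interior and $\gamma_2 = \hat{\gamma}_n$ is eventually arbitrarily close to it.
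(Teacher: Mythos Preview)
Your proposal is correct and is precisely the standard Taylor-with-integral-remainder argument that the paper has in mind; the paper itself merely asserts the claim as ``a consequence of Taylor's theorem'' without spelling out the details, so your write-up is in fact more complete than the original. Your remark about the segment lying in the domain of $\mathcal{C}^2$-smoothness is well taken and matches how the claim is actually used.
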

Applying \Cref{claim:taylor} with $(\gamma_1, \gamma_2) = (\gamma_\star, \hat{\gamma}_n)$, where $\hat{\gamma}_n := (\hat{\theta}_n, \hat{\xi}_n)$, we obtain:
\[
    \nabla_\theta \bell_{\hat{\gamma}_n}(z) = \nabla_\theta \bell_{\gamma_\star}(z) + \nabla^2_\theta \bell_{\gamma_\star}(z) ( \hat{\theta}_n - \theta_\star) + \nabla_{\theta\xi}^2 \bell_{\gamma_\star}(z)(\hat{\xi}_n - \xi_\star) + \underbrace{\begin{bmatrix} I_{|\theta|} \\ O_{|\xi|} \end{bmatrix}^\T}_{:=B^\T} R(\hat{\gamma}_n - \gamma_\star),
\]
where $I_d$ and $O_d$ are the identity and zero matrices of dimensions $d\times d$ respectively. Now, since $\hat{\theta}_n$ converges to $\theta_\star$ a.s., for $n$ sufficiently large, the optimizer $\hat{\theta}_n \in \argmax_{\theta \in \Theta} \hat{\E}_{z} \bell_{(\theta, \hat{\xi}_n)}(z)$ also lies in the interior of $\Theta$, and thus, $\nabla_\theta \hat{\E}_{z} \bell_{\hat{\gamma}_n}(z) = 0$. Applying the empirical expectation on all terms above yields:
\begin{align*}
    0 &= \nabla_\theta \hat{\E}_{z} \bell_{\hat{\gamma}_n}(z) \\
    &= \hat{\E}_{z} [\nabla_\theta \bell_{\gamma_\star}(z)] + \hat{\E}_{z} [\nabla^2_\theta \bell_{\gamma_\star}(z)] (\hat{\theta}_n - \theta_\star) + \hat{\E}_{z} [\nabla^2_{\theta\xi} \bell_{\gamma_\star}(z)](\hat{\xi}_n - \xi_\star) \\
    &\qquad+ B^\T\hat{\E}_{z} [R(\hat{\gamma}_n, \gamma_\star, z)] (\hat{\gamma}_n - \gamma_\star ).
\end{align*}
Now, notice that:
\begin{align*}
    \opnorm{ \hat{\E}_{z}[R(\hat{\gamma}_n, \gamma_\star, z)] } = \bigopnorm{ \dfrac{1}{n} \sum_{i=1}^{n} R(\hat{\gamma}_n, \gamma_\star, z^{(i)})} &\leq \dfrac{1}{n} \sum_{i=1}^{n} \opnorm{ R(\hat{\gamma}_n, \gamma_\star, z^{(i)}) } \\
    &\leq \underbrace{\dfrac{1}{n} \sum_{i=1}^{n} \dfrac{1}{2} M(z^{(i)})}_{\xrightarrow[]{\text{a.s.}} \frac{1}{2}\E_{z}[M(z)]} \underbrace{\|\hat{\gamma}_n - \gamma_\star \|}_{\xrightarrow[]{p} 0} = o_p(1).
\end{align*}
Then, the remainder term takes the form $o_p(1) (\hat{\theta}_n - \theta_\star) + o_p(1) (\hat{\xi}_n - \xi_\star)$. Thus,
\begin{alignat*}{2}
    0 &= \hat{\E}_{z} [\nabla_\theta \bell_{\gamma_\star}(z)] &&+ \left(\hat{\E}_{z} [\nabla^2_\theta \bell_{\gamma_\star}(z)]  + o_p(1) \right) (\hat{\theta}_n - \theta_\star ) + \left(\hat{\E}_{z} [\nabla^2_{\theta\xi} \bell_{\gamma_\star}(z)] + o_p(1)\right) (\hat{\xi}_n - \xi_\star) \\
    &= \hat{\E}_{z} [\nabla_\theta \bell_{\gamma_\star}(z)] &&+ \left(\E_{z} [\nabla^2_\theta \bell_{\gamma_\star}(z)] + (\hat{\E}_{z} - \E_{z}) [\nabla^2_\theta \bell_{\gamma_\star}(z)]  + o_p(1) \right) (\hat{\theta}_n - \theta_\star )\\
    & &&+ \left(\E_{z} [\nabla^2_{\theta\xi} \bell_{\gamma_\star}(z)] + (\hat{\E}_{z} - \E_{z}) [\nabla^2_{\theta\xi} \bell_{\gamma_\star}(z)]  + o_p(1) \right) (\hat{\xi}_n - \xi_\star ).
\end{alignat*}
By the SLLN, $(\hat{\E}_{z} - \E_{z}) [\nabla^2_\theta \bell_{\gamma_\star}(z)] \xrightarrow[]{\text{a.s.}} 0$ and $(\hat{\E}_{z} - \E_{z}) [\nabla^2_{\theta\xi} \bell_{\gamma_\star}(z)] \xrightarrow[]{\text{a.s.}} 0$, and thus are both $o_p(1)$. Furthermore, notice from \Cref{prop:optimality} that since $\theta_\star$ maximizes $\theta \mapsto L(\theta, \xi)$ for any fixed $\xi$, it follows that $\nabla^2_{\theta \xi} L(\theta_\star, \xi_\star) = \E_z [\nabla^2_{\theta\xi} \bell_{\gamma_\star}(z)] = 0$.
Thus,
\begin{align*}
    -\hat{\E}_{z} [\nabla_\theta \bell_{\gamma_\star}(z)]  &=  \left(\E_{z} [\nabla^2_\theta \bell_{\gamma_\star}(z)] + o_p(1) \right) (\hat{\theta}_n - \theta_\star ) + o_p(1)(\hat{\xi}_n - \xi_\star)\\
    \Longleftrightarrow -\sqrt{n}\hat{\E}_{z} [\nabla_\theta \bell_{\gamma_\star}(z)]  &=  \sqrt{n}\left(\E_{z} [\nabla^2_\theta \bell_{\gamma_\star}(z)] + o_p(1) \right) (\hat{\theta}_n - \theta_\star ) + \sqrt{n} \cdot o_p(1)(\hat{\xi}_n - \xi_\star).
\end{align*}
Now, from \Cref{prop:optimality} and \Cref{assmp:c2}, we have that $\E_{z}[\nabla_\theta \bell_{\gamma_\star}(z)] = 0$. Thus, by the CLT, $\sqrt{n}\hat{\E}_{z} [\nabla_\theta \bell_{\gamma_\star}(z)] \distconv \mathcal{N}(0, \mathrm{Var}_{z}\left[\nabla_\theta \bell_{\gamma_\star}(z)\right])$. Next, by the hypothesis (1) that $\sqrt{n} (\hat{\xi}_n - \xi_\star)$ is $O_p(1)$, the last term above is $o_p(1)$. Finally, by hypothesis (4), for sufficiently large $n$, $\E_{z} [\nabla^2_\theta \bell_{\gamma_\star}(z)] + o_p(1)$ is invertible. Thus, applying Slutsky's theorem yields $\sqrt{n}(\hat{\theta}_n - \theta_\star) \distconv \mathcal{N}(0, V_\theta)$, where:
\[
    V_\theta = \E_{z}\left[ \nabla_{\theta}^2 \bar{\ell}_{\gamma_\star}(z) \right] ^{-1} \mathrm{Var}_{z}\left[\nabla_{\theta} \bar{\ell}_{\gamma_\star}(z)\right] \E_{z}\left[ \nabla_{\theta}^2 \bar{\ell}_{\gamma_\star}(z) \right]^{-1}. 
\]
Applying \Cref{prop:var_grad_equals_neg_hessian} gives the desired result.

\end{proof}

\subsection{Learning the Proposal Distribution}

We now restate and prove \Cref{thm:nce_noise_is_true}.
\asymptoticvariancetrue*
\begin{proof}
From \eqref{eq:hessian_intermediate} in the proof of
\Cref{prop:var_grad_equals_neg_hessian}, 
we have the identity:
\begin{align}
    \nabla_\theta^2 L(\theta_\star, \xi_\star) &= - \E_{x,y}[ \nabla_\theta \en_{\theta_\star}(x, y) \nabla_\theta \en_{\theta_\star}(x, y)^\T ] + \E_{x} \E_{\augset \mid x;\xi_\star}\left[ \left(\sum_{i=1}^{K+1}  q_{\theta_\star,\xi_\star}(i \mid x, \augset)  \nabla_\theta \en_{\theta_\star}(x, y_i)\right)^{\otimes 2} \right].
\end{align}
With the assumption that $\theta_\star \in \Theta_\star$ and
$p_{\xi_\star}(y \mid x) = p(y \mid x)$, we make two observations:
\begin{enumerate}
    \item $\bar{\sfP}_{\augset \mid x; \xi_\star} = p(\cdot \mid x)^{\otimes K+1}$, and
    \item $q_{\theta_\star,\xi_\star}(i \mid x, \augset) = \frac{1}{K+1}$ for all $i \in \{1, \dots K+1\}$.
\end{enumerate}
Hence, we have the simplifications:
\begin{align*}
     \E_{x} \E_{\augset \mid x;\xi_\star}\left[ \left(\sum_{i=1}^{K+1}  q_{\theta_\star,\xi_\star}(i \mid x, \augset)  \nabla_\theta \en_{\theta_\star}(x, y_i)\right)^{\otimes 2} \right] 
     = \frac{1}{K+1}  \E_x[ I(\theta_\star \mid x) ] +  \E_{x,y}[ \nabla_\theta \en_{\theta_\star}(x, y) ]^{\otimes 2}.
\end{align*}
From this, we conclude that:
\begin{align*}
    \nabla_\theta^2 L(\theta_\star, \xi_\star) = \frac{1}{K+1} \E_{x}[I(\theta_\star \mid x)] - \E_{x}[ I(\theta_\star \mid x) ] = - \left(1 - \frac{1}{K+1}\right) \E_{x}[ I(\theta_\star \mid x) ].
\end{align*}
From \Cref{prop:as_norm_general}, the asymptotic variance is
$V_\theta = - [\nabla^2_\theta L(\theta_\star, \xi_\star)]^{-1}$,
from which the claim follows.
\end{proof}

\subsubsection{Adversarial R-NCE}

To prove \Cref{prop:adv_opt}, we require the following technical lemma.

\begin{mylemma}[Saddle Optimality]\label{prop:saddle_opt}
Fix an integer $d \in \N_+$ and $k \in \{1, \dots, d\}$.
Define the function $F_k : \R^{d}_{\geq 0} \rightarrow \R$ as:
\begin{align*}
    F_k(\eta) := \log\left(\frac{\ip{\eta}{e_k}}{\ip{\eta}{\ind}}\right) \ip{\eta}{e_k}.
\end{align*}
The function $F_k$ is convex on the domain $\R^{d}_{\geq 0}$.
\end{mylemma}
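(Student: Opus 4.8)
The plan is to reduce the claim to the joint convexity of a two-variable function precomposed with a linear map. For $\eta \in \R^d_{\geq 0}$ write $a := \ip{\eta}{e_k} = \eta_k$ and $s := \ip{\eta}{\ind} = \sum_{i=1}^d \eta_i$; note that $a \geq 0$ always, and $s > 0$ whenever $\eta \neq 0$. With this notation $F_k(\eta) = g(a,s)$ where $g(a,s) := a\log(a/s)$ (extended by $g(0,s) := 0$, consistent with $0\log 0 = 0$), and the map $\eta \mapsto (a,s)$ is linear. Since a convex function composed with a linear map is convex, it would then suffice to establish (i) that $g$ is jointly convex on $D := \{(a,s) : a \geq 0,\ s > 0\}$, and (ii) a small separate argument to include the single point $\eta = 0$ (the only element of $\R^d_{\geq 0}$ with $s = 0$).

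For step (i) --- the heart of the proof --- I would work on the open set $\{a > 0,\ s > 0\}$, where $g(a,s) = a\log a - a\log s$, and compute the Hessian
\[
    \nabla^2 g(a,s) = \bmattwo{1/a}{-1/s}{-1/s}{a/s^2}.
\]
This matrix has determinant $0$ and trace $1/a + a/s^2 > 0$, hence is positive semidefinite, so $g$ is convex on the open quadrant; one then extends convexity to the boundary face $\{a = 0,\ s > 0\}$ using continuity of $g$ on $D$ (write $g(a,s) = s\,\psi(a/s)$ with $\psi(u) := u\log u$, $\psi(0) := 0$) together with density of the open quadrant in $D$ and a limiting argument in the convexity inequality. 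Alternatively, one can skip the Hessian entirely and simply observe that $g(a,s) = s\,\psi(a/s)$ is the perspective of the convex function $\psi$, which is jointly convex by a standard fact. Either route gives that $F_k = g \circ (\text{linear})$ is convex on the convex set $\R^d_{\geq 0} \setminus \{0\}$.

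Finally, for step (ii), the only convexity inequality not yet covered is one of the form $F_k(t\eta) \leq t F_k(\eta) + (1-t) F_k(0)$ with $\eta \neq 0$ and $t \in [0,1]$. Here I would use that $F_k$ is positively homogeneous of degree one: $F_k(\lambda\eta) = (\lambda\eta_k)\log(\lambda\eta_k/(\lambda s)) = \lambda F_k(\eta)$ for $\lambda > 0$, and $F_k(0) = 0$. Thus $F_k(t\eta) = t F_k(\eta) = t F_k(\eta) + (1-t)F_k(0)$, so the inequality holds with equality, which completes the proof. The only genuine obstacle is step (i); the mild technical point to watch there is that a positive semidefinite Hessian on an open convex set, together with continuity up to the closure, does indeed yield convexity on the closure.
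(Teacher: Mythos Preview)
Your proof is correct and follows essentially the same strategy as the paper: show that the Hessian is positive semidefinite. The paper computes the full $d\times d$ Hessian directly and obtains the rank-one form
\[
    \nabla^2 F_k(\eta) = \ip{\eta}{e_k}\left(\frac{e_k}{\ip{\eta}{e_k}} - \frac{\ind}{\ip{\eta}{\ind}}\right)\left(\frac{e_k}{\ip{\eta}{e_k}} - \frac{\ind}{\ip{\eta}{\ind}}\right)^\T \succcurlyeq 0,
\]
whereas you first reduce to the two variables $(a,s)$ via the linear map $\eta \mapsto (\eta_k,\sum_i\eta_i)$ and then check the $2\times 2$ Hessian (or invoke the perspective-function characterization $g(a,s)=s\,\psi(a/s)$). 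Your route is a touch more economical and, notably, is more careful than the paper about the boundary: the paper's Hessian formula tacitly assumes $\eta_k>0$ and $\sum_i\eta_i>0$, while you explicitly extend to $a=0$ by continuity and handle $\eta=0$ via positive homogeneity. Both arguments are fine; yours is slightly cleaner on these edge cases.
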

\begin{proof}
A straightforward computation shows that:
\begin{align*}
    \nabla^2 F_k(\eta) = \ip{\eta}{e_k} \left( \frac{e_k}{\ip{\eta}{e_k}} - \frac{\ind}{\ip{\eta}{\ind}}\right)\left( \frac{e_k}{\ip{\eta}{e_k}} - \frac{\ind}{\ip{\eta}{\ind}}\right)^\T.
\end{align*}
Since $\eta \in \R^{d}_{\geq 0}$, we have $\ip{\eta}{e_k} \geq 0$ and hence $\nabla^2 F_k(\eta) \succcurlyeq 0$.
\end{proof}

We now restate and prove \Cref{prop:adv_opt}.

\rnceminimax*
\begin{proof}
Fix $k \in \{1, \ldots, K+1\}$. From the proof of \Cref{prop:optimality}, we can write:
\[
    L(\theta_\star, \xi) = \E_{x} \E_{\augset \sim \sfP^{K+1}_{\augset | x;\xi}} \left[ \log\left( \frac{p_{\theta_\star}(y_k | x)/\pn(y_k|x)}{\sum_{j=1}^{K+1} p_{\theta_\star}(y_j|x)/\pn(y_j|x) }\right) p_{\theta_\star}(y_k|x)/\pn(y_k|x) \right].
\]
Define the mapping $\eta: (x, \augset) \rightarrow \R^{K+1}_{\geq 0}$ as $\eta_k := p_{\theta_\star}(y_k|x)/\pn(y_k|x)$. Notice that 
\[
    \E_{x}\E_{\augset \sim \sfP^{K+1}_{\augset | x;\xi}} \eta_k(x, \negset) = \E_x  \E_{\augset \sim \augP_{\augset \mid x;\xi,k}} [1] = 1.
\]
Thus, by \Cref{prop:saddle_opt} and Jensen's inequality:
\begin{align*}
    L(\theta_\star, \xi) &= \E_{x} \E_{\augset \sim \sfP^{K+1}_{\augset | x;\xi}} F_k(\eta(x, \augset)) \geq F_k(\bm{1}) = \log\left(\frac{1}{K+1}\right).
\end{align*}
On the other hand, since $p_{\theta_\star}(y|x) = p_{\xi_\star}(y|x) = p(y|x)$, $L(\theta_\star, \xi_\star) = \log\left(\frac{1}{K+1}\right)$.
So, we have the following chain of inequalities:
\begin{align*}
    \log\left(\frac{1}{K+1}\right) = L(\theta_\star, \xi_\star) =  \min_{\xi \in \Xi} L(\theta_\star, \xi) \leq \max_{\theta \in \Theta} \min_{\xi \in \Xi} L(\theta, \xi) \leq \max_{\theta \in \Theta} L(\theta, \xi_\star) = L(\theta_\star, \xi_\star) = \log\left(\frac{1}{K+1}\right).
\end{align*}
The last inequality above is a consequence of \Cref{prop:optimality}. Thus, we conclude that $\Theta_\star \times \Xi_\star$ is a saddle-set of the game:
\[
    \max_{\theta \in \Theta} \min_{\xi \in \Xi} L(\theta, \xi).
\]
\end{proof}

\nashstackpop*
\begin{proof}
We first prove the forward direction.
By definition of Stackelberg optimality, it holds that:
\[
    \inf_{\xi' \in \Xi}\ L(\bar{\theta}, \xi') \geq \inf_{\xi' \in \Xi}\ L(\theta, \xi') \quad  \forall \theta \in \Theta.
\]
Suppose that $\bar{\theta} \notin \Theta_\star$. By \Cref{prop:optimality}, $L(\theta_\star, \xi) > L(\bar{\theta}, \xi)$ for all $\xi \in \Xi$. Taking the infimum on both sides, we have that:
\[
    \inf_{\xi \in \Xi} L(\theta_\star, \xi) > \inf_{\xi \in \Xi} L(\bar{\theta}, \xi),
\]
where the strict inequality is preserved since $\Xi$ is compact
and $\xi \mapsto L(\theta, \xi)$ is continuous for every $\theta \in \Theta$,
so the infimums are attained.
But this contradicts the Stackelberg optimality of $\bar{\theta}$.

Now for the backwards direction.
Consider the chain of inequalities for any $\theta \in \Theta$:
\[
    \inf_{\xi' \in \Xi} L(\theta, \xi') \leq \sup_{\theta'\in \Theta} \inf_{\xi'\in \Xi} L(\theta', \xi') \leq \sup_{\theta' \in \Theta} L(\theta', \xi)  = L(\theta_\star, \xi) \quad \forall \xi \in \Xi,
\]
where the last inequality follows from optimality of $\theta_\star$ for any $\xi$, as per \Cref{prop:optimality}.
Now, again by continuity of $\xi \mapsto L(\theta_\star, \xi)$
and compactness of $\Xi$, there exists an $\bar{\xi}$ such that
$\inf_{\xi \in \Xi} L(\theta_\star, \xi) = L(\theta_\star, \bar{\xi})$. Hence, for any $(\theta, \xi) \in \Theta \times \Xi$:
\[
    \inf_{\xi' \in \Xi} L(\theta, \xi') \leq L(\theta_\star, \bar{\xi}) \leq L(\theta_\star, \xi),
\]
which shows that $(\theta_\star, \bar{\xi})$ is Stackleberg optimal for \eqref{eq:pop_maxmin_game}.
\end{proof}

\advconsistency*
\begin{proof}

\textbf{Part (1).}
We prove convergence of $\hat{\theta}_n$ to a maxmin Stackelberg optimal point for the population game, i.e., a global maximizer of the function $L^\theta$, defined as $\theta \mapsto \inf_{\xi \in \Xi} L(\theta, \xi)$. From \Cref{prop:stackelberg_pop}, this is necessarily the set $\Theta_\star$. By our assumptions, we have that $L(\theta, \xi)$ is continuous.
Since $\Xi$ is compact, by classic envelope theorems, we have that $L^\theta$ is continuous.
By \Cref{prop:set_separation}, since $\Theta_\star$ is contained
in the interior of $\Theta$, there exists $\e_0 > 0$
such that for all $0 < \e \leq \e_0$:
$$
    \theta \in \mathrm{cl}(\Theta \setminus (\Theta_\star + B_2(\e))) \Longrightarrow  L^\theta < L^{\star} := \sup_{\theta \in \Theta} L^{\theta}.
$$
Put $L_n^\theta$ as the function $\theta \mapsto \inf_{\xi \in \Xi} L_n(\theta, \xi)$.
Since $(\hat{\theta}_n, \hat{\xi}_n)$ is a Stackleberg equilibrium, we have that
$\sup_{\theta \in \Theta} L_n^\theta = L_n^{\hat{\theta}_n}$.
Now, observe that if $\hat{\theta}_n \in \Theta \setminus (\Theta_\star + B_2(\e))$:
\begin{align*}
    L^{\hat{\theta}_n} < L^\star &\Longleftrightarrow L^\star - L_n^{\hat{\theta}_n} + L_n^{\hat{\theta}_n} - L^{\hat{\theta}_n} > 0 \\
    &\Longleftrightarrow \sup_{\theta \in \Theta} \inf_{\xi \in \Xi} L(\theta, \xi) - \sup_{\theta \in \Theta} \inf_{\xi \in \Xi} L_n(\theta, \xi) + \inf_{\xi \in \Xi} L_n(\hat{\theta}_n, \xi) - \inf_{\xi \in \Xi} L(\hat{\theta}_n, \xi) > 0.
\end{align*}
Note that for any two real-valued functions $f(z), g(z)$
and domain $\calZ$, it holds that: 
$$\bigabs{\sup_{z \in \calZ} f(z) - \sup_{z \in \calZ} g(z)} \leq \sup_{z \in \calZ} \abs{f(z) - g(z)}.$$ 
Thus,
\begin{align}
    \bigabs{\sup_{\theta \in \Theta}\inf_{\xi \in \Xi} L_n(\theta, \xi) - \sup_{\theta \in \Theta}\inf_{\xi \in \Xi} L(\theta, \xi)} &\leq \sup_{\theta \in \Theta} \bigabs{\inf_{\xi \in \Xi} L_n(\theta, \xi) - \inf_{\xi \in \Xi} L(\theta, \xi) }  \nonumber \\
    &= \sup_{\theta \in \Theta} \bigabs{\sup_{\xi \in \Xi} [-L(\theta, \xi)] - \sup_{\xi \in \Xi}[-L_n(\theta, \xi)]}  \nonumber \\
    &\leq \sup_{\theta \in \Theta, \xi \in \Xi} \abs{L_n(\theta, \xi) - L(\theta, \xi)}. \label{eq:sup_inf_uniform_bound}
\end{align}
Similarly,
\begin{align*}
    \bigabs{\inf_{\xi \in \Xi} L_n(\hat{\theta}_n, \xi) - \inf_{\xi \in \Xi} L(\hat{\theta}_n, \xi) } &= \bigabs{ \sup_{\xi \in \Xi} [-L(\hat{\theta}_n, \xi)] - \sup_{\xi \in \Xi}[ -L_n(\hat{\theta}_n,\xi)] } \\
    &\leq \sup_{\xi \in \Xi} \abs{ L_n(\hat{\theta}_n,\xi) - L(\hat{\theta}_n,\xi) } \\
    &\leq \sup_{\theta \in \Theta, \xi \in \Xi} \abs{L_n(\theta, \xi) - L(\theta, \xi)}.
\end{align*}
This yields the implication:
\begin{align*}
   d(\hat{\theta}_n, {\Theta_\star}) > \e \Longrightarrow  \sup_{\theta \in \Theta, \xi \in \Xi} \abs{L_n(\theta, \xi) - L(\theta, \xi)}  > 0.
\end{align*}
Following the proof of \Cref{prop:consistency}, the conclusion of part (1) is now straightforward.
\newline

\noindent \textbf{Part (2).}
Let $\theta_\star \in \Theta_\star$ be arbitrary.
The function $\xi \mapsto L(\theta_\star, \xi)$ is continuous,
so by \Cref{prop:set_separation}, since $\bar{\Xi}$ is assumed to be 
in the interior of $\Xi$,
there exists $\e_0 > 0$ such that for $0 < \e \leq \e_0$:
\begin{align}
    \xi \in \mathrm{cl}(\Xi \setminus (\bar{\Xi} + B_2(\e))) \Longrightarrow L(\theta_\star, \xi) > L^\star := \inf_{\xi \in \Xi} L(\theta_\star, \xi). \label{eq:separation_eta}
\end{align}
Let $\delta_\e > 0$ denote this gap:
\begin{align*}
    \delta_\e := \inf_{\xi \in \mathrm{cl}(\Xi \setminus (\bar{\Xi} + B_2(\e)))} L(\theta_\star, \xi) - L^\star.
\end{align*}
By \Cref{assmp:c2}, we have that $L$ is $\calC^1$ on $\Theta \times \Xi$. Hence, the following constant $B_L$ is well defined and finite:
$$
    B_L := \sup_{\theta \in \Theta, \xi \in \Xi} \norm{\nabla_\theta L(\theta, \xi)}.
$$
Next, let $\hat{\theta}_\star \in \Theta_\star$ satisfy
$\norm{ \hat{\theta}_\star - \hat{\theta}_n } \leq d(\hat{\theta}_n, {\Theta_\star}) + \frac{\delta_\e}{2B_L}$,
and let $\bar{\xi} \in \bar{\Xi}$ be arbitrary.
Consider the following chain of inequalities:
\begin{align}
    |L(\hat{\theta}_\star, \hat{\xi}_n) - L(\hat{\theta}_\star, \bar{\xi})| &\leq |L(\hat{\theta}_\star, \hat{\xi}_n) - L(\hat{\theta}_n, \hat{\xi}_n)| + |L(\hat{\theta}_n, \hat{\xi}_n) - L_n(\hat{\theta}_n, \hat{\xi}_n)| + |L_n(\hat{\theta}_n, \hat{\xi}_n) - L(\hat{\theta}_\star, \bar{\xi})| \nonumber \\
    &\leq \sup_{\xi \in \Xi} |L(\hat{\theta}_\star, \xi) - L(\hat{\theta}_n, \xi)| + \sup_{\theta \in \Theta,\xi \in \Xi} |L(\theta, \xi) - L_n(\theta, \xi)| \nonumber \\
    &\qquad + |L_n(\hat{\theta}_n, \hat{\xi}_n) - L(\hat{\theta}_\star, \bar{\xi})| \nonumber \\
    &=: T_1 + T_2 + T_3.
\label{nash_dual_ineq}
\end{align}
We first control $T_1$ by uniform Lipschitz continuity; in particular, by boundedness of $\nabla_\theta L$ and the mean-value theorem:
\begin{align*}
    T_1 = \sup_{\xi \in \Xi} \abs{L(\hat{\theta}_\star, \xi) - L(\hat{\theta}_n, \xi)} \leq \sup_{\theta \in \Theta, \xi \in \Xi} \norm{\nabla_\theta L(\theta, \xi)} \norm{\hat{\theta}_\star - \hat{\theta}_n} = B_L \norm{\hat{\theta}_\star - \hat{\theta}_n} \leq B_L d(\hat{\theta}_n,{\Theta_\star}) + \frac{\delta_\e}{2}.
\end{align*}
Next, we control $T_3$.
Since $(\hat{\theta}_n, \hat{\xi}_n)$ is Stackelberg optimal for
the finite sample game, we have
$L_n(\hat{\theta}_n, \hat{\xi}_n) = \sup_{\theta \in \Theta} \inf_{\xi \in \Xi} L_n(\theta, \xi)$.
Furthermore, by
\Cref{prop:stackelberg_pop}, the pair $(\hat{\theta}_\star, \bar{\xi})$
is Stackelberg optimal for the population game, and hence
$L(\hat{\theta}_\star, \bar{\xi}) = \sup_{\theta \in \Theta} \inf_{\xi \in \Xi} L(\theta, \xi)$.
By \eqref{eq:sup_inf_uniform_bound},
\begin{align*}
    T_3 &= |L_n(\hat{\theta}_n, \hat{\xi}_n) - L(\hat{\theta}_\star, \bar{\xi})| = \bigabs{\sup_{\theta \in \Theta} \inf_{\xi \in \Xi} L_n(\theta, \xi) - \sup_{\theta \in \Theta} \inf_{\xi \in \Xi} L(\theta, \xi)} \\
    &\leq \sup_{\theta \in \Theta, \xi \in \Xi} \abs{L(\theta, \xi) - L_n(\theta, \xi)} = T_2.
\end{align*}
Combining the bounds on $T_1$ and $T_3$ with \eqref{nash_dual_ineq}:
\begin{align*}
     |L(\hat{\theta}_\star, \hat{\xi}_n) - L(\hat{\theta}_\star, \bar{\xi})| \leq 2 T_2 + B_L d(\hat{\theta}_n,{\Theta_\star}) + \frac{\delta_\e}{2}.
\end{align*}
This bound with \eqref{eq:separation_eta} yields the implication:
\begin{align*}
    d(\hat{\xi}_n, {\bar{\Xi}}) > \e &\Longrightarrow \delta_\e \leq L(\hat{\theta}_\star, \hat{\xi}_n) - L^\star \\
    &\Longrightarrow \delta_\e \leq 2 T_2 + B_L d(\hat{\theta}_n, {\Theta_\star}) + \frac{\delta_\e}{2} \\
    &\Longleftrightarrow \delta_\e \leq 4 T_2 + 2B_L d(\hat{\theta}_n, {\Theta_\star}).
\end{align*}
Hence, we have:
\begin{align*}
    \left\{ \limsup_{n \to \infty} d(\hat{\xi}_n, {\bar{\Xi}}) > \e \right\} \subset \left\{ \limsup_{n \to \infty} \sup_{\theta \in \Theta, \xi \in \Xi} \abs{L(\theta, \xi) - L_n(\theta, \xi)} > 0 \right\} \bigcup \left\{ \limsup_{n \to \infty} d(\hat{\theta}_n, {\Theta_\star}) > 0 \right\}.
\end{align*}
Part (2) now follows, since both events on the RHS were established
in part (1) as measure zero events, and since $\e > 0$ is arbitrarily small.
\end{proof}

\advnormality*
\begin{proof}
Note first that as a consequence of \Cref{thm:adversarial_consistency}, $\hat{\gamma}_n \overset{n\rightarrow \infty}{\longrightarrow} \gamma_\star$ almost surely. We now proceed in a manner similar to the proof for \Cref{prop:as_norm_general}, with the following identity:
\[
    \nabla_\gamma \bell_{\hat{\gamma}_n}(z) = \nabla_\gamma \bell_{\gamma_\star}(z) + \nabla^2_\gamma \bell_{\gamma_\star}(z) ( \hat{\gamma}_n - \gamma_\star) + R(\hat{\gamma}_n - \gamma_\star),
\]
where $R$ is a remainder matrix dependent upon $z, \gamma_\star, \hat{\gamma}_n$, and satisfies: $\opnorm{R} \leq \frac{1}{2} M(z) \|\hat{\gamma}_n - \gamma_\star\|$. Applying the empirical expectation on all terms, and as in the proof for \Cref{prop:as_norm_general}, upper-bounding $\opnorm{ \hat{\E}_z R(\gamma_\star, \hat{\gamma}_n, z)}$ by an $o_p(1)$ matrix by leveraging the fact that $\|\hat{\gamma}_n - \gamma_\star\| \overset{\mathrm{a.s.}}{\rightarrow} 0$, yields:
\begin{align*}
    \hat{\E}_z [\nabla_\gamma \bell_{\hat{\gamma}_n}(z)] &= \hat{\E}_{z} [\nabla_\gamma \bell_{\gamma_\star}(z)] + \left(\hat{\E}_{z} [\nabla^2_\gamma \bell_{\gamma_\star}(z)]  + o_p(1) \right) (\hat{\gamma}_n - \gamma_\star ) \\
    &= \hat{\E}_{z} [\nabla_\gamma \bell_{\gamma_\star}(z)] + \left( \E_z [\nabla^2_\gamma \bell_{\gamma_\star}(z)] + (\hat{\E}_{z} - \E_z) [\nabla^2_\gamma \bell_{\gamma_\star}(z)]  + o_p(1) \right) (\hat{\gamma}_n - \gamma_\star ).
\end{align*}
By SLLN, $(\hat{\E}_{z} - \E_z) [\nabla^2_\gamma \bell_{\gamma_\star}(z)] \overset{\mathrm{a.s.}}{\rightarrow} 0$, and thus:
\[
    \hat{\E}_z [\nabla_\gamma \bell_{\hat{\gamma}_n}(z)] = \hat{\E}_{z} [\nabla_\gamma \bell_{\gamma_\star}(z)] + \left(\E_z [\nabla^2_\gamma \bell_{\gamma_\star}(z)] + o_p(1) \right)(\hat{\gamma}_n - \gamma_\star ).
\]
To complete the result, we need only establish the following: (i) $\hat{\E}_z [\nabla_\gamma \bell_{\hat{\gamma}_n}(z)] = 0$, (ii) $\E_z [\nabla_\gamma \bell_{\gamma_\star}(z)] = 0$, and (iii) $\E_z [\nabla^2_\gamma \bell_{\gamma_\star}(z)]$ is invertible. The result then follows by applying CLT to $\sqrt{n}\hat{\E}_{z} [\nabla_\gamma \bell_{\gamma_\star}(z)]$ and Slutsky's theorem (see for instance, the proof for \Cref{prop:as_norm_general}). We show (ii) and (iii) first.

For (ii), notice that by the definition of Stackelberg optimality, $\bxi$ minimizes $\xi \mapsto L(\theta_\star, \xi)$
over $\Xi$ and by assumption, is contained within the interior of $\Xi$. Thus, $\nabla_\xi L(\theta_\star, \bxi) = 0$. Further, by \Cref{prop:optimality}, $\theta_\star$ maximizes
$\theta \mapsto L(\theta, \xi)$ over $\Theta$ for any $\xi \in \Xi$, and is also assumed to be within the interior of $\Theta$. Thus, $\nabla_\theta L(\theta_\star, \bxi) = 0$. Leveraging $\mathcal{C}^1$-regularity, we conclude that $\E_z[\nabla_\gamma \bell_{\gamma_\star}(x)] = 0$.

To prove (iii), notice that the Hessian $\nabla^2_\gamma L(\theta_\star, \bxi)$ is block-diagonal, since by optimality of $\theta_\star$ \emph{for any} $\xi$, it follows that $\nabla^2_{\theta \xi} L(\theta_\star, \bxi) = 0$. Further, the blocks $\nabla^2_\theta L(\theta_\star, \bxi)$ and $\nabla^2_\xi L(\theta_\star, \bxi)$ are assumed invertible.

Finally, to prove stationarity for the finite-sample objective at $\hat{\gamma}_n$, we first require the following additional result.

\begin{claim}
\label{clm:hessian_converge}
    $\opnorm{ \nabla^2_\gamma L_n(\hat{\theta}_n, \hat{\xi}_n) - \nabla^2_\gamma L(\theta_\star, \bxi) } \overset{n \to \infty}{\longrightarrow} 0 \quad \mathrm{a.s.}$
\end{claim}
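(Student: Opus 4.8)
The plan is to reduce the claim to two applications of the strong law of large numbers (SLLN), isolating the randomness carried by the data-dependent point $\hat{\gamma}_n$ from the randomness of the empirical average. Recalling that $L_n(\gamma) = \hat{\E}_z[\bell_\gamma(z)]$ and $L(\gamma) = \E_z[\bell_\gamma(z)]$, so that $\nabla^2_\gamma L_n(\gamma) = \hat{\E}_z[\nabla^2_\gamma \bell_\gamma(z)]$ and $\nabla^2_\gamma L(\gamma) = \E_z[\nabla^2_\gamma \bell_\gamma(z)]$, I would first insert $\nabla^2_\gamma L_n(\gamma_\star)$ and use the triangle inequality:
\[
    \opnorm{\nabla^2_\gamma L_n(\hat{\gamma}_n) - \nabla^2_\gamma L(\gamma_\star)} \leq \opnorm{\nabla^2_\gamma L_n(\hat{\gamma}_n) - \nabla^2_\gamma L_n(\gamma_\star)} + \opnorm{\nabla^2_\gamma L_n(\gamma_\star) - \nabla^2_\gamma L(\gamma_\star)}.
\]

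For the first term I would apply the triangle inequality for the operator norm followed by the Lipschitz-Hessian hypothesis (item 2 of \Cref{thm:as_norm_adv}), obtaining
\[
    \opnorm{\nabla^2_\gamma L_n(\hat{\gamma}_n) - \nabla^2_\gamma L_n(\gamma_\star)} \leq \frac1n\sum_{i=1}^n \opnorm{\nabla^2_\gamma \bell_{\hat{\gamma}_n}(z^{(i)}) - \nabla^2_\gamma \bell_{\gamma_\star}(z^{(i)})} \leq \left(\frac1n\sum_{i=1}^n M(z^{(i)})\right)\norm{\hat{\gamma}_n - \gamma_\star}.
\]
Since $\E_z[M(z)] < \infty$, the SLLN gives $\frac1n\sum_i M(z^{(i)}) \to \E_z[M(z)]$ a.s., while \Cref{thm:adversarial_consistency}—together with hypothesis (1) that $\Theta_\star = \{\theta_\star\}$ and $\bar{\Xi} = \{\bxi\}$ are singletons lying in the respective interiors—gives $\norm{\hat{\gamma}_n - \gamma_\star} \to 0$ a.s. Hence the product tends to zero almost surely.

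For the second term, note that $\gamma_\star$ is a fixed (non-random) point, so $\{\nabla^2_\gamma \bell_{\gamma_\star}(z^{(i)})\}_{i\geq 1}$ is an i.i.d.\ sequence of matrices; \Cref{assmp:c2}, applied at $(\theta,\xi) = \gamma_\star$, guarantees $\E_z[\opnorm{\nabla^2_\gamma \bell_{\gamma_\star}(z)}] < \infty$, so the SLLN applies entrywise and yields $\hat{\E}_z[\nabla^2_\gamma \bell_{\gamma_\star}(z)] \to \E_z[\nabla^2_\gamma \bell_{\gamma_\star}(z)]$ a.s., whence the operator norm of the difference vanishes a.s. Combining the two bounds proves the claim. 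The only point requiring a word of care—not really an obstacle—is that the Lipschitz bound in the first term is evaluated at the data-dependent argument $\hat{\gamma}_n$; this is legitimate because the Lipschitz-Hessian inequality holds pointwise in $z$ for \emph{all} $\gamma_1, \gamma_2 \in \Theta\times\Xi$, so we may substitute $\gamma_1 = \hat{\gamma}_n(\omega)$ and $\gamma_2 = \gamma_\star$ on the almost-sure event on which consistency and both SLLN limits hold simultaneously.
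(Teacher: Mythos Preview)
Your proof is correct, and it takes a genuinely different route from the paper's. The paper inserts the intermediate term $\nabla^2_\gamma L(\hat{\gamma}_n)$ (population functional at the random point), which forces it to control $\opnorm{\nabla^2_\gamma L_n(\hat{\gamma}_n) - \nabla^2_\gamma L(\hat{\gamma}_n)}$ via a \emph{uniform} law of large numbers over all of $\Theta\times\Xi$ (invoking the variational form of the operator norm and \citet[Theorem~16(a)]{ferguson2017course} under \Cref{assmp:c2}); the Lipschitz-Hessian hypothesis is then applied to the population difference $\opnorm{\nabla^2_\gamma L(\hat{\gamma}_n) - \nabla^2_\gamma L(\gamma_\star)}$. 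You instead insert $\nabla^2_\gamma L_n(\gamma_\star)$ (empirical functional at the fixed point), which lets you use the Lipschitz bound \emph{inside} the empirical average and reduces the remaining term to a pointwise SLLN at the deterministic $\gamma_\star$. Your decomposition is more elementary in that it bypasses uniform convergence entirely, needing only two ordinary SLLN applications plus consistency; the paper's route, while heavier, yields the stronger intermediate statement \eqref{eq:hessian_uniform_conv}, which could be of independent use.
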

\begin{proof}
Let $d := \mathrm{dim}(\theta) + \mathrm{dim}(\xi)$, and let $\bbS^{d-1}$ denote the unit-sphere in $\R^{d}$.
We first claim that:
\begin{align}
    \sup_{\theta \in \Theta, \xi \in \Xi} \opnorm{ \nabla^2_\gamma L_n(\theta, \xi) - \nabla^2_\gamma L(\theta, \xi)} \overset{n \to \infty}{\longrightarrow} 0 \quad \mathrm{a.s.} \label{eq:hessian_uniform_conv}
\end{align}
To see this, by the variational representation of the operator norm~\cite[Theorem 4.4]{rudin2008function}, the empirical process above is:
\begin{align*}
    \sup_{\substack{\theta \in \Theta, \xi \in \Xi, \\
    v_1, v_2 \in \bbS^{d-1}}} v_1^\T ( \nabla^2_\gamma L_n(\theta, \xi) - \nabla^2_\gamma L(\theta, \xi) ) v_2. 
\end{align*}
\Cref{assmp:c2} combined with~\cite[Theorem 16(a)]{ferguson2017course} yields \eqref{eq:hessian_uniform_conv}.

Next, by $\mathcal{C}^2$-regularity, $\nabla^2_\gamma L(\theta, \xi) = \E_{z} \nabla^2_\gamma \bar{\ell}_\gamma(z)$, and therefore by
Jensen's inequality and 
the Hessian Lipschitz assumption:
\begin{align*}
    \opnorm{ \nabla^2_\gamma L(\hat{\theta}_n, \hat{\xi}_n) - \nabla^2_\gamma L(\theta_\star, \bxi) } &= \opnorm{ \E_{z}\nabla^2_\gamma \bar{\ell}_{\hat{\gamma}_n}(z) - \E_{z} \nabla^2_\gamma \bar{\ell}_{\gamma_\star}(z) } \\
    &\leq \E_{z} \opnorm{ \nabla^2_\gamma \bar{\ell}_{\hat{\gamma}_n}(z) - \nabla^2_\gamma \bar{\ell}_{\gamma_\star}(z) } \\
    &\leq \E_{z} M(z) \norm{\hat{\gamma}_n - \gamma_\star}.
\end{align*}
Finally, then
\begin{align*}
    \opnorm{ \nabla^2_\gamma L_n(\hat{\theta}_n, \hat{\xi}_n) - \nabla^2_\gamma L(\theta_\star, \bxi) } &\leq \opnorm{\nabla^2_\gamma L_n(\hat{\theta}_n, \hat{\xi}_n) - \nabla^2_\gamma L(\hat{\theta}_n, \hat{\xi}_n)} + \opnorm{\nabla^2_\gamma L(\hat{\theta}_n, \hat{\xi}_n) - \nabla^2_\gamma L(\theta_\star, \bxi)} \\
    &\leq \sup_{\theta \in \Theta, \xi \in \Xi} \opnorm{ \nabla^2_\gamma L_n(\theta, \xi) - \nabla^2_\gamma L(\theta, \xi)} + \E_{z} M(z) \norm{\hat{\gamma}_n - \gamma_\star}.
\end{align*}
Then,~\eqref{eq:hessian_uniform_conv} along with $\hat{\gamma}_n \overset{n\rightarrow \infty}{\longrightarrow} \gamma_\star$ a.s.\ proves the stated claim.
\end{proof}

Returning now to (i), since we only assume that $(\hat{\theta}_n, \hat{\xi}_n)$ is a Stackelberg optimal point, it may not satisfy the stationarity condition $\hat{\E}_{z}[\nabla_\gamma \bell_{\hat{\gamma}_n}(z)] = 0$~\cite{jin2020local}. However, since $\hat{\xi}_n$ globally minimizes $\xi \mapsto L_n(\hat{\theta}_n, \xi)$
and $\hat{\xi}_n \to \bxi$ a.s., which is at an interior point, for large enough $n$
we have $\nabla_\xi L_n(\hat{\theta}_n, \hat{\xi}_n) = 0$. 

Furthermore, by \Cref{clm:hessian_converge} and the assumption that $\nabla^2_\xi L(\theta_\star, \bxi)$ is non-singular, for large enough $n$
we have that $\nabla^2_\xi L_n(\hat{\theta}_n, \hat{\xi}_n)$ is also non-singular.
By the implicit function theorem, there exists a continuously differentiable function $\hat{G}_n: \theta \in \Theta \rightarrow \Xi$, defined for $\theta$ in an open local neighborhood of $\hat{\theta}_n$, s.t. $\hat{G}_n(\hat{\theta}_n) = \hat{\xi}_n$ and $\nabla_\xi L_n(\theta, \hat{G}_n(\theta)) = 0$. Then, since $\hat{\theta}_n$ is a global interior maximum (for large enough $n$, since $\hat{\theta}_n \to \theta_\star$ a.s.) of $\theta \mapsto \inf_{\xi \in \Xi} L_n(\theta, \xi)$, which in turn equals $L_n(\theta, \hat{G}_n(\theta))$ for $\theta$ near $\hat{\theta}_n$, stationarity dictates that $\nabla_\theta L_n(\hat{\theta}_n, \hat{G}_n(\hat{\theta}_n)) = 0$. It follows then by the chain rule that $\nabla_\theta L_n(\hat{\theta}_n, \hat{\xi}_n) = 0$.
\end{proof}

\marginalnormality*
\begin{proof}
From the proof of \Cref{thm:as_norm_adv}, the Hessian $\nabla^2_\gamma L(\theta_\star, \bxi)$ is block-diagonal since the cross-term $\nabla^2_{\theta\xi} L(\theta_\star, \bxi) = 0$. Thus, the $(\theta, \theta)$ block for $V_\gamma^a$ in~\eqref{eq:adv_norm_var} reduces to:
\[
    \E_z[\nabla_\theta^2 \bell_{\gamma_\star}(z)]^{-1} \mathrm{Var}_z[\nabla_{\theta}\bell_{\gamma_\star}(z)]\E_z[\nabla_\theta^2 \bell_{\gamma_\star}(z)]^{-1}.
\]
Applying \Cref{prop:var_grad_equals_neg_hessian} gives the desired result.
\end{proof}

\end{document}